\def\adl@drawiv#1#2#3{%
    \hskip.5\tabcolsep
    \xleaders#1\hskip#2\relax
    \hskip.5\tabcolsep
    \hskip\arrayrulewidth}
\newlength\myheight
\newlength\mydepth
\settototalheight\myheight{Xygp}
\DeclareRobustCommand*\inlinegraphics[1]{%
  \settototalheight\myheight{Xygp}%
  \settodepth\mydepth{Xygp}%
  \raisebox{-\mydepth}{\includegraphics[height=\myheight]{#1}}%
}
\newcommand{\cmark}{\ding{51}}%
\newcommand{\xmark}{\ding{55}}%
\newcommand{\tightcolspace}{8pt} 
\newtheorem{theorem}{Theorem}[chapter]
\newtheorem{lemma}{Lemma}[chapter]
\newtheorem{assumption}{Assumption}[chapter]
\newtheorem{proposition}{Proposition}[chapter]
\newtheorem{example}{Example}[chapter]
\newtheorem*{example*}{Example}
\theoremstyle{definition}
\newtheorem{definition}{Definition}[chapter]
\newcommand{\BB}{\mathbb{B}}
\newcommand{\EE}{\mathbb{E}}
\newcommand{\NN}{\mathbb{N}}
\newcommand{\PP}{\mathbb{P}}
\newcommand{\RR}{\mathbb{R}}
\newcommand{\ZZ}{\mathbb{Z}}
\newcommand{\A}{\mathcal{A}}
\newcommand{\B}{\mathcal{B}}
\newcommand{\D}{\mathcal{D}}
\newcommand{\F}{\mathcal{F}}
\newcommand{\G}{\mathcal{G}}
\newcommand{\I}{\mathcal{I}}
\newcommand{\J}{\mathcal{J}}
\renewcommand{\L}{\mathcal{L}}
\newcommand{\M}{\mathcal{M}}
\renewcommand{\P}{\mathcal{P}}
\newcommand{\R}{\mathcal{R}}
\renewcommand{\S}{\mathcal{S}}
\newcommand{\T}{\mathcal{T}}
\newcommand{\X}{\mathcal{X}}
\newcommand{\Y}{\mathcal{Y}}
\newcommand{\Z}{\mathcal{Z}}
\newcommand{\eps}{\varepsilon}
\newcommand{\1}{\mathds{1}}
\DeclareMathOperator*{\argmax}{arg\,max}
\DeclareMathOperator*{\argmin}{arg\,min}
\newcommand{\ol}[1]{\overline{#1}}
\newcommand{\round}[1]{\lfloor #1\rceil}
\newcommand\shield{%
    \tikz [baseline] \draw (0,1.75ex) -- (0,0.75ex) arc [radius=0.75ex, start angle=-180, end angle=0] -- (1.5ex,1.75ex) -- cycle;%
}
\newcommand\myclock{%
    \tikz [baseline] {
        \draw (0,0) circle (0.48ex); 
        \draw (0,0) -- (-0.24ex,0.18ex); 
        \draw (0,0) -- (0.27ex, 0.3ex); 
    }%
}
\newcommand{\Obs}{\mathcal O} 
\newcommand{\Act}{\mathcal A} 
\newcommand{\Env}{\mathscr{E}} 
\newcommand{\Trans}{\mathscr{T}} 
\newcommand{\Pol}{\pi} 
\newcommand{\Ag}{\mathit{Ag}} 
\newcommand{\Sh}{\shield} 
\newcommand{\spec}{\varphi} 
\newcommand{\Supp}{\mathrm{Supp}} 
\newcommand{\Acc}{\mathrm{Acc}} 
\newcommand{\Beh}{\mathtt{Beh}} 
\newcommand{\Inter}{\mathbb{J}} 
\newcommand{\Cyl}{\mathtt{Cyl}} 
\newcommand{\Reach}{\mathtt{Reach}} 
\newcommand{\Avoid}{\mathtt{Avoid}} 
\newcommand{\Delayed}{\mathtt{Delayed}} 
\newcommand{\Mcar}{{\mathcal M}_{\mathrm{car}}} 
\newcommand{\Scar}{{\mathcal S}_{\mathrm{car}}} 
\newcommand{\Pcar}{{\mathcal P}_{\mathrm{car}}} 
\newcommand{\Xcar}{{X}_{\mathrm{car}}} 
\newcommand{\Vcar}{{V}_{\mathrm{car}}} 
\newcommand{\Mped}{{\mathcal M}_{\mathrm{ped}}} 
\newcommand{\Sped}{{\mathcal S}_{\mathrm{ped}}} 
\newcommand{\Pped}{{\mathcal P}_{\mathrm{ped}}} 
\newcommand{\Xped}{{X}_{\mathrm{ped}}} 
\newcommand{\Yped}{{Y}_{\mathrm{ped}}} 
\newcommand{\Pag}{{P}_{\mathrm{ag}}} 
\newcommand{\FinShield}{\textsf{FinHzn}\xspace}
\newcommand{\StaticDP}{\textsf{Static-Fair}\xspace}
\newcommand{\StaticBAR}{\textsf{Static-BW}\xspace}
\newcommand{\Dyn}{\textsf{Dynamic}\xspace}
\newcommand{\sgen}{\theta}
\newcommand{\sshield}{\pi}
\newcommand{\sShield}{\Pi}
\newcommand{\sShieldFeas}{\Pi_{\mathtt{fair}}}
\newcommand{\sShieldFeasPeriodic}{\Pi_{\mathtt{fair\text{-} per}}}
\newcommand{\sShieldFeasBounded}{\Pi_{\mathtt{BW}}}
\newcommand{\sShieldFeasDyn}{\Pi_{\mathtt{fair\text{-} dyn}}}
\newcommand{\costset}{\mathbb{C}}
\newcommand{\Trfeas}{\mathtt{FT}}
\newcommand{\Trbalance}{\mathtt{BT}}
\newcommand{\welfare}[1]{\mathtt{WF}^{#1}}
\newcommand{\cost}{\mathit{cost}}
\newcommand{\expe}{\mathbb{E}}
\newcommand{\indicator}[1]{\1 \left[ #1 \right]}
\newcommand{\set}[1]{\left\{#1\right\}}
\newcommand{\distrset}{\D}
\newcommand{\len}[1]{|#1|}
\newcommand{\numer}{\mathtt{num}}
\newcommand{\den}{\mathtt{den}}
\newcommand{\DP}{\mathtt{DP}}
\newcommand{\EO}{\mathtt{EqOpp}}
\newcommand{\Val}{\mathrm{Val}}
\newcommand{\AP}{\mathrm{AP}}
\newcommand{\tauref}{\tau_{\textit{ref}}}
\newcommand{\dashline}{\cdashline{1-1}[0.5pt/2pt]}
\begin{document}


\thesisauthor[Filip Cano]{Filip Cano C{\'o}rdoba}

\thesistitle[Short Thesis Title]{Towards Responsible AI: Advances in Safety, Fairness, and Accountability of Autonomous Systems}

\thesisdate[ ]{March 2025}

\supervisortitle{\germanenglish{Betreuerin/Betreuer}{Assessors}}


\supervisor{%
\vspace{2em}
\begin{minipage}{0.1\textwidth}
    $\,$
\end{minipage}
\begin{minipage}{0.38\textwidth}
    \centering
    Advisor and examiner\\
  {\large \textsc{Prof. Roderick Bloem}}\\
 {\scriptsize Graz University of Technology}\\
\end{minipage}
\hfill
\begin{minipage}{0.38\textwidth}
    \centering
    Examiner\\
  {\large \textsc{Prof. Ruzica Piskac}}\\
 {\scriptsize Yale University}
\end{minipage}
\begin{minipage}{0.1\textwidth}
    $\,$
\end{minipage}
}




\academicdegree{Doctor of Technical Sciences} 



\printthesistitle

\printaffidavit


\chapter*{Abstract}

Ensuring responsible use of artificial intelligence (AI) has become imperative as autonomous systems increasingly influence critical societal domains. However, the concept of trustworthy AI remains broad and multi-faceted. This thesis advances knowledge in the safety, fairness, transparency, and accountability of AI systems.

In safety, we extend classical deterministic shielding techniques to become resilient against delayed observations, enabling practical deployment in real-world conditions. 
We also implement both deterministic and probabilistic safety shields into simulated autonomous vehicles to prevent collisions with road users, validating the use of these techniques in realistic driving simulators.

We introduce fairness shields, a novel post-processing approach to enforce group fairness in sequential decision-making settings over finite and periodic time horizons. By optimizing intervention costs while strictly ensuring fairness constraints, this method efficiently balances fairness with minimal interference.

For transparency and accountability, we propose a formal framework for assessing intentional behaviour in probabilistic decision-making agents, introducing quantitative metrics of agency and intention quotient. We use these metrics to propose a retrospective analysis of intention, useful for determining responsibility when autonomous systems cause unintended harm.

Finally, we unify these contributions through the ``reactive decision-making'' framework, providing a general formalization that consolidates previous approaches. Collectively, the advancements presented contribute practically to the realization of safer, fairer, and more accountable AI systems, laying the foundations for future research in trustworthy AI.

\chapter*{Kurzfassung}

Die Sicherstellung eines verantwortungsvollen Umgangs mit Künstlicher Intelligenz (KI) ist unabdingbar geworden, da autonome Systeme zunehmend kritische gesellschaftliche Bereiche beeinflussen. Dennoch bleibt das Konzept vertrauenswürdiger KI breit gefächert und facettenreich. Diese Dissertation erweitert das Wissen über Sicherheit, Fairness, Transparenz und Rechenschaftspflicht von KI-Systemen.

Im Bereich der Sicherheit erweitern wir klassische deterministische Shielding-Techniken, sodass sie auch gegenüber verzögerten Beobachtungen widerstandsfähig sind. Dadurch ermöglichen wir deren praktischen Einsatz unter realistischen Bedingungen. Zudem implementieren wir sowohl deterministische als auch probabilistische Sicherheitsschilde in simulierte autonome Fahrzeuge, um Kollisionen mit Verkehrsteilnehmern zu verhindern, und validieren so den Einsatz dieser Techniken in realitätsnahen Fahrsimulatoren.

Wir führen Fairness-Schilde ein, einen neuartigen Post-Processing-Ansatz zur Durchsetzung von Gruppenfairness in sequenziellen Entscheidungssituationen über endliche und periodische Zeithorizonte. Durch die Optimierung der Interventionskosten bei strikter Einhaltung von Fairness-Beschränkungen ermöglicht diese Methode eine effiziente Balance zwischen Fairness und minimalem Eingriff.

Für Transparenz und Rechenschaftspflicht schlagen wir einen formalen Rahmen zur Bewertung intentionalen Verhaltens bei probabilistischen Entscheidungsagenten vor und führen quantitative Maße für Handlungsfähigkeit (Agency) und Intentionsquotienten ein. Diese Maße nutzen wir für eine retrospektive Analyse der Absicht, die hilfreich ist, um Verantwortung festzustellen, wenn autonome Systeme unbeabsichtigte Schäden verursachen.

Schließlich vereinigen wir diese Beiträge im Rahmen der ``reaktiven Entscheidungsfindung'' und bieten so eine allgemeine Formalisierung, die bisherige Ansätze integriert. Die vorgestellten Fortschritte leisten insgesamt einen praktischen Beitrag zur Realisierung sicherer, fairer und verantwortungsvollerer KI-Systeme und bilden eine Grundlage für zukünftige Forschung zu vertrauenswürdiger KI.


\cleardoublepage

\chapter*{\germanenglish{Danksagung}{Acknowledgements}}

This work would not have been possible without the help and support of so many people.

A very special thanks to my parents for their unconditional love and support, my grandmothers Lina and Ana, and my surrogate grandmother Isa, who raised me to be the person I am today.
I also want to thank the rest of my family, Dani, Carmen, Juan, Maria, Toni, and Antonia, for being there and reminding me where I come from.

I thank all the people I had the pleasure and privilege to collaborate all these years. 
A special thanks to Sam, Timos, and Katrine for their patient discussions; Kaushik and Konstantin for their motivation and the long work hours; Haritz for his hard work and kind soul; and Scott, Tom, Martin, Oliver, and Ruzica for the inspiration to do great research.

Thanks to all the friends and colleagues in Graz, whose daily presence and support has been very valuable: Malte, Sonja, Masoud, Vedad, Benedikt, Johannes;  and a very special appreciation to Stefan, with whom I've had the pleasure to share a working space, to work and learn (and sometimes complain about life) together.
For the time I spent in G\"ossendorf, thanks to Ben and Niki for so many joyful moments spent together, to Alex and Irmi for the wine and the kind words, and thanks to Erika for her unending enthusiasm.

Thanks to all my friends who have had the patience to keep and feed our friendship from the distance, specially Ander, Zaira, and Irene for reminding me what is important in life; Maribel, Cristina, and Sa\"id for the random walks; Marc for being simply amazing; Susana for being weird together; Katia and Genís for their enormous heart; Tania for her resilience. A very special thanks to Alberto, without whom I would not have come to Graz, and without whom I may not have managed to finish. Thanks for being sunshine in spring and a lighthouse in my darkest hours.

None of this would have been possible without my teachers, who taught me everything I know and inspired me to learn more. From the high school teachers at Lestonnac and Sant Pere, who inspired my love for mathematics and showed me the kind of person I want to be, to the teachers in FME and CFIS, who showed me the passion, rigour, and beauty in mathematics and brought me to the border between mathematics and computer science, where I've spent some of my best time.


Research, for me, involves coffee, and someone had to pay for all the coffee. In my case,
this work has been partially supported by the European Union's Horizon 2020 research and innovation programme under grant agreement  $\mathrm{N^o}\, 956123$ - \textsc{Foceta} and by the State Government of Styria, Austria - Department Zukunftsfonds Steiermark.

Last but certainly not least, I want to thank Roderick and Bettina for their guidance during these years, for hours and hours of work together, for helping me understand the inner workings of the business, for setting high standards of work, for introducing me to so many great researchers, and for inspiring me to do great work.


\cleardoublepage

\tableofcontents

\listoffigures

\listoftables




\chapter{Introduction}
\label{chap:intro}

\ifthenelse{\boolean{includequotes}}{
\begin{quotation}
    \foreignlanguage{greek}{\textit{
    \textsc{δ}εικνύναι δὴ δεῖ τοῖς τοιούτοις ὅτι ἔστι πᾶν τὸ πρᾶγμα οἷόν τε καὶ δι᾽ ὅσων πραγμάτων καὶ ὅσον πόνον ἔχει. 
    \textsc{ὁ} γὰρ ἀκούσας, ἐὰν μὲν ὄντως ᾖ φιλόσοφος οἰκεῖός τε καὶ ἄξιος τοῦ πράγματος θεῖος ὤν, ὁδόν τε ἡγεῖται θαυμαστὴν ἀκηκοέναι συντατέον τε εἶναι νῦν καὶ οὐ βιωτὸν ἄλλως ποιοῦντι: μετὰ τοῦτο δὴ συντείνας αὐτός τε καὶ τὸν ἡγούμενον τὴν ὁδόν, οὐκ ἀνίησιν πρὶν ἂν ἢ τέλος ἐπιθῇ πᾶσιν, ἢ λάβῃ δύναμιν ὥστε αὐτὸς αὑτὸν χωρὶς τοῦ δείξοντος δυνατὸς εἶναι ποδηγεῖν.
    }}
    \footnote{``One should show such people what philosophy is in all its extent; the range of studies by which it is approached, and how much labour it involves. 
    For the person who has heard this, if she has the true philosophic spirit and that godlike temperament which makes her a kin to philosophy and worthy of it, thinks that she has been told of a marvellous road lying before her, that she must forthwith press on with all her strength, and that life is not worth living if she does anything else.''
    }
    \\
    \textcolor{white}{a} \hfill --- Plato, seventh letter.
\end{quotation}
}
{}



\section{Motivation}

As AI systems increasingly permeate critical domains such as healthcare, finance, mobility, and human resources; ensuring the responsible and trustworthy behaviour of these autonomous systems becomes imperative. Without proper safeguards, AI models can make decisions that are unsafe, biased, or otherwise misaligned with societal values. 
The need for trust in AI has gathered the attention of different stakeholders,
including academic institutions, private corporations, and regulatory bodies~\cite{EU_AI_Act_2021,US_AI_Bill_of_Rights_2022}.

The concept of trustworthy AI is broad, recent, and aspirational. Given its nascent stage, there is no consensus on what makes an AI system trustworthy or who has the authority to define it. Different stakeholders emphasize certain aspects over others, whether to serve their own interests or to build trust incrementally by addressing specific challenges. Meanwhile, many public and private institutions strive to be pioneers in deploying autonomous systems for critical decision-making, aligning their ambitions with the pursuit of more trustworthy AI.
One of the most influential attempts to shape the meaning and requirements for the broad concept of \emph{trustworthy AI} is the ``Ethics guidelines'' document~\cite{EU_HLEG} produced by the \emph{High-Level Expert Group on AI}, a diverse group of experts from both academia and industry appointed by the European Commission.

In~\cite{EU_HLEG}, trustworthy AI is generally defined to be lawful, ethical, and robust. 
The document outlines seven key requirements to achieve trustworthy AI, which can be seen as seven different fields of study that we need to collectively develop and understand. 
In a nutshell, these requirements are:
\begin{enumerate}
    \item \emph{Human agency and oversight}. AI systems should be designed ensuring oversight through human-in-the-loop, on-the-loop, and in-command mechanisms.
    \item \emph{Technical robustness and safety}. AI systems must be resilient, secure, and reliable, with fallback mechanisms to ensure safety in unexpected situations and protection against potential attacks.
    \item \emph{Privacy and data governance}. AI systems must respect privacy and data protection while ensuring data integrity, and legitimate access through robust governance mechanisms.
    \item \emph{Transparency}. AI systems must be transparent, with traceability mechanisms and clear explanations tailored to stakeholders. Users should be aware of AI interactions and understand its capabilities and limitations.
    \item \emph{Diversity, non-discrimination, and fairness}. AI systems must prevent unfair bias to avoid marginalization and discrimination while fostering diversity and accessibility.
    \item \emph{Societal and environmental well-being}.
    AI systems should benefit all, including future generations, by being sustainable and environmentally friendly.
    \item \emph{Accountability}. AI systems must have accountability mechanisms, including auditability for assessing algorithms, data, and design. Clear redress processes should be in place, especially for critical applications.
\end{enumerate}

While all requirements are important, in this thesis, we present advances in the directions of safety, fairness, transparency, and accountability, so we will only focus on these fields.
In the following Sections~\ref{sec:intro-safety},~\ref{sec:intro-fairness}, and~\ref{sec:intro-account}, we introduce each field of study, presenting first a broad approximation to the main problems and debates, followed by a concrete problem inside of each field that motivates the work presented in this thesis. 
We start with safety, follow with fairness and end with transparency and accountability. 
We bundle transparency and accountability together because our contribution, while mostly motivated by the accountability requirement, is essentially a method to better understand the behaviour of an AI system, and thus fits as well in the category of transparency.
A floating contribution of this thesis is a novel formalization that unifies previously existing concepts. Just as the list of requirements in~\cite{EU_HLEG} can be seen as \emph{what} an AI system needs to be trustworthy, formal methods are a popular answer to \emph{how} to implement these requirements.
We dedicate Section~\ref{sec:intro-formal} in this chapter to present a broad motivation for the use of formal methods and summarize our novel formalization.

\section{Safety}
\label{sec:intro-safety}

\subsection{Background}
Safety in AI broadly refers to ensuring that a system does not produce harmful or unintended consequences. It encompasses a range of issues, from preventing system failures to aligning AI decisions with ethical and legal standards. 
A key concern is technical robustness, which ensures that AI functions correctly under both normal and unexpected conditions. 
Examples of unexpected conditions that have been particularly studied are 
adversarial examples~\cite{GoodfellowSS14} and distributional shifts in the input data~\cite{wiles2022a}.

A fundamental concept in AI safety is verification and validation, where formal methods are used to mathematically prove the correctness of an AI model’s behaviour.
In safety-critical applications like aviation and medical diagnostics, regulatory frameworks often require rigorous validation before deployment. Additionally, fail-safe mechanisms must be in place to handle unexpected situations gracefully, allowing the system to revert to a safe state when necessary~\cite{CrenshawGRSK07,SetoKSC1998}.


Ensuring AI robustness requires designing models that generalize well beyond their training phase. 
Techniques such as adversarial training improve resilience by exposing AI models to perturbed or adversarial examples during training, making them less susceptible to manipulation~\cite{ijcai2021p591}. 
Formal verification methods, such as model checking~\cite{baier2008principles} and theorem proving~\cite{harrison2009handbook}, provide mathematical guarantees on system behaviour, ensuring that certain safety properties always hold.

Another crucial approach is certifiable AI, where models are designed to provide provable guarantees about their predictions~\cite{fisher2021towards}.
This approach has gathered particular attention for neural networks, where verification techniques can analyze how slight variations in input data affect model outputs, helping establish bounds on safe behaviour~\cite{albarghouthi2021introductionneuralnetworkverification}.

AI-driven control systems, particularly those used in robotics, autonomous vehicles, and industrial automation, require additional safety considerations~\cite{Pereira2020}.



\subsection{Safe Reinforcement Learning}
Reinforcement learning (RL)~\cite{sutton2018reinforcement} is one of the most successful approaches to several types of problems where an agent interacts with a probabilistic environment, modelled as a Markov decision process (MDP).
Notable examples beating human performance at complex games~\cite{mnih2015human,silver2016mastering} and discovering higher order structures of proteins~\cite{jumper2021highly}.
RL poses unique safety challenges because it learns optimal behaviour through trial and error, often by exploring unknown states. 
This exploration can lead to catastrophic failures if the system takes unsafe actions while learning. 
Safe RL methods aim to mitigate such risks by integrating safety constraints into the learning process.

One common approach in safe RL is reward shaping~\cite{ng1999policy}, where the reward function is designed to penalize unsafe actions, guiding the agent away from hazardous behaviours. 
Another method is constrained RL, where policies are optimized under predefined safety constraints~\cite{SimaoJS21,wen2018constrained}. 
These methods can be used not only to produce safe results but also to ensure safe exploration~\cite{WienhoftSSDB023,YangSJTS23}.

Constraints can be implemented directly to the MDP~\cite{altman2021constrained,gattami2021reinforcement,wachi2020safe}, or as regularizers to the corresponding loss functions in learning schemes like constrained policy optimization~\cite{achiam2017constrained} and trust region-based approaches~\cite{schulman15trust}, which steer the policy updates away from unsafe behaviours.

Another popular approach to safe RL is the use of restraining bolts~\cite{de2019foundations,de2020restraining}, which steer the learning process towards safe policies by restricting unsafe behaviour.
Another recent approach is to learn a controller together with a certificate that proves the controller to be safe~\cite{chatterjee2023learner}.

Shielded RL~\cite{AlshiekhBEKNT18,GiacobbeHKW21,Carr2022,ijcai2023p637} is another promising approach, where a safety layer acts as a filter that prevents the agent from taking dangerous actions. This can be achieved using formal verification techniques to ensure that the learned policy remains within safe bounds. 
Shields can be placed before the agent, serving as a mask of allowed actions (pre-shields), or after the agent, overwriting unsafe actions by safe ones (post-shields).
We illustrate these two settings in Figure~\ref{fig:intro-shielding-setting}.
While shielding methodologies are popular and collision avoidance in autonomous driving is among the common motivations, the work presented in this thesis is the first implementation of shielding techniques for collision avoidance in realistic driving simulators.


In this thesis, we present two contributions to shielding techniques for ensuring safety, 
that constitute Chapters~\ref{chap:delayed_shields}~and~\ref{chap:foceta}.
In Chapter~\ref{chap:delayed_shields}, we develop the theory of deterministic shields resilient to delayed observations, and present experiments in a gridworld and in the \textsc{Carla} driving simulator.
In Chapter~\ref{chap:foceta}, we report on our experience in using probabilistic shielding for autonomous valet parking. Since probabilistic shielding requires a more complex model of the agent and the environment, Chapter~\ref{chap:foceta} strongly focuses on how to build a realistic model.

\subsection[Deterministic Shielding Resilient to Delayed Observations]{Contribution: Deterministic Shielding Resilient to Delayed Observations (Chapter~\ref{chap:delayed_shields})}

Deterministic shields ensure system safety by constructing a safety game from an environmental model and a formal safety specification~\cite{koenighofer2019}. The maximally-permissive winning strategy allows all actions that won’t cause safety violations over an infinite horizon. Shields allow any action allowed by the maximally-permissive winning strategy, and overwrite potentially unsafe actions by safe ones.

Real-world control systems face delays due to data collection, processing, or transmission. Ignoring delays can cause safety-critical failures. 
We propose delay-resilient pre- and post-shields to guarantee safety under delays in the observations.
To synthesize shields, we extend the safety game to include worst-case delays, introducing imperfect state information.
Both pre- and post- shields require the maximally-permissive winning strategy for the corresponding safety game under delayed observation. 
We compute it following the algorithm proposed in~\cite{chen2018s}.

For post-shields, we need to define which of the available correct actions the shield will use to overwrite each potentially unsafe action.
To do so, we compute shields that maximize a given fitness function of states, and propose two fitness criteria: robustness and controllability. 
The robustness of a state measures how close it is to an unsafe state in the safety game graph. 
The controllability of a state is the maximal amount of delay under which that state can still be considered safe.

We tested deterministic shields resilient to delayed observations in the open source simulator \textsc{Carla}~\cite{dosovitskiy_carla_2017}, for avoiding collisions with pedestrians as well as car-on-car collisions in intersections.
We used the default driver available in \textsc{Carla} as our shielded agent.

\begin{figure}
\centering
\begin{subfigure}[b]{0.48\textwidth}
         \centering
         \includegraphics[width=\textwidth]{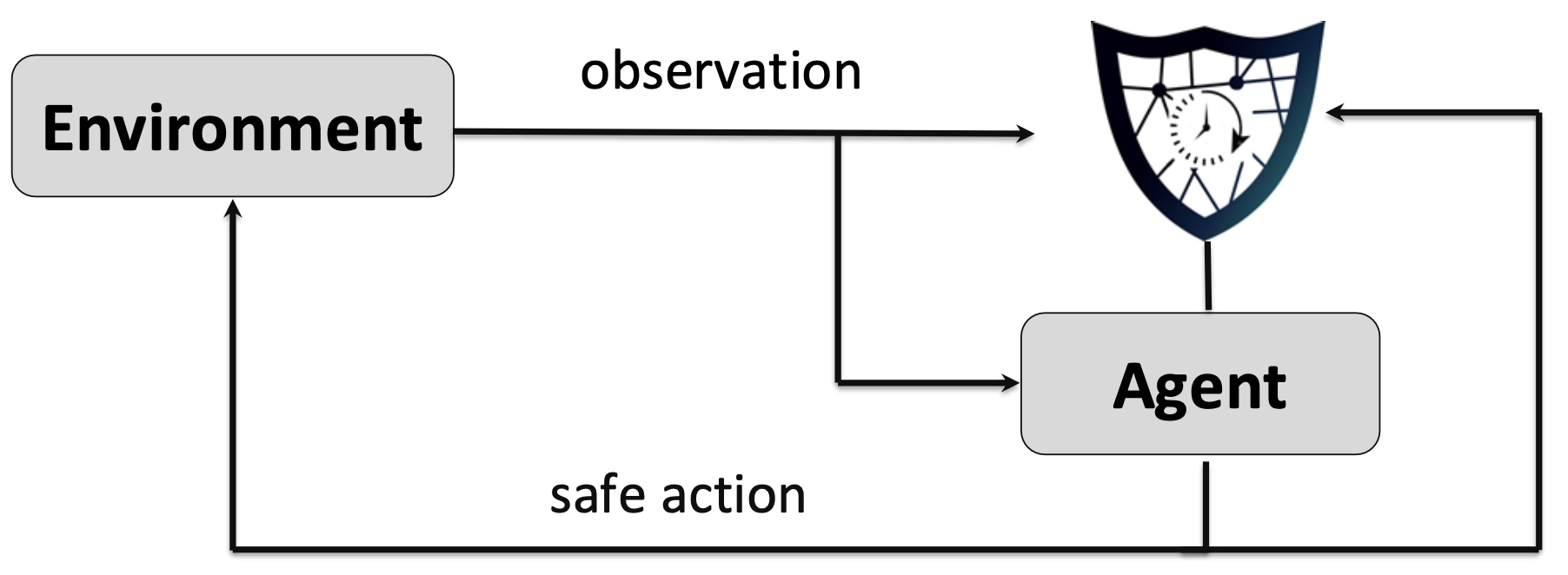}
         \caption{Pre-shield.}
     \end{subfigure}
    \hfill
     \begin{subfigure}[b]{0.48\textwidth}
         \centering
         \includegraphics[width=\textwidth]{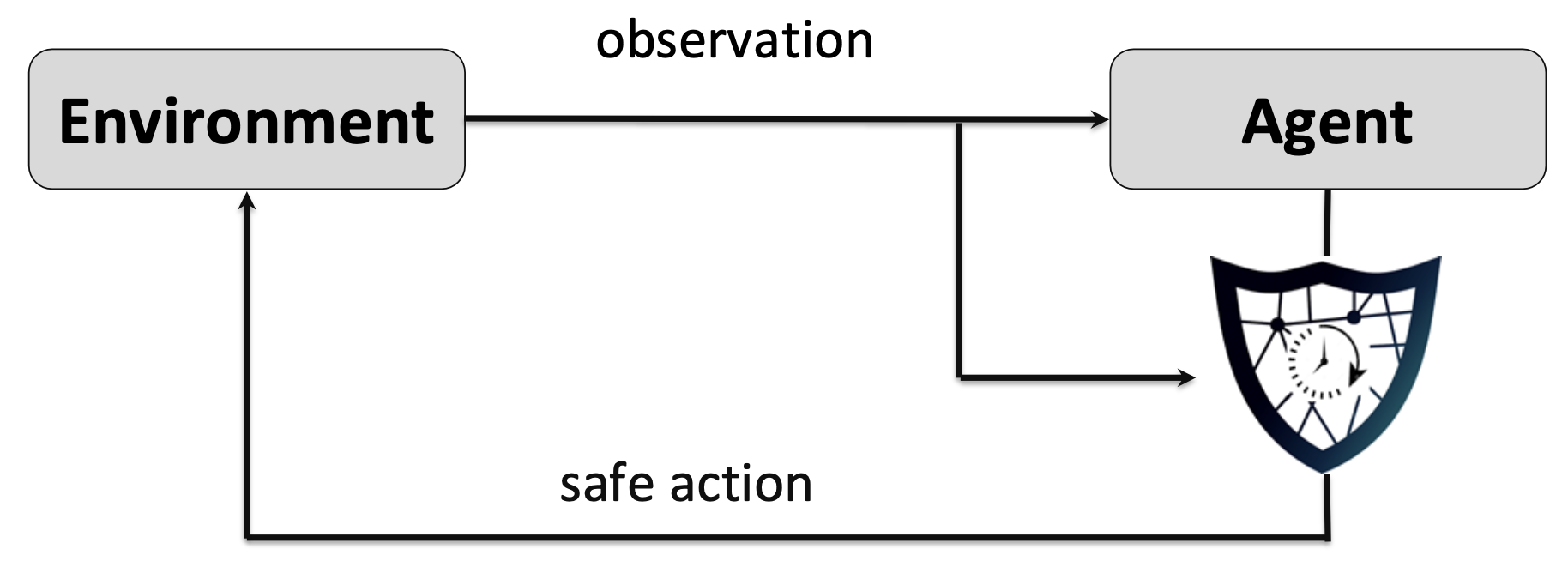}
         \caption{Post-shield.}
     \end{subfigure}
        \caption{Shielding scheme.}
        \label{fig:intro-shielding-setting}
\end{figure}

\subsection[Probabilistic Shielding for Autonomous Valet Parking]{Contribution: Probabilistic Shielding for Autonomous Valet Parking (Chapter~\ref{chap:foceta})}

Unlike deterministic shields, which enforce safety strictly, probabilistic shielding accounts for low-probability events but only intervenes when the risk of a collision exceeds a predefined threshold. This approach reduces unnecessary interventions, and uses the Markov decision process (MDP) as its underlying model, instead of the safety game.

The shield evaluates control commands by mapping sensor data and prior actions to an MDP state, then estimating the probability of avoiding collisions if the command is executed. If this probability falls below the threshold, the shield overrides the command. These probability computations rely on probabilistic model checking, requiring a well-structured MDP representation of the vehicle and its environment.

Building an appropriate MDP model is challenging—it must balance accuracy with computational feasibility. The model consists of the ego car and the pedestrians.
The ego car is represented via an abstraction of the Simrod digital twin~\cite{simrod}, with discretized actions and states to handle uncertainty.
The pedestrians are modeled with movement speeds following a normal distribution, varying across adults, elders, and children.

we tested probabilistic shields as part of a more complex agent developed as a shared effort in the \textsc{Foceta} project~\cite{aisola23foceta}.
In this case, the simulator used was Prescan, a proprietary tool partially developed within the project, and the goal of the shield is to act together with an emergency brake system to avoid collisions with pedestrians.

\section{Fairness}
\label{sec:intro-fairness}

\subsection{Background}

Fairness in AI is essential to prevent discriminatory outcomes, ensuring that automated decisions do not reinforce or exacerbate societal biases. 
AI models, trained on historical data, often inherit biases present in society, leading to discriminatory outcomes that disproportionately affect marginalized groups~\cite{barocas2023fairness}. 
Since AI systems play an increasingly significant role in decision-making processes across domains such as hiring, lending, healthcare, and law enforcement, ensuring fairness and preventing discrimination have become critical concerns and the focus of a burgeoning field of research~\cite{zhou2023fairness,blum2018preserving,chen2020fair,corbett2017algorithmic,dwork2019fairness,elzayn2019fair,ge2021towards,grazzi2022group,segal2023policy,wang2023survey}.
Addressing these biases is essential to developing ethical AI systems that align with societal values of justice and equality.

\paragraph*{Group fairness vs. individual fairness.}
Fairness in AI is typically framed in terms of two broad categories: group fairness and individual fairness.
Group fairness ensures that different demographic groups (e.g., based on race, gender, or age) receive similar outcomes from an AI system. This can be formalized using constraints such as demographic parity (equal selection rates across groups) or equalized odds (equal error rates across groups).
Individual fairness~\cite{gupta2021individual}, on the other hand, requires that similar individuals receive similar treatment, independent of their group membership. This is typically formulated using similarity metrics that measure how closely two individuals resemble each other in relevant attributes.

Balancing these two notions is challenging, as enforcing strict group fairness constraints may sometimes lead to violations of individual fairness and vice versa. Different fairness interventions prioritize one over the other, depending on the context and ethical considerations. In this thesis, we focus on group fairness properties.

\paragraph*{Sources of bias.}
AI systems can exhibit bias due to different models of the world that induce disparities. These biases can be broadly classified into two categories: intrinsic and extrinsic.

Intrinsic bias stems from biased training data that reflects historical inequalities or prejudices. For example, a hiring algorithm trained on past hiring decisions may reinforce gender disparities in hiring practices.
Extrinsic bias arises from the way AI models process and generalize information. Even if the data itself is unbiased, the learning algorithms may still introduce disparities due to optimization choices, feature selection, or model architecture.

Understanding the origin of bias is crucial in determining appropriate mitigation strategies. If the bias is intrinsic, interventions may involve adjusting the data representation, while extrinsic bias may require changes to the model’s learning process.

A key discussion in fairness research is whether lower accuracy on paper equates to a more just and correct model. 
If the training data shows an intrinsic bias against a certain group, 
it stands to reason that maximizing accuracy with respect to the biased dataset does not induce the most accurate model with respect to the real underlying unbiased data.
Therefore, an unbiased model, that will achieve lower accuracy with respect to the training data, is not only more fair, but arguably more accurate. 
However, it is not possible in many cases to determine what is the best-performing compromise.
This compromise and the impossibility to find a solution that satisfies all constraints has been studied in~\cite{suresh2021framework}.

\paragraph*{Types of fairness-inducing methods.}
Fairness interventions can be broadly categorized into three main approaches:

\begin{itemize}
    \item \emph{Pre-processing methods:} These focus on modifying the training data to remove bias before model training. Techniques include re-weighting samples, adjusting labels, and generating fair representations that obfuscate sensitive attributes~\cite{kamiran2012data,zemel2013learning}.
    \item \emph{In-processing methods:} These modify the training procedure to incorporate fairness constraints directly into the learning process. Regularization techniques and adversarial training are commonly used to ensure that the model does not learn biased patterns~\cite{zhang2018mitigating,kamishima2012fairness}.
    \item \emph{Post-processing methods:} These adjust the model’s predictions after training to 
    equalize outcomes across demographic groups without altering the underlying model~\cite{hardt2016equality}. 
\end{itemize}

Each approach has advantages and trade-offs. 
In summary, preprocessing ensures fairness at the data level but may not generalize well, while in-processing methods provide direct fairness guarantees but can be computationally expensive. 
Post-processing methods are easy to implement but tend to compromise individual fairness.

\subsection{Fairness in Sequential Decision-Making Problems.}
In sequential decision-making settings, such as loan approvals or criminal risk assessments, fairness concerns are magnified due to the compounding effects of biased decisions. Biased initial decisions can lead to feedback loops, where disadvantaged groups receive consistently lower opportunities over time, exacerbating inequalities.

Fairness interventions in sequential decision-making often involve tracking disparities over multiple time steps and designing policies that compensate for historical disadvantages.


Group fairness properties are described in terms of the joint probability distribution of the population and the outcomes. 
For example, demographic parity states that the probability of a favourable outcome must be independent of group membership.
In a sequential setting, these probabilities can be estimated using the relative frequencies of each outcome for each group. 

For example, consider a company building a large team, with a population that we can divide into two groups, $A$ and $B$, with respect to which the decisions must be fair.
After $T=1000$ candidates, $n_A$ candidates were from group $A$, out of which $n_A^1$ were offered a job. The rest $n_B$ candidates were from group $B$, and of them $n_B^1$ were offered a job. We can estimate the probability of a candidate from groups $A$ and $B$ of getting an offer as:
\[
\PP(\textit{offer} \mid A) \approx \frac{n_A^1}{n_A},\quad 
\mbox{and} \quad
\PP(\textit{offer} \mid B) \approx \frac{n_B^1}{n_B} = \frac{n_B^1}{T-n_A}.
\]
Demographic parity is formally expressed as $\PP(\textit{offer} \mid A) = \PP(\textit{offer} \mid B)$, and in terms of relative frequencies it would mean that 
\begin{equation}
\label{eq:limit-fairness}
    \lim_{T\to\infty}\left( \frac{n_A^1}{n_A} -  \frac{n_B^1}{n_B} \right) = 0.    
\end{equation}
However, in many cases, if the convergence is too slow, it is not enough to guarantee fairness in the long run. 
Group fairness metrics are emerging properties, which by their own nature cannot be expected when only looking at a few decisions. After seeing $T=6$ candidates, three from each group, and hiring one from group $A$ and two from group $B$, the difference between relative frequencies is $1/3$, far from 0, but the process has just started, so it is not reasonable for demographic parity to emerge yet.
If the company continues the interview process and after $T=1000$ the acceptance ratio of group $A$ is still $2/3$ and the acceptance ratio of group $B$ is only $1/3$, we can argue for an underlying bias.
This concept can be formalized by stating that after a certain finite horizon of $T$ decisions, the relative frequencies may differ by no more than a certain threshold $\kappa\in [0,1]$.

One way of looking at the study of fairness in a bounded horizon is with a monitoring perspective: if a process is biased after $T=1000$ decisions, it is likely to be fundamentally biased, in the sense that the limit in Equation~\ref{eq:limit-fairness} does not converge to 0, so we raise the alarm.
However, even if the relative frequencies would converge in the limit to the same value, it can be the case that the convergence is too slow. In such cases, finding a considerable disparity between relative frequencies should not be interpreted as a proxy for a bias in the limit, but as a tangible bias that is a problem in itself.

On the other hand, once a finite horizon $T$ has been predefined, 
an algorithm may act with fairness until the $T$-th decision, and then act with bias after that, satisfying fairness in the bounded horizon, but failing in the unbounded horizon. To this end, we also study the concept of $T$-periodic fairness, 
where we require the relative frequencies to differ by no more than a certain threshold $\kappa$ after $k\cdot N$ decisions, for all $k\in\{1,2,3,\dots\}$~\cite{pmlr-v235-alamdari24a}.

\subsection[Fairness Shielding]{Contribution: Fairness Shielding (Chapter~\ref{chap:fairness})}
In this thesis, we present \emph{fairness shields} as a post-processing group fairness enforcement solution for bounded and periodic horizons in sequential classification problems.

\begin{figure}[t]
    \centering
    \begin{tikzpicture}
        \draw[fill=black!10!white]   (-1,0)   rectangle   (1.2,1)   node[pos=0.5, align=center]    {Biased\\classifier};
        \draw[fill=yellow!35!white]   (5,0)   rectangle   (6.8,1) node[pos=0.5,align=center]  {Fairness\\ shield};
        \draw[->]   (1.2,0.6)   --  node[align=center,above]    {prelim. decision\\[-0.15cm] {\scriptsize \texttt{accept?}/\texttt{reject?}}}    (5,0.6);

        \node   (a)  at   (2.5,-2)   {\fbox{\Huge\faMale}};
        \node   (b) [left=0.01cm of a]    {\Huge\faFemale};
        \node  (c) [left=0.01cm of b]   {\Huge\faFemale};
        \node[black!75!white] (d) [left=0.01cm of c]    {\Huge\faMale};
        \node[black!50!white] (e) [left=0.01cm of d]    {\Huge\faMale};
        \node[black!25!white] (f) [left=0.01cm of e]    {\Huge\faMale};

        \draw[->]   (a)  -- (2.5,-0.6)   --  (0.1,-0.6) --  (0.1,0) node[align=center]    at    (-0.4,-0.6)  {input\\ features\\[-0.15cm] {\scriptsize \texttt{gender},  \texttt{age}, ...}};
        \draw[->]   (a) --  (2.5,-0.6)  -- node[below,align=center]  {protected feature\\[-0.15cm] {\scriptsize \texttt{gender}}} (5.2,-0.6)  --  (5.2,0);

        \node[align=center] (x)   at  (4.2,-2)  {final decision\\[-0.15cm] {\scriptsize \texttt{accept/reject}}};
        \draw[->]   (5.9,0)   --  (5.9,-2)    --  (x);
    \end{tikzpicture}
    \caption{The operational diagram of fairness shields.}
    \label{fig:fair-shield-schematic-intro}
\end{figure}
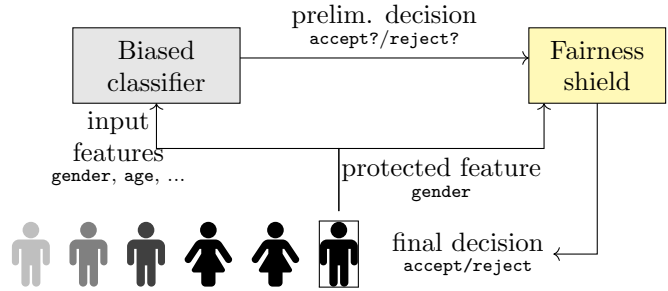

As we illustrate in Figure~\ref{fig:fair-shield-schematic-intro}, the shield monitors the decisions of a potentially biased classifier and has the power to override them. 
Given a predefined fairness criterion and a time horizon or period, the shield observes individuals’ protected attributes, the classifier’s recommendations, and the cost of modifying decisions. It then ensures fairness while minimizing intervention costs.

To guarantee fairness in finite horizons, fairness shields are computed as bounded-horizon optimal control problems with a hard fairness constraint and a soft cost constraint. The fairness constraint ensures that empirical bias remains below a threshold, measured either at the end of the horizon or periodically. The soft cost constraint discourages excessive interventions by minimizing total expected costs.

The problem becomes harder for periodic horizons, as there are infinitely many input sequences that the shield has to potentially deal with. We conjecture that optimal shields for periodic horizons cannot be described with finite resources, and propose three ``best effort'' solutions that modify the computation of bounded horizon shields to obtain periodic shields.
With these solutions, we lose the hard fairness guarantee for all traces. As a remedy, we study conditions on the incoming traces that ensure the shields achieve fair outputs.
These solutions can be classified into the \emph{static} approach, and the \emph{dynamic approach}.

The static approach consists of resetting and reusing the same shield after each time period. If the shield has been computed for a finite horizon $T$, at step $T+1$, the internal counters are reset to zero and the shield enforces fairness in the segment from $T+1$ to $2T$ in the same way as it did for the segment from $1$ to $T$.
A static shield applies the same fairness criterion for each segment of decisions of length $T$, with the hope that the same fairness criterion applies when concatenating all segments of length $T$.
The advantage of this approach is simplicity, both in design and computational complexity. 
The main drawback is that the hard guarantees hold for small subsets of traces.

The dynamic approach consists of recomputing the shield after each period, modifying the fairness condition to account for the accumulated decisions of the trace so far. The advantage of this approach is that the fairness criterion is tracked more accurately, so these shields tend to interfere less often with the classifier while ensuring fairness in a large subset of traces. 
The main drawback is that the synthesis algorithm has to be executed at the end of each period. 

To understand the difference between the static and dynamic approach, recall the example of hiring applicants from groups $A$ and $B$, trying to enforce a threshold on demographic parity no larger than $\kappa=0.2$. 
After the first $T=1000$ decisions, the acceptance rate for group $A$ is 0.5, and the acceptance rate for group $B$ is 0.35. Thus, this segment is biased towards group $A$, but not more than the threshold. 
A dynamic shield would allow the next segment of $T$ decisions to have an acceptance rate for group $A$ of 0.5 and $0.72$ for group $B$. Even if looking at the segment from $T+1$ to $2T$, the difference in acceptance rates is larger than the threshold, the dynamic shields knows that group $B$ can overcompensate for the low rate in the first segment, as long as demographic parity is kept in the threshold for the longer segment of the first $2T$ decisions.
On the other hand, a static shield would not allow $B$ to overcompensate, being more restrictive than the dynamic shield.

Shields rely on a known or learned distribution of future decisions and costs. Even if the distribution is imprecise, fairness guarantees remain intact—only cost-optimality may be affected. Shields are computed via dynamic programming, optimized to run in polynomial time for a wide variety of group fairness metrics by abstracting traces to a relevant set of counters.

\section{Transparency and Accountability}
\label{sec:intro-account}
\subsection{Background}

Beyond ensuring that AI systems behave safely and fairly, they must also be explainable. Trust in AI depends not only on its performance but also on its transparency --- users and stakeholders must understand why a system made a particular decision. 
Moreover, when AI systems cause harm, it is essential to have robust accountability mechanisms in place to determine responsibility and take corrective action. 
Explainability is also key for accountability: if an AI system causes harm or fails in an unexpected way, it must be possible to trace its reasoning to diagnose the issue and assign responsibility. Without explainability, AI remains a ``black box'', making it difficult to audit, improve, or justify its decisions in legal and ethical contexts. By integrating explainability into AI design, we can build systems that foster trust, enable human oversight, and ensure accountability in decision-making.


Explainability and accountability are closely intertwined. 
In human accountability processes, understanding why a person acted in a certain way is essential for assigning responsibility and determining degrees of culpability. Courts, for example, consider intent, circumstances, and explanations when assessing guilt. Similarly, for AI systems, understanding why a particular decision or action was taken is crucial in determining liability when harm occurs.


Accountability in software systems has long been a topic of interest in fields such as cybersecurity, safety-critical systems, and software engineering~\cite{feigenbaum2011towards,jagadeesan2009towards,kusters2010accountability}. 
Traditional software accountability often relies on clear specifications, audit logs, and formal verification techniques to determine responsibility when a system fails or produces an unexpected outcome~\cite{kroll2017accountable,feigenbaum2020accountability}.

However, AI-based systems, particularly those leveraging machine learning, present unique challenges. Unlike traditional rule-based software, many AI models operate effectively as black boxes, making it difficult to trace the logic behind their decisions. This lack of transparency complicates accountability, as it becomes unclear whether failures arise from design flaws, biased training data, unforeseen interactions, or user misuse.


\subsection{The Role of Intention in Accountability}

A crucial aspect of human accountability is the notion of \emph{intention}. 
Understanding whether an action was intentional, accidental, or due to negligence is key in determining degrees of responsibility. Courts, for instance, distinguish between premeditated actions and unintended mistakes, applying different legal consequences accordingly~\cite{knoops2016mens}.

For AI, the concept of intention requires further study. 
As a caveat,  we explicitly avoid the debate on whether AI systems may have consciousness or free will~\cite{buttazzo2001artificial,chella2013artificial}.
In any case, a large part of the theoretical development on the concept of intention can be applied to any rational planning agent that acts with goals and constrained resources~\cite{BratmanIP88,bratman1987intention}.
This general definition also applies to many AI agents, regardless of the working notion of consciousness and free will.
AI systems exhibit functionally intentional behaviour, such as pursuing a specified objective or optimizing for a particular reward. 
Understanding intentional behaviour in AI can help refine accountability frameworks by distinguishing between different sources of harm.

\subsection[Intentional Behaviour in Agents operating on MDPs]{Contribution: Intentional Behaviour in Agents operating on MDPs (Chapter~\ref{chap:intention})}
Interpreting the decision-making processes of modern machine-learning-based agents in probabilistic settings presents significant challenges due to the absence of explicit goals or intentions in their models. Traditionally, intention is connected to planning in both cognitive and computational reasoning~\cite{BratmanIP88}. 
In this thesis, we consider intention as the ``state of the world'' an agent plans toward, serving as a proxy for its internal reasoning. 
Since modern agents, particularly those trained using reinforcement learning, do not have explicitly modeled beliefs or reasoning processes, their intentions can only be inferred probabilistically.

Our proposed framework quantitatively assesses whether an agent's behaviour exhibits evidence of intentionality. 
Instead of making binary assertions about intention, it provides confidence levels and quantified evidence. 

We model autonomous agents as policies within a probabilistic environment as MDPs. 
Key to our framework are the notions of agency and intention quotient. 
The agency measures the agent’s ability to influence outcomes, defined as the probability difference between optimally achieving or avoiding a goal.
The intention quotient is a normalized value between 0 and 1 that measures how close the policy of the agent is to achieving or avoiding a goal.
Intention quotient quantifies the degree of apparent intentional behaviour, with values close to 1 indicating high evidence of intentionality.

These notions can be used to study an agent preemptively by calculating agency and intention quotients towards a particular goal around the states of interest.
With the mindset of serving a potential accountability process, we also propose a retrospective methodology to study concrete traces that end up in a harmful state.
When the trace under study does not offer enough evidence for a confident assessment, we produce counterfactual traces and use them to increase confidence.

Our method can help understand whether an agent shows evidence of acting intentionally towards an end. While this is a necessary step for accountability processes, it is not the only one, and questions such as who is responsible or who has to pay for the harm intentionally produced by the agent are out of the scope of this work.

\section{Formal Methods}
\label{sec:intro-formal}

In the field of computer science, formal methods can be broadly described as rigorous mathematical techniques used to specify, verify, and prove system properties. Rigorous formalizations not only have the advantage of providing provable guarantees, but also a deep understanding of those guarantees.

Formal methods have long been a cornerstone of trusted computing, being used in safety-critical domains like controllers in avionics and medical devices, as well as performance-critical system code like arbiters and process managers~\cite{woodcock2009formal,ter2024formal}.
Because of this history, many researchers and practitioners think that formal methods for AI are destined to play a central role in the future of trustworthy AI~\cite{li2023trustworthy}.
By applying logical reasoning, theorem proving, and model checking, formal methods provide strong guarantees about software and hardware systems. Unlike testing, which only checks for correctness in specific scenarios, formal verification provides mathematical certainty for all possible cases within a given model.
However, the guarantees are only as good as the model, and, as the popular saying goes, all models are wrong --- albeit some of them are useful.
This should serve as a constant reminder throughout the thesis that all results and guarantees hold in the ideal model, and any consequences of those results on reality are mediated by how good the model is as a description of the real world.

\subsection[The Reactive Decision Making Framework]{Contribution: The Reactive Decision Making Framework (Chapter~\ref{chap:reactive_decision_making})}

The frameworks used to formalize the different contributions in this thesis use different models: safety games are used for deterministic safety shielding, MDPs are used for probabilistic safety shielding and intention analysis, and
sequential classification problems are used for fairness shielding.

However, these frameworks always have in common an agent interacting with an environment. 
In this thesis, we introduce the \emph{reactive decision making} framework as a formal generalization of the aforementioned formal models. 
We also formalize the concept of shielding in the reactive decision making framework, generalizing previously differentiated notions~\cite{koenighofer2019} and refining the definitions to account for edge cases that had been previously mistreated.

\section{Outline of the Thesis}

Chapter~\ref{chap:preliminaries} introduces the notation and previously established concepts that are required throughout the thesis. While this chapter, and the thesis as a whole, is self-contained, the exposition may be too succinct for an unfamiliar reader.
For a deeper understanding of the background material, we give pointers to adequate reference materials.
Chapter~\ref{chap:reactive_decision_making} introduces the reactive decision making framework and the notion of shielding. 
We show how the formal frameworks used in this thesis are particular cases of this general formalization.
We also show how previous notions of shielding for safety properties correspond to shielding as described in the reactive decision making framework.

Chapters~\ref{chap:delayed_shields} and~\ref{chap:foceta} explore shielding for safety properties with some source of uncertainty in the context of autonomous driving.
Chapter~\ref{chap:delayed_shields} explores deterministic shielding for safety properties resilient to delayed observations.
We show how shields can be extended to guarantee a safety specification even with imperfect information, and study different methods to choose a corrective action.
In Chapter~\ref{chap:foceta}, we report our experience on using probabilistic shielding 
on an autonomous car operating in a parking lot with the objective of avoiding collisions with pedestrians.
We describe how to build realistic models of a car and pedestrians in its vicinity, and report the results of a comparative test between our shields and an automatic emergency brake system.

We move our focus from safety to fairness in Chapter~\ref{chap:fairness}, where we introduce the notion of fairness shields and describe how to compute different types of fairness shields for finite and periodic time horizons.
We validate the usefulness of fairness shields on an extensive evaluation against standard benchmarks from the literature on algorithmic fairness.

Chapter~\ref{chap:intention} moves away from runtime enforcement towards explainability and accountability. 
In this chapter, we present our framework for studying intentional behaviour on agents operating in MDPs using the notions of agency and intention-quotient. 
We present our retrospective methodology, intended for accountability processes after harm has occurred, and showcase how it would work in a toy example.

Chapter~\ref{chap:conclusion} rounds up the thesis with future work and concluding remarks, followed by an appendix detailing the publications associated with the completion of the PhD program.

\ifthenelse{\boolean{includequotes}}{
While the landscape of AI research is currently very English-centric, its fruits shouldn't be. As a nod to diversity, each chapter is preceded by a famous quote, mostly in languages other than English. 
Do not take them too seriously.
The last pages of this document contain a cheat sheet meant to help the reader go through the notation.
Let the power of well-structured indices prevail when the power of well-written text may fail.
}
{}

\chapter{Preliminaries}
\label{chap:preliminaries}
\ifthenelse{\boolean{includequotes}}{
\begin{quotation}
    \foreignlanguage{greek}{
    \textit{ΑΓΕΩΜΕΤΡΗΤΟΣ ΜΗΔΕΙΣ ΕΙΣΙΤΩ.}}
    \footnote{Let no one ignorant of geometry enter here.}
    \\
    \textcolor{white}{a} \hfill
    --- Inscription above the entrance of Plato's Academy.
    \footnote{The existence of this inscription is a disputed fact, since the earliest known documents that mention it date about 700 years after Plato's death. It has become, however, a powerful meme.}
\end{quotation}
}{}

In this chapter, we will briefly cover the basic concepts that will be used throughout the thesis. 
This serves the double purpose of being a lightweight introduction to the topics and fixing the notation used throughout the thesis. 

\section{Basic Notation}
\label{sec:prelim-basic-notation}

\paragraph*{Sets, numbers, and functions.} 
We use $\BB = \{\bot, \top\}$ to denote the Boolean domain, $\NN =\{0,1,\dots\}$ to denote the set of natural numbers, 
$\ZZ$ to denote the set of integer numbers, and
$\RR$ to denote the set of real numbers.
Given $a<b\in\RR$, we use $(a,b)$ to denote the open interval between $a$ and $b$, $[a,b]$ to denote the closed interval, and $(a,b]$ to denote the interval open at one end and closed at the other.
Given a real number $a\in\RR$, we use $\lfloor a\rfloor$ to denote the largest integer that is less or equal than $a$ (i.e., the \emph{floor} of $a$), 
$\lceil a \rceil$ to denote the smallest integer that is greater or equal than $a$ (i.e., the \emph{ceiling} of $a$) and $\round{a}$ to indicate the integer that is closest to $a$ (i.e., the result of \emph{rounding} $a$).
We use the standard convention that $\round{a} = \lceil a \rceil$ when $a$ is at the same distance of $\lfloor a \rfloor$ than $\lceil a \rceil$.
In general, it is true that $a-1 \leq  \lfloor a \rfloor \leq a \leq  \lceil a\rceil \leq a+1$.
Given $a,b\in\RR$, we use the notation $a\ll b$ to indicate that $a$ is \emph{much smaler} than $b$, and $a\gg b$ to indicate that $a$ is \emph{much greater} than $b$, where how much is \emph{much} depends on the context. 

Given a finite set $X$, we denote its cardinality by $|X|$.
Given a function $f\colon \X\to \Y$, and a subset $X\subseteq \X$, the image set is $f(X) = \{ y\in \Y\::\:  \exists x\in X,\, f(x) = y\}$. Similarly, given a subset $Y\subseteq \Y$, the antiimage set is $f^{-1}(Y) = \{ x\::\: f(x)\in Y \}$.
Given an arbitrary domain $\X$ and a function $f\colon X\to \RR^n$, 
the \emph{support} of $f$ is $\Supp(f) = \{x\in \X\::\: f(x) \neq 0\}$.
Let $X$ be a subset $X\subseteq \X$, we define the \emph{indicator function} of $X$ in $\X$ as $\1_X\colon \X\to \{0,1\}$, 
such that $\1_X(x) = 1$ if and only if $x\in X$. 
We will use the equivalent notation $\1[x\in X]$.
In the special case where the set is a single element, $X=\{y\}$, 
we will use the notation $\1[x = y]$. 
Similarly, if $X = \X\setminus y$, we will denote the indicator function as $\1[x\neq y]$.

\nomenclature{$\1_X(x)$}{Indicator function of $x\in X$}
\nomenclature{$\Supp(X)$}{The support of the function $X$}

\paragraph*{Words and languages.}
An alphabet $\Sigma$ is a finite set. A word (or trace) in an alphabet is a sequence of elements in the alphabet.
We use $\Sigma^i$ to denote the set containing words of length $i$ in the alphabet $\Sigma$ 
and $\Sigma^{\leq k}$ to denote the set of words of length at most $k$ in $\Sigma$, i.e., $\Sigma^{\leq k} = \cup_{j=0}^k \Sigma^j$.
We use $\Sigma^*$ for the set of all words of finite length, i.e., $\Sigma^* = \cup_{k=0}^\infty \Sigma^k$.
We use $\Sigma^\omega$ to denote the set of infinite words in $\Sigma$, and $\Sigma^\infty = \Sigma^*\cup \Sigma^\omega$.
We will also use $2^\Sigma$ to denote the power set of $\Sigma$.
Given a word $w\in\Sigma^\infty$, we use $|w|$ to refer to its length. 
Given a word $w = \sigma_0\sigma_1\dots\in\Sigma^\infty$, 
for $n,m$ such that $n\leq m \leq |w|$, 
we will use the notation $w_{[n:m]} = \sigma_n\sigma_{n+1}\dots \sigma_{m}$, 
and in the case $n=0$ we will use the notation $w_{:m}$ to refer to $w_{[0:m]}$.
Given two words $w_1=\sigma_0\ldots\sigma_n, w_2=\sigma_{n+1}\dots\sigma_{n+m}$ 
in $\Sigma$, the concatenation is the word $w_1\cdot w_2 = \sigma_0\dots\sigma_n\sigma_{n+1}\dots\sigma_{n+m}$.
Given a finite word $w = \sigma_1\dots\sigma_n$, 
we use $w^\omega$ to denote the resulting word of concatenating $w$ infinitely many times.
A set of words $L\subseteq \Sigma^\infty$ is called a language.
\nomenclature{$\Sigma$}{Alphabet}
\nomenclature{$\mathcal{L}$}{Language}

%
%


\section{Probability Theory}
\label{sec:prelim-probability-theory}

In this section, we define some basic notions of probability theory. We refer the reader to~\cite[Chap. 1]{durrett2019probability} for a more detailed discussion.

Let $\Omega$ be a non-empty set, a \emph{$\sigma$-algebra} is a set $\F\subseteq 2^\Omega$ that satisfies:
    (i) $\Omega \in \mathcal{F}$,
    (ii) if $A \in \mathcal{F}$, then $\Omega \setminus A \in \mathcal{F}$, and
    (iii) if $\{A_i\}_{i=1}^{\infty}$ is a countable collection of sets in $\mathcal{F}$, then $\bigcup_{i=1}^{\infty} A_i \in \mathcal{F}$.
The tuple $(\Omega,\F)$ is a \emph{measurable space}.
Given a set $\Omega$ and a subset $F\subseteq 2^\Omega$,
the $\sigma$-algebra generated by $F$ is the smallest $\sigma$-algebra $\F$ such that $F\subseteq \F$.
The most commonly used measurable space is $(\RR^n,\B^n)$, where $\B^n$ is the Borel $\sigma$-algebra on $\RR^n$, i.e., the $\sigma$-algebra generated by the open sets of $\RR^n$.
Let $(\Omega,\F)$ and $(\Omega', \F')$ be two measurable spaces, a function $f\colon \Omega\to\Omega'$ is \emph{measurable} if for every $B\in \F'$, $f^{-1}(B)\in \F$.

\nomenclature{$\Omega$}{Sample space (of a probability space)}
\nomenclature{$\F$}{$\sigma$-algebra}
\nomenclature{$\PP$}{Probability measure}
\nomenclature{$\B^n$}{The Borel $\sigma$-algebra of $\RR^n$}
\index{space!measurable space}
\index{space!probability space}
\index{Borel $\sigma$-algebra}

A \emph{probability space} is a tuple 
$(\Omega, \mathcal{F}, \PP)$, 
where $(\Omega, \F)$ is a measurable space and $\PP\colon \F\to [0,1]$ is a function satisfying:
    (i) $\mathbb{P}(\Omega) = 1$, 
    (ii) if $A\in \F$, then $\PP(\Omega\setminus A) = 1- \PP(A)$, and
    (iii)  for any countable collection $\{A_i\}_{i=1}^{\infty}$ of disjoint sets in $\mathcal{F}$, we have $\mathbb{P}\left(\bigcup_{i=1}^{\infty} A_i\right) = \sum_{i=1}^{\infty} \mathbb{P}(A_i)$.
    $\PP$ is called a \emph{probability measure} over $(\Omega,\F)$.
A \emph{random variable} on $\Omega$ is a measurable function $X\colon \Omega\to \RR$ from a probability space $(\Omega, \F, \PP)$ to $(\RR, \B)$. We typically think of a random variable as a symbol that takes a value in $B\in\B$ with probability $\PP(X^{-1}(B))$. 
We also denote $\PP(X^{-1}(B))$ as $\PP[X\in B]$. 
When $B=\{b\}$, we may denote $\PP(X^{-1}(B))$ as $\PP[X=b]$.

A random variable $X$ induces a a probability measure $\mu$ on $(\RR, \B)$, called its \emph{probability distribution} (or simply \emph{distribution}), by setting $\mu\colon\B\to\RR$, $\mu(A) = \PP(X\in A)$.
The distribution of a random variable is usually described using its \emph{distribution function} $F\colon \RR\to [0,1]$, defined as $F(x) = \PP(X\in (-\infty, x))$. 
We say that $X$ \emph{follows} the distribution $F$, and denote it by $X\sim F$.
The set of distributions over $\RR$ is the set of probability measures over $(\RR, \B)$, and denoted as $\D(\RR)$.
For a countable set $\Omega$, a \emph{distribution over $\Omega$}
is any function $d\colon \Omega\to \RR$, such that there exists a probability space $(\Omega, \F, \PP)$ satisfying for all $\omega\in\Omega$ that $\{\omega\}\in\F$ and $\PP(\{\omega\}) = d(\omega)$. 
We denote the set of distributions over $\Omega$ as $\D(\Omega)$.

\nomenclature{$\D(\Omega)$}{Set of probability distributions over $\Omega$}
\nomenclature{$X\sim F$}{The random variable $X$ follows the distribution $F\colon \RR\to [0,1]$}
\nomenclature{$\EE$}{Expected value (of a random variable)}

The \emph{expected value} of a random variable is 
$\EE[X] = \int_{\Omega} X(\omega) d\PP(\omega)$, 
where $\int$ is the standard Lebesgue integral on the measure given by $\PP$.
\index{expected value}

\section{Deterministic Two-Player Games}
\label{sec:prelim-TwoPlayerGames}

\subsection{Games with Perfect Information}

The deterministic two-player game is a widely used formalism to model 
deterministic interactions between an agent and an environment, 
where the environment is considered adversarial.

Formally, a \emph{deterministic two-player game} is a tuple 
$\G = (S, s_0, S_{env}, S_{ag}, \Act, \T, \Acc)$, 
where $S=S_{env}\cup S_{ag}$ is a finite set (the \emph{state space}),
composed of the disjoint sets $S_{env}$ (states of the environment) 
and $S_{ag}$ (states of the agent).
There is a special state,
$s_0\in S_{env}$, the initial state.
$\Act$ is the set of actions, 
$\T \subseteq (S_{env}\times S_{ag})\cup (S_{ag}\times \Act\times S_{env})$
is the transition relation 
and $\Acc \subseteq S^\omega$ is the winning condition for the agent.
For convenience, we usually write
$s \xrightarrow{\sigma} s'$
for $(s, \sigma, s')\in \mathcal T$, 
and $s' \xrightarrow{u} s$ for $(s',s)\in \T$, where $u$ stands for ``undefined''. 
The state space is required to be deadlock-free, i.e., for all $s\in S_{env}$,
there exists $s'\in S_{ag}$ such that $(s,s')\in \T$.
Similarly, for all $s'\in S_{ag}$, there exist $\sigma\in\Act$ and $s\in S_{env}$
such that $(s',\sigma,s)\in \T$.
The transition relation is deterministic for the agent, i.e.,
for any $s\in S_{ag}$ and  $\sigma\in\Act$,
if $s \xrightarrow{\sigma} s'$ and $s \xrightarrow{\sigma} s''$ then $s'=s''$. 
It is useful to think of $\G$ as a bipartite graph, 
where the set of nodes is partitioned into $S_{ag}$ and $S_{env}$,
the edges from $S_{ag}$ to $S_{env}$ are labeled with symbols in $\Act$, 
and the edges from $S_{env}$ to $S_{ag}$ are unlabeled. 

On some occasions, it will be useful to work with a concrete set of actions for the environment. 
An \emph{environment action set} is a set $\Act_{env}$, 
that uniquely labels each transition of the environment,
making the transition relation deterministic for the environment.
That is, for every pair of states $s\in S_{env}, s'\in S_{ag}$
such that $(s,s')\in \T$, 
there exists a unique label $x\in \Act_{env}$ associated with the pair.
We write this relation as $s\xrightarrow{x}s'$.
To make the environment transitions deterministic, 
we require that given $s\in S_{env}, x\in \Act_{env}$, 
there is at most one state $s'\in S_{ag}$ such that $s\xrightarrow{x}s'$.

Note that a set $\Act_{env}$ has at least as many labels as
the maximum out-degree of $\T$ for states in $\S_{env}$, that is
\begin{equation}
\label{eq:safety-games-env-out-degree}
    |\Act_{env}| \geq \max_{s\in S_{env}} \left|\{ s'\in S_{ag} \::\: (s,s')\in \T\}\right|.
\end{equation}
Furthermore, this bound is tight:
since the condition on the transition is defined independently for
every environment state, 
we can reuse labels without any issue for different environment states.

\paragraph*{Plays.} The game is played by two players: the agent and the environment.
In the safety game, at every state $s\in S_{ag}$, 
the agent chooses a transition $s\xrightarrow{\sigma}s'$, 
and at every state of the environment $s\in S_{env}$, 
the environment chooses a transition $s\xrightarrow{u}s'$.
Together, they produce a trace of states 
$\tau = [ s_0, s_1, \dots ]$, where for all $i\in \NN$, 
$s_{2i}\xrightarrow{u}s_{2i+1}$ and there exists $\sigma_i$
such that $s_{2i-1}\xrightarrow{\sigma_i}s_{2i}$.
In the game context, a trace of states is also called a \emph{play}.
Sometimes, it is useful to consider the concrete actions of the agent that give rise to a play. 
For a given play $\tau = [ s_0,s_1,s_2,\dots ]$, 
the corresponding \emph{state-action play} is $\tau=[s_0,s_1,\sigma_1,s_2, s_3,\sigma_2, s_4]$, 
where for all $i\in\NN$, $s_{2i-1}\xrightarrow{\sigma_i}s_{2i}$.
Since the actions are determined by state transitions, it is equivalent to talking about plays as state sequences or state-action sequences.
We will use state-action plays whenever concrete actions take an essential role.
The set of all plays of a game $\G$ is denoted by $\Pi(\G)\subseteq (S_{env}\times S_{ag})^\omega$.

\paragraph*{Strategies.}
A \emph{strategy} for the agent is a function
$\xi\colon (S_{env}\times S_{ag})^*\to~2^{\Act}$,
that given a trace $\tau\in S^*$, 
and produces a set of actions $\xi(s)\subseteq \Act$.
A strategy is \emph{memoryless} if it only depends on the last state of the trace.
In such cases, we will denote it as a function $\xi\colon S_{ag}\to 2^{\Act}$.
A strategy is \emph{deterministic} if $\forall \tau\in S^*$, 
$|\xi(\tau)| = 1$.
When it is clear from context that a strategy $\xi$ is deterministic, 
we will denote it as a function $\xi\colon S_{env}\times S_{ag}^*\to \Act$,
and as a function $\xi\colon S_{ag}\to\Act$ when it is also memoryless.
A play 
$\tau = [ s_0, s_1, \ldots]$  
\emph{is valid under a strategy} 
$\xi$ of the agent if for all $i\in \NN$, 
there exists $\sigma_i\in \xi(s_{2i-1})$
such that $s_{2i-1}\xrightarrow{\sigma_i}s_{2i}$.
The set of behaviours 
$\Beh(\G, \xi)$ consists of all plays following $\xi$, that is:
\begin{equation}
    \label{eq:safety-game-behaviour}
    \Beh (\G, \xi) = \{\tau = [s_0,  s_1,\dots ] \in \Pi(\G) \: : \:
    \forall i\in\NN, \, \exists \sigma_i,\, \mbox{ \textit{s.t.} }
    \sigma_{i}~\in~\xi(s_{2i-1})\}.
\end{equation}
A strategy $\xi$ is \emph{winning} for the agent if 
$\Beh (\G, \xi)\neq\emptyset$  and 
for all $\tau\in\Beh (\G, \xi)$ we have 
$\tau\in \Acc$.
A winning strategy $\xi$ is \emph{maximally permissive} 
if $\Beh(\G, \xi') \subseteq \Beh (\G, \xi)$
for every winning strategy $\xi'$.

An \emph{output-restricted} strategy is a function 
$\xi\colon S_{ag}\times 2^\Act \to 2^{\Act}$, 
satisfying that $\forall s\in S_{ag}, \forall \Sigma\in 2^\Act$,
$\xi(s,\Sigma)\subseteq\Sigma$.
Each output-restricted strategy $\xi$ has an equivalent 
unrestricted strategy $\hat{\xi}$, 
defined as $\hat{\xi}(s) = \xi(s,\Act)$.
An output-restricted strategy is 
\emph{deterministic} if $\forall s\in S_{ag}$, $\forall \Sigma\in 2^\Act$,
$|\xi(s,\Sigma)|=1$.
The previous definitions for winning and maximally permissive strategy 
apply to the unrestricted strategy.

\paragraph*{Synthesis of winning strategies of safety games.} 
A game $\G$ is a \emph{safety game} if its winning condition 
is defined by the game staying in a set of safe states $\F\subseteq S$. That is, if 
$\Acc = \F^\omega$.
It is a classical result~\cite{Thomas1995} that the maximally permissive strategy of a safety game, if it exists, 
is unique, memoryless, and can be computed in $\mathcal O(|S|\cdot|\Act|)$.
This is done by computing the so-called winning region.
The winning region is the set $W$ of states that are part of a trace valid under a winning strategy:
\begin{equation}
\label{eq:prelim-safety-games-winning-strategy}
    W = \left\{
    \begin{matrix}
         s\in S\::\: & 
          \exists \tau = s_0s_1\dots s,\,\mbox{ and }
         \exists\xi_\tau\colon S_{ag}\to 2^\Act \mbox{, such that } \\
         & 
         \xi_\tau \mbox{ is winning and $\tau$ is valid under $\xi_\tau$}
    \end{matrix}
    \right\}
\end{equation}

With the help of the winning region,
we can define the strategy $\xi_{\mathtt{max.perm.}}$ as
\begin{equation}\label{eq:max-perm-strat-safety-game}
    \xi_{\mathtt{max.perm.}}(s) = \{ \sigma\in\Act\::\:
    s\xrightarrow{\sigma}s', \, \mbox{ for } s'\in W
    \}.
\end{equation}
This strategy has the property of being winning and maximally permissive~\cite{Thomas1995}.
In safety games, we often say that a state $s$ is \emph{safe} when $s\in W$.
\nomenclature{$W$}{winning region of a safety game}
\nomenclature{$\F$}{set of safe states in a safety game}
\nomenclature{$\xi_{\mathtt{max.perm.}}$}{maximally permissive winning strategy of a safety game}

\subsection{Games Under Delay}
\label{sec:prelim-games-under-delay}
Delayed games follow the intuition that the system does not have access to the most recent inputs,
forcing it to make decisions having only partial information on the current state. 
In this section, 
we summarize the construction of delayed safety games from~\cite{Chen2020IndecisionAD},
with two differences.
The first one is that we consider delays as full-step and not half-step delays.
This is not a conceptual change but rather a change that lightens notation in some definitions.
The second is that we extend the definitions to include an arbitrary amount of memory.

\paragraph*{Game graph under delay.} 
Introducing delays does not change the game graph itself,
but we add two parameters, one for the amount of delay and one for the amount of memory available.
Formally, a \emph{deterministic two-player game with delay $\delta$ and memory $\mu$}
is a tuple
$\mathcal G_{\delta,\mu} = 
\langle S, s_0, S_{env}, S_{ag}, \Act, \mathcal T, \Acc, \delta, \mu \rangle$,
where 
$\delta\in\NN$, 
represents the delay in the input observation, 
$\mu\in \NN$ represents the length of the register (memory) of own outputs allowed to the agent, 
and $\G = \langle S, s_0, S_{env}, S_{ag}, \Act, \mathcal T, \Acc \rangle$ is a two-player game.

\paragraph*{Game play under delay.}
A delay of $\delta$ causes the agent to be unaware of the last $\delta$ transitions produced by the environment. 
The agent is aware of its own actions and can store them in a register of size $\mu$ used to limit the uncertainty about the current (partially unknown) state of the game.
We give the name of \emph{observed state} of the agent to the state
from which a strategy chooses which action to take. 
We give the name of \emph{current state} of the agent to the state in which the action chosen by the agent is applied.
In perfect information games (i.e., when $\delta = 0$), the current state and the observed state of the agent are the same.
In delayed games, the \emph{current state} of the agent is $\delta$ agent states ahead of the observed state.
Therefore, a strategy under delay $\delta$ 
is aware that an observed state $s$ and output register $\ol{\sigma}$
limits the possible current states, 
and has to choose an action to be applied,
knowing it will be applied in one of the possible current states, 
without explicit knowledge of which one.
\index{state!current}
\index{state!observed}

\paragraph*{Strategies under delay.} 
In a game with delay $\delta$, the agent can only observe the first state after $\delta$ steps in the game, 
and has to play the first moves ``blindly''.
In this initial transient period, the agent needs to ensure it can travel from
the initial state to a state with a defined winning strategy.
Therefore, we also need to define the strategy for an unobserved state. 
Let 
$S_{ag*} = S_{ag}\cup \{\eps\}$, 
where $\eps$ represents the unobserved state.

A \emph{strategy for the agent under delay $\delta$ with memory $\mu$}
is a function
$\xi_{\delta,\mu}\colon S_{ag*}\times \Act^{\leq \mu} \to 2^{\Act}$.
When $\delta$ and $\mu$ are clear from context or irrelevant,
we will denote it simply as $\xi$.
A state-action play $\tau = [s_0, s_1,\sigma_1, \dots]$ 
is valid under strategy $\xi$ if $\forall i\in \NN$, 
\begin{equation}
\label{eq:safety-game-strategy-delay}
    \sigma_{i} \in \xi\big( s_{\scriptscriptstyle{2(i-\delta)-1}}, [\sigma_{\scriptscriptstyle{ i-\mu}},\dots,\sigma_{\scriptscriptstyle{i-1}}]\big),
\end{equation}
with $s_j$, 
$\sigma_j=\eps$ for any $j<0$,
to account for the 
transient period.

Most of the time, the strategy takes as input a state $s\in S_{ag}$ and a sequence of actions $\ol{\sigma}\in\Act^\mu$ representing the last $\mu$ actions executed by the agent.
The state $s$ is the \emph{observed} state, and the strategy has to take care that any action $\sigma'\in\xi(s,\ol{\sigma})$ will be executed in the \emph{current} state, about which the agent has only partial information given by $s$ and $\ol{\sigma}$.
Note that the state-action trace used in Equation~\ref{eq:safety-game-strategy-delay} is 
$\tau = [s_0, s_1,\sigma_1, s_2, s_3, \sigma_2, s_4, \dots]$, 
where $s_i$ is a state of the agent or the environment depending on the parity of $i$.
More concretely, for all $i$, we have $s_{2i-1}\in S_{ag}$, $s_{2i}\in S_{env}$, and 
$s_{2i-1}\xrightarrow{\sigma_i}s_{2i}$.
The strategy has to define the action $\sigma_i$ to take in the (unknown) current state $s_{2i-1}$, using only information of the $\mu$ latest actions and the observed state $s_{2(i-\delta)-1}$ that is $\delta$ observations behind the current state.

At the beginning of the game, before there are $\delta$ environment transitions, 
the agent has not had time to observe any state yet, and thus the observed state is represented by the empty state $\eps$. Similarly, if there have only been $\nu < \mu$ actions played by the agent, the action register is $\ol{\sigma}\in\Act^\nu$. 
This transient phase is the reason why we need to define the strategy also for the domain $\{\eps\}\times\Act^{\leq \mu}$.

As in the undelayed setting,
we define the behaviour $\mathtt{Beh}(\mathcal G_{\delta,\mu}, \xi)$,
as the set of traces that are valid under $\xi$,
and a strategy $\xi$ is winning if all traces valid under $\xi$ are inside winning set $\Acc$.
A winning strategy $\xi$ is \emph{maximally permissive}
if $\mathtt{Beh}(\mathcal G_{\delta,\mu}, \xi') \subseteq \mathtt{Beh} (\mathcal G_{\delta,\mu}, \xi)$
for every winning strategy $\xi'$.
Chen et al. showed that 
a safety game $\G$ with delay $\delta$ and memory $\mu=\delta$
is equivalent to a safety game with no delay $\G'$ with a set of states $S' = \left(S\times \Act^\delta\right)\cup \left(\{s_0'\}\times\Act^{\leq \delta}\right)$~\cite[Lemma 2]{Chen2020IndecisionAD}.
In particular, this result, together with~\cite{Thomas1995}, proves that 
for any game that can be solved with delay $\delta$,
there exists a winning strategy with $\mu=\delta$ memory.
Some games have winning strategies with less or no memory.
We refer to strategies with less 
memory as \emph{memory-restricted} strategies,
with the special case of \emph{memoryless} strategies
when $\mu=0$.

\begin{figure}[b]
    \centering
    \includegraphics[width=0.85\linewidth,page=5]{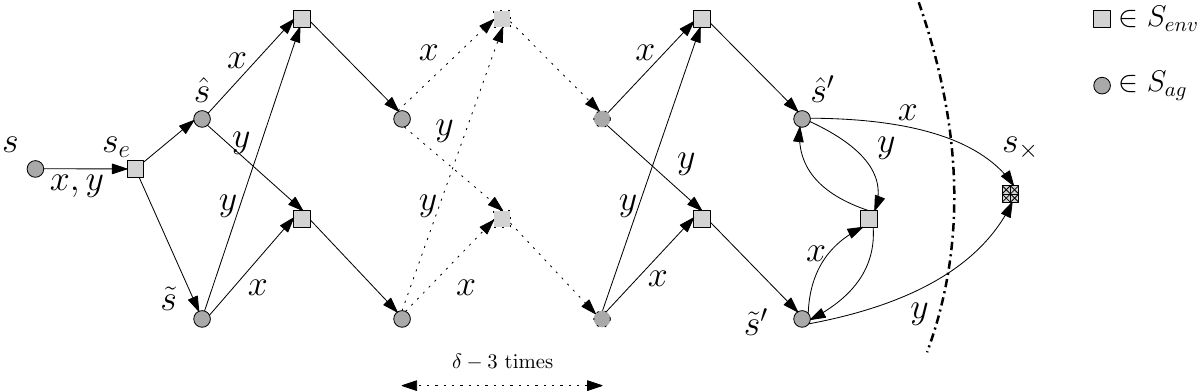}
    \caption[Example of computation of winning strategies under delay]{Computation of winning strategies under delay $\delta = 1$ 
    with memory $\mu = 0,1$. 
    In this example, the observed state is $s$, and the current state is one of $s_3, s_4, s_5, s_6,$ or $s_7$.
    From state $s\in S_{ag}$, if the last action was $a$, 
    the agent knows the environment has made a transition from state $s_1$, making $b$ potentially unsafe. 
    Similarly, if the last action by the agent was $b$,
    the current state is one of $s_5, s_6,$ or $s_7$, 
    making $a$ unsafe.
    If the agent cannot store its last action in memory, there is no winning strategy from $s$.
    }
    \vspace{-1.5em}
    \label{fig:delayedGame}
\end{figure}

\paragraph*{Synthesis of winning strategies under delay.} 
We outline the algorithm to solve delayed safety games
with delay $\delta$ and memory $\mu=\delta$,
as presented in~\cite{Chen2020IndecisionAD}.
A more detailed version of the algorithm is given is
presented as part of Chapter~\ref{chap:delayed_shields}, 
where we extend it to strategies with any amount of memory 
$\mu \leq \delta$.


The algorithm to solve a delayed safety game iteratively constructs and solves the safety game with increasing delays 
$d = 0, 1,\dots, \delta$ and memory size $\mu=d$.
At every iteration in $d$, 
the maximally permissive strategy for the agent is
computed using the strategy for the previous delay $d-1$,
followed by a reduction of the game graph aiming to mitigate the exponential blow-up in the state space
and the computation of the transient phase.
To compute the maximally permissive strategy using previous delays, 
for each state $s\in S_{ag}$ and output register $[y_1,\dots,y_d]$,
we compute the set $\mathcal I_{s,y_d}$, 
containing all states $s''\in S_{ag}$ 
that can be reached by a pair of transitions 
$s\xrightarrow{y_d}s'\xrightarrow{u}s''$.
The intersection of the actions allowed by the maximally permissive strategy for delay $d-1$ in states of $\mathcal I_{s,y_d}$
corresponds then to the actions allowed with delay $d$ in 
state $s$ with output register $[y_1,\dots,y_d]$.
Figure~\ref{fig:delayedGame} shows an example of computing 
the maximally permissive winning strategy under delay $\delta=1$
from the one under delay $\delta=0$, for memory $\mu\in\{0,1\}$.

%
%
%

%


\section{Markov Decision Process}
\label{sec:prelim-MDPs}
A Markov decision process (MDP) is a tuple
$\M = (\S,\A,\P)$,
where 
$\S$ is a countable set of states, 
$\A$ is a countable set of actions, and 
$\P\colon \S \times \A \rightarrow \D(\S)$
is the probabilistic transition function.
Given a state $s\in\S$ and an action $a\in \A$, 
$\P(s,a)$ is a distribution of states.
It is common in the literature to denote the probability of a state $s'\in \S$ under the distribution $\P(s,a)$ as 
$\P(s,a,s')$ instead of $\P(s,a)(s')$.
Sometimes, an MDP is described together with a special state $s_0\in\S$, indicating it is the \emph{initial state}. 
Sometimes, instead of a single initial state, the MDP is accompanied by a distribution of initial states $\iota\in\D(\S)$.
A \emph{policy} \( \pi\colon \S \to \D(\A) \) is a function mapping each state \( s \in \S \) to a probability distribution over the actions in \( \A \).

A Markov chain (MC) is a tuple $\M = (\S, \P)$, where $\S$ is a countable set of states and $\P\colon \S\to\D(\S)$ is a transition function. 
Similarly to the notation used in MDPs, for $s,s'\in\S$, it is usual to denote
the probability of $s'$ in the distribution $\P(s)$ as $\P(s,s')$.
Given an MDP $\M = (\S, \A, \P)$ and a policy $\pi\colon \S\to\D(\A)$, the Markov chain \emph{induced by $\pi$} is $\M_\pi = (\S, \P_{\pi})$, 
where $\P_{\pi}(s, s') = {\sum_{a\in \A}\pi(s)(a)\cdot \P(s,a,s')}$.

An infinite \emph{trace} (also known as \emph{path} in the literature) in a MC \( \M \) is a sequence 
\( \tau = s_0 s_1 s_2 \dots \) where $\P(s_i,s_{i+1}) > 0$ for all $i\geq 0$. We denote the set of all infinite traces by \( \Omega^\M \).

A \emph{distance} in an MDP 
is a function $d\colon \S\times \S\to \RR_{\geq 0}$ such that for all $x,y,z\in \S$,
we have 
\begin{itemize}
    \item Simmetry: $d(x,y) = d(y,x)$.
    \item Triangular inequality: $d(x,y) \leq d(x,z) + d(z,y)$.
    \item Identity: $d(x,y) = 0$ if and only if $x=y$. 
\end{itemize}

A \emph{ball} of radius $r>0$ centered at state $s\in \S$ is the set 
\begin{equation*}
  B_r(s) = \{ s'\in\mathcal S\::\: d(s,s')< r\}.
\end{equation*}
Similary, a \emph{ball} of radius $r>0$ centered at a set of states $S\subseteq \S$ is the set
\begin{equation*}
    B_r(S) = \{ s'\in\mathcal S\::\: \exists s\in S.\:  d(s,s')< r\}.   
\end{equation*}

\nomenclature{$\S$}{Set of states of an MDP}
\nomenclature{$\A$}{Set of actions of an MDP}
\nomenclature{$\P$}{Transition probability function of an MDP. $\P\colon\S\times\A\to \D(\S)$}
\index{Markov decision process (MDP)}
\index{Markov chain (MC)}
\index{induced!MC induced by an MDP and a policy}

\index{Markov decision process (MDP)!product MDP}
\paragraph*{Product MDP.}
Given two MPDs $\M_1 = (\S_1, \A_1, \P_1)$ and $\M_2 = (\S_2, \A_2, \P_2)$,
the \emph{product MDP of $\M_1$ and $\M_2$} is the MDP $\M = (\S_1\times \S_2, \A_1\times\A_2, \P)$,
where the transition function $\P$ is defined as follows. 
For each $(s_1, s_2), (s_1', s_2')\in \S_1\times \S_2$, and for each $(a_1,a_2)\in \A_1\times\A_2$, the transition probability is defined as
\[
\P\Big(\big(s_1, s_2\big), \big(a_1,a_2\big), \big(s_1',s_2'\big)\Big) = \P_1\big(s_1, a_1, s_1'\big)\cdot \P\big(s_2, a_2, s_2'\big).
\]

Similarly, we can define the product between an MDP and a Markov chain. 
Given an MDP $\M_D = (\S_D, \A, \P_D)$ and a Markov chain $\M_C = (\S_C, \P_C)$, 
the \emph{product MDP of $\M_D$ and $\M_C$} is the MDP $\M = (\S_M\times \S_C, \A, \P)$, 
where the transition function is defined as follows. 
For each $(s_D, s_C), (s_D', s_C') \in \S_D\times\S_C$ and each $a\in\A$, the transition probability is defined as
\[
\P\Big(\big(s_D,s_C\big), a , \big(s_D',s_C'\big)\Big) = 
\P_D\big(s_D,a,s_D'\big)\cdot \P_C\big(s_C, s_C'\big).
\]

\index{Cylinder set construction}

\subsection{Cylinder Set Construction}
\label{sec:prelim-cylinder-set-construction}
To define a probability measure over traces,
we use the \emph{cylinder set construction}. 
This is a standard construction in the literature; details can be found in~\cite[Chap. 10]{baier2008principles}.
Let $\M = (\S,\P)$ be a Markov chain.
For a finite trace prefix \( \omega = s_0 s_1 \dots s_n \),
the \emph{cylinder set} generated by \( \omega \), 
denoted \( \Cyl(\omega) \), is the set of all infinite traces starting with \( \omega \). Formally:
$
\Cyl(\omega) = {\{ \omega' \in \Omega \::\: \omega' \text{ begins with } \omega \}}.
$
The probability of the cylinder set \( \Cyl(\omega) \) is defined as
$\PP^\M(\Cyl(\omega)) = \prod_{i=0}^{n-1} \P(s_i, s_{i+1})$.
The $\sigma$-algebra associated with $\M$, denoted by $\F^\M$,
is the $\sigma$-algebra generated by all $\Cyl(\omega)$, where $\omega$ is a finite trace prefix. 
With this construction, $(\Omega^\M, \F^\M, \PP^\M)$ is a probability space 
that lets us measure the probabilities of finite trace prefixes (as the probability of its corresponding cylinder set), and in the limit lets us measure the probability of infinite traces.

Let $\M = (\S,\A, \P)$ and $\pi\colon \S\to\D(\A)$ be a policy.
We cannot define a probability space on $\M$. 
To talk about probabilities in an MDP, 
we need to make a cylinder set construction on the Markov chain induced by an MDP and policy.
The construction for MDPs is slightly different, as we include
actions as part of the trace. 

An infinite \emph{state-action} trace is a sequence $\omega = s_0a_0s_1a_1\dots$, where $\pi(s_i)(a_i) > 0$ and $\P(s_i,a_i,s_{i+1}) > 0$ for all $i\geq 0$.
We denote the set of all infinite state-action traces as $\Omega^\M_\pi$ and make the same cylinder set construction to define the sigma algebra on $\Omega^\M_\pi$ generated by all cylinder sets of finite state-action trace prefixes,
and the corresponding probability measure, 
where for a given trace prefix $\omega = s_0a_0s_1a_1\dots s_n$, 
the probability of the cylinder set associated with it is
$\PP^\M_\pi(\Cyl(\omega)) = \prod_{i=0}^{n-1} \pi(s_i)(a_i) \P(s_i, s_{i+1})$.
We will denote the generated probability space as $(\Omega^\M_\pi, \F^\M_\pi, \PP^\M_\pi)$.

\subsection{Reachability Properties}
\label{sec:prelim-reachability-properties}
A \emph{reachability property} is defined as the probability of
reaching a given set of target states \( T \subseteq S \) 
from an initial state \( s\in\S \) under a policy \( \pi \) after at most $k\in\NN\cup \{\infty\}$ transitions. 
Formally, we define the reachability probability of $T$ in $\M$ from $s_0$ using the policy $\pi$ in less than $k$ transitions as:
\begin{equation}
\label{eq:def-reachability-property}
    \PP^\M_{\pi}(\Reach_{\leq k} (s, T)) = \PP^{\M}_{\pi}\left( \{ \omega \in \Omega^\M_{\pi} \::\: s_0 = s \mbox{ and } \exists n \leq k, \, s_n \in T \} \right).    
\end{equation}

Similarly, we may be interested in computing a reachability probability after a particular action has been fixed. 
The probability of reaching $T$ in $\M$ from $s$ after performing action $a\in\Act$ is defined as:
\begin{equation}
\label{eq:def-act-reachability-property}
    \PP^\M_{\pi}(\Reach_{\leq k} (s, a, T)) = \sum_{s'\in \S} \P(s,a,s')\cdot  \PP^{\M}_{\pi}\left( \{ \omega \in \Omega^\M_{\pi} \::\: s_0 = s' \mbox{ and } \exists n \leq k, \, s_n \in T \} \right).
\end{equation}

Sometimes, it is of interest to know the maximum and minimum values of the reachability probability when considering the space of all policies. We define these probabilities as:
\[
\PP^\M_{\min}(\Reach_{\leq k} (s, T)) = 
\min_{\pi\colon\S\to\D(\A)} \PP^\M_{\pi}(\Reach_{\leq k} (s, T)),
\]
\[
\PP^\M_{\max}(\Reach_{\leq k} (s, T)) = 
\max_{\pi\colon\S\to\D(\A)} \PP^\M_{\pi}(\Reach_{\leq k} (s, T)),
\]
\[
\PP^\M_{\min}(\Reach_{\leq k} (s, a, T)) = 
\min_{\pi\colon\S\to\D(\A)} \PP^\M_{\pi}(\Reach_{\leq k} (s, a, T)),
\]
\[
\PP^\M_{\max}(\Reach_{\leq k} (s, a, T)) = 
\max_{\pi\colon\S\to\D(\A)} \PP^\M_{\pi}(\Reach_{\leq k} (s, a, T)).
\]

It may also be interesting to consider the maximum and minimum values when restricting to a certain subset of available policies $\Pi\subseteq \{\pi\colon\S\to\D(\A)\}$. 
In such cases, the definitions are analogous and we denote them as
\[
\PP^\M_{\min\mid \Pi}(\Reach_{\leq k} (s, T)) = 
\min_{\pi\in\Pi} \PP^\M_{\pi}(\Reach_{\leq k} (s, T)),
\]
\[
\PP^\M_{\max\mid \Pi}(\Reach_{\leq k} (s, T)) = 
\max_{\pi\in\Pi} \PP^\M_{\pi}(\Reach_{\leq k} (s, T)),
\]
\[
\PP^\M_{\min\mid \Pi}(\Reach_{\leq k} (s, a, T)) = 
\min_{\pi\in\Pi} \PP^\M_{\pi}(\Reach_{\leq k} (s, a, T)),
\]
\[
\PP^\M_{\max\mid \Pi}(\Reach_{\leq k} (s, a, T)) = 
\max_{\pi\in\Pi} \PP^\M_{\pi}(\Reach_{\leq k} (s, a, T)).
\]

\paragraph*{Avoidance properties.} 
With the same spirit, we define the \emph{avoidance} probability as the complement of the reach probability. We use the notation \newline 
$\Avoid_{\leq k}(s,T)~=~\lnot~\Reach_{\leq k}(s,T)$, and define it as
\begin{equation}
    \label{eq:avoidance_property}
    \PP^\M_\pi(\Avoid_{\leq k}(s,T)) = 1-\PP^\M_\pi(\Reach_{\leq k}(s,T)).    
\end{equation}
We also consider the minimum and maximum probabilities as described before for avoidance probabilities.
Since the avoidance property is the complement of the corresponding reachability property, the policy that maximizes one minimizes the other, and viceversa. 
That is:
\begin{align*}
    \PP^M_{\min}(\Avoid_{\leq k}(s,T)) &= 1 - \PP^\M_{\max}(\Reach_{\leq k}(s,T)), \qquad \mbox{and}\\
    \PP^M_{\max}(\Avoid_{\leq k}(s,T)) &= 1 - \PP^\M_{\min}(\Reach_{\leq k}(s,T)).
\end{align*}
The same reasoning applies in the case that an action has already been decided ---  $\PP_{\min/\max}(\Avoid_{\leq k}(s,a,T))$ --- and the case when there is a restriction on the set of policies --- $\PP_{\min/\max\mid \Pi}(\Avoid_{\leq k}(s,T))$.

\paragraph*{Bounded and unbounded properties.}
When $k\in\NN$, we say that these are \emph{bounded reachability/avoidance} properties. 
When $k = \infty$, we say that these are \emph{unbounded reachability/avoidance} properties.
In such cases, we may drop the explicit reference to $k$ in our notation, 
writing $\PP^\M_{\pi}(\Reach(s, T))$ instead of $\PP^\M_{\pi}(\Reach_{\leq \infty} (s, T))$ for unbounded reachability.

The probabilities for bounded and unbounded reachability can be computed using probabilistic model checking algorithms~\cite{katoen2016probabilistic}.
\index{probabilistic model checking}
%


\subsection{Reinforcement Learning}
\label{sec:prelim-RL}
\index{Reinforcement learning (RL)}

Reinforcement learning (RL)~\cite{sutton2018reinforcement} is a category of machine learning where an agent learns to select actions from observations through trial and error, with the goal of maximising the long-term returns defined by a reward function.
A reinforcement learning problem is formalized with an MDP $\M = (\S,\A,\P)$.

In RL problems, the MDP is accompanied by a reward function 
$\R \colon \S\times\A\times\S \to \RR$.
An RL agent executes a policy $\pi\colon \S\to \D(\A)$ in the MDP.
A \emph{state-action trace} is a sequence of states, actions and rewards $\tau = s_0a_0r_1,s_1a_1r_2\dots$, 
where $s_0s_1\dots$ is a trace in the MDP induced by $\pi$ 
and $r_{i+1} = \R(s_i, a_i, s_{i+1})$.
\index{trace!state-action trace}

The interaction between the environment and the agent generates 
state-action-reward traces as follows.
At each step, the agent observes the current state $s_i\in\S$, 
selects an action $a_i\in\A$, 
the environment transitions to a next state $s_{i+1}$, 
sampled from the probability distribution $\P(s_i, a_i)$, 
and the agent receives a reward $\R(s_i, a_i, s_{i+1})$. 
Note that adding the rewards is just a formalism, 
since a state-action trace already determines the corresponding state-action-reward trace.
The \emph{discounted return} for a state-action-reward trace $\tau$ is 
$G(\tau) = \sum_{k=0}^{\infty}\gamma^t r_{t}$, 
where $\gamma\in[0,1)$ is the \emph{discount factor}.
Note that if $\R$ is a bounded function, $\gamma<1$ guarantees that $G(\tau)$ is finite for any $\tau$.
Since $\tau$ can be seen as an element of $\Omega^\M_\pi$, 
$G$ is a random variable on $\Omega^\M_\pi$, so we can consider its expectation $\EE(G)$. 
When it becomes important to state which policy $\pi$ is being used to induce an MC and generate a probability space, we will write $\EE_{\tau\sim\pi}(G(\tau))$.
\index{discounted return}
\index{discount factor}
\index{trace!state-action-reward trace}

The goal of the agent is to find a policy that maximises the expected discounted return.
Formally, the goal of the agent is to find $\pi^*\colon \S\to\D(\A)$ such that
\[\
\pi^*\in \argmax_{\pi\colon \S\to\D(\A)}\EE_{\tau\sim\pi}[G(\tau)].
\]
There are many algorithms to approximate the optimal policy from available traces; see~\cite {shakya2023reinforcement} for a recent survey. 
We will use in this thesis the Q-learning algorithm~\cite{watkins1992q}, which is one of the most classic approaches to the problem.
\index{Q-learning}

\section{Classification Problems and Fairness}
\label{sec:prelim-classification-problems}
Classification problems are a standard setting in supervised machine learning where there is an input space $\X\subseteq \RR^n$, 
a discrete set of labels $\Y\subseteq \NN$ 
and a ground truth distribution $ \sgen \in \D(\X\times \Y)$.
An element $x = (x_1,\dots, x_n)\in\X$ is an \emph{instance},
and each of the $x_i$'s are the different \emph{features}. 

The \emph{classification problem} consists on finding $f\colon \X\to\Y$ that minimizes the expected loss, defined as 
$\L(f) = \EE_{(x,y)\sim \sgen}\big[ \1[y \neq f(x)] \big]$, 
when given a set of samples $(x_0,y_0),\dots,(x_N,y_N)$ sampled from $\sgen$.


In some problems, there are concrete features of the instances that are considered \emph{protected} or \emph{sensitive} features, 
and it is of utmost importance to protect against bias with respect to those features. For example, in the problem of screening applications for a job, one of the protected features may be the applicant's gender.
There are many metrics used to determine whether a classifier is biased with respect to sensitive features.
When talking about fairness with respect to a given feature, it is useful to partition the input space into $\X = \F\times \G$, 
where $\G$ represents the sensitive feature, and $\F$ represents the rest of the features.
It is also useful to think of $X$ as a random variable on $\X$ that follows the input part of the ground truth distribution $\sgen$,
and $f(X)$ as a random variable on the label space $\Y$.
Similarly, $X = (F, G)$, where $F$ is a random variable on $\F$
and $G$ is a random variable on $\G$.
In this thesis, we will use fairness metrics based on demographic parity and equal opportunity that assume the sensitive feature can only take a finite set of values, i.e., $\G = \{g_1,\dots, g_k\}$. 
A classifier $f\colon \F\times\G\to\Y$ satisfies \emph{demographic parity} (DP) if for all $i\in \{1,\dots,k\}$, we have
\(
\EE[f(X) \mid G = g_i] = \EE[f(X)]
\).
When $\Y = \{0,1\}$, 
a classifier $f\colon \F\times \G\to\Y$ satisfies \emph{equal opportunity} (EqOpp) if for all $i\in \{1,\dots,k\}$, we have
\(
\EE[f(X) \mid G = g_i,\, y=1] = \EE[f(X) \mid y=1]
\).
\index{demographic parity}
\index{equal opportunity}

The literature on enforcing fairness properties in classification
problems is vast and rich (see~\cite{barocas2023fairness} for a recent account of the state of the art).

\chapter{Reactive Decision Making Framework}
\label{chap:reactive_decision_making}
\ifthenelse{\boolean{includequotes}}{
\begin{quotation}
    \textit{If I seem to wander, if I seem to stray, remember that true stories seldom take the straightest way.}
    \hfill
    --- Patrick Rothfuss, The Name of the Wind.
\end{quotation}
}{}

\section{Motivation and Outline}
This thesis presents different lines of work with the common motivation of advancing trust in autonomous systems, 
using different formal models. 
While each chapter can be seen as a standalone contribution towards this goal, 
the formal models and methods used in each chapter can be encompassed as part of a general framework.

In this chapter, we introduce the reactive decision-making framework, which generalizes the many models used throughout the thesis. Following the general definition, we justify how safety games, MDPs, and classification problems can be expressed as particular cases in this framework.

We also introduce a generalized definition of shielding, 
a method that is used in most of the chapters in this thesis. 
We present a unified definition and justify how this adapts to 
different notions of shielding in the literature.

\paragraph*{Outline.} 
In Section~\ref{sec:reactive-decision-making}, we introduce the reactive decision-making framework and show how other formalizations used in the paper can be viewed as particular cases of it.
In Section~\ref{sec:rdm-shielding} we introduce a generalized definition of shielding and in Section~\ref{sec:rdm-classical-shielding} we show 
how classical notions of shielding can be expressed in this generalized framework.

\paragraph*{Declaration of sources.}
This chapter is the original work of the author of this thesis and, at the time of writing, remains unpublished.

\section{Reactive Decision-Making}
\label{sec:reactive-decision-making}
The \emph{reactive decision-making} \index{Reactive decision-making} framework is an abstract framework that
models the different problems presented in this thesis. 
In a reactive decision-making framework, an agent interacts with an environment, 
depicted in Figure~\ref{fig:reactive-decision-making-framework}. 
There is a set of observations $\Obs$, controlled by the environment, 
and a set of actions $\Act$, controlled by the agent. 
An \emph{environment} is a tuple $\Env = (\Obs, \Act, \Trans)$, 
where $\Trans\colon (\Obs\times\Act)^* \to \D(\Obs)$.
We call $\Trans$ the \emph{environment transition function} \index{environment transition function}.
An \emph{agent} is a tuple $\Ag = (\Obs, \Act, \Pol)$, 
where $\Pol\colon (\Obs\times\Act)^*\times \Obs \to \D(\Act)$.
We call $\Pol$ the \emph{agent policy function}.
Given $\Obs$ and $\Act$, the set of all policies is $\mathtt{Pol}(\Obs,\Act)$.
An environment and an agent are \emph{compatible} if they share
the same set of observations and actions.
\index{agent policy function}

An environment $\Env = (\Obs, \Act, \Trans)$ is \emph{deterministic} if for all input $\tau\in (\Obs\times\Act)^*$, 
the support of the environment transition function,
$\Supp(\Trans(\tau))$, has only a single element. 
In such cases, we would write the environment transition function as  
$\Trans\colon (\Obs\times\Act)^*\to\Obs$.
An agent $\Ag = (\Obs, \Act, \Pol)$ is \emph{deterministic} if for all input
$(\tau, o)\in (\Obs\times\Act)^*\times\Obs$, 
$\Supp(\Pol(\tau,o))$ has only a single element. 
In such cases, we would write the policy function as 
$\Pol\colon (\Obs\times\Act)^*\times\Obs\to \Act$.
Given a policy $\Pol\colon (\Obs\times\Act)^*\times\Obs\to \D(\Act)$, 
a \emph{determinization of $\Pol$} is any deterministic policy 
$\Pol_{det}\colon (\Obs\times\Act)^*\times\Obs\to \Act$, 
such that for all $(\tau,o)\in (\Obs\times\Act)^*\times\Obs$, 
we have that $\Pol_{det}(\tau, o) \in\Supp\left( \Pol(\tau, o)  \right)$.

An environment $\Env = (\Obs, \Act, \Trans)$ is \emph{memoryless} if 
the transition function depends only on the last pair of action and observation, 
that is, if for all $(o, a)\in \Obs\times\Act$ and 
for all $\tau, \tau' \in (\Obs\times\Act)^*$, we have
$\Trans(\tau\cdot(o,a)) = \Trans(\tau'\cdot(o,a))$.
Similarly, an 
agent $\Ag = (\Obs, \Act, \Pol)$ is \emph{memoryless} 
if the policy function depends only on the last observation. 
Formally, if for all $o\in \Obs$ and
for all $\tau, \tau' \in (\Obs\times\Act)^*$, 
$\Pol(\tau, o) = \Pol(\tau',o)$.

Note that the definitions we give of agents and environments are strictly functional, 
so we do not consider how the elements are internally designed. 
For example, an agent may be designed as an automaton, with internal states and internal transition functions. 
For the general theory presented in this chapter, we do not model such details of the inner structure; we just study agents and environments as abstract functions that produce an output when given an input.

An \emph{observation-action trace} is a (finite or infinite) sequence of observations and actions 
$\tau = (o_0a_0),(o_1a_1),\dots \in (\Obs\times\Act)^\infty$. 
Given an environment $\Env = (\Obs, \Act,\Trans)$ and 
an agent $\Ag = (\Obs, \Act, \Pol)$, 
an observation-action trace is \emph{valid} if 
for all $k <  |\tau|$, we have that
$o_{k+1}\in\Supp(\Trans(\tau_{:k}))$ and 
$a_{k+1}\in\Supp(\Pol(\tau_{:k}, o_{k+1}))$.
In other words, a trace is valid if it could have been produced by the pair agent-environment.

An \emph{observation trace} is a (finite or infinite) sequence of observations
$\tau_O = o_0,o_1,\ldots\in \Obs^\infty$.
An \emph{action trace} is a (finite or infinite) sequence of actions 
$\tau_A = a_0a_1,\ldots\Act^\infty$. 
Given an observation trace $\tau_O$ and an action trace $\tau_A$, 
we can produce an observation-action trace $\tau=(o_0a_0),(o_1a_1),\dots \in (\Obs\times\Act)^\infty$ by interlacing them. 
We will denote this by $\tau_O||\tau_A$.
\index{trace!observation trace}
\index{trace!action trace}

Given an environment $\Env$, 
an observation trace $\tau_O$ is \emph{valid} if there exists an agent $\Ag$ and an action trace $\tau_A$ such that $\tau_O||\tau_A$ is valid for $\Env$ and $\Ag$.
Given an agent $\Ag$, 
an action trace $\tau_A$ is \emph{valid} if there exists an environment $\Env$ and an observation trace such that $\tau_O||\tau_A$ is valid for $\Env$ and $\Ag$. \index{trace!valid trace}

\begin{figure}[t]
    \centering
    \includegraphics[width=0.5\linewidth,page=2]{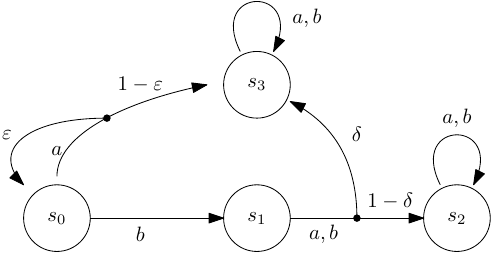}
    \caption{Reactive decision-making framework.}
    \label{fig:reactive-decision-making-framework}
\end{figure}

\subsubsection*{Probability of traces}

Given an environment $\Env = (\Obs,\Act,\Trans)$ and an
agent $\Ag = (\Obs,\Act,\Pol)$, 
we can make a similar cylinder set construction as in Section~\ref{sec:prelim-cylinder-set-construction} to define a probability measure on the space of
finite and infinite traces.

Let $\Omega^{\Env,\Ag}$ be the set of all observation-action traces. 
Let $\omega= o_1a_1,\dots, o_na_n$ be a finite prefix, the cylinder set generated by $\omega$ is 
\[
\Cyl(\omega) = \{ \omega'\in \Omega^{\Env, \Ag}\::\: \omega' \mbox{ begins with } \omega \}.
\]
The probability of the cylinder set $\Cyl(\omega)$ is 
\[
\PP^{\Env, \Ag}(\Cyl(\omega)) =  \prod_{i=0}^n \Trans\left(\omega_{[:i-1]}\right)(o_{i})\cdot 
\Pol\left(\omega_{[:i-1]}, o_{i}\right)(a_{i}).
\]

Let $\F^{\Env, \Ag}$ be the $\sigma$-algebra generated by the cylinder sets of all finite traces $\omega$. The space $(\Omega^{\Env, \Ag}, \F^{\Env, \Ag}, \PP^{\Env, \Ag})$ is a probability space. 

Sometimes it is useful to talk about traces of a given length and assign probabilities to them, instead of thinking of infinite traces. 
Let $\Omega^{\Env,\Ag}_{k}$ be the set of traces of length equal to $k$.
Since $\Omega^{\Env,\Ag}_{k}$ is countable, 
we can use $\F^{\Env, \Ag}_k = 2^{\Omega^{\Env,\Ag}_{k}}$ as our $\sigma$-algebra.
Let $\omega\in \Omega^{\Env,\Ag}_{k}$, its probability is defined as
$\PP^{\Env, \Ag}_k(\omega) = \PP^{\Env, \Ag}(\Cyl(\omega))$.
Then $(\Omega^{\Env, \Ag}_{k}, \F^{\Env, \Ag}_k,  \PP^{\Env, \Ag}_k)$ is a probability space.

Note that in $\Omega^{\Env,\Ag}_k$, a trace $\omega$ is valid if and only if 
$\PP^{\Env,\Ag}_k(\omega) \neq 0$. This is not true for infinite traces.

\subsection{Deterministic Two-player Games}
\label{sec:rdm-two-player-games}

In this section, we show how a deterministic two-player game corresponds to a memoryless environment in the reactive decision-making framework,
where the sets of observations and actions are both finite, and all probability distributions over observations and actions are uniform over their support.

A deterministic two-player game is formalized as a tuple
$\G = (S, s_0, S_{env}, S_{ag}, \Act, \T, \Acc)$ (Section~\ref{sec:prelim-TwoPlayerGames}).
The set $S_{ag}$ corresponds to the set of observations $\Obs$ in the reactive decision-making framework, 
and the set $S_{env}$ corresponds to $\Obs\times\Act$. 
The set of actions $\Act$ is the same for both formalisms.
Given $o\in S_{ag}$, and $\sigma\in \Act$, the transition is trivially $o\xrightarrow{\sigma}(o,\sigma)$.
Given $(o,\sigma)\in S_{env}$, the allowed transitions 
$(o,\sigma)\xrightarrow{u}o'$ are those for which $o'\in\Trans(o,\sigma)$.

Note that a play in the safety game, 
$\tau = [s_0, s_1, s_2, \dots]$,
always has the form
$\tau = [ (o_0,\sigma_{0}), o_1, (o_1,\sigma_1), o_2, (o_2,\sigma_2),\dots ]$, 
where $s_{2i} = (o_i, \sigma_i)$ and $s_{2i+1} = o_{i+1}$, 
so it naturally corresponds to an observation-action trace.

In a deterministic two-player game, 
the concrete value of the probability of a given observation or action is 
unimportant, it is only relevant whether the probability is non-zero.
Therefore, the transition functions can be expressed as 
$\Trans\colon \Obs\times\Act\to 2^\Obs$ and 
$\Pol\colon(\Obs\times\Act)^*\times\Obs\to 2^\Act$,
instead of $\Trans\colon \Obs\times\Act\to \D(2^\Obs)$
and $\Pol\colon(\Obs\times\Act)^*\times\Obs\to \D(2^\Act)$.
Note that this is an \emph{interpretation} of the model. 
All allowed transitions are considered to have the same probability because 
the only thing that matters is whether the probability is non-zero. 
In the game setting, the environment is considered adversarial, 
so any action with a non-zero probability (no matter how low), 
needs to be considered when computing strategies for the agent.

\subsection{Markov Decision Processes}
In the reactive decision-making framework,
a Markov decision process corresponds to a memoryless environment. 
In typical MDP notation, as introduced in Section~\ref{sec:prelim-MDPs}, observations are called \emph{states},
denoted as $\S$ (instead of $\Obs$).
Since the environment transition function is memoryless, 
it is written with 
$(\S\times\A)$ as its domain -- instead of $(\S\times\A)^*$ --,
denoted by $\P\colon \S\times\A\to\D(\S)$, and called the \emph{probabilistic transition function}.


\subsection{Classification Problems}
\label{sec:rdm-classification-problems}


In classification problems, the observation space is the input space of the problem,
i.e., following the notation of Section~\ref{sec:prelim-classification-problems}, $\Obs = \X$.
It is standard to assume that there is a single distribution from which problem instances are sampled. 
This would correspond with an environment transition function that does not depend in any way on the current trace. 
This is a stronger condition as being memoryless since we are imposing that for all $\tau, \tau' \in (\Obs\times\Act)^*$, $\Trans(\tau) = \Trans(\tau')$.
Therefore, we can simply represent the environment as a distribution $\Theta_{\X}\in \D(\Obs)$.
There is also literature studying classification problems where
the data distribution changes according to the actions (accepts or rejects) 
given by a classifier. This phenomenon has been studied as it relates to fairness in~\cite{d2020fairness}. 
Our framework is well adapted to such cases, as it would mean just keeping a full environment transition function. 

Another characteristic of classification problems is that the set of actions represents the labels available for classification, i.e. $\Act = \Y$.

\begin{figure}[b]
    \centering
    \includegraphics[width=0.8\linewidth,page=1]{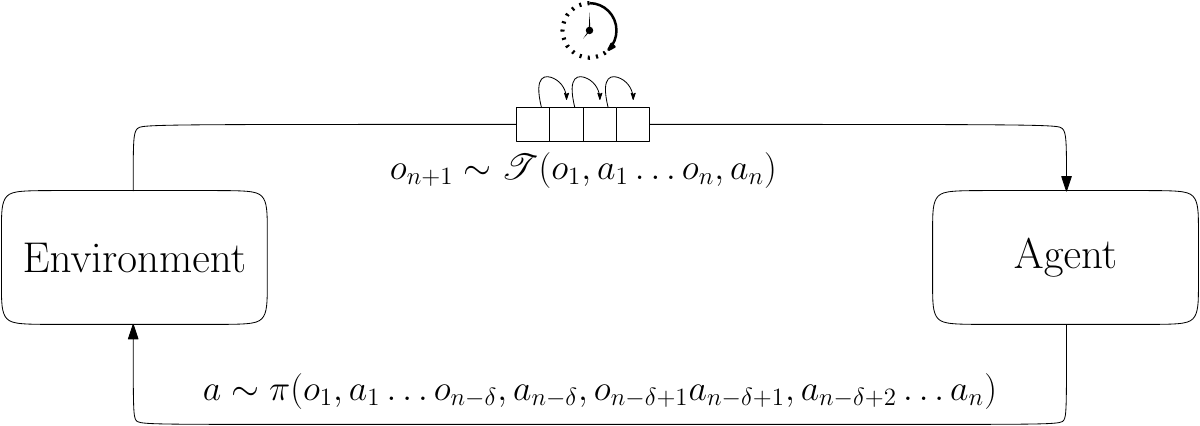}
    \caption{Reactive decision-making framework with delayed observations.}
    \label{fig:reactive-decision-making-delayed}
\end{figure}

\subsection{Delayed Observations}
\label{sec:rdm-delayed-obs}
In some use cases, it is interesting to consider agents that work with uncertain observations. 
We are particularly interested in the case of agents that receive information about observations produced by the environment delayed by a certain number of steps,
in the same spirit as safety games with delayed input presented in Section~\ref{sec:prelim-games-under-delay}.
These delays are a common challenge when dealing with asynchronous control architectures.
In this section, we provide two formalizations of the delayed framework and show that they are equivalent. 

The first formalization is intended to be an intuitive one, 
while the second formalization is an operational one, 
that we will use to develop the theory of shielding resilient to delayed observations.
We will show that both formulations are, in fact, equivalent in Section~\ref{sec:rmd-delayed-equivalence}.

\subsubsection{Delayed Observations through Modified Agents}

We can formalize delays as part of the reactive decision-making framework by considering a variation of the actions available to the agent, 
as depicted in Figure~\ref{fig:reactive-decision-making-delayed}.
The environment is the same, and samples are the next observation from the full observation-action trace.
On the other side, the agent does not have access to the last $\delta$ observations produced by the environment.
In the initial transient phase, the agent only has access to its own action record. 
After the transient phase, after $\delta$ steps, 
the agent starts receiving the first observations,
entering the steady observation phase.
\index{transient phase}
\index{steady observation phase}

Given an environment $\Env = (\Obs,\Act,\Trans)$, 
an agent under delay $\delta$ has a policy function $\Pol\colon \Delayed_\delta(\Obs,\Act) \to \D(\Act)$, 
where
\begin{equation}
\label{eq:reactive-delayed-obs}
    \Delayed_\delta(\Obs,\Act) = \Act^{\leq \delta} \cup  \left\{ 
    \begin{matrix}
    (o_1,a_1,\dots, o_n, a_n, a_{n+1},\dots, a_{n+\delta-1})\: :\: \\
     \hfill  o_i\in\Obs, \, a_i\in \Act,\, n \geq 0
    \end{matrix}
    \: \right\} .    
\end{equation}

In other words, the agent has either access to a trace longer than $\delta$ with the $\delta$ last observations removed (steady observation phase), or it has access to a trace shorter than $\delta$ composed of only actions (transient phase).
This is similar to how we defined strategies in games under delay in Equation~\eqref{eq:safety-game-strategy-delay}.


\subsubsection{Delayed Observations through Restriction to Agnostic Agents}

While Figure~\ref{fig:reactive-decision-making-delayed} is a more intuitive formulation, 
we will use an equivalent formulation that is more operational, in terms of considering only agents restricted to a certain domain. 

\begin{definition}[Delayed-observation agent]
\label{def:delayed-obs-agent}
    Let $\Env = (\Obs,\Act, \Trans)$ be an environment and $\delta\geq 0$. 
    An agent $\Ag = (\Obs, \Act, \Pol)$ 
    \emph{works with observations delayed by $\delta$} 
    if it is agnostic to the last $\delta$ observations. 
    That is, if for all
    $\tau\in (\Obs\times\Act)^*$, 
    all $(a_1,\dots, a_\delta)\in \Act^\delta$, 
    and all 
    $(o_1,\dots, o_\delta), (o_1',\dots, o_\delta')\in\Obs^\delta$
    \begin{equation}
    \label{eq:def-delayed-obs-agent}
        \Pol\Big(\big(\tau\cdot (o_1,a_1),\dots, (o_{\delta-1}, a_{\delta-1})\big), o_\delta\Big) =     
        \Pol\Big(\big(\tau\cdot (o_1',a_1),\dots, (o_{\delta-1}', a_{\delta-1})\big), o_\delta'\Big)    
    \end{equation}
    We denote the set of agents working with observations delayed by $\delta$ as $\Pi^{\myclock}_\delta$. 
\end{definition}

\subsubsection{Equivalence}
\label{sec:rmd-delayed-equivalence}

The equivalence between Definition~\ref{def:delayed-obs-agent}
and an agent with domain $\Delayed_\delta(\Obs,\Act)$ as in Equation~\eqref{eq:reactive-delayed-obs} stems from the fact 
that if an agent $\Pol$ with domain $(\Obs\times\Act)^*\times\Obs$ is agnostic to the last $\delta$ observations, 
it is fully determined by the agent $\Pol'\colon \Delayed_\delta(\Obs,\Act)\to\D(\Act)$, and vice versa.

For every $\tau\in(\Obs\times\Act)^*\times\Obs$, 
we can factor it as $\tau = \big( (o_1, a_1), \dots, (o_n, a_n), \dots, (o_{n+\delta-1}, a_{n+\delta-1}), o_\delta \big)$, 
and then define $\Pol(\tau)$ as
\[
\Pol(\tau) = \Pol'\big( (o_1, a_1), \dots, (o_n, a_n), a_{n+1},\dots, a_{n+\delta-1} \big).
\]

For the backwards direction, 
given $\tau \in \Delayed_\delta(\Obs,\Act)$, 
we can determine $\Pol'(\tau)$ as follows.
The element $\tau\in \Delayed_\delta(\Obs,\Act)$ is of the form
$\tau = (o_1,a_1,\dots, o_n, a_n, a_{n+1},\dots, a_{n+\delta-1})$,
for some $n\in \NN$, $o_i\in \Obs$ and $a_i\in\Act$.
We choose $\delta$ arbitrary observations 
$(o_{n+1},\dots, o_{n+\delta})\in \Obs^{\delta}$ and define $\Pol'(\tau)$ as 
\[
\Pol'(\tau) = \Pol\Big( \big( (o_1,a_1),\dots, (o_{n+\delta-1}, a_{n+\delta-1}) \big), o_{n+\delta}  \Big).
\]
The distribution corresponding to $\Pol'(\tau)$ is well defined, i.e., does not depend on the choice of observations $o_{n+1},\dots, o_{n+\delta}$, by virtue of Equation~\eqref{eq:def-delayed-obs-agent}.

\section{Shielding}
\label{sec:rdm-shielding}

\subsection{Definitions}

\begin{figure}[b]
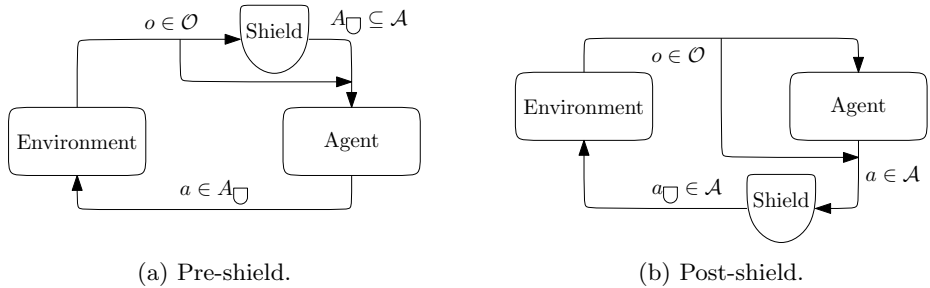

     \centering
     \begin{subfigure}[b]{0.45\textwidth}
         \centering
         \includegraphics[width=\textwidth,page=3]{images/counterexample_prob.pdf}
         \caption{Pre-shield.}
         \label{fig:pre-shield}
     \end{subfigure}
     \hfill
     \begin{subfigure}[b]{0.45\textwidth}
         \centering
         \includegraphics[width=\textwidth,page=4]{images/counterexample_prob.pdf}
         \caption{Post-shield.}
         \label{fig:post-shield}
     \end{subfigure}
        \caption{Shielded reactive decision-making framework.}
\end{figure}

A \emph{shield} is an element that modifies the behaviour of an agent, 
filtering actions either before (pre-shield) or after (post-shield) 
the agent decides on them. 
Figure~\ref{fig:pre-shield} illustrates the introduction of a pre-shield in a reactive decision-making pair, 
while Figure~\ref{fig:post-shield} illustrates it if for a post-shield.
We use the symbol $\shield$, intended to be read as ``shield''.

\index{shield}

\begin{definition}[pre-shield-ready agent]
    A \emph{pre-shield-ready agent} is a function $\Ag_{pre}\colon (\Obs\times\Act)^*\times\Obs\times 2^\Act \to \D(\Act)$, 
    such that for any input $(\tau, o, A)\in (\Obs\times\Act)^*\times\Obs\times 2^\Act$, 
    the agent only proposes actions in $A$, i.e., $\Supp(\Ag_{pre}(\tau, o, A)) \subseteq A$.
\end{definition}

\begin{definition}[Pre-shield]\label{def:preshield-abstract}
    A \emph{pre-shield} is a function $\Sh\colon (\Obs\times\Act)^*\times \Obs\to 2^\Act$, 
    that, given a trace and an observation, produces a set of allowed actions.
    A pre-shield together with a pre-shield-ready agent form a new agent  
    $\Ag_{\shield}\colon (\Obs\times\Act)^*\times\Obs \to \D(\Act)$, 
    defined as
    $\Ag_{\shield}(\tau,o) = \Ag_{pre}(\tau, o, \Sh(\tau, o))$.  \index{shield!pre-shield}
\end{definition}

Given a pre-shield-ready agent 
$\Ag\colon (\Obs\times\Act)^*\times\Obs\times 2^\Act \to \D(\Act)$, it induces a regular agent 
$\Ag_{reg} \colon (\Obs\times\Act)^*\times\Obs \to \D(\Act)$, by considering only unrestricted actions.
That is, for any input $(\tau,o)\in (\Obs\times\Act)^*\times\Obs$, 
$\Ag_{reg}$ is defined as $\Ag_{reg}(\tau, o) = \Ag(\tau, o, \Act)$.
We call this the \emph{regular agent induced by $\Ag$}.
\index{induced!regular agent induced by pre-shield-ready agent}

\begin{definition}[Post-shield]\label{def:postshield-abstract}
    A \emph{post-shield} is a function $\Sh\colon (\Obs\times\Act)^*\times \Obs\times\Act \to \Act$ that given a trace, an observation and an action, 
    produces a new distribution of allowed actions. 
    A shield together with an agent $\Ag_{pos}$ form a new agent 
    $\Ag_{\shield}\colon (\Obs\times\Act)^*\times \Obs\to\D(\Act)$, defined as
    $\Ag_{\shield}(\tau, o) = \Sh(\tau, o, \Ag_{pos}(\tau, o))$.
    \index{shield!post-shield}
\end{definition}

Note that the probabilistic nature of a post-shielded agent $\Ag_{\shield}$ does not come from the shield itself, 
but from the fact that given $(\tau, o)$, $\Ag_{pos}(\tau,o)$ is a distribution of actions, which induces a distribution of shielded 
actions $\shield(\tau, o, \Ag_{pos}(\tau,o))$.

As mentioned for the environment-agent construction, our definitions of shields are strictly functional, so we develop the general theory of shielding without discussing how these functions are to be computed or implemented. 
The discussions on implementation are specific for each type of shielding and are a substantial part of the content of Section~\ref{sec:rdm-classical-shielding} and Chapter~\ref{chap:fairness}.

\subsection{Shielding Induced by Agents}
\index{induced!shield induced by agent}

A trivial way of building shields is through the \emph{induced shield construction}.

\begin{definition}[Induced pre-shield]
\label{def:induced-pre-shield}
     Let $\Ag\colon (\Obs\times\Act)^*\times\Obs\to\D(\Act)$ be an agent.
     The \emph{pre-shield induced by $\Ag$} is the pre-shield $\Sh_{\Ag}^{pre}$, 
    defined for any $(\tau,o)\in (\Obs\times\Act)^*\times \Obs$ as 
    \[
    \Sh_{\Ag}^{pre}(\tau, o) = \Supp\left(\Ag(\tau, o)\right). 
    \]
\end{definition}

To build the induced post-shield, we also need a determinization of the agent.

\begin{definition}[Induced post-shield]
\label{def:induced-post-shield}
    Let $\Ag\colon (\Obs\times\Act)^*\times\Obs\to\D(\Act)$ be an agent, and let $\Ag_{det}$, a determinization of $\Ag$.
    The \emph{post-shield induced by $\Ag$ and $\Ag_{det}$}
    is the post-shield $\Sh_{\Ag}^{pos}$, 
    defined for any $(\tau,o,a)\in (\Obs\times\Act)^*\times \Obs\times \Act$ as 
    \[
    \Sh_{\Ag,\Ag_{det}}^{pos}(\tau, o, a) = \begin{cases}
        a & \mbox{ if } a\in \Supp\left(\Ag(\tau, o)  \right)  \\ 
        \Ag_{det}(\tau, o) & \mbox{ otherwise. }
    \end{cases}
    \]
\end{definition}

Any agent shielded with these shields will produce only traces that are valid for $\Ag$. 

With a similar spirit, we can also build agents from given shields. 
However, a shield may be compatible with many agents. 
We formalize this concept with the following definitions.

\begin{definition}[Agents associated with a pre-shield]
    Let $\Sh\colon (\Obs\times\Act)^*\times \Obs\to 2^\Act$ be a pre-shield. 
    An agent $\Ag\colon(\Obs\times\Act)^*\times\Obs\to\D(\Act)$ is
    \emph{associated with $\Sh$} if $\Sh^{pre}_{\Ag} = \Sh$.
\end{definition}

\begin{definition}[Agents associated with a post-shield]
    Let $\Sh\colon (\Obs\times\Act)^*\times \Obs\times\Act\to\Act$. 
    An agent $\Ag\colon(\Obs\times\Act)^*\times\Obs\to\D(\Act)$ is 
    \emph{associated with $\Sh$} if there exists a determinization of $\Ag$, named $\Ag_{det}$, such that 
    $\Sh^{pos}_{\Ag, \Ag_{det}} = \Sh$.    
\end{definition}

For both pre- and post-shields, the set of agents associated with a shield $\Sh$ is denoted by $\Pi_{\Sh}$.

Given a set of agents $\Pi$, 
the set of shields associated with $\Pi$ is $\Sigma_\Pi$ defined as
\begin{equation}
\label{eq:SigmaPi}
    \Sigma_\Pi = \left\{ \Sh\::\: \Pi_{\Sh} \subseteq \Pi \right\}.    
\end{equation}

We can characterize certain aspects of a shield by its set of associated agents.
For example, we have the following technical result, 
that will be used to characterize shields in a delayed observation setting as shields whose set of associated agents is inside $\Pi_\delta^{\myclock}$ for some $\delta\in\NN$.

\begin{lemma}
\label{lem:shields-associated-delayed}
    Let $\Env = (\Obs,\Act, \Trans)$ be an environment, 
    $\delta\in\NN$,
    and let $\Sh$ be a shield such that $\Pi_{\Sh}\subseteq \Pi^{\myclock}_\delta$.
    Then $\Sh$ is also agnostic to the last $\delta$ observations. 
    That is, for all
    $\tau\in (\Obs\times\Act)^*$, 
    all $(a_1,\dots, a_{\delta-1})\in \Act^\delta$, 
    and all 
    $(o_1,\dots, o_\delta), (o_1',\dots, o_\delta')\in\Obs^\delta$ 
    we have:
    \begin{enumerate}
        \item If $\Sh$ is a pre-shield, 
        i.e., $\Sh\colon (\Obs\times\Act)^*\times\Obs\to 2^\Act$, 
        then 
        \begin{equation}
        \label{eq:lemma-induced-pre-shield}
            \Sh\Big( \big(\tau\cdot (o_1,a_1),\dots, (o_{\delta-1}, a_{\delta-1}), \big), o_\delta\Big) = 
            \Sh\Big( \big(\tau\cdot (o_1',a_1),\dots, (o_{\delta-1}', a_{\delta-1}), \big), o_\delta'\Big).
        \end{equation}
        \item If $\Sh$ is a post-shield, i.e., $\Sh\colon (\Obs\times\Act)^*\times \Obs\times\Act\to\Act$, 
        then for all action $a_{\delta}\in \Act$ 
        \begin{equation}
        \label{eq:lemma-induced-post-shield}
            \Sh\Big( \big(\tau\cdot (o_1,a_1),\dots, (o_{\delta-1}, a_{\delta-1}), \big), o_\delta, a_\delta\Big) = 
            \Sh\Big( \big(\tau\cdot (o_1',a_1),\dots, (o_{\delta-1}', a_{\delta-1}), \big), o_\delta', a_\delta \Big).
        \end{equation}
    \end{enumerate}
\end{lemma}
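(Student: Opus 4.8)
\emph{Plan.} The strategy is to convert the hypothesis $\Pi_{\Sh}\subseteq\Pi^{\myclock}_\delta$, which constrains the agents compatible with $\Sh$, into a statement about $\Sh$ itself, by exhibiting a single associated agent that faithfully encodes all of the information carried by $\Sh$ at each input. I treat the two cases separately, because a pre-shield is recovered from the \emph{support} of an associated agent, whereas a post-shield stores one additional datum — the overriding action — that I must also make readable from the agent's distribution. Throughout I assume, as is implicit in the statement, that $\Sh$ is realizable, i.e. that $\Pi_{\Sh}\neq\emptyset$; otherwise the inclusion is vacuous and there is nothing to compare.

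\emph{Pre-shield case.} First I would build the agent $\Ag$ that plays the uniform distribution on $\Sh(\tau,o)$ at each $(\tau,o)$ (realizability makes each such set nonempty). By construction $\Supp(\Ag(\tau,o))=\Sh(\tau,o)$, so $\Sh^{pre}_{\Ag}=\Sh$ in the sense of Definition~\ref{def:induced-pre-shield}, and hence $\Ag\in\Pi_{\Sh}$. The hypothesis then gives $\Ag\in\Pi^{\myclock}_\delta$, so $\Ag$ satisfies the agnosticity Equation~\eqref{eq:def-delayed-obs-agent}. Applying $\Supp(\cdot)$ to both sides of that equality and rewriting $\Supp(\Ag(\cdot))=\Sh(\cdot)$ yields exactly Equation~\eqref{eq:lemma-induced-pre-shield}. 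This case is essentially bookkeeping.

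\emph{Post-shield case (the crux).} Write $A(\tau,o)=\{a:\Sh(\tau,o,a)=a\}$ for the set of fixed points of $\Sh(\tau,o,\cdot)$, and $\omega(\tau,o)$ for the value of $\Sh(\tau,o,\cdot)$ off $A(\tau,o)$. Realizability forces $\Sh$ to have exactly the induced shape of Definition~\ref{def:induced-post-shield}: off $A(\tau,o)$ the shield must be constant and equal to some action of $A(\tau,o)$, so $\omega(\tau,o)$ is well defined and lies in $A(\tau,o)$. The obstacle is that agnosticity of an associated agent does \emph{not} by itself transfer to $\omega$, since $\omega$ is a determinization, and determinizations of an agnostic agent need not be agnostic. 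The remedy is to choose the witness agent so that $\omega(\tau,o)$ is recoverable from the distribution $\Ag(\tau,o)$ itself: I take $\Ag(\tau,o)$ supported exactly on $A(\tau,o)$ and having $\omega(\tau,o)$ as its unique mode (give $\omega(\tau,o)$ strictly more mass than any other allowed action; when $|A(\tau,o)|=1$ this is just a point mass). Then $\omega$ is a valid determinization of $\Ag$ satisfying $\Sh^{pos}_{\Ag,\omega}=\Sh$, so $\Ag\in\Pi_{\Sh}\subseteq\Pi^{\myclock}_\delta$.

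\emph{Finishing the post-shield case.} With this $\Ag$, agnosticity Equation~\eqref{eq:def-delayed-obs-agent} gives equality of the two distributions produced from the left- and right-hand inputs. Equality of distributions gives equality of their supports, hence $A(\mathrm{LHS})=A(\mathrm{RHS})$, and equality of their unique modes, hence $\omega(\mathrm{LHS})=\omega(\mathrm{RHS})$. A short case analysis on a fixed action $a_\delta\in\Act$ then closes the argument: if $a_\delta$ lies in $A$ on both sides, both shields return $a_\delta$; if it lies in neither, both return the common override $\omega$; and the mixed case cannot occur precisely because the fixed-point sets agree. This gives Equation~\eqref{eq:lemma-induced-post-shield}. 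I expect the encoding of $\omega$ into the mode of the witness distribution to be the one genuinely delicate point, everything else being routine.
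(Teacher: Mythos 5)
Your proof is correct, and for the pre-shield case it is essentially the paper's own argument: the paper picks an arbitrary $\Ag\in\Pi_{\Sh}$, notes $\Sh(\tau,o)=\Supp\left(\Ag(\tau,o)\right)$, and pushes the agnosticity of $\Ag$ (Definition~\ref{def:delayed-obs-agent}) through $\Supp(\cdot)$; your uniform witness does the same job and differs only in being explicit (both arguments silently need $\Pi_{\Sh}\neq\emptyset$ --- you at least flag this realizability assumption, the paper leaves it implicit). Where you genuinely depart from the paper is the post-shield case, and your route is the stronger one. The paper disposes of that case with the single sentence ``the proof is analogous,'' but the analogy is incomplete exactly at the point you isolate: agnosticity of an associated agent $\Ag$ only forces equality of supports at paired inputs, hence agreement of $\Sh$ on its fixed points; the override value $\omega(\tau,o)=\Ag_{det}(\tau,o)$ comes from a determinization, and a determinization of an agnostic agent need not be agnostic, so agreement of the overrides does not follow from the agent's agnosticity alone. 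Your unique-mode construction --- making $\omega(\tau,o)$ the strict mode of the witness distribution, so that equality of distributions at paired inputs forces equality of supports and of modes simultaneously --- closes this gap cleanly. (An alternative completion, closer in spirit to the paper's ``pick any associated agent,'' is to observe that for a post-shield the set $\Pi_{\Sh}$ constrains only supports; hence if some pairable fixed-point set had two or more elements, one could build a non-agnostic associated agent by perturbing probabilities at one input, contradicting the hypothesis, so those sets are singletons and the overrides agree automatically.) In short: same skeleton as the paper, but your post-shield argument supplies a step that the paper's proof, read literally, is missing.
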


\begin{proof}
    This results follows directly from the definitions. 
    Suppose $\Sh$ is a pre-shield and let $\Ag\in \Pi_{\Sh}$.
    Since $\Ag$ is associated with $\Sh$, 
    we have that for any trace $\tau\in(\Obs\times\Act)^*$ and any observation $o\in\Obs$, $\Sh(\tau, o) = \Supp(\Ag(\tau, o))$ (by Definition~\ref{def:induced-pre-shield}).
    Using the hypothesis that $\Sigma_{\Sh}\subseteq \Pi_\delta^{\myclock}$ we have that $\Ag\in\Pi_\delta^{\myclock}$, 
    so the condition in Equation~\eqref{eq:lemma-induced-pre-shield} is satisfied by virtue of Definition~\ref{def:delayed-obs-agent}.
    The proof is analogous for the case of $\Sh$ being a post-shield.
\end{proof}

\subsection{Correctness}
\index{shield!correct shield}

The idea behind shielding is that a shielded agent should satisfy a certain desirable property, while being as close as possible to the original agent.
In its most general form, a property can be defined as
a language of correct traces $\L\subseteq (\Obs\times\Act)^\infty$.

\begin{definition}[Correctness]
    \label{def:correctness-of-a-shield}
    Let $\L \subseteq (\Obs\times\Act)^\infty$.
    An agent is \emph{correct with respect to $\L$} 
    if any trace $\tau$ that is valid under $\Ag$ is in $\L$.
    A pre-shield-ready policy $\Ag\colon(\Obs\times\Act)^*\times \Obs\times 2^\Act\to\D(\Act)$ is correct if the regular policy induced by $\Ag$ is correct.
    A (pre- or post-) shield is \emph{correct} with respect to  $\L$ if any shielded policy $\Ag_{\shield}$ is correct with respect to $\L$.
\end{definition}

A correct shield always exists as long as a correct agent exists, 
since we can build a shield that strictly follows a correct agent,
using the \emph{induced shield} construction from the previous section.
Therefore, if we pick $\Ag$ to be correct with respect to a specification $\L \subseteq (\Obs\times\Act)^\infty$, 
then any agent shielded by $\shield^{pre}_{\Ag}$ or
$\shield^{pos}_{\Ag,\Ag_{det}}$ will also be correct with respect to $\L$.

While it is good to know that a correct shield always exists,
it is not very useful to build
a shield that is trivially correct by not caring
about the underlying agent.

\subsection{Interference}

Another desirable property in shields is that they minimize their interference with the agent being shielded. 
Intuitively, a post-shield interferes with its agent every time that 
it overwrites the agent's action.
For pre-shields the intuition is a bit different. 
A pre-shield interferes with an agent every time that 
the action the agent would take if it had no restrictions
is not allowed by the shield.
We formalize these ideas in the following definition.

\begin{definition}[Interference set of a shield]
    \label{def:interferenceSet}
    Let $\Sh\colon (\Obs\times\Act)^*\times \Obs\to 2^\Act$ be a pre-shield, and $\Ag_{pre}\colon (\Obs\times\Act)^*\times\Obs\times 2^\Act \to \D(\Act)$ 
    be a pre-shield-ready agent.
    The \emph{interference set} of $\Sh$ applied to $Ag_{pre}$ is
    \[
    \Inter_{\Sh}(\Ag_{pre}) = \left\{ (\tau, o)\in (\Obs\times\Act)^*\times \Obs\::\: 
    \exists a\in \Supp\left(\Ag_{pre}(\tau, o, \Act)\right),\,
    a\notin  \Sh(\tau, o)
    \right\}.
    \]
    Let $\Sh\colon (\Obs\times\Act)^*\times \Obs\times\Act \to \Act$ 
    be a post-shield, 
    and $\Ag_{pos}\colon (\Obs\times\Act)^*\times \Obs\to\D(\Act)$ be an agent.
    The \emph{interference set} of $\Sh$ applied to $Ag_{pos}$ is
    \[
    \Inter_{\Sh}(\Ag_{pos}) = \left\{ (\tau, o)\in (\Obs\times\Act)^*\times \Obs\::\: 
    \exists a\in\Supp\left(\Ag_{pos}(\tau, o)\right),\,
    a \neq \Sh(\tau, o, a)
    \right\}.
    \]
\end{definition}

A non-interfering shield always exists, independent of whether there exist correct agents or not, 
we call it the \emph{transparent shield}.
It is built as follows.
The transparent post-shield is $\Sh^{pos}_{trans}$ defined for any
$(\tau,o,a)\in (\Obs\times\Act)^*\times \Obs\times\Act$ as 
$$\Sh_{trans}^{pos}(\tau, o, a) = a.$$
The transparent pre-shield is $\Sh^{pre}_{trans}$ defined for any
$(\tau,o)\in (\Obs\times\Act)^*\times \Obs$ as 
$$\Sh_{trans}^{pre}(\tau, o) = \Act.$$

As in the case of the trivially correct shield, this transparent shield is an interesting theoretical construct, but it is not of much utility,
because it is generally not correct.

\begin{definition}[Equivalence modulo interference]
    Let $\Sh$, and $\Sh'$ be two post-shields. 
    We say that $\Sh$ and $\Sh'$ are \emph{equivalent modulo interferences}, 
    denoted it by $\Sh\equiv_i \Sh'$, 
    if for all $(\tau,o, a)\in (\Obs\times\Act)^*\times\Obs\times \Act$,
    we have that $\Sh(\tau,o,a)= a$ if and only if 
    $\Sh'(\tau,o,a)= a$.
\end{definition}

Interference sets are useful because a shield is determined by its interference sets, 
and the interference set of a shield induced by an agent $\Ag$ on that same agent $\Ag$ is always empty.
We formalize these properties in the following result.

\begin{lemma}\label{lem:shield-induced-empty}
Let $\Env = (\Obs,\Act,\Trans)$ be an environment
and $\Pi$ be a set of agents.
The following are true.
\begin{enumerate}
    \item Let $\Sh, \Sh'$ be two pre-shields,
    $\Sh, \Sh'\subseteq \Sigma_\Pi$, and
    such that for any pre-shield-ready agent $\Ag\in\Pi$,
    we have $\Inter_{\Sh}(\Ag) = \Inter_{\Sh'}(\Ag)$. 
    Then $\Sh = \Sh'$.
    \item Let $\Sh, \Sh'$ be two post-shields,
    $\Sh, \Sh'\subseteq \Sigma_\Pi$, and
    such that for any agent $\Ag\in\Pi$,    
    we have $\Inter_{\Sh}(\Ag) = \Inter_{\Sh'}(\Ag)$. 
    Then $\Sh \equiv_i \Sh'$.
    \item Let $\Ag\colon(\Obs\times\Act)^*\times\Obs\times 2^\Act \to\D(\Act)$
    be a pre-shield-ready agent,
    $\Ag_{reg}$ be the regular agent induced by $\Ag$ and
    $\Sh^{pre}_\Ag$ be the shield induced by $\Ag_{reg}$.
    Then $\Inter_{\Sh^{pre}_\Ag}(\Ag) = \emptyset$.
    \item Let $\Ag\colon(\Obs\times\Act)^*\times\Obs\to\D(\Act)$
    be an agent and $\Sh^{pos}_{\Ag,\Ag_{det}}$ be any shield induced by $\Ag$.
    Then $\Inter_{\Sh^{pos}_{\Ag,\Ag_{det}}}(\Ag) = \emptyset$.
\end{enumerate}
\end{lemma}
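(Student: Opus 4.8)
The plan is to establish the two ``empty interference'' claims (items~3 and~4) first by direct computation, since they are elementary and serve as the engine for the two ``determination'' claims (items~1 and~2). For item~3, I would unfold the definitions: the regular agent induced by $\Ag$ satisfies $\Ag_{reg}(\tau,o) = \Ag(\tau,o,\Act)$, and the induced pre-shield is $\Sh^{pre}_\Ag(\tau,o) = \Supp(\Ag_{reg}(\tau,o))$. Substituting into the interference set, membership of a pair $(\tau,o)$ would require some $a \in \Supp(\Ag(\tau,o,\Act))$ with $a \notin \Supp(\Ag(\tau,o,\Act))$, which is impossible; hence the set is empty. For item~4, the key observation I would record is that for an induced post-shield one always has $\Sh^{pos}_{\Ag,\Ag_{det}}(\tau,o,a) = a$ exactly when $a \in \Supp(\Ag(\tau,o))$: if $a$ lies in the support the first branch returns $a$, and otherwise the second branch returns $\Ag_{det}(\tau,o)$, which lies in the support and is therefore distinct from $a$. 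Since the interference set only ranges over $a \in \Supp(\Ag(\tau,o))$, every such $a$ is a fixed point of the shield, so the set is empty.

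For item~1, I would argue by proving mutual pointwise inclusion of $\Sh(\tau,o)$ and $\Sh'(\tau,o)$. Pick any agent $\Ag \in \Pi_\Sh$; since $\Sh \in \Sigma_\Pi$ we have $\Pi_\Sh \subseteq \Pi$, so $\Ag \in \Pi$ and the hypothesis applies to it. Because $\Ag$ is associated with $\Sh$, item~3 gives $\Inter_\Sh(\Ag) = \emptyset$, and the hypothesis then forces $\Inter_{\Sh'}(\Ag) = \emptyset$. Unfolding this last equality and using $\Supp(\Ag(\tau,o)) = \Sh(\tau,o)$ yields $\Sh(\tau,o) \subseteq \Sh'(\tau,o)$ for every $(\tau,o)$. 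Running the same argument with an agent in $\Pi_{\Sh'}$ gives the reverse inclusion, and hence $\Sh = \Sh'$.

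Item~2 follows the same template but lands on $\equiv_i$ rather than equality, which is precisely correct because the interference set cannot detect how a post-shield remaps a rejected action, only whether it changes it. Using the fixed-point characterization from item~4, an agent $\Ag \in \Pi_\Sh$ satisfies $\Sh(\tau,o,a) = a$ if and only if $a \in \Supp(\Ag(\tau,o))$; combining this with $\Inter_{\Sh'}(\Ag) = \emptyset$ (obtained as in item~1) gives the implication $\Sh(\tau,o,a) = a \Rightarrow \Sh'(\tau,o,a) = a$. The symmetric choice of an agent in $\Pi_{\Sh'}$ supplies the converse, establishing $\Sh \equiv_i \Sh'$.

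The step I expect to require the most care is the appeal to ``pick an agent in $\Pi_\Sh$'' in items~1 and~2: this is only legitimate when $\Pi_\Sh$ and $\Pi_{\Sh'}$ are nonempty, i.e.\ when the shields are realizable by some associated agent. For pre-shields this amounts to requiring $\Sh(\tau,o) \neq \emptyset$ everywhere, since a probability distribution must have nonempty support, and for post-shields it amounts to the shield having a fixed point at each $(\tau,o)$. I would make this realizability a standing assumption, or verify it from the concrete way shields are constructed in the relevant sections, and I would explicitly flag that at pairs where the shield blocks every action the claims can degenerate.
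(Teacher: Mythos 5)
Your proof is correct, but it takes a genuinely different route from the paper's for items~1 and~2. The paper argues by contradiction and, at a disagreement point $(\tau,o)$, constructs an ad hoc witness agent whose support is the singleton $\{a\}$ (for item~1) or which outputs exactly $a$ (for item~2); the two interference sets then differ at $(\tau,o)$, contradicting the hypothesis. You instead argue directly: you pick an agent $\Ag\in\Pi_{\Sh}$ associated with $\Sh$, note that $\Sh\in\Sigma_\Pi$ forces $\Pi_{\Sh}\subseteq\Pi$ so the hypothesis applies to $\Ag$, use items~3/4 to get $\Inter_{\Sh}(\Ag)=\emptyset$, hence $\Inter_{\Sh'}(\Ag)=\emptyset$, and read off the global inclusion $\Sh(\tau,o)=\Supp(\Ag(\tau,o))\subseteq\Sh'(\tau,o)$ (respectively, the fixed-point implication for post-shields); symmetry finishes the argument. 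Your decomposition is also more economical, since items~3/4 become the engine for items~1/2, whereas the paper proves all four independently. Each approach pays a different price. Your appeal to associated agents requires $\Pi_{\Sh}$ and $\Pi_{\Sh'}$ to be nonempty, i.e.\ realizability of the shields, which you correctly flag: a pre-shield with $\Sh(\tau,o)=\emptyset$ somewhere has no associated agent, and then your argument says nothing. The paper's witness construction avoids that assumption but has a symmetric, unflagged gap: the hypothesis only equates interference sets for agents \emph{in $\Pi$}, and the paper never checks that its singleton-support witnesses belong to $\Pi$; indeed the paper's proof never uses the assumption $\Sh,\Sh'\in\Sigma_\Pi$ at all, whereas yours is the argument that actually exploits it. In truly degenerate cases (e.g.\ $\Pi=\emptyset$, where $\Sigma_\Pi$ contains distinct unrealizable shields with vacuously equal interference sets) the statement itself fails, so some richness or realizability assumption is needed under either proof; your closing paragraph essentially identifies this, which is a point in your favour.
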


\begin{proof}
    We argue all cases by contradiction.
    \begin{enumerate}
        \item Suppose $\Sh\neq \Sh'$, and let $(\tau, o)$ be an input such that
        $\Sh(\tau, o) \neq \Sh'(\tau,o)$. 
        Without loss of generality, we may assume there exists $a\in\Act$ such
        that $a\in \Sh(\tau, o)$ and $a\notin \Sh'(\tau,o)$. 
        Consider a pre-shield-ready agent $\Ag$ such that $\Supp\left(\Ag(\tau,o)\right)=\{a\}$.
        Since $a\in \Supp\left(\Ag(\tau,o)\right)=\{a\}$, but $a\notin \Sh'(\tau,o)$, we have that $(\tau, o)\in \Inter_{\Sh'}(\Ag)$.
        However, since $\Supp\left(\Ag(\tau,o)\right) \subseteq \Sh(\tau,o)$, 
        we have that $(\tau,o)\notin\Inter_{\Sh}(\Ag)$, 
        contradicting $\Inter_{\Sh}(\Ag) = \Inter_{\Sh'}(\Ag)$.
        \item Suppose $\Sh\not\equiv_i\Sh'$, and let $(\tau, o, a)$ be an input making them so.
        Without loss of generality, we may assume that $\Sh'(\tau,o,a)\neq a$,
        and $\Sh(\tau,o,a) = a$.
        Consider an agent $\Ag$ such that $\Ag(\tau, o) = a$.
        As in the previous point, this implies that $(\tau, o)\in \Inter_{\Sh'}(\Ag)$, but on the other hand $(\tau,o)\notin \Inter_{\Sh}(\Ag)$, 
        contradicting $\Inter_{\Sh}(\Ag) = \Inter_{\Sh'}(\Ag)$.
        \item  Suppose $(\tau, o)\in \Inter_{\Sh^{pre}_\Ag}(\Ag)$.
        Then there would exist $a\in \Supp\left( \Ag(\tau,o,\Act)  \right)$, such that $a\notin \Sh^{pre}_{\Ag}$.
        However, by definition, $\Sh^{pre}_{\Ag}(\tau, o) = \Supp\left(\Ag_{reg}(\tau,o)\right)$, and $\Ag_{reg}(\tau, o) = \Ag(\tau,o,\Act)$. Therefore, such $a$ cannot exist,
        contradicting $(\tau, o)\in \Inter_{\Sh^{pre}_\Ag}(\Ag)$.
        \item Suppose $(\tau, o)\in \Inter_{\Sh^{pos}_{\Ag,\Ag_{det}}}(\Ag)$.
        Then there would exist $a\in\Supp\left( \Ag(\tau, o)  \right)$,
        such that $a \neq \Sh^{pos}_\Ag(\tau, o, a)$.
        However, by definition, $\Sh^{pos}_{\Ag,\Ag_{det}}(\tau, o, a) = a$ for all $a\in \Supp\left( \Ag(\tau, o)  \right)$.
        Therefore, such $a$ cannot exist,
        finishing the proof.
    \end{enumerate}
\end{proof}

\subsection{Minimal Correctness}

The goals of generating correct traces and minimizing interference often conflict. As we have seen, one can construct a correct shield from a correct policy by simply overwriting any action proposed by the agent with the corresponding action from the correct policy. However, this approach results in significant interference for many agents. At the other extreme, a fully transparent shield imposes no interference at all, leaving interference sets empty.
An ideal shield balances these objectives, ensuring correctness while keeping interference sets as small as possible. Formally, we define this as follows.

\begin{definition}[Minimal correctness]
\label{def:minimal_correctness}
    Let $\Env=(\Obs,\Act,\Trans)$ be an environment
    and $\Pi$ be a set of agents.
    A shield $\Sh\in\Sigma_\Pi$ is \emph{minimally correct} 
    restricted to $\Pi$
    if it is correct and
    for all agent $\Ag\in\Pi$ (pre-shield-ready in case of pre-shield, regular in case of post-shield), and for all correct shield $\Sh^\prime\in\Sigma_\Pi$, 
    we have $\Inter_{\shield}(\Ag) \subseteq \Inter_{\shield^\prime}(\Ag)$.
\end{definition}

\begin{theorem}\label{thm:minimallycorrectshield}
    Let $\Env=(\Obs,\Act,\Trans)$ be an environment, 
    $\L\subseteq (\Obs\times\Act)^\infty$ be a specification,
    $\Pi$ be a set of agents,
    and $\Sh$ be a minimally correct shield.
    Then:
    \begin{enumerate}
        \item If $\Sh$ is a pre-shield,  $\Sh$ is unique.
        \item If $\Sh$ is a post-shield, $\Sh$ is unique modulo interferences.
        \item For any correct agent $\Ag\in\Pi$ (pre-shield-ready in case of pre-shield, regular in case of post-shield), we have 
    $\Inter_{\shield}(\Ag) = \emptyset$.
    \end{enumerate}
\end{theorem}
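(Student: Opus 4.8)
The theorem has three parts, and I would prove them in an order that lets later parts reuse earlier work. The key tool is Lemma~\ref{lem:shield-induced-empty}, which says that a shield is completely determined by its interference sets (parts 1 and 2 of that lemma) and that an induced shield never interferes with its own agent (parts 3 and 4). My strategy is to first characterize the interference set of any minimally correct shield explicitly, then use the determinacy lemma to collapse two minimally correct shields into one.

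**Part 3 first.** I would actually prove part 3 before parts 1 and 2, since it is the conceptual core and the other parts follow from it. Let $\Ag \in \Pi$ be a correct agent (pre-shield-ready or regular as appropriate). The induced shield $\Sh_\Ag$ (either $\Sh^{pre}_\Ag$ or $\Sh^{pos}_{\Ag,\Ag_{det}}$) is correct, because any agent it shields produces only traces valid for $\Ag$, which lie in $\L$ by correctness of $\Ag$; and $\Sh_\Ag \in \Sigma_\Pi$ provided its associated agents lie in $\Pi$. By Lemma~\ref{lem:shield-induced-empty}(3)--(4), $\Inter_{\Sh_\Ag}(\Ag) = \emptyset$. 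Now apply the definition of minimal correctness (Definition~\ref{def:minimal_correctness}) to the minimally correct shield $\Sh$ against the correct competitor $\Sh_\Ag$: we get $\Inter_{\Sh}(\Ag) \subseteq \Inter_{\Sh_\Ag}(\Ag) = \emptyset$, hence $\Inter_{\Sh}(\Ag) = \emptyset$.

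**Parts 1 and 2.** Suppose $\Sh$ and $\Sh'$ are both minimally correct restricted to $\Pi$. By definition, $\Sh$ is correct and $\Sh' \in \Sigma_\Pi$ is correct, so the minimality of $\Sh$ gives $\Inter_{\Sh}(\Ag) \subseteq \Inter_{\Sh'}(\Ag)$ for every $\Ag \in \Pi$; symmetrically, the minimality of $\Sh'$ gives the reverse inclusion. Hence $\Inter_{\Sh}(\Ag) = \Inter_{\Sh'}(\Ag)$ for all $\Ag \in \Pi$. Now invoke Lemma~\ref{lem:shield-induced-empty}: for pre-shields part (1) yields $\Sh = \Sh'$, and for post-shields part (2) yields $\Sh \equiv_i \Sh'$, which are exactly the claims of parts 1 and 2.

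**Anticipated obstacle.** The routine steps are the inclusion-chasing; the genuinely delicate point is the side condition $\Sh_\Ag \in \Sigma_\Pi$ needed to use the induced shield as a legitimate competitor in the minimal-correctness comparison. I would need to verify that the agents associated with $\Sh_\Ag$ all lie in $\Pi$ — i.e. that $\Pi_{\Sh_\Ag} \subseteq \Pi$ — which is what membership in $\Sigma_\Pi$ means by Equation~\eqref{eq:SigmaPi}. This may require either an implicit closure assumption on $\Pi$ (that $\Pi$ contains all agents associated with the induced shields of its members) or a restriction of the comparison in Definition~\ref{def:minimal_correctness} to competitors already in $\Sigma_\Pi$. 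I would flag this as the one spot where the statement relies on $\Pi$ and $\Sigma_\Pi$ interacting cleanly, and otherwise the proof is a direct application of the determinacy lemma.
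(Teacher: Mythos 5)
Your proof is correct and follows essentially the same route as the paper's: mutual inclusion of interference sets plus Lemma~\ref{lem:shield-induced-empty} for parts 1--2, and comparison against the shield induced by the correct agent for part 3 (your reordering is immaterial, since parts 1--2 never use part 3). The obstacle you flag is genuine but is equally left implicit in the paper's own proof: choosing the induced shield $\Sh^{pre}_{\Ag}$ (or $\Sh^{pos}_{\Ag,\Ag_{det}}$) as the competitor $\Sh'$ in Definition~\ref{def:minimal_correctness} silently requires $\Pi_{\Sh'}\subseteq\Pi$, which holds automatically when $\Pi$ is the set of all agents but needs a closure hypothesis on $\Pi$ in general.
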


\begin{proof}
    Uniqueness is a consequence of Lemma~\ref{lem:shield-induced-empty}.
    Suppose $\Sh$ and $\Sh'$ are minimally correct. 
    Since $\Sh$ is minimally correct and $\Sh'$ is correct, we have for any agent $\Ag$ that 
    $\Inter_{\Sh}(\Ag) \subseteq \Inter_{\Sh'}(\Ag)$.
    And using that $\Sh'$ is minimally correct and $\Sh$ is correct, 
    we have that $\Inter_{\Sh'}(\Ag) \subseteq \Inter_{\Sh}(\Ag)$, 
    leading to $\Inter_{\Sh}(\Ag) = \Inter_{\Sh'}(\Ag)$.
    By Lemma~\ref{lem:shield-induced-empty}, this implies $\Sh = \Sh'$ for pre-shields and $\Sh\equiv_i\Sh'$ for post-shields.

    Since $\shield$ is minimally interfering, for any correct shield $\shield'$, 
    we have that $\Inter_{\shield}(\Ag) \subseteq \Inter_{\shield'}(\Ag)$.
    In particular, since $\Ag$ is correct, we can choose $\shield'$ to be the shield induced by $\Ag$. 
    By Lemma~\ref{lem:shield-induced-empty}, $\Inter_{\shield'}(\Ag) = \emptyset$.
    Therefore, $\Inter_{\shield'}(\Ag)$ is a subset of the empty set,
    so it can only be that $\Inter_{\shield'}(\Ag) = \emptyset$.
\end{proof}



While this will be our operational definition of a useful shield
for most problems,
we want to note that it is not without drawbacks. 
We explore now its two main drawbacks: non-existence and cost-independence. 

\paragraph*{Existence of minimally correct shields.}
The first drawback of this definition is that there are environments and specifications for which a minimally correct shield does not exist, 
even if correct agents do exist.
Consider an environment with a single possible observation $\Obs = \{o\}$, 
two actions $\Act = \{a,b\}$, and a specification 
$\L = \{ w\::\: w \mbox{ contains at least one } a\}$.
Given this specification, the transparent shield cannot be correct, 
since there exists the agent that only outputs $b$, 
which is not correct. 
Suppose $\shield$ is a correct post-shield. 
Then there exists $(\tau, o)\in (\Obs\times\Act)^*\times\Obs$ such that
$\shield(\tau,o, b) = \{a\}$.
Let $k = |\tau|$. Consider the agent $\Ag$ such that $\Ag(\tau',o) = b$ if $|\tau'|\leq k$ and $\Ag(\tau',o) = a$ if $|\tau'| > k$. 
Clearly $\Ag$ is a correct agent, however, $\Inter_{\shield}(\Ag) \neq \emptyset$.
By Theorem~\ref{thm:minimallycorrectshield}, $\shield$ cannot be minimally correct.
An analogous construction can be made for pre-shields.
In Section~\ref{sec:rdm-classical-shielding}, we explore some concrete cases where the existence of minimally correct shields is guaranteed.

\paragraph*{Uniform cost.}
The second drawback is that all interferences are given the same importance. 
Depending on the problem, it may be useful to assign a numerical cost to each intervention and ask which is the shield that guarantees certain correctness properties while having minimal cost. 
We explore this in the context of fairness shields in Chapter~\ref{chap:fairness}.

\section[Classical Shielding]{Classical Shielding through the Lens of the Reactive Decision-Making Framework}
\label{sec:rdm-classical-shielding}

Shielding has been successfully used for specifications of the safety type
in deterministic two-player games and in MDPs. 
In this section, we explain how these methods can be seen in our framework.

\subsection{Shielding in Safety Games with Perfect Information}
\label{sec:shielding-in-safety-games}
\index{shield!in safety games}

Shielding was first introduced in safety games~\cite{DBLP:conf/tacas/BloemKKW15}.
Recall (Section~\ref{sec:rdm-two-player-games}) how safety games can be regarded in the reactive decision-making framework as memoryless environments with a uniform transition function.
Also, when considering the perfect information regime, i.e., no delayed observations, memoryless strategies are enough to define any winning strategy.

Following the general definition (Definition~\ref{def:preshield-abstract}), 
and adapting to the fact that transitions in a safety game are memoryless, 
a pre-shield is a function $\Sh\colon S_{ag}\to 2^\Act$. 
Given a strategy of the safety game $\xi\colon S_{ag}\to 2^\Act$, 
we call $\Sh^{pre}_\xi = \xi$ the pre-shield induced by the strategy $\xi$.

Given a strategy $\xi\colon S_{ag}\to 2^\Act$ and a deterministic strategy
$\chi\colon S_{ag}\to \Act$, we call $\Sh^{pos}_{\xi,\chi}$ the post-shield induced by the pair of strategies $(\xi,\chi)$, defined as:
\[
\Sh^{pos}_{\xi,\chi}(s, a) = \begin{cases}
    a & \mbox{ if } a\in \xi(s) \\
    \chi(s) & \mbox{ otherwise}.
\end{cases}
\]

A pre-shield can be added before an output-restricted strategy $\xi\colon S_{ag}\times 2^\Act \to 2^\Act$ to generate a regular strategy $\xi_{\shield}\colon S_{ag}\to 2^\Act$.
%
Similarly, a post-shield is a function $\Sh\colon S_{ag}\times \Act\to \Act$.
A post-shield can be added after a regular strategy $\xi\colon S_{ag}\to 2^{\Act}$
to generate a new regular strategy $\xi_{\shield} \colon S_{ag}\to 2^\Act$.

Note that, by definition, if $\xi$ is a winning strategy of $\xi$, 
then $\Sh^{pre}_{\xi}$ is a correct pre-shield. 
Similarly, if $\xi$ is a winning strategy and $\chi$ is a determinization of $\xi$, 
then $\Sh_{\xi, \chi}^{pos}$ is a correct post-shield.

\begin{theorem}\label{thm:shield-safety-game}
    Let $\G = (S, s_0, S_{env}, S_{ag}, \Act, \T, \F)$ be a safety game with a winning strategy.
    Let $\xi$ be the maximally permissive winning strategy of $\G$.
    Then:
    \begin{enumerate}
        \item The minimally correct pre-shield exists and is $\Sh_\xi^{pre}$.
        \item For any deterministic winning strategy $\chi$, 
        the shield $\Sh_{\xi, \chi}^{pos}$ is minimally correct.
    \end{enumerate}
\end{theorem}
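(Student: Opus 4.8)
The plan is to prove each item by separating the two requirements in the definition of minimal correctness (Definition~\ref{def:minimal_correctness}): first that the proposed shield is \emph{correct} with respect to the safety specification $\L=\F^\omega$, and second that its interference set is contained in that of every competing correct shield. Both requirements will rest on the single structural fact that $\xi$ is the \emph{maximally permissive} winning strategy, so that $\Beh(\G,\xi')\subseteq\Beh(\G,\xi)$ for every winning $\xi'$; operationally this means $\xi(s)=\{\sigma\in\Act : s\xrightarrow{\sigma}s',\ s'\in W\}$ on the winning region $W$.

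For correctness, recall that $\Sh^{pre}_\xi=\xi$ is itself winning, so every play produced by a pre-shield-ready agent filtered through $\Sh^{pre}_\xi$ stays in $W\subseteq\F$ and hence lies in $\L$; this gives the correctness half of item~(1). For item~(2), since $\chi$ is a deterministic winning strategy it satisfies $\Beh(\G,\chi)\subseteq\Beh(\G,\xi)$, whence $\chi(s)\in\xi(s)$ for every reachable $s$. The post-shield $\Sh^{pos}_{\xi,\chi}$ returns the agent's action $a$ only when $a\in\xi(s)$ and otherwise substitutes $\chi(s)\in\xi(s)$, so in either case the executed action keeps the play inside $W$; thus $\Sh^{pos}_{\xi,\chi}$ is correct.

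For minimality I would first prove the key containment statement: any correct pre-shield $\Sh'$ satisfies $\Sh'(s)\subseteq\xi(s)$, and any correct post-shield $\Sh'$ satisfies $\Sh'(s,a)\in\xi(s)$, for every reachable agent state $s$. The argument is by contradiction: if a correct $\Sh'$ permitted (resp. returned) an action leaving $W$ at a reachable $s$, one could build an agent that drives the play to $s$ along safe actions and then forces that escaping action; since the resulting successor is outside $W$ it admits no winning continuation, so the environment could force a violation of $\F$, contradicting correctness of $\Sh'$. Equivalently, the strategy induced by a correct shield is winning, and maximal permissiveness of $\xi$ yields the containment. Granting this, the interference comparison is immediate: for the pre-shield, if $(\tau,o)\in\Inter_{\Sh^{pre}_\xi}(\Ag)$ there is a proposed $a\notin\xi(o)$, and since $\Sh'(o)\subseteq\xi(o)$ we get $a\notin\Sh'(o)$, so $(\tau,o)\in\Inter_{\Sh'}(\Ag)$; for the post-shield, interference of $\Sh^{pos}_{\xi,\chi}$ at $(\tau,o)$ means some proposed $a\notin\xi(o)$, and since $\Sh'(o,a)\in\xi(o)$ we have $\Sh'(o,a)\neq a$, again placing $(\tau,o)$ in $\Inter_{\Sh'}(\Ag)$. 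This establishes $\Inter_{\Sh^{pre}_\xi}(\Ag)\subseteq\Inter_{\Sh'}(\Ag)$ and $\Inter_{\Sh^{pos}_{\xi,\chi}}(\Ag)\subseteq\Inter_{\Sh'}(\Ag)$ for all agents, which is exactly minimal correctness; existence in item~(1) follows by exhibiting $\Sh^{pre}_\xi$, and uniqueness (resp. uniqueness modulo interference) then follows from Theorem~\ref{thm:minimallycorrectshield}.

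The main obstacle I anticipate is the careful treatment of states outside $W$. The interference sets of Definition~\ref{def:interferenceSet} range over \emph{all} pairs $(\tau,o)$, including those never realized along a valid play, whereas correctness only constrains a shield on reachable states. On a state $s\notin W$ one has $\xi(s)=\emptyset$, so $\Sh^{pre}_\xi$ interferes there with any agent that proposes anything, while a correct $\Sh'$ is free to allow arbitrary actions on such unreachable states without compromising correctness, which would break the desired containment. Resolving this cleanly is precisely the edge-case issue that the refined definitions are meant to address: I would restrict the comparison to the winning region (equivalently, require admissible shields in $\Sigma_\Pi$ to be empty off $W$), so that the containment statement applies uniformly and the minimality inclusions hold everywhere they are tested.
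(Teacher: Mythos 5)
Your proposal follows essentially the same route as the paper's proof: correctness is immediate from $\xi$ (resp.\ $\chi$) being winning, and minimality rests on the same key construction --- given a competing correct shield $\Sh'$ that permits an action $a$ with $s\xrightarrow{a}s'$ and $s'\notin W$, build an agent that forces $a$ through $\Sh'$, observe that the resulting shielded agent is correct, and conclude $s'\in W$, a contradiction. Your reorganization into a containment lemma ($\Sh'(s)\subseteq\xi(s)$ for every correct $\Sh'$) followed by a direct comparison of interference sets is a repackaging of the paper's one-shot contradiction argument; the mathematical content is identical.

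Your third paragraph, however, flags a real subtlety that the paper does not resolve but silently steps over. The interference sets of Definition~\ref{def:interferenceSet} are quantified over \emph{all} inputs, whereas correctness only constrains a shield along valid traces; hence a competing correct shield may be fully permissive at states that no correctly shielded play can ever reach, and at such states $\Sh^{pre}_\xi$ can be strictly more restrictive, threatening the inclusion $\Inter_{\Sh^{pre}_\xi}(\Ag)\subseteq\Inter_{\Sh'}(\Ag)$. The paper's own proof is exposed at exactly this point: the step ``but then $s'$ would be part of a valid trace under a winning strategy, so $s'\in W$'' tacitly assumes that the disputed state $s$ is reached by a valid trace of the shielded agent $\Ag'_{\Sh}$, which the construction of $\Ag'$ alone does not guarantee, and the reachability-flavoured definition of the winning region in Equation~\eqref{eq:prelim-safety-games-winning-strategy} does not remove the problem. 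So your proposed restriction --- comparing only shields that agree off $W$, or equivalently testing interference only on reachable inputs --- is not a defect of your argument but a legitimate tightening that the paper's proof also needs; just note that it amends the statement rather than proving it verbatim.
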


\begin{proof}
    \textbf{(1.)}
    We argue the first point by contradiction. 
    Since $\xi$ is a winning strategy, $\Sh_\xi^{pre}$ is correct by construction, so we only have to argue the minimality property.
    Suppose that $\Sh_\xi^{pre}$ is not minimally correct.
    Then there exists a pre-shield-ready agent $\Ag\colon S_{ag}\times 2^\Act\to 2^\Act$, 
    a correct pre-shield $\Sh\colon S_{ag}\to 2^\Act$, and 
    $s\in S_{ag}$ such that $s\in \Inter_{\Sh_\xi^{pre}}(\Ag)$, 
    but $s\notin\Inter_{\Sh}(\Ag)$.
    Since $s\notin \Inter_{\Sh}(\Ag)$, 
    then $\Ag(s, \Act)\subseteq \Sh(s)$. 
    Since $s\in \Inter_{\shield^{pre}_\xi}(\Ag)$, 
    then there exists $a\in \A$ such that $a\in \Ag(s,\Act)\subseteq \Sh(s)$, 
    but $a\notin \Sh_\xi^{pre}(s)$.
    Since $\Sh_\xi^{pre}$ is implemented with the maximally permissive winning strategy, by definition (recall Equation~\eqref{eq:max-perm-strat-safety-game}) we have $s\xrightarrow{a}s'$, with $s'\notin W$, 
    where $W$ is the winning region of $\G$.
    
    On the other hand, consider an pre-shield-ready agent $\Ag'$ such that $\Ag'(s, \Sh(s)) = \{a\}$,
    which exists since $a\in \Sh(s)$.
    Let $\Ag'_{\Sh}$ be the agent resulting from applying $\Sh$ on $\Ag'$.
    This agent is correct because $\Sh$ is correct, 
    and by construction, $\Ag'_{\Sh}(s) = a$.
    But then $s'$ would be part of a valid trace under a winning strategy, 
    so $s'\in W$. This is a contradiction, as we had previously established that $s'\notin W$.

    \textbf{(2.)} As for the second point, the shield is correct since $\xi$ is winning and $\chi$ is a determinization of $\xi$, 
    and we will use a similar argument to prove minimality by contradiction.
    Suppose $\Sh_{\xi,\chi}^{pos}$ is not minimally correct.
    Then there exists an agent $\Ag\colon S_{ag}\to 2^\Act$, 
    a correct post-shield $\Sh\colon S_{ag}\times\Act\to\Act$
    and $s\in S_{ag}$ such that 
    $s\in \Inter_{\Sh_{\xi,\chi}^{pos}}(\Ag)$, 
    but $s\notin\Inter_{\Sh}(\Ag)$.

    Since $s\notin \Inter_{\Sh}(\Ag)$, 
    then for all $a\in \Ag(s)$, we have $\Sh(s,a) = a$.
    Since $s\in \Inter_{\shield^{pre}_\xi}(\Ag)$, 
    then there exists $a\in \Ag(s)$ such that $\Sh^{pos}_{\xi,\chi}(s,a)\neq a$.
    Since $\Sh_{\xi,\chi}^{pos}$ is implemented with the maximally permissive winning strategy $\xi$,
    and a determinization of it $\chi$,
    it means that $a\notin\xi(s)$,
    and thus $s'$ defined by $s\xrightarrow{a}s'$ is not in the winning region $W$.

    On the other hand, consider an agent $\Ag'\colon S_{ag}\to 2^\A$ satisfying 
    $\Ag(s) = a$, and consider $\Ag'_{\Sh}$, that same agent shielded with $\Sh$.
    We know that $\Ag'_{\Sh}$ is correct, because $\Sh$ is correct.
    Since $\Sh(s,a) = a$, we have that $\Ag'_{\Sh}(s) = a$.
    The proof finishes by noting that in such case $s'\in W$,
    contradicting $s'\notin W$. 
\end{proof}

While the definition of minimally correct shields with interference sets is original to this work, 
the construction of shields using the maximally permissive strategy of the 
underlying safety game is the same as has been described in 
the original shielding literature~\cite{DBLP:conf/tacas/BloemKKW15,konighofer2017shield,AlshiekhBEKNT18,koenighofer2019}.

The definition presented in this work captures corner cases that previous definitions did not, as we show in the following example.

\begin{figure}
    \centering
    \includegraphics[width=0.7\linewidth,page=5]{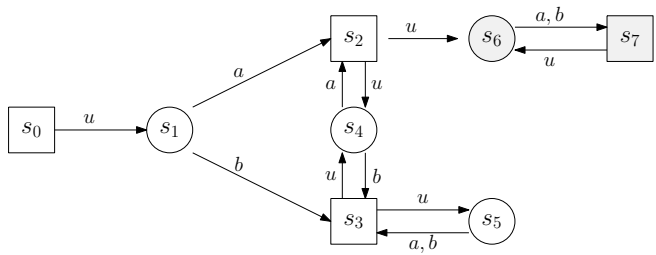}
    \caption{Safety game illustrating Example~\ref{ex:safety-game-counterexample}.}
    \label{fig:safety-game-counterexample}
\end{figure}

\begin{example}\label{ex:safety-game-counterexample}
    Consider the safety game $\G = (S, s_0, S_{env}, S_{ag}, \Act, \T, \F)$ illustrated in Figure~\ref{fig:safety-game-counterexample}, 
    where the $S_{env} = \{ s_0, s_2, s_3, s_7 \}$, 
    $S_{ag} = \{ s_1, s_4, s_5, s_6 \}$, 
    $\A = \{a,b\}$, $\F = S\setminus \{s_6, s_7\}$, and $\T$ is as described in the figure.
    The winning region of this safety game is $W = \{ s_0, s_1, s_3, s_4, s_5 \}$, 
    and therefore no trace containing $s_2$ would be allowed by a shielded agent. 
    However, according to the definition of shield as minimally interfering in previous work~\cite[Def. 1]{konighofer2017shield}, 
    a trace $\tau = s_0 s_1 s_2 s_4 (s_3 s_5)^\omega$
    should be allowed by a shielded agent since it does leave $\F$ at any time.
\end{example}

\subsection{Shielding in Safety Games with Delayed Observations}
\label{sec:shielding-in-safety-games-delayed}
\index{shield!in safety games with delayed observation}

When considering games under delay,
we need to be aware that memoryless strategies are not enough, 
as discussed in Section~\ref{sec:prelim-games-under-delay}.

From a reactive decision-making framework point of view, 
the correspondence between a game $\G_{\delta,\mu} = 
\langle S, s_0, S_{env}, S_{ag}, \Act, \mathcal T, \Acc, \delta, \mu \rangle$, and an environment $\Env = (\Obs,\Act, \Trans)$ is the same as the correspondence described in the previous section for games with $\delta=\mu=0$. 

The only relevant difference is that we consider only agents restricted to the set of agents agnostic to the last $\delta$ observations, and with a restricted memory $\mu$.
Therefore, the shields we consider are going to have the same restrictions. 

In Section~\ref{sec:rdm-delayed-obs} we have defined, 
for a given environment $\Env = (\Obs,\Act,\Trans)$, 
the set of agents $\Pi^{\myclock}_\delta$ as those agnostic to the last $\delta$ observations, and we have explained how restricting to this set of agents is equivalent to considering agents that work with delayed observations.

A characteristic of safety games is that the transition relation $\T$ depends only on the state -- and not the trace leading to that state.
Therefore, the agents relevant for solving safety games under delay $\delta$ are only not agnostic to the $\delta+1$-th observation, counting from the tail. 
Formally, mirroring Definition~\ref{def:delayed-obs-agent},
an agent $\Ag=(\Obs,\Act,\Pol)$ works only with the last $\delta+1$-th observation if 
for all $\tau,\tau'\in (\Obs\times\Act)^*$, all $o\in\Obs$, 
all $(a_0,a_1,\dots, a_\delta)\in\Act^\delta$ and all $(o_1,\dots,o_\delta), (o'_1,\dots, o'_\delta)\in\Obs^\delta$, we have 
\begin{equation}
    \label{eq:def-delayed-obs-agent-full-mem-safety-game}
        \Pol\Big(\big(\tau \cdot (o, a_0), (o_1,a_1),\dots, (o_{\delta-1}, a_{\delta-1})\big), o_\delta\Big) =     
        \Pol\Big(\big(\tau'\cdot (o,a_0),(o_1',a_1),\dots, (o_{\delta-1}', a_{\delta-1})\big), o_\delta'\Big).   
\end{equation}

Following the same equivalence as in Section~\ref{sec:rmd-delayed-equivalence}, 
the agents satisfying Equation~\eqref{eq:def-delayed-obs-agent-full-mem-safety-game} are equivalent to agents working on the domain $\A^{\leq\delta}\cup (\Obs\times\A^\delta)$.
Introducing a restriction on memory, as described for safety games in Section~\ref{sec:prelim-games-under-delay} is also modelled as a restriction to agents agnostic to certain parts of the input. 
Concretely, the part of the input that corresponds to actions that overflow the memory. 

Putting these two concepts together, we have the following definition.

\begin{definition}
    Let $\G_{\delta,\mu} = 
    \langle S, s_0, S_{env}, S_{ag}, \Act, \mathcal T, \Acc\rangle$ be a two player game and let $\Env = (\Obs, \Act, \Trans)$ be its corresponding environment.
    Let $\delta\in\NN$ and $\mu\leq \delta$ be two integers representing delay and memory.
    An agent $\Ag = (\Obs,\Act, \Pol)$ \emph{works in the safety game with memory $\mu$ and observations delayed by $\delta$} if
    for all $\tau,\tau'\in (\Obs\times\Act)^*$, all $(a_{\delta-\mu}, \dots, a_{\delta-1})\in\Act^\mu$, all $(a_0,\dots, a_{\delta-\mu-1}), (a'_0,\dots, a'_{\delta-\mu-1})\in \Act^{\delta-\mu}$, all $o\in \Obs$, and all $(o_1,\dots, o_\delta), (o'_1,\dots, o'_\delta)\in\Obs^\delta$, we have:
    \begin{align}
        \Pol\Big(\big(\tau \cdot (o, a_0), (o_1,a_1),\dots,
        (o_{\delta-\mu-1}, a_{\delta-\mu-1}), 
        (o_{\delta-\mu}, a_{\delta-\mu}), \dots,
        (o_{\delta-1}, a_{\delta-1})\big), o_\delta\Big) = \nonumber \\
        \Pol\Big(\big(\tau' \cdot (o, a'_0), (o'_1,a'_1),\dots,
        (o'_{\delta-\mu-1}, a'_{\delta-\mu-1}), 
        (o'_{\delta-\mu}, a_{\delta-\mu}), \dots,
        (o'_{\delta-1}, a_{\delta-1})\big), o'_\delta\Big) 
    \end{align}
    The set of agents in this regime is denoted as $\Pi^{\myclock}_{\delta,\mu}$.
\end{definition}

Again, following the same equivalence as in Section~\ref{sec:rmd-delayed-equivalence}, 
the agents in $\Pi^{\myclock}_{\delta,\mu}$ can be characterized as agents of the form $\Pol\colon \Act^\mu\cup (\Obs\times\Act^\mu)\to 2^{\Act}$, 
which is the same form as that of strategies in games with delay $\delta$ and memory $\mu$ (Section~\ref{sec:prelim-games-under-delay}).

This serves as the basis for the result analogous to Theorem~\ref{thm:shield-safety-game} for games with delayed observations.

\begin{theorem}\label{thm:shield-delayed-safety-game}
    Let $\G = (S, s_0, S_{env}, S_{ag}, \Act, \T, \F,\delta,\mu)$ be a safety game under delayed observation $\delta$ with a winning strategy with memory $\mu$.
    Let $\xi\colon S_{ag*}\times \Act^{\leq \mu}\to 2^\Act$ be the maximally permissive winning strategy of $\G$.
    Then, restricting to the set of agents $\Pi^{\myclock}_{\delta,\mu}$, we have that:
    \begin{enumerate}
        \item The minimally correct pre-shield exists and is $\Sh_\xi^{pre}\in\Sigma_{\Pi^{\myclock}_{\delta,\mu}}$.
        \item For any deterministic winning strategy $\chi$, 
        the shield $\Sh_{\xi, \chi}^{pos}\in \Sigma_{\Pi^{\myclock}_{\delta,\mu}}$ is minimally correct.
    \end{enumerate}
\end{theorem}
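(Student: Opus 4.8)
The plan is to reduce both claims to the perfect-information case already settled in Theorem~\ref{thm:shield-safety-game}. First I would invoke the equivalence of Chen et al.\ (the result cited as Lemma~2 in Section~\ref{sec:prelim-games-under-delay}): the delayed safety game $\G_{\delta,\mu}$ with $\mu=\delta$ is equivalent to an undelayed safety game $\G'$ on the expanded state space $S' = (S\times\Act^\delta)\cup(\{s_0'\}\times\Act^{\leq\delta})$, and for $\mu<\delta$ the same construction restricted to registers of length $\mu$ applies (this is exactly the memory-restricted extension developed in Chapter~\ref{chap:delayed_shields}). Together with the classical result of Thomas, this guarantees that the maximally permissive winning strategy $\xi\colon S_{ag*}\times\Act^{\leq\mu}\to 2^\Act$ exists, is unique, and possesses a winning region $W'\subseteq S'$ such that $\xi$ allows an action $a$ at an observed state $s$ with register $\ol{\sigma}$ precisely when the corresponding successor in $\G'$ lies in $W'$ --- the exact analogue of Equation~\eqref{eq:max-perm-strat-safety-game}. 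Correctness of $\Sh_\xi^{pre}$, and of $\Sh_{\xi,\chi}^{pos}$ whenever $\chi$ is winning, is then immediate: every trace valid under an associated agent stays in $W'$ and hence never leaves $\F$.

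The key bridge is the correspondence, established in Section~\ref{sec:rmd-delayed-equivalence} and refined here, between agents of $\Pi^{\myclock}_{\delta,\mu}$ and memoryless strategies of $\G'$: after the transient phase, an agent in $\Pi^{\myclock}_{\delta,\mu}$ is a function of the observed state together with the last $\mu$ actions, which is exactly a state of $\G'$. I would check that this correspondence is a bijection preserving trace validity and, crucially, preserving interference sets --- a pair $(\tau,o)$ lies in $\Inter_{\Sh}(\Ag)$ in $\G_{\delta,\mu}$ iff the corresponding $\G'$-state lies in the interference set of the corresponding shield-strategy pair. Under this correspondence $\Sh_\xi^{pre}$ maps to the pre-shield induced by the maximally permissive strategy of $\G'$, so $\Pi_{\Sh_\xi^{pre}}\subseteq\Pi^{\myclock}_{\delta,\mu}$, i.e.\ $\Sh_\xi^{pre}\in\Sigma_{\Pi^{\myclock}_{\delta,\mu}}$ (and likewise for the post-shield); this membership is essentially the content of Lemma~\ref{lem:shields-associated-delayed}.

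With existence and uniqueness of $\xi$, the winning-region characterization, and interference-preservation in hand, both claims follow by transporting Theorem~\ref{thm:shield-safety-game} through the correspondence. Alternatively --- and this is the form I would write out in full to stay self-contained --- I would mirror the contradiction argument of that theorem directly. Assuming $\Sh_\xi^{pre}$ is not minimally correct, there are an agent $\Ag\in\Pi^{\myclock}_{\delta,\mu}$, a correct pre-shield $\Sh\in\Sigma_{\Pi^{\myclock}_{\delta,\mu}}$, and an input $(s,\ol{\sigma})$ with $(s,\ol{\sigma})\in\Inter_{\Sh_\xi^{pre}}(\Ag)$ but $(s,\ol{\sigma})\notin\Inter_{\Sh}(\Ag)$. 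Then some action $a$ proposed by $\Ag$ and allowed by $\Sh$ is forbidden by $\xi$, so its successor leaves $W'$; building the witness agent $\Ag'\in\Pi^{\myclock}_{\delta,\mu}$ that plays $a$ at $(s,\ol{\sigma})$ and follows $\Sh$ elsewhere yields a correct shielded agent whose valid traces reach a losing configuration --- a contradiction. The post-shield case is identical, with the forbidden-action step reasoned through the determinization $\chi$.

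The main obstacle, deserving the most care, is the transient phase together with the memory restriction $\mu<\delta$. For $\mu=\delta$ the reduced state is a genuine game state and the witness-agent construction is routine; but when $\mu<\delta$, an observed state plus $\mu$ actions does not pin down a single current configuration, and the witness agent must be shown to be realizable within $\Pi^{\myclock}_{\delta,\mu}$ --- genuinely agnostic both to the hidden observations and to the actions overflowing the register, in the precise sense of the defining equation of $\Pi^{\myclock}_{\delta,\mu}$. Verifying that the $a$-playing witness can be chosen consistently across all indistinguishable histories, and that $\xi$'s allowance of $a$ really is equivalent to winning from the \emph{whole} set of consistent current states, is where the delayed structure --- rather than a routine copy of the undelayed proof --- does the real work.
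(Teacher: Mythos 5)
Your proposal is correct and matches the paper's proof: the paper likewise notes that the result follows the perfect-information case via the Chen et al.\ equivalence, and then writes out exactly your ``self-contained'' route --- the contradiction argument with a witness agent playing the forbidden action $a$, where forbidding $a$ under $\xi$ is unfolded into the existence of a $\delta$-step continuation (with the last $\mu$ actions fixed by the register and the first $\delta-\mu$ unconstrained) reaching a configuration $(s'_2,\ol{\sigma}')$ outside every winning strategy. The transient phase you flagged is handled in the paper just as you anticipate, by treating $s=\eps$ separately with a shorter register and fewer transitions in the same construction.
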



The proof of Theorem~\ref{thm:shield-delayed-safety-game} follows the same argument as the proof of the analogous theorem for regular safety games (Theorem~\ref{thm:shield-safety-game}), 
since games under delay are equivalent to regular games with exponentially many states~\cite[Lemma 2]{Chen2020IndecisionAD}.
In any case, we include it here for the sake of completeness.


\begin{proof}
    \textbf{(1.)}
    We argue the first point by contradiction. 
    Since $\xi$ is a winning strategy, $\Sh_\xi^{pre}$ is correct by construction, so we only have to argue the minimality property.
    Suppose that $\Sh_\xi^{pre}$ is not minimally correct.
    Then there exists a pre-shield-ready agent 
    $\Ag\colon S_{ag*}\times \Act^{\leq \mu}\times 2^\Act\to 2^\Act$, 
    a correct pre-shield $\Sh\colon S_{ag*}\times\Act^{\leq \mu}\to 2^\Act$, and 
    $(s,\ol{\sigma})\in S_{ag*}\times\Act^{\leq\mu}$ such that $(s,\ol{\sigma})\in \Inter_{\Sh_\xi^{pre}}(\Ag)$, 
    but $(s,\ol{\sigma})\notin\Inter_{\Sh}(\Ag)$.
    Assuming
    \footnote{We show at the end of the proof how to treat the case $s=\eps$.}
    $s\neq \eps$, we have 
    $\ol{\sigma}\in\Act^{\mu}$, thus it is of the form $\ol{\sigma} = (\sigma_{\delta-\mu+1},\dots, \sigma_\delta)$.
    Since $(s,\ol{\sigma})\notin \Inter_{\Sh}(\Ag)$, 
    then $\Ag(s, \ol{\sigma},  \Act)\subseteq \Sh(s)$. 
    Since $(s,\ol{\sigma})\in \Inter_{\shield^{pre}_\xi}(\Ag)$, 
    then there exists $a\in \A$ such that $a\in \Ag(s,\ol{\sigma},\Act)\subseteq \Sh(s)$, 
    but $a\notin \Sh_\xi^{pre}(s,\ol{\sigma})$.
    Since $\Sh_\xi^{pre}$ is implemented with the maximally permissive winning strategy, 
    this means that there exists $s'_1,\dots, s'_{2\delta+1}\in S$,
    and $(\sigma_{1},\dots,\sigma_{\delta-\mu})\in\Act^{\delta-\mu}$, 
    such that
    \begin{equation}
    \label{eq:proof-delayed-shield-games1}
        s\xrightarrow{\sigma_1} s'_1\xrightarrow{u} s'_2 \xrightarrow{\sigma_2}\dots\xrightarrow{\sigma_\delta} s'_{2\delta-1} \xrightarrow{u} s'_{2\delta} \xrightarrow{a} s'_{2\delta+1},
    \end{equation}
    and such that $(s'_2,\ol{\sigma}')$ is not part of any winning strategy, 
    where $\ol{\sigma}' = (\sigma_{\delta-\mu+2},\dots, \sigma_{\delta}, a)\in\Act^\mu$.    
    
    
    On the other hand, consider a pre-shield-ready agent 
    $\Ag'$ such that $\Ag'(s, \ol{\sigma}, \Sh(s,\ol{\sigma})) = \{a\}$,
    which exists since $a\in \Sh(s,\ol{\sigma})$.
    Let $\Ag'_{\Sh}$ be the agent resulting from applying $\Sh$ on $\Ag'$.
    This agent is correct because $\Sh$ is correct, 
    and by construction, $\Ag'_{\Sh}(s,\ol{\sigma}) = a$.
    But then $s'_2,\ol{\sigma}'$ would be part of a valid trace under a winning strategy. This is a contradiction, as we had previously established that $(s'_2,\ol{\sigma}')$ cannot be part of any winning strategy.

    \textbf{(2.)} The shield is correct since $\xi$ is winning and $\chi$ is a determinization of $\xi$, 
    and we use a similar argument to prove minimality by contradiction.
    Suppose $\Sh_{\xi,\chi}^{pos}$ is not minimally correct.
    Then there exists an agent $\Ag\colon S_{ag*}\times\Act^{\leq\mu}\to 2^\Act$, 
    a correct post-shield $\Sh\colon S_{ag*}\times\Act^{\leq\mu} \times\Act\to\Act$
    and ($s,\ol{\sigma})\in S_{ag*}\times\Act^{\leq\mu}$ such that 
    $(s,\ol{\sigma})\in \Inter_{\Sh_{\xi,\chi}^{pos}}(\Ag)$, 
    but $(s,\ol{\sigma})\notin\Inter_{\Sh}(\Ag)$.

    Since $(s, \ol{\sigma})\notin \Inter_{\Sh}(\Ag)$, 
    then for all $a\in \Ag(s,\ol{\sigma})$, we have $\Sh(s,\ol{\sigma},a) = a$.
    Since $(s,\ol{\sigma})\in \Inter_{\shield^{pre}_\xi}(\Ag)$, 
    then there exists $a\in \Ag(s,\ol{\sigma})$ such that $\Sh^{pos}_{\xi,\chi}(s,\ol{\sigma}, a)\neq a$.
    Since $\Sh_{\xi,\chi}^{pos}$ is implemented with the maximally permissive winning strategy $\xi$,
    and a determinization of it $\chi$,
    it means that $a\notin\xi(s, \ol{\sigma})$,
    and thus $(s'_2,\ol{\sigma}')$ defined by 
    the same procedure as Equation~\eqref{eq:proof-delayed-shield-games1}
    cannot be part of any winning strategy.
    To build $(s'_2,\ol{\sigma})'$ we need to assume again that $s \neq \eps$.

    On the other hand, consider an agent $\Ag'\colon S_{ag*}\times\Act^{\leq\mu}\to 2^\A$ satisfying 
    $\Ag(s,\ol{\sigma}) = a$, and consider $\Ag'_{\Sh}$, that same agent shielded with $\Sh$.
    We know that $\Ag'_{\Sh}$ is correct, because $\Sh$ is correct.
    Since $\Sh(s,\ol{\sigma}, a) = a$, we have that $\Ag'_{\Sh}(s,\ol{\sigma}) = a$.
    The proof finishes by noting that in such case
    $(s'_2,\ol{\sigma}')$ as obtained in Equation~\eqref{eq:proof-delayed-shield-games1} is part of a valid trace under a winning strategy, 
    contradicting the previously established point. 

    \textbf{Initial phase.}
    In both proofs we have assumed that $s\neq\eps$, and therefore $\sigma\in\Act^\mu$ in order to obtain the construction in Equation~\eqref{eq:proof-delayed-shield-games1}. 
    The case for $s=\eps$ follows the same argument, only considering that $\sigma = (\sigma_1,\dots, \sigma_\nu)$ for some $\nu\leq \mu$ and thus fewer transitions in Equation~\eqref{eq:proof-delayed-shield-games1}.
\end{proof}

\subsection{Probabilistic Shielding in Markov Decision Processes}
\index{shield!probabilistic shield}
\label{sec:rdm-probabilistic-shielding-mdps}
Probabilistic safety shielding in MDPs was introduced in~\cite{0001KJSB20}.
We present probabilistic safety shielding adapting the definitions in~\cite{0001KJSB20} to our framework.
The main difference of shielding in MDPs is that the safety specification has 
a probabilistic nature. A probabilistic shield blocks an action when the probability of the action causing harm is larger than some threshold $\lambda$.

\subsubsection*{Property specification.}

Let $\M = (\S,\A, \P)$ be an MDP, 
$T\subseteq \S$ be a set of ``unsafe'' states to avoid,
$k\in\NN\cup\{\infty\}$ a step horizon,
and $\lambda \in [0,1]$.
The language specifying correct traces will be denoted $\L_{T,\lambda,k}$.
A trace $\tau = (s_0,a_0, s_1, a_1, \dots)$ is in $\L_{T,\lambda,k}$
if for every $i\geq 0$, we have 
\begin{equation}\label{eq:property-of-prob-shields}
 \PP^\M_{\max}\left(\Avoid_k(s_i, a_i, T)\right) \geq \lambda\cdot
 \PP^\M_{\max}\left(\Avoid_k(s_i, T)\right).  
\end{equation}

This means that for an action $a_i$ to be safe at state $s_i$,
the probability of not reaching a bad state if the agent behaves ``optimally''
has to be at least $\lambda$ times the probability of an agent that is optimal in avoiding bad states.

For example, if $\lambda = 1/2$, 
and the policy that best avoids $T$ reaches it with a $10\%$ probability, 
any action from which $T$ can be avoided with a $20\%$ probability is considered to be ``safe enough''.
In general, the larger the value of $\lambda$, 
the more restrictive or cautious the shield is. 
In the extremes, when $\lambda = 0$, any action is allowed, 
and when $\lambda =1$, only the safest actions are allowed, i.e., the actions for which $\PP^\M_{\max}\left(\Avoid_{\leq k}(s_i, a_i, T)\right) =
\PP^\M_{\max}\left(\Avoid_{\leq k}(s_i, T)\right)$.

An agent $\Pol\colon \S\to\D(\A)$ is correct with respect to a specificaiton $\L_{T,\lambda, k}$ if any valid trace of $\Pol$ is in $\L_{T,\lambda,k}$.
Following the same notation convention as in Section~\ref{sec:prelim-reachability-properties}, 
when we are considering unbounded properties, i.e., when $k=\infty$,
we may drop the $k$ from our notation, writing the specification as $\L_{T,\lambda}$
instead of $\L_{T,\lambda, \infty}$.

\subsubsection*{Shield synthesis.}

Since the environment and the safety specification are defined in a memoryless manner, we can also consider shields as memoryless.
We define the shields induced by a specification $\L_{T,\lambda, k}$ as one would expect.

A pre-shield in an MDP is a function $\Sh\colon \S \to 2^\Act$
A pre-shield-ready agent in an MDP is a function $\Pol\colon \S\times 2^\Act\to \D(\Act)$.
A post-shield in an MDP is a function $\Sh\colon \S\times\Act\to\Act$.

Let $T\subseteq\S$, $k\in\NN\cup\{\infty\}$, and $\lambda\in[0,1]$ defining a probabilistic safety specification $\L_{T,\lambda, k}$. 
The pre-shield induced by $T$, $k$, and $\lambda$ is 
$\Sh_{T,\lambda}^{pre}$ defined as:
\[
\Sh_{T,\lambda,k}^{pre}(s) = \left\{ a\in \Act\::\:  \PP^\M_{\max}\left(\Avoid_{\leq k}(s_i, a_i, T)\right) \geq \lambda\cdot
 \PP^\M_{\max}\left(\Avoid_{\leq k}(s_i, T)\right) \right\}.
\]
Similarly, let $\Pol\colon \S\to \D(\Act)$ be a correct agent. 
The post-shield induced by $T$, $\lambda$, and $\Pol$ is $\Sh_{T,\lambda, \Pol}^{pos}$ defined as
\[
\Sh_{T,\lambda, k, \Pol}^{pos}(s,a) = \begin{cases}
    a & \mbox{ if } \quad \PP^\M_{\max}\left(\Avoid_{\leq k}(s_i, a_i, T)\right) \geq \lambda\cdot
 \PP^\M_{\max}\left(\Avoid_{\leq k}(s_i, T)\right) \\
    \Pol(s) & \mbox{ otherwise. }
\end{cases}
\]

\begin{theorem}
    Let $\M = (\S, \A, \P)$ be an MDP. 
    Let $T\subseteq \S$, $k\in\NN\cup\{\infty\}$, and $\lambda\in[0,1]$ forming a probabilistic safety specification $\L_{T,\lambda, k}$.
    Then:
    \begin{enumerate}
        \item The minimally correct pre-shield exists and is $\Sh^{pre}_{T,\lambda,k}$.
        \item For any correct agent $\Pol$, 
        the shield $\Sh_{T,\lambda,k,\Pol}^{pos}$ is minimally correct.
    \end{enumerate}
\end{theorem}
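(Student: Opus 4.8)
The plan is to mirror the proof of Theorem~\ref{thm:shield-safety-game}, exploiting that the specification $\L_{T,\lambda,k}$ is \emph{pointwise}: a trace $\tau = (s_0,a_0,s_1,a_1,\dots)$ lies in $\L_{T,\lambda,k}$ if and only if every pair $(s_i,a_i)$ it visits satisfies the local predicate
\[
\mathrm{safe}(s,a) \;:\Longleftrightarrow\; \PP^\M_{\max}\!\left(\Avoid_{\leq k}(s,a,T)\right) \geq \lambda\cdot \PP^\M_{\max}\!\left(\Avoid_{\leq k}(s,T)\right),
\]
and that $\Sh^{pre}_{T,\lambda,k}(s)$ is by definition exactly $\{a : \mathrm{safe}(s,a)\}$. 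This locality replaces the global winning-region reasoning of the safety-game proof and makes both correctness claims immediate.

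First I would establish correctness. For the pre-shield, any trace valid under a shielded agent takes only actions in $\Sh^{pre}_{T,\lambda,k}(s_i)$, i.e.\ only safe actions, hence lies in $\L_{T,\lambda,k}$. For the post-shield, I would first observe that $\Pol$ being correct forces $\Supp(\Pol(s))\subseteq\{a:\mathrm{safe}(s,a)\}$ at every state $s$ occurring on a valid trace of $\Pol$: a valid play through $s$ taking an unsafe action there would violate $\L_{T,\lambda,k}$. Consequently the fallback action supplied by a determinization of $\Pol$ is safe, and since $\Sh^{pos}_{T,\lambda,k,\Pol}$ returns either the original action when it is already safe or this safe fallback otherwise, every valid trace of a post-shielded agent is in $\L_{T,\lambda,k}$.

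Next I would prove minimality by contradiction, following the scheme of Theorem~\ref{thm:shield-safety-game} and Definition~\ref{def:minimal_correctness}. Assume $\Sh^{pre}_{T,\lambda,k}$ is not minimally correct. Then there are a pre-shield-ready agent $\Ag$, a correct pre-shield $\Sh'$, and an input $(\tau,s)$ with $(\tau,s)\in\Inter_{\Sh^{pre}_{T,\lambda,k}}(\Ag)$ but $(\tau,s)\notin\Inter_{\Sh'}(\Ag)$. By Definition~\ref{def:interferenceSet} there is then an action $a\in\Supp(\Ag(\tau,s,\Act))$ with $a\notin\Sh^{pre}_{T,\lambda,k}(s)$, so $\mathrm{safe}(s,a)$ fails, while $\Supp(\Ag(\tau,s,\Act))\subseteq\Sh'(s)$ forces $a\in\Sh'(s)$. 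Building a pre-shield-ready agent $\Ag'$ that proposes $a$ whenever offered $\Sh'(s)$ at $s$, its $\Sh'$-shielded version is correct (since $\Sh'$ is) yet takes the unsafe action $a$ at $s$ on a valid trace, contradicting that all its valid traces lie in $\L_{T,\lambda,k}$. The post-shield case is identical after replacing ``$a\notin\Sh'(s)$'' by ``$\Sh'(s,a)\neq a$'' and using that $\Sh'$ is likewise forced to pass $a$ through unchanged.

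The main obstacle I expect is the reachability bookkeeping that is already implicit in Theorem~\ref{thm:shield-safety-game}: the final contradiction needs the offending state $s$, with its prefix $\tau$, to actually lie on a valid trace of the witness agent, so $\Ag'$ must be constructed to steer toward $s$ before deviating; for inputs $(\tau,s)$ that no valid play can realise the shields are unconstrained and the interference comparison is meaningful only over states occurring in valid plays. A secondary point worth pinning down is the well-typing of the post-shield fallback $\Pol(s)$ as a single action, which I would settle by fixing a determinization of $\Pol$ and invoking the correctness-implies-local-safety observation above to guarantee that this determinization selects a safe action at every reachable state.
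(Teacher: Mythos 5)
Your proposal is correct and follows essentially the same route as the paper's proof: correctness is immediate from the pointwise nature of $\L_{T,\lambda,k}$, and minimality is shown by the same contradiction scheme --- a witness agent that proposes the disputed action, shielded by the competing correct shield, exhibits $(s,a)$ as a fragment of a correct trace, forcing the local safety inequality and contradicting $a\notin\Sh^{pre}_{T,\lambda,k}(s)$ (resp.\ $\Sh^{pos}_{T,\lambda,k,\Pol}(s,a)\neq a$). Your two refinements --- fixing a determinization of $\Pol$ for the post-shield fallback, and insisting that $(s,a)$ actually occur on a valid trace (automatic here, since traces in the paper's MDP formalism may start at any state, so no steering construction is needed) --- merely tighten points the paper leaves implicit.
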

\begin{proof}
Correctness should be clear by construction in both cases, so we only need to argue for minimality.
The arguments to prove minimality are similar to the arguments used to prove Theorem~\ref{thm:shield-safety-game}.

\textbf{(1.)}
    We argue the first point by contradiction. 
    Assume that $\Sh^{pre}_{T,\lambda,k}$ is not minimally correct. 
    Then there exists a pre-shield-ready agent $\Ag\colon \S\times 2^\Act\to\D(\Act)$ and a correct pre-shield $\Sh$ such that 
    $\Inter_{\Sh_{T,\lambda,k}^{pre}}(\Ag) \not\subseteq \Inter_{\Sh}(\Ag)$. 
    Therefore, there exists $s\in \S$ such that $s\in \Inter_{\Sh_{T,\lambda,k}^{pre}}(\Ag)$ but 
    $s\notin \Inter_{\Sh}(\Ag)$. 
    Since $s\in \Inter_{\Sh_{T,\lambda}^{pre}}$, 
    there exists $a\in \Supp(\Ag(s,\Act))$ with $a\notin \Sh_{T,\lambda,k}^{pre}(s)$. 
    By the definition of $\Sh_{T,\lambda,k}^{pre}$,
    the action $a$ is not in the allowed actions of the shield only if
    \begin{equation}\label{eq:aux1}
      \PP^\M_{\max}\left(\Avoid_{\leq k}(s_i, a_i, T)\right) < \lambda\cdot
 \PP^\M_{\max}\left(\Avoid_{\leq k}(s_i, T)\right).     
    \end{equation}
    
    On the other hand, since $s\notin \Inter_{\Sh}(\Ag)$, 
    it means that $\Supp(\Ag(s,\Act))\subseteq \Sh(s)$.
    In particular, $a\in \Sh(s)$.
    Consider a pre-shield-ready agent $\Ag'$ 
    such that $\Ag'(s,\shield(s)) = \{a\}$.
    Let $\Ag'_{\Sh}$ be the shielded agent resulting from applying $\Sh$ to $\Ag'$.
    This is a correct agent such that $\Ag'(s) = a$. 
    But then the fragment $(s,a)$ would be part of a correct trace, 
    meaning that 
    \[
   \PP^\M_{\max}\left(\Avoid_{\leq k}(s_i, a_i, T)\right) \geq \lambda\cdot
 \PP^\M_{\max}\left(\Avoid_{\leq k}(s_i, T)\right),
    \]
    which contradicts Equation~\eqref{eq:aux1}.

    \textbf{(2.)} We also argue the second point by contradiction. 
    Suppose there exists $\Pol\colon \S\to \D(\Act)$ correct such that 
    $\Sh^{pos}_{T,\lambda, k, \Pol}$ is not minimally correct.
    Then there exists an agent $\Ag\colon \S\to \D(\Act)$
    and a correct post-shield $\Sh$ such that 
    $\Inter_{\Sh^{pos}_{T,\lambda,k, \Pol}}(\Ag)\not\subseteq \Inter_{\Sh}(\Ag)$.
    Therefore, there exists $s\in\S$ such that
    $s\in \Inter_{\Sh_{T,\lambda,k,\Pol}^{pos}}(\Ag)$ but 
    $s\notin \Inter_{\Sh}(\Ag)$. 

    Since $s\notin \Inter_{\Sh}(\Ag)$, 
    it means that for all $a\in \Supp(\Ag(s))$, we have $\Sh(s,a) = a$.
    Since $s\in \Inter_{\Sh_{T,\lambda,k,\Pol}^{pos}}$, 
    there exists $a\in \Supp(\Ag(s))$ with $a\neq \Sh_{T,\lambda,k,\Pol}^{pos}(s,a)$. In particular, since $a\in\Supp(\Ag(s))$, we have $\Sh(s,a) = a$.
    This means that 
        \begin{equation}\label{eq:aux2}
        \PP^\M_{\max}\left(\Avoid_{\leq k}(s_i, a_i, T)\right) < \lambda\cdot
 \PP^\M_{\max}\left(\Avoid_{\leq k}(s_i, T)\right).     
    \end{equation}
    
    On the other hand, 
    consider an agent $\Ag'\colon\S\to\D(\A)$ such that 
    $\Ag'(s)(a) > \eps$ for some $\eps > 0$, i.e., an agent that from $s$ outputs $a$ with a positive probability.
    The shielded version of $\Ag'$, denoted by $\Ag'_{\Sh}$ is a correct agent, 
    and $\Ag'_{\Sh}(s) = a$ with probability $\eps$. 
    Therefore, $(s,a)$ is a fragment contained in valid traces of a correct agent, and thus satisfies
    \[
   \PP^\M_{\max}\left(\Avoid_{\leq k}(s_i, a_i, T)\right) \geq \lambda\cdot
 \PP^\M_{\max}\left(\Avoid_{\leq k}(s_i, T)\right),
    \]
    which contradicts Equation~\eqref{eq:aux2}.
\end{proof}

While this is our operational definition of a probabilistic shield and the one we will be using in Chapter~\ref{chap:foceta}, 
there are certain variations that have been proposed in the literature

\subsubsection*{Absolute threshold}
The value of $\lambda$ in Equation~\eqref{eq:property-of-prob-shields} is considered a \emph{relative threshold},
as it states what the minimum probability of reaching an unsafe state can be relative to the best choice of action.
An alternative that has been studied in the literature~\cite{tempest} is to use $\lambda$ as an absolute minimum threshold on the probability of not reaching a bad state. 
In such shields, the condition to let an action pass is
\begin{equation}\label{eq:property-of-prob-shields-absolute}
\PP^\M_{\max}\left(\Avoid_{\leq k}(s_i, a_i, T)\right) \geq \lambda.
\end{equation}

Note that if an action satisfies Equation~\eqref{eq:property-of-prob-shields-absolute},
then it also satisfies Equation~\ref{eq:property-of-prob-shields}, making shields with absolute threshold more restrictive than shields with relative threshold.

This choice has the advantage of being properly restrictive in the more ``critical'' states. 
For example, consider a threshold $\lambda = 0.6$.
In a state $s\in\S$ with 
$\PP^\M_{\max}\left(\Avoid_{\leq k}(s, a, T)\right)=0.75$,
the shield with an absolute threshold would only let actions pass with an optimal probability of avoiding an unsafe state between $0.6$ and $0.75$. 
On the other hand, the shield with $\lambda$ as a relative threshold would allow actions with the optimal probability of avoiding an unsafe state as low as $0.45$.

The main downside of the ``absolute'' approach is that in some states there may not be any ``allowed'' action. In such states, one should just resort to the optimal actions (that would always be allowed with a relative threshold approach).

\subsubsection*{Global guarantees}

The property defined in Equation~\eqref{eq:property-of-prob-shields} is local for every decision.
A desirable property would be that a shield $\Sh$ synthesized for a specification $\L_{T,\lambda,k}$ 
would satisfy for every agent $\Ag$ and any initial state of the MDP $s$, that we have
$\PP^\M_{\Ag_{\Sh}}(\Avoid_{\leq k}(s, T)) \geq \lambda$.

This is not the case. In fact, the following example shows that we can make $\PP^\M_{\Ag_{\Sh}}(\Avoid_{\leq k}(s, T))$ arbitrarily small while keeping $\lambda$ arbitrarily large.

\begin{figure}[t]
    \centering
    \includegraphics[width=0.5\linewidth,page=1]{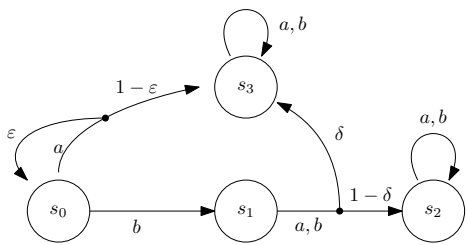}
    \caption{MDP described in Example~\ref{ex:MDP-shielding-counter}.}
    \label{fig:prob-shield-counterexample}
\end{figure}

\begin{example}\label{ex:MDP-shielding-counter}
   Let $\eps, \delta \in (0,1)$.
   Consider the MDP $\M = (\S, \A,\P)$ illustrated in Figure~\ref{fig:prob-shield-counterexample}, 
   with $\S = \{s_0, s_1, s_2, s_3\}$, $\A = \{a,b\}$ and $\P$ as described in the figure, 
   where any transition that is drawn and has no number on it has probability 1.
   Consider $T = \{s_3\}$ and $k=\infty$.
   The only state where the agent's decision matters is $s_0$, so an agent can be described as $P_a$, the probability of taking action $a$ in $s_0$.
   If an agent chooses $a$, it reaches $s_3$ with probability $1-\eps$ and goes back to $s_0$ with probability $\eps$. 
   If an agent chooses $b$, it reaches $s_3$ with probability $\delta$ and avoids $s_3$ altogether with probability $1-\delta$.
   The optimal strategy to avoid $s_3$ is clearly $p_a = 0$, 
   and in such case $\PP_{\min}(\Reach(s_0, \{s_3\})) = \delta$.
   For a general agent $p_a$, we have
   \begin{align}
   \PP_{p_a}\left(\Reach(s_0, a, \{s_3\})\right) & = 1-\eps + \eps \cdot \PP_{p_a}\left(\Reach(s_0, \{s_3\})\right),
   \nonumber \\
   \PP_{p_a}\left(\Reach(s_0, b, \{s_3\})\right) & = \delta,
   \nonumber \\
   \PP_{p_a}\left(\Reach(s_0, \{s_3\})\right) & =
   p_a\cdot \left[  1-\eps + \eps \cdot \PP_{p_a}\left(\Reach(s_0, \{s_3\})\right)   \right] + (1-p_a)\delta.
   \label{eq:aux3}
   \end{align}
    Therefore, with a threshold $\lambda$, 
    action $a$ is allowed by the shield if 
    \[
    1 - (1-\eps+\eps\delta) \geq \lambda (1-\delta),
    \]
    which is equivalent to $\eps \geq \lambda$.
    On the other hand, isolating from Equation~\ref{eq:aux3} we get:
    \begin{align}
        \PP_{p_a}\left(\Reach(s_0, \{s_3\})\right) & = 
        \frac{p_a(1-\eps) + (1-p_a)\delta}{1-p_a\eps} = 1 - \frac{(1-p_a)(1-\delta)}{1-p_a\eps}.
        \label{eq:aux4}
    \end{align}
    Once $\eps$ is fixed, we can make the value in Equation~\ref{eq:aux4} arbitrarily close to 1 by modifying $\delta$ and $p_a$.
\end{example}

A potential solution for such cases would be to define the set of correct agents as those agents that satisfy 
\begin{equation}
\label{eq:aux5}
\PP^\M_{\Ag}(\Avoid(s,T))\geq \lambda.
\end{equation}
This would require us to change the concepts used to define shields, as the property described in Equation~\ref{eq:aux5} cannot be described in terms of traces.

\chapter[Delay-resilient Shielding]{Safety Shielding Resilient to Delayed Observation}
\label{chap:delayed_shields}
\ifthenelse{\boolean{includequotes}}{
\begin{quotation}
    \textit{No oblideu mai que si ens llevem ben d'hora, per{\`o} ben d'hora ben d'hora, i no hi ha retrets ni hi ha excuses, i ens posem a pencar, som un pa{\'i}s imparable.}
    \footnote{Never forget that if we get up very early, but very early, very early, and there are no reproaches or excuses, and we get down to business, we are unstoppable.}
    \hfill
    --- Josep Guardiola i Sala.
\end{quotation}
}{}

\section{Motivation and Outline}

Incorporating delays into safety computations is essential for nearly all real-world control problems. These delays, arising from data collection, processing, or transmission, are ubiquitous in systems operating within complex environments~\cite{DBLP:conf/icalp/HoschL72,Balemi92, nilsson1998real,Tripakis04, BerwangerCDHR08,Chen16,HFM16}.
When not properly addressed, such delays can become the root cause of critical safety issues.

\begin{example} 
    Consider a scenario where a car detects a pedestrian at position $(x, y)$ and accounts for a known time delay $\delta$ between sensing and acting. The vehicle must plan its actions to ensure safety for any possible position of the pedestrian within the interval $(x \pm \varepsilon, y \pm \varepsilon)$, where $\varepsilon$ is determined based on assumptions about the pedestrian's velocity and the delay $\delta$. 
\end{example}

Safety shielding is often used to guarantee 
safe execution of an agent in an environment that a safety game can model.
However, traditional safety shields assume no delay between sensing and acting, which limits their applicability in real-world scenarios.

In this chapter, we introduce synthesis algorithms for \emph{delay-resilient} safety shields, i.e., shields specifically designed to maintain safety even when input delays are present. 
These algorithms account for the uncertainties introduced by delays, enabling robust performance in dynamic environments. Figure~\ref{fig:shielding_setting} illustrates the shielding setup under delayed conditions.

\begin{figure}
\centering
\begin{subfigure}[b]{0.48\textwidth}
         \centering
         \includegraphics[width=\textwidth]{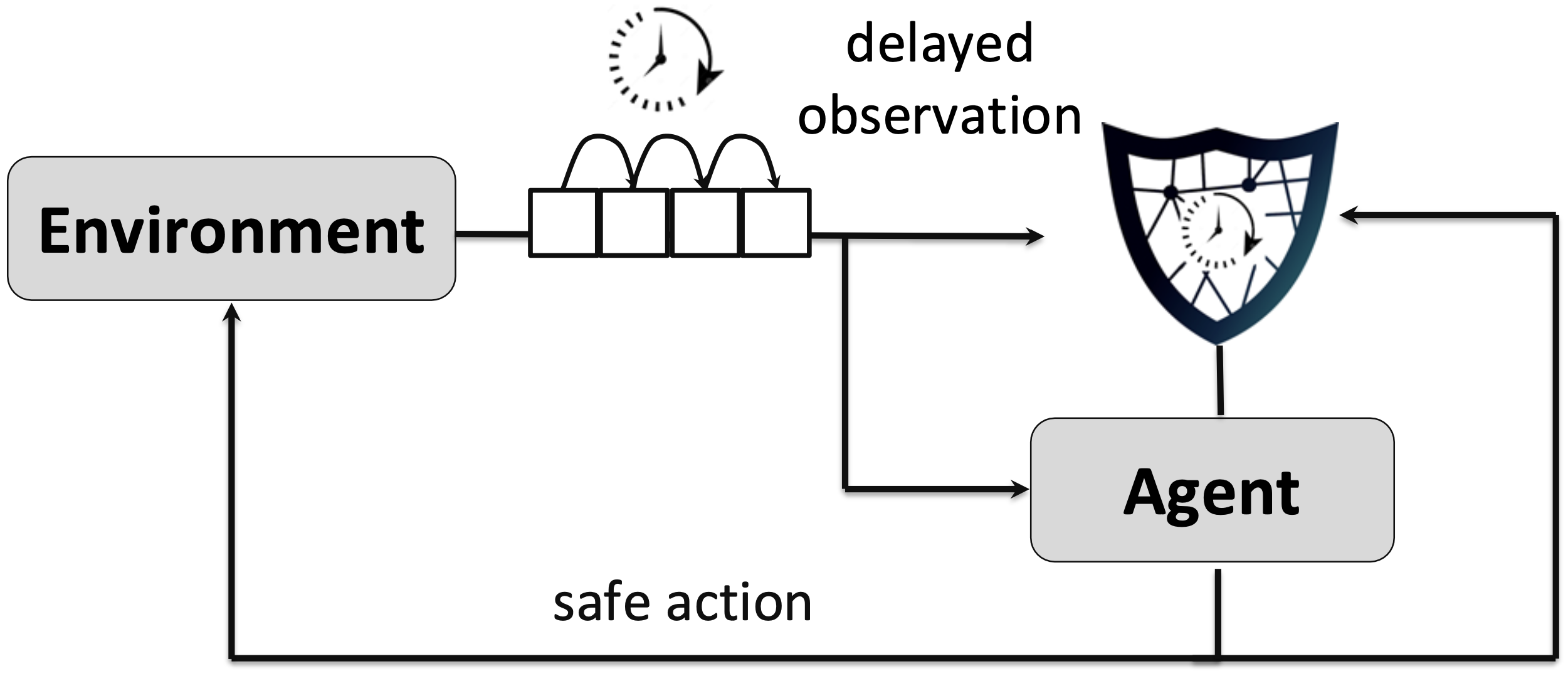}
         \caption{Pre-shield.}
     \end{subfigure}
    \hfill
     \begin{subfigure}[b]{0.48\textwidth}
         \centering
         \includegraphics[width=\textwidth]{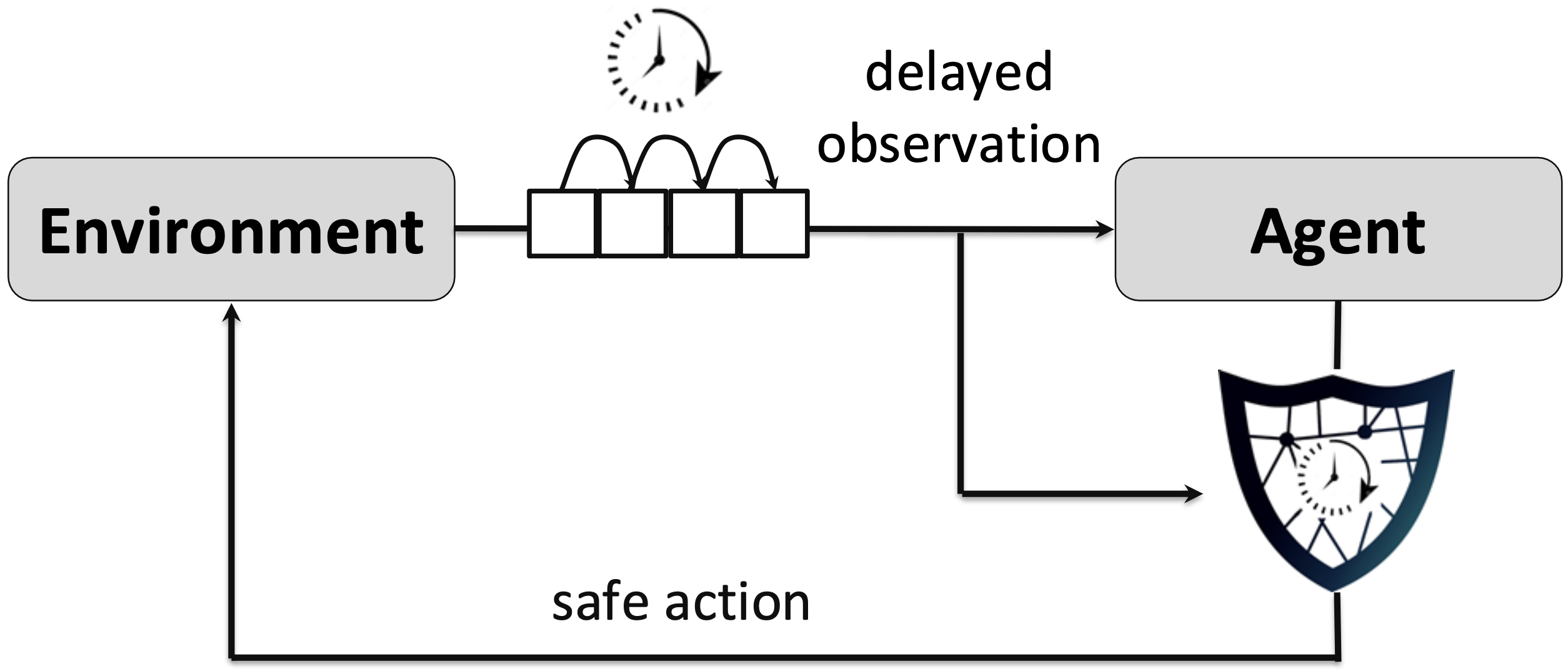}
         \caption{Post-shield.}
     \end{subfigure}
        \caption{Delay-resilient shielding scheme.}
        \label{fig:shielding_setting}
\end{figure}

To synthesise delay-resilient shields, we incorporate a worst-case delay in the safety game, which induces imperfect state information,
and use the algorithm proposed in~\cite{chen2018s,Chen2020IndecisionAD}
to compute the maximally permissive winning strategy. 
The delay-resilient pre-shields are then computed from the maximally permissive winning strategy in the delayed safety game.
For post-shields, in addition to the maximally permissive winning strategy, 
we need a deterministic winning strategy 
that will be used to obtain a fixed replacement action for any unsafe action.
To do so, we can define a property over the state space
and set the action maximising such property as the one fixed by the shield.
We study two such properties: controllability and robustness.
The \emph{controllability value} assigns to any state $s$ the \emph{maximal delay} on the input under which $s$ stays safe. 
The \emph{robustness value} of a state $s$ is the length of the minimal path from $s$ to any unsafe state.
We discuss how to maximise a state property under the uncertainty introduced by the delayed input.

Finally, we evaluate delay-resilient shields in two case studies.
The first one is a gridworld in which we implemented all proposed types of delay-resilient 
shields and compare computation cost and interference rates.
We show that delay-resilient post-shields that choose corrective actions
maximising either robustness or controllability tend to stabilise the execution, requiring fewer interferences by the shield.
In the second case study, we integrate shielding under delay in the driving simulator 
\textsc{Carla}~\cite{dosovitskiy_carla_2017}
to enforce collision avoidance for autonomous driving agents at intersections with cars and pedestrians under delayed observations. 
Our results show the effects of delays on the safety analysis and that our method is scalable enough to be applied in complex application domains.
The source code and scripts to reproduce the experiments, 
along with \emph{videos} from our experiments in \textsc{Carla},
are available on the accompaning repository\footnote{\url{https://github.com/filipcano/safety-shields-delayed}}.

\paragraph*{Contribution.}
The work presented in this chapter can be summarized in the following contributions.
\begin{itemize}
    \item We formalize the concept of pre-shield and post-shield resilient to delayed observation,
    showing how to compute them with the maximally permissive strategy of the corresponding safety game.
    \item We describe in detail the algorithm to compute the maximally permissive strategy of a safety game under delay with restricted memory, extending the algorithm presented in~\cite{Chen2020IndecisionAD}.
    \item We introduce the concepts of \emph{robustness} and \emph{controllability} and how to build post-shields maximising each one.
    \item We provide theoretical insight about the differences and similarities of the \emph{controllability} and the \emph{robustness} criterion when choosing a corrective action in post-shielding. 
    \item We validate our approach in two use cases: a gridworld and a realistic driving scenario. 
    As far as we know, we present the first integration of shields in a realistic driving simulator. 
\end{itemize}

\paragraph*{Outline.} 
We use in this chapter the formalism of two-player safety games with delayed inputs as defined in Section~\ref{sec:prelim-TwoPlayerGames}.
%
In Section~\ref{sec:delayed-shields-delayed-shields} we present the concept of shields resilient to delays and explain the algorithm required to computed them by finding the maximally permissive winning strategy of the underlying safety game undel delayed information.
In Section~\ref{sec:controllability-and-robustness} we present the two 
properties proposed to guide the synthesis of post-shields and 
how to synthesise shields, maximising them.
In Section~\ref{sec:relations}, we explore the relation between robustness and controllability, 
proving that they can be arbitrarily different.
Finally, in Section~\ref{sec:shields-experiments} we present the
results of our experimental evaluation on two use cases
and in Section~\ref{sec:delayed-shields-discussion} we discuss limitations and related work.

\paragraph*{Declaration of sources.}
This chapter is partially based and reuses material from the following source
previously published by the author of this thesis:

\cite{Cano2023shielding} \fullcite{Cano2023shielding}.

\section[Shields as Safety Games]
{Delay Resilient Shields as Strategies in Safety Games}
\label{sec:delayed-shields-delayed-shields}

As we have described in 
Sections~\ref{sec:rdm-two-player-games}~and~\ref{sec:shielding-in-safety-games}, 
a safety game can be seen as a particular case of the 
reactive decision-making framework, 
where minimally correct shields (Definition~\ref{def:minimal_correctness}) correspond to maximally permissive winning strategies of the corresponding safety game.

Following the construction outlined in 
Section~\ref{sec:prelim-games-under-delay}, 
given a safety game, \newline
$\mathcal G = \langle S, s_0, S_{env}, S_{ag}, \Act, \mathcal T, \F \rangle$, 
we consider the corresponding game played with delay $\delta$ and memory $\mu$, 
for given values of $\delta, \mu\in \NN$, $\mu\leq\delta$.

As described in Section~\ref{sec:shielding-in-safety-games-delayed}, specifically stated in Theorem~\ref{thm:shield-delayed-safety-game},
computing minimally correct shields in safety games under delay $\delta$ and memory $\mu$ corresponds to computing the maximally-permissive strategy $\xi$ of the corresponding game.
Then,
using the notation in Section~\ref{sec:shielding-in-safety-games-delayed}, 
given the maximally permissive winning strategy $\xi$, 
the minimally correct pre-shield is $\Sh_{\xi}^{pre}$, 
and minimally correct post-shields are computed as $\Sh_{\xi, \chi}^{pos}$, 
where $\chi$ is a determinization of $\xi$. 

The following section describes the algorithm used to compute such strategies. 
As mentioned before, this is a natural extension of 
the algorithm presented in~\cite{Chen2020IndecisionAD}
for games where the amount of memory and the delay are the same.

\subsection[Maximally Permissive Winning Strategies]
{Computation of Maximally Permissive Winning Strategies in Safety Games under Delay}


The algorithm is given in pseudocode in Algorithm~\ref{alg:delayed_strategies_extended}.
The method to solve a delayed safety game consists of iteratively constructing and solving the safety game with increasing delays 
$d = 0, 1,\dots, \delta$ and memory size 
$m=\min(d, \mu)$,
starting with $d=m=0$,
which corresponds to the case without delays, 
as presented in Section~\ref{sec:prelim-TwoPlayerGames}.
At every iteration in $d$, 
the maximally permissive strategy for the agent is computed using the strategy for the previous delay $d-1$
(lines 4,5 or 9,10 depending on the value of $m$),
followed by a reduction of the game graph aiming to mitigate the exponential blow-up in the state space (line 11)
and the computation of the transient phase (line 12).
Note that, following the convention in Equation~\ref{eq:safety-game-strategy-delay}, 
the action register $[y_1,\dots,y_m]$ is in reversed order, i.e., the last action performed by the agent is $y_m$.

The method to compute the maximally permissive strategy using the previous delays is slightly different for the case of full memory ($m=d$) and the case of restricted memory $m < d$.

\begin{algorithm}[h!]
    \caption{Maximally Permissive Strategy under Delay, memory $\mu\leq\delta$
    (adapted and extended from~\cite[Algorithm 1]{chen2018s}).
    }\label{alg:delayed_strategies_extended}
    \SetKwInOut{Input}{input}\SetKwInOut{Output}{output}\SetNoFillComment

    \Input{Safety Game $\G$, maximum delay $\delta$, memory $\mu$}
    $\xi_{0} \gets \mathtt{StrategyPerfectInfo}(\G)$\;
    \For{$d = 1,\dots, \delta$}{
        $m  \gets \min(d, \mu)$\;
        \For {$s\in S_{ag}$, $[y_1,\dots,y_m]\in \Act^m$}{
            \If{$m = d$}{
                $\mathcal I_{s,y_m} \gets \{
                s''\in S_1 \::\: s\xrightarrow{y_{m}}s'\xrightarrow{u}s''\}
                $\;
                $\xi_{d, m}(s, [y_1,\dots, y_m]) \gets
                \bigcap_{s''\in \mathcal I_{s,y_m} } \xi_{d-1, m-1}(s'', [y_1, \dots, y_{m-1}])
                $\;
            }
            \Else{
                $\mathcal I_{s} \gets \{
                s''\in S_1 \::\: s\xrightarrow{y}s'\xrightarrow{u}s'', \:
                y\in \Act \}
                $\;
                $\xi_{d, m}(s, [y_1,\dots, y_m]) \gets
                \bigcap_{s''\in \mathcal I_{s} } \xi_{d-1,m}(s'', [y_1, \dots, y_{m}])
                $\;
            }
        }
        $\mathtt{Shrink}(\xi_{d, m})$\;
        $\mathtt{InitialMoves}(\xi_{d, m})$\;
    }
    
    \Return $\xi_{\delta,\mu}$
\end{algorithm}

\begin{itemize}
    \item Case $m= d$.
    To compute the maximally permissive strategy using previous delays, 
    we compute $I_{s,y_m}$ (line 6), 
    corresponding to the set of states that the agent can get as the next observation when the current observation is state $s$
    and the chosen action is $y_m$.
    From states where the agent has already decided upon an output, 
    it is equivalent to playing with delay $d-1$. 
    Therefore, the strategy allows the actions that would be safe for delay $d-1$
    on all possible next observations, 
    eliminating the last executed action, $y_m$, from the action register (line 7).
    \item Case $m<d$.
    To compute the maximally permissive strategy using previous delays, 
    we compute $I_{s}$ (line 9), 
    corresponding to the set of states that the agent can get as the next observation when the current observation is state $s$
    and chosen action is any $y\in\Act$.
    In this case, $y$ is undetermined because of the restricted memory:
    the output that is provided just next to the observed state 
    has already been forgotten by the system.
    From states where the agent has already decided upon an output, 
    it is equivalent to playing with delay $d-1$. 
    Therefore, the strategy allows the actions that would be safe for delay $d-1$
    in all possible next observations, maintaining, in this
    case, the same memory $[y_1,\dots,y_m]$ (line 10).
\end{itemize}

The method $\mathtt{StrategyPerfectInfo}$ (line 1) computes the maximally permissive strategy for the
game with perfect information~\cite{Thomas1995}.
The method $\mathtt{Shrink}$ (line 11) 
ensures that in case the intersection in lines 5 or 10 is empty, 
the maximally permissive strategy in a state 
$s''\in \mathcal I_{s,y_m}$ or $s''\in\mathcal I_{s}$
does not contain the output $y_m$~\cite[Algorithm 3]{Chen2020IndecisionAD}.
The method $\mathtt{InitialMoves}$ (line 12) computes the strategy for the transient period,
before the agent can get any observed state, see Algorithm~\ref{alg:initialmoves}.

\begin{algorithm}
\caption{$\mathtt{InitialMoves}$: Strategy for the transient period (adapted from~\cite[Algorithm 1]{chen2018s}).}
\label{alg:initialmoves}
\SetKwInOut{Input}{input}\SetKwInOut{Output}{output}\SetNoFillComment

    \Input{Safety game $\mathcal G$, ongoing maximally permissive strategy $\xi_{d,m}$}
    $\J = \{ s\::\: s_0\xrightarrow{u}s \}$\;
    \For{$[y_1,\dots, y_{m-1}]\in\Act^{m-1}$}{
    
        $\xi_{d, m}(\eps, [y_1,\dots, y_{m-1}]) \gets 
        \left\{
        y_d \::\: \bigcup_{s\in \J}
        \xi_{d}(s, [y,y_1,\dots, y_{m-1}]) \neq \emptyset 
        \right\}$\;
    }
    \For{$k=m-2,\dots, 0$}{
        \For{$[y_1,\dots, y_{k}]\in\Act^{k}$}{
            $\xi_{d, m}(\eps,[y_1,\dots, y_{k}]) \gets
            \left\{
            y_0\::\: \xi_{d, m}(\eps,[y_0,y_1,\dots, y_k]) \neq \emptyset
            \right\}
            $\;
        }
    }
    \Return $\xi_{d, m}$
\end{algorithm}

\paragraph*{Complexity analysis.}
Algorithm~\ref{alg:delayed_strategies_extended} 
computes each strategy $\xi_{d, m}$
for increasing values of $d =1,\dots, \delta$ (main loop, lines 2-12).
For each strategy, 
the algorithm goes over all states in $S_{ag}$ and registers in $\Act^m$ (loop in lines 4-10), 
and at each iteration, 
computes an intersection of $\Act_{env}$ elements. 
The cost of $\mathtt{Shrink(\xi_{d,m})}$ and 
$\mathtt{InitialMoves(\xi_{d,m})}$
is negligible in comparison.

Therefore, the cost of computing the strategy $\xi_{d,m}$ is 
$\mathcal O(S_{ag}\cdot |\Act|^m\cdot |\Act_{env}|)$,
and the total cost of computing 
$\xi_{\delta, \mu}$ is 
$\mathcal O\left(
|S_{ag}|\cdot|\Act_{env}|\cdot
\left(
\mu \cdot|\Act|^\mu + (\delta-\mu)\Act^\mu
\right)
\right)$, 
simplified to 
\begin{equation}
\label{eq:delayed-shields-winning-strategy-complexity}
    \mathcal O\left(
    \delta\cdot 
    |S_{ag}|\cdot|\Act_{env}|\cdot\Act^\mu
    \right).    
\end{equation} 

Recall that $\Act_{env}$ represents a set of actions for the environment.
As discussed in Equation~\eqref{eq:safety-games-env-out-degree}, without loss of generality, we can assume $|\Act_{env}|$ is the maximum out-degree of the environment transitions.

\section[Determinization of Strategies]{Determinization of Strategies Resilient to Delays}
\label{sec:controllability-and-robustness}
The synthesis procedure for a delay-resilient post-shield relies on the construction of a deterministic winning strategy, denoted by $\chi_{\delta, \mu}$. This section outlines the process for deriving such a strategy.

In Section~\ref{sec:delayed-shields-maximize-fitness},
we describe the method for computing the deterministic strategy that maximises a given fitness value, considering both memory and delay.

We present two examples of fitness functions in Sections~\ref{subsec:delayresilience} and~\ref{subsec:robustsafety}. These are specifically designed to minimise the number of instances where the post-shield must interfere due to delays in the input.

\subsection[Determinization Maximizing a Fitness Function]
{Determinisation of Delayed Strategies Maximising a Fitness Function}
\label{sec:delayed-shields-maximize-fitness}

When deciding which action to use as a corrective action for post-shields, we want to decide on an action that maximizes a certain criterion.
In this section, we assume the existence of a fitness function 
$\varphi\colon S \to \mathbb{R}$ 
that assigns a fitness value to each state. 
We will show how to find the actions that maximize an abstract fitness value, and in the following sections, we will apply this method to concrete fitness functions designed to prevent unsafe transitions due to delayed inputs.

Our goal is to choose at each state the action that maximises this fitness function among all actions allowed by $\xi_{\delta, \mu}$.
However, because of the uncertainty of the transitions of the environment, 
it is not clear what it means to maximise the fitness function,
since the agent has no complete control over the state of the safety game after each of the agent's actions.

For this computation, we will take the implicit assumption that from a given state $s\in S_{env}$,
and a given number of transitions $n$, 
all traces from $s$ with $n$ transitions are equally probable.
In this sense, we say that the strategy we compute maximises the \emph{expected fitness value} -- with the implicit understanding that the expectation is taken under the assumption of a uniform probability environment.
This assumption can be refined to include more accurate representations of the probabilistic nature of the environment whenever such models are available. 
We leave this extension for future work.

For our computation, we need to define the 
\emph{k-forward multiset of states} $F_k(s,\overline{\sigma})$, 
which captures the states reachable from $s$
within $k$ steps respecting a given memory  $\overline{\sigma}\in \Act^\mu$, 
i.e. the last $\mu$ actions of the agent.


\begin{definition}[$k$-Forward Multiset of States]
\label{def:forward-multiset-states}
    Let
    $\overline{\sigma} = [z_1\dots z_\mu]\in\Act^\mu$ be a register of actions. 
    For a state $s\in S_{ag}$, 
    the
    \emph{$k$-forward multiset} is 
    \[
    F_k(s,\overline{\sigma}) = \left\{
    \begin{matrix}
    s_{2k}&\::\: \exists s_1,\dots,s_{2k-1} \in S,\, \mbox{and} \, \exists  
    y_1,\dots, y_k \in \Act\, \mbox{such that}
    \\ &
    (1)\, \forall i = 1\dots \mu,\, y_{k-\mu+i} = z_i,\, 
    \mbox{and}
    \\ &
    (2)\, s\xrightarrow{y_1} s_1 \xrightarrow{u} s_2 
    \xrightarrow{y_2} s_3
    \xrightarrow{u} \dots \xrightarrow{u} s_{2k-2} 
    \xrightarrow{y_k} s_{2k-1} \xrightarrow{u}s_{2k}
    \end{matrix}
    \right\},
    \]
    where each state $s_{2k}$ is counted as many times as 
    there are distinct sequences 
    $s_1,\dots,s_{2k-1}\in S$ and 
    $y_1,\dots, y_k\in\Act$ satisfying conditions (1) and (2).
\end{definition}

The expected fitness value is computed over the $k$-forwarded multiset of states,
thus, each state adds to the value as many times as it appears in the multiset.
    
\begin{definition}[Expected Fitness Value] 
Let $s\in S$ be a state, 
$\overline{\sigma}\in \Act^\mu$ a register of actions and 
$\varphi:S\to \mathbb R$ a fitness function.
    For a given delay $\delta$,
    the \emph{expected fitness value} $\mathbb{E}_{\varphi}(s,\overline{\sigma})$ 
    is defined as the \emph{average} of the fitness values of all states 
    $s'$ in $F_\delta(s, \sigma)$,
    \[
    \mathbb{E}_{\varphi}(s,\overline{\sigma}) = 
    \frac{1}{|F_{\delta}(s,\overline{\sigma})|}
    \sum_{s'\in F_{\delta}(s, \overline{\sigma})} \varphi(s').
    \]
\end{definition}

The strategy 
$\chi_{\delta, \mu} \colon S_{ag} \times \Act^{\mu}  \to \Act$ 
that maximises the expected value of $\varphi$ is:
\[
\chi_{\delta,\mu}\big(s,[z_1\dots z_\mu]\big) = \argmax_{y\in\xi_{\delta,\mu}\left(s,[z_1\dots z_\mu]\right)} 
\mathbb{E}_{\varphi}\big(s, [y, z_1\dots z_\mu]\big).
\]

\paragraph*{Complexity of strategy determinisation.} 
The deterministic strategy is computed for $|S_{ag}|\cdot |\Act|^\mu$ states.
Each forward multiset contains at most 
\({|\Act_{env}|^{\delta}\cdot|\Act|^{\delta-\mu}}\) states,
where $\Act_{env}$ is a set of actions for the environment. 
For each of these states, 
the fitness value $\varphi$ is computed.
Assuming $c(\varphi)$ is the computational cost of computing $\varphi$ for one state, and $\varphi$ is stored in a lookup table, the total complexity adds up to 
\begin{equation}
 \label{eq:delayed-shields-determinization-complexity}
 \mathcal O\left(|S_{ag}|\cdot \left(|\Act_{env}\times \Act|^{\delta} + c(\varphi)\right)
\right).
\end{equation}

\subsection{Post-Shields that Maximise Controllability}
\label{subsec:delayresilience}
In this section, we define and compute a fitness function
called the controllability value.
that assigns to each state the maximum delay for which a 
safe output exists.
For any state $s\in S_{ag}$,
the controllability value 
$\varphi_{c} : S_{ag} \rightarrow \RR$
is the largest delay for which a register of actions exists that makes $s$ safe.
To formally define the controllability value, 
we use the notion of controllable states.

\index{controllability}
\index{state!controllable}
\begin{definition}[Controllable State]
\label{def:controllable-state}
 A state $s\in S_{ag}$ is \emph{controllable} under delay $\delta$ and memory $\mu$
 if there exists 
 $\overline{\sigma}\in\Act^\mu$ such that
 $\xi_{\delta,\mu}(s,\overline{\sigma})\neq \emptyset$, 
 and \emph{uncontrollable} otherwise.
 A state $s\in S_{env}$ is \emph{controllable} under delay $\delta$ and memory $\mu$ if all states $s'\in S_{ag}$
 such that
 $s\xrightarrow{u}s'$ are controllable under delay $\delta$ and memory $\mu$.
\end{definition}

\begin{definition}[Controllability Value]
 The \emph{controllability value} with memory $\mu$ 
 of a state $s\in S$ is the maximum delay $\delta$ for which $s$ is controllable with delay $\delta$ and memory $\mu$.
 We denote it as $\varphi_c(s)$.
\end{definition}

To unpack this definition, for an agent state $s\in S_{ag}$, we say that $\varphi_c(s) = \delta$ if 
there exists $\ol{\sigma}\in\Act^{\mu}$ such that
$\xi_{\delta,\mu}(s,\ol{\sigma}) \neq \emptyset$
and for all $\ol{\sigma}'\in\Act^{\mu}$,
we have 
$\xi_{\delta+1,\mu}(s,\ol{\sigma}') = \emptyset$.

In Definition~\ref{def:controllable-state}, guaranteeing only the existence of $\ol{\sigma}\in\Act^\mu$ such that $\xi_{\delta,\mu}(s,\overline{\sigma})\neq \emptyset$ might seem too weak because one does not know in advance what the action register may be when observing a state. 
Note that, however, Algorithm~\ref{alg:delayed_strategies_extended} guarantees that an agent following $\xi_{\delta, \mu}$ can only go through pairs $(s,\ol{\sigma})\in S_{ag*}\times\Act^\mu$ such that $\xi_{\delta,\mu}(s,\ol{\sigma})$ is non-empty.
This is one of the main consequences of the $\mathtt{Shrink}$ method in Algorithm~\ref{alg:delayed_strategies_extended}, and is extensively discussed in~\cite[Algorithm 3]{Chen2020IndecisionAD}.

If a state $s$ is inside the winning region $W$ of the safety game without delay, it has a controllability value greater or equal to 0. 
Furthermore, note that as a consequence of the iterative computation of winning strategies for safety games under delay, if a state $s$ is controllable for delay $\delta>0$, it is also controllable for delay $\delta-1$.
By convention, if $s\notin W$, i.e., there is no delay $\delta$ that makes it controllable, 
we say that $\varphi_c(s) = -1$.

A shield that maximizes controllability will tend to steer the agent towards states that can be safe even with large delays. In a setting with variable delay, the shield always operates with the worst-case delay in mind, but the agent may make a more refined use of the variable delay, so steering the agent towards high-controllability states translates to more freedom for which actions to choose in the future, as there are fewer paths leading to uncontrollable states.

Since the maximal delay possible
can be very large and 
the state space of the corresponding
safety game grows exponentially with the delay, 
we introduce a cutoff value 
$\delta_{\max}$
and compute the maximally-permissive winning strategy until $\delta_{\max}$.

\begin{figure}[t!]
     \centering
     \begin{subfigure}[b]{0.252\textwidth}
         \centering
         \includegraphics[width=\textwidth, page=2]{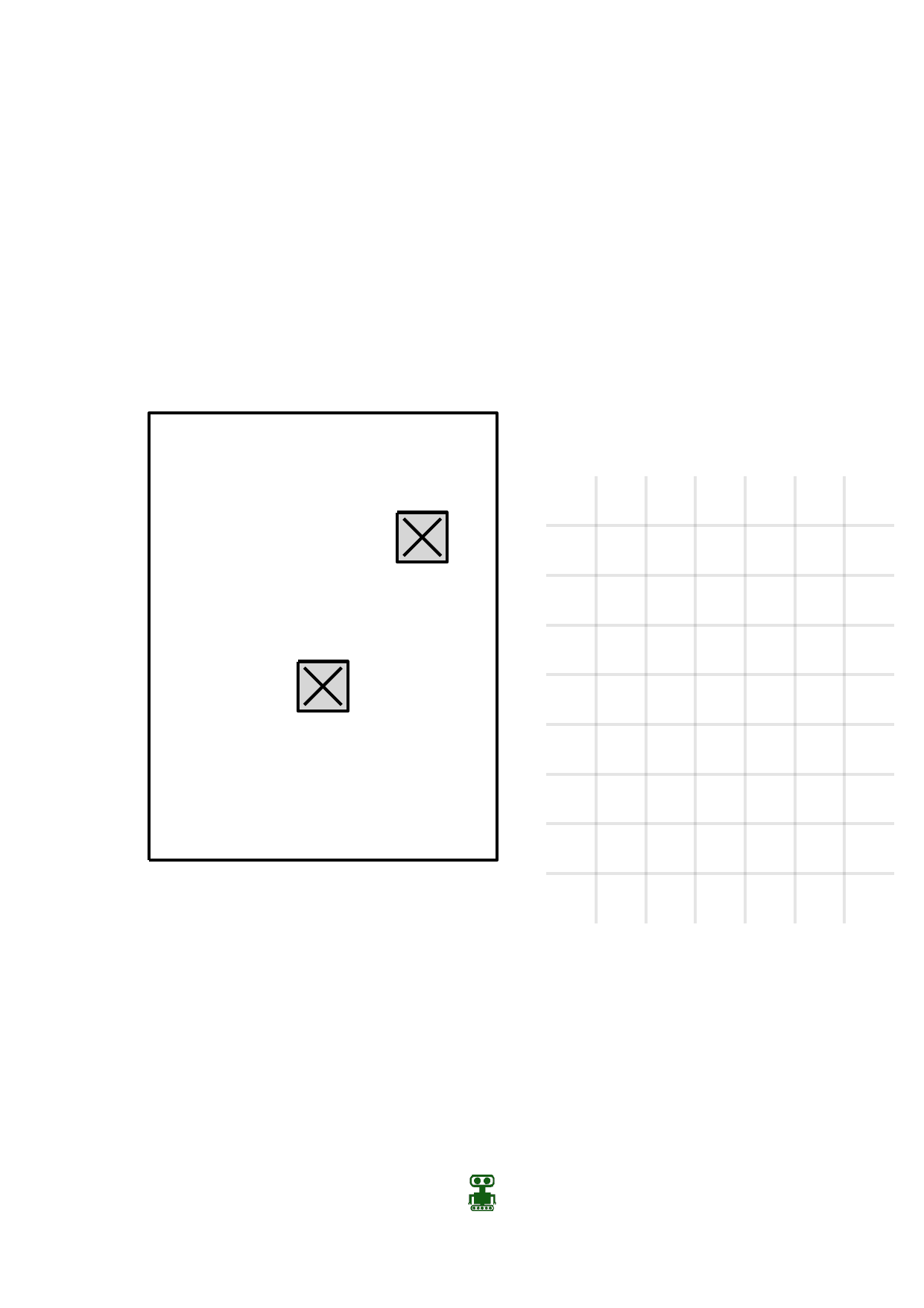}
         \caption{}
         \label{fig:example1}
     \end{subfigure}
     \hfill
     \begin{subfigure}[b]{0.233\textwidth}
         \centering
         \includegraphics[width=\textwidth, page=3]{images/examplepdf.pdf}
         \caption{}
         \label{fig:example2}
     \end{subfigure}
     \hfill
     \begin{subfigure}[b]{0.233\textwidth}
         \centering
         \includegraphics[width=\textwidth, page=4]{images/examplepdf.pdf}
         \caption{}
         \label{fig:example3}
     \end{subfigure}
     \hfill
     \begin{subfigure}[b]{0.233\textwidth}
         \centering
         \includegraphics[width=\textwidth, page=5]{images/examplepdf.pdf}
         \caption{}
         \label{fig:example4}
     \end{subfigure}
        \caption[Gridworld example]
        {\textbf{(a)} Effects of delay on state observation.
    \textbf{(b)} Gridworld depicting the least delay-resilient states.
    \textbf{(c)} Gridworld with $\varphi_{dr}$ values for all states.
    \textbf{(d)} Gridworld with $\varphi_{rs}$  values for all states.}
        \label{fig:3dgridDR}
    \vspace{-1em}
\end{figure}

\begin{example}
\label{ex:delayeds-shields-example}
We showcase the computation of  $\varphi_{c}$  on a simple  $7\times 9$ gridworld, depicted in Fig.~\ref{fig:3dgridDR}(a).
Initially, a robot is placed at $(1,9)$. 
The environment and the agent can move the robot by one field in alternating turns. 
The safety specification requires that the robot never
visits the fields  $(4,4)$ nor $(6,7)$.
Encoding the model and the specification leads to the safety game
$\mathcal G=\langle S, S_{ag}, S_{env}, \Act,
\mathcal T, \F  \rangle$:
\begin{itemize}
    \item $S = X\times Y\times \mathbb B$, 
    where $X = \{1,\dots, 7\}, Y = \{1,\dots,9\}$ 
    represent the robot's position 
    and 
    $\mathbb B = \{\top, \bot\}$ indicates whether it is the turn of the agent ($\top$) or the environment $(\bot)$ to move the robot.
    The states of the environment are 
    $S_{env} = X\times Y\times \{\bot\}$, 
    and of the agent are 
    $S_{ag} = X\times Y\times \{\top\}$.
    \item The unsafe states are $S\setminus \mathcal F = \{(4,4), (6,7)\}\times\mathbb B$.
    \item  The agent's actions are 
    $\Act = \{ \mathtt{U},\mathtt{D},\mathtt{R},\mathtt{L},\mathtt{N}\}$ to move the robot
    one field up, down, right, left, or to hold.
    Formally: $(x,y,\top) \xrightarrow{\mbox{\tiny{$\mathtt{U}$}}} (x,y+1,\bot)$,
    $(x,y,\top) \xrightarrow{\mbox{\tiny{$\mathtt{D}$}}} (x,y-1,\bot)$,
    $(x,y,\top) \xrightarrow{\mbox{\tiny{$\mathtt{R}$}}} (x+1,y,\bot)$, 
    $(x,y,\top) \xrightarrow{\mbox{\tiny{$\mathtt{L}$}}} (x-1,y,\bot)$,
    $(x,y,\top)\xrightarrow{\mbox{\tiny{$\mathtt{N}$}}} (x,y,\bot)$.

    \item The actions of the environment player to move the robot are 
    $\Act_{env} = \{\mathtt{U'},\mathtt{D'},\mathtt{R'},\mathtt{L'},\mathtt{N'} \}$,
    with a meaning analogue to those of the agent's actions.
    
\end{itemize}
Moves that would lead the robot outside of the game's boundary are replaced by 
$\mathtt{N}$. 
For this game $\mathcal{G}$, we now compute the
controllability values $\varphi_{c}(s)$ for
all states $s\in S_{ag}$.

First, we illustrate in Fig.~\ref{fig:3dgridDR}(a)  the effects of delays on the state information of the play. 
In the example, we have a delay $\delta=1$ and memory $\mu=1$, and the observed state of the game is
$s=(2,5,\top)$ (green robot) with memory  $\overline{\sigma}=[\mathtt{U}]$.
The set of possible current states is 
$F_1(s,\overline{\sigma})$ (marked green).
To check whether a next action $y=\mathtt{R}$ is safe,
we compute $F_2(s,[\mathtt{R},\mathtt{U}])$ (marked blue or green).
Since $F_2(s,[\mathtt{R},\mathtt{U}])\subseteq \mathcal F$, 
$\mathtt R$ is a safe action from
$(s,[\mathtt U])$. 

Next, we exemplify the computation of the controllability values for 
the states $(5,5), (6,6), (7,7)$ and $(7,8)$. 
In Fig.~\ref{fig:3dgridDR} (b), each field of the grid is coloured according to 
the smallest distance to one of the unsafe states for distances 1, 2 and 3.
With this colour coding, 
the state $(x,y,\top)$ is controllable with delay $\delta$ 
if there exists sequence of actions $\overline{\sigma}$ of size $\delta$, 
that takes the robot outside of the region coloured with $\delta$.
The reader can see that all states are controllable for delay $\delta = 1, 2$, but for $\delta=3$, states 
$(5,5), (6,6), (7,7)$ and $(7,8)$ are uncontrollable (marked with a black robot).

Fig.~\ref{fig:3dgridDR} (c) illustrates the controllability value $\varphi_{c}(s)$ for all states. Each field of the grid is coloured according to its controllability value. 
Next, we exemplify how to compute a deterministic
strategy $\chi_{\delta=1,\mu=1}(s,\overline{\sigma})$ that maximizes the average of $\varphi_{c}$ over all possible current states.
Consider a state $s = (3,4,\top)$ (black robot) with memory $\overline{\sigma} = [\mathtt{U}]$.
The only two outputs allowed by $\xi_{1,1}(s,\overline{\sigma_1})$ are $\mathtt{U}$ and $\mathtt{L}$ 
since any other output would lead the robot to a state at a distance two or less from an unsafe state.
The forward multiset $F_2(s,[\mathtt L, \mathtt U])$ is marked with a dashed green line and results in the expected value $\mathbb E_{\varphi_{c}}(s,[\mathtt L, \mathtt U])= 74/26$. 
The expected value $\mathbb E_{\varphi_{c}}(s,[\mathtt U, \mathtt U])= 73/26$ is computed analogously. 
A delay-resilient post-shield that maximises the controllability value corrects the outputs $\mathtt R$, $\mathtt D$, and $\mathtt N$ in state $((3,4,\top), [\mathtt U])$ to the output $\mathtt L$.

\end{example}

\paragraph*{Complexity of computing controllability values.} 
Computing the controllability value as a fitness function 
only requires computing the maximally permissive winning strategy for the delay 
$\delta_{\max}$ chosen as the cutoff value --- see Equation~\eqref{eq:delayed-shields-winning-strategy-complexity}.


\subsection{Post-Shields that Maximise Robustness}
\label{subsec:robustsafety}
In this section, we define an alternative fitness function.
The \emph{robustness value} 
$\varphi_{r} : S \rightarrow \mathbb R$
assigns to every state
the shortest distance to any unsafe state
in the game graph.
Intuitively, a large robustness value suggests 
that the system is in a state that ``easily'' satisfies the specification, 
while values near zero suggest that the system is close to violating it.
A shield that maximises robustness potentially requires fewer corrections in the near future. 

%


\index{robustness value}
\begin{definition}[Robustness Value]
 Let 
 $\mathcal G = \langle S, s_0, S_{env}, S_{ag}, \Act, \mathcal T, \F \rangle$
 be a safety game with winning region $W$
 -- as defined in Equation~\eqref{eq:prelim-safety-games-winning-strategy}.
 For any state $s\in S_{ag}$,
 the robustness value $\varphi_{r}(s)$
 is defined as the 
 smallest $k$ such that there exists
 $\overline{\sigma}\in\Act^k$
 such that $F_k(s,\overline{\sigma})\not\subseteq W$.
\end{definition}


Note that in our definition, we are counting distance only in states of the agent. 
That is, when a state has a robustness value of $k$, 
there exists a trace with $2k$ states -- half of them of the agent, half of them of the environment -- that leads to a state outside of the winning region.

\begin{example}[Continuation of Example~\ref{ex:delayeds-shields-example}]
We exemplify the computation of $\varphi_{r}(s)$
on the gridworld of Fig.~\ref{fig:3dgridDR} (d).
Each field of the gridworld is colored with $\varphi_{r}(s)$
of its corresponding agent state $s$.
For any $s$, the fitness function 
$\varphi_{r}(s)$ is computed as the distance to the closest unsafe state. 
From $s=(3,4,\top)$ with $\overline{\sigma} = [\mathtt{U}]$ 
at delay $\delta =1$ and memory $\mu=1$, $\xi_{1,1}(s,\overline{\sigma})$
allows the actions $\mathtt{U}$ and $\mathtt{L}$. 
Since the expected robustness value 
$\mathbb E_{r}(s,[\mathtt{U}, \mathtt{U}])$
is greater than $\mathbb E_{r}(s,[\mathtt{R}, \mathtt{U}])$,
the deterministic strategy $\chi_{1,1}(s,[\mathtt{U}])$ that maximizes $\varphi_{r}$ would choose $\mathtt{U}$ as corrective output.
\end{example}

A shield that maximizes robustness will steer the agent towards states that are as far away as possible in the game graph from unsafe states. While this is a useful heuristic, note that distance in the safety game may not translate to real safety under delayed observations, as there may be states that are far away from the unsafe region, but with well-defined paths that the environment can force the agent to take toward unsafe states.
We explore some of these examples in the following section.

\noindent\textbf{Complexity of computing robustness values.}
The fitness function
$\varphi_{r}$ can be computed as a breadth-first search on the states, 
so all robustness values can be computed in $\mathcal O(|S|)$ time and memory.


\section{Relation between Robustness and Controllability}  
\label{sec:relations}

Once we have shields that maximize robustness and controllability, we would like to find results that guarantee certain safety properties when using these shields. 
By construction, the post-shields defined in the previous sections maximize their corresponding fitness function. 

By increasing the value of $\delta$, we can make post-shields that guarantee a certain controllability value. Similarly, we can add a buffer zone to the winning region $W$, to force that only states at a distance at least $d$ from unsafe states are ever visited. 

A more interesting guarantee would be some result that guarantees robustness values in terms of controllability, and vice-versa. 
In this section, we study the relationship between robustness and controllability values.

Although the intuition behind robustness and controllability is very similar, and in our experiments, we found them to be equal most of the time, we show that only very basic relations hold in general.
We show that there are example games where robustness is arbitrarily higher than controllability (Theorem~\ref{thm2}), and vice-versa (Theorem~\ref{thm3}).
These examples would break any result of guaranteeing controllability when maximizing robustness, or robustness when maximizing controllability.


\subsection[Memory-Restricted Strategies]
{Relation between Robustness and Controllability for Memory-Restricted Strategies.}

For strategies with a restricted memory. i.e., with $\mu \leq \delta$, we show a single result
and explore its consequences for the edge cases.

\begin{theorem}
\label{thm1}
    Let 
    $\mathcal G$ be a safety game with delay $\delta$ and memory size $\mu$.
    For any controllable state $s\in S_{ag}$ 
    it holds that
    \begin{equation}
    \label{eq:delayed-shields-thm1}
        \varphi_{r}(s) \geq \delta - \mu + 1.    
    \end{equation}
\end{theorem}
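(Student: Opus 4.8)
The plan is to prove the bound via its contrapositive, combining two ingredients: that controllability forces the full $\delta$-step forward multiset to lie inside the winning region, and that the complement $S\setminus W$ is a trap the agent can never escape from. First I would show that if $s$ is controllable, i.e. $\xi_{\delta,\mu}(s,\overline\sigma)\neq\emptyset$ for some register $\overline\sigma\in\Act^\mu$, then $F_\delta(s,\overline\sigma)\subseteq W$. This follows by unrolling the recursion in Algorithm~\ref{alg:delayed_strategies_extended}: each step rewrites $\xi_{d,m}(s,\cdot)$ as an intersection of the strategy over the one-step successors, so after $\delta$ unrollings $\xi_{\delta,\mu}(s,\overline\sigma)$ is contained in $\bigcap_{t\in F_\delta(s,\overline\sigma)}\xi_{0,0}(t)$, where $\xi_{0,0}$ is the perfect-information maximally permissive strategy and the intersection ranges exactly over the forward-reachable agent states respecting the register. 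Since $\xi_{0,0}(t)\neq\emptyset$ holds precisely when $t\in W$ (Equation~\eqref{eq:max-perm-strat-safety-game}), a non-empty left-hand side forces every $t\in F_\delta(s,\overline\sigma)$ into $W$.

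The second ingredient is a trap lemma: for any agent state $t\notin W$ and any agent action $y$, the environment has a response reaching an agent state $t''\notin W$. I would derive this from the fixpoint characterisation of $W$: if $t\in\F$ and every response stayed in $W$, then the successor environment-state $t'$ would lie in $W$, and hence $t$ itself would lie in $W$ (the agent would have a safe action), a contradiction. The only remaining cases involve passing through unsafe states, which I would handle by the standard reduction of making the states $S\setminus\F$ absorbing; this leaves $W$, the controllability relation, and the robustness value $\varphi_r$ unchanged, since all three depend only on the play up to the first moment it leaves $W$. Iterating the lemma then shows that once a play has reached $S\setminus W$, it can be continued for arbitrarily many further agent moves, with freely prescribed actions, while remaining outside $W$.

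With these two facts the bound is immediate by contradiction. Suppose $\varphi_r(s)\le\delta-\mu$; then there exist $k\le\delta-\mu$ and actions $\overline\sigma'\in\Act^k$ with $F_k(s,\overline\sigma')\not\subseteq W$, so some agent state $t\notin W$ is reached after $k$ agent moves. Using the trap lemma I extend this escape by $\delta-k\ge\mu$ further agent moves, choosing the final $\mu$ actions of the length-$\delta$ sequence to equal the witness register $\overline\sigma$ and the intermediate $\delta-k-\mu\ge0$ actions arbitrarily, arriving at an agent state $t^\ast\notin W$ at depth $\delta$. The arithmetic is tight exactly because $k\le\delta-\mu$ guarantees that the $\mu$ prescribed actions all fall in the extension segment, so that $t^\ast\in F_\delta(s,\overline\sigma)$. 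This contradicts $F_\delta(s,\overline\sigma)\subseteq W$ from the first step, and hence $\varphi_r(s)\ge\delta-\mu+1$.

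The main obstacle is the trap lemma in the presence of unsafe states: a minimal escape from $W$ may land directly on an unsafe agent state or be forced through an unsafe intermediate environment-state, where the naive ``cannot re-enter $W$'' reasoning breaks because the fixpoint argument needs $t,t'\in\F$. This is precisely what motivates the absorbing-unsafe reduction, and I would take care to verify explicitly that it preserves $W$, controllability, and $\varphi_r$, since the entire argument rests on that invariance.
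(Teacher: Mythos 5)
Your proof is correct and takes essentially the same route as the paper's: assume a short escape path of length at most $\delta-\mu$, observe that it lies entirely inside the window of actions the agent can no longer remember, and conclude that for any register $\overline{\sigma}$ the forward set at depth $\delta$ meets $S\setminus W$, contradicting controllability. The paper's own argument is an informal version of yours — it simply asserts that any play prefixed by the escape path is doomed — while your unrolling of the recursion, the trap lemma, and the absorbing-unsafe reduction rigorously fill in exactly the steps the paper glosses over.
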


\begin{proof}
    We prove the result by contradiction.
    Let $s\in S_{ag}$ be controllable, 
    i.e. with 
    $\xi_{\delta,\mu}(s,\ol{\sigma})\neq \emptyset$,
    for some $\ol{\sigma}\in\Act^\mu$.
    Assume that $\varphi_{r}(s) < \delta - \mu + 1$, 
    or equivalently, 
    $\varphi_{r}(s) \leq \delta - \mu$.
    
    Then, there exists a trace of length $\delta - \mu$ leading outside of the winning region $W$. 
    Let $\tau = s, s_1, \dots, s_{2(\delta-\mu)}$ be such trace, 
    where $s_{2(\delta-\mu)}\notin W$
    and 
    $s\xrightarrow{y_1} s_1 \xrightarrow{u} s_2 
    \xrightarrow{y_2} s_3
    \xrightarrow{u} \dots \xrightarrow{u} s_{2(\delta-\mu-1)} 
    \xrightarrow{y_{\delta-\mu}} s_{2(\delta-\mu)-1} \xrightarrow{u}s_{2(\delta-\mu)}$,
    for some actions $y_1,\dots, y_{\delta-\mu}\in\Act$.

    Since the memory of the agent is limited to $\mu$, 
    when the observed state is $s$, 
    the agent only knows that the current state is $s'$ at a distance $\delta$ from $s$, 
    with a trace where the last $\mu$ actions are known.
    However, this would already be too late:
    whatever the last $\mu$ actions are, 
    any trace starting with $\tau$ -- of which the agent has no control -- will pass through $s_{2(\delta-\mu)}\notin W$.
    Therefore, $s$ cannot be controllable with delay $\delta$ and memory $\mu$.
    This proves the result.
\end{proof}

A corollary of Theorem~\ref{thm1} is that 
$\varphi_{r}(s) \geq \varphi_{c}(s) - \mu + 1$,
since the controllability value is a valid delay
for which a state $s$ is controllable.

This result gives us information about the minimum amount of memory required for a winning strategy.
The argument is as follows.

A safety game $\mathcal G$ admits a winning strategy under delay $\delta$ if
any state $s\in S_{ag}$ with 
$s_0\xrightarrow{u}s$
is controllable under delay $\delta$.
Therefore, the minimum amount of memory required for a winning strategy
is 
\begin{equation}
\label{eq:delayed-shields-min-required-memory}
    \mu \geq  \delta-\varphi_{r}(s_0) + 1.    
\end{equation}
This implies that for a fixed game $\mathcal G$ and increasing delay $\delta$,
the amount of memory needed to have a winning strategy increases 
after a certain threshold.
At some point, the delay is so high that no memoryless strategies exist:
if we set $\mu=0$ in Equation~\eqref{eq:delayed-shields-min-required-memory}, we get 
$\delta~\leq~\varphi_{r}(s_0)~-~1 $.

On the other extreme case, if we set $\mu=\delta$ in Equation~\eqref{eq:delayed-shields-min-required-memory}, 
we get $\varphi_{r}(s_0) \geq \delta - \delta + 1=1$,
which just means $s_0\in W$.
More generally, if we set $\mu=\delta$ in 
Equation~\eqref{eq:delayed-shields-thm1},
we get that any state $s$ that is controllable
satisfies $\varphi_{r}(s) \geq 1$,
which just means that $s$ is in the winning region.
Therefore, Theorem~\ref{thm1} provides no bound for $\varphi_{c}$ 
in terms of $\varphi_{r}$.

\subsection[Strategies with Full Memory]
{Relation between Robustness and Controllability for Strategies with Full Memory.} 

In this section, we will prove two results that
give counterexamples to any possible bound of $\varphi_c$ in terms of $\varphi_r$ and vice-versa, for strategies with full memory, i.e., with $\mu = \delta$.

\begin{theorem}
\label{thm2}
    For all delay $\delta > 0$, 
    and all $k \geq 0$, 
    there exists a safety game
    $\mathcal G_{\delta,\mu=\delta}^k=\langle S, s_0, S_{ag}, S_{env}, \Act, \mathcal T, \mathcal F\rangle$ with one state $s\in S$ satisfying
    \begin{equation*}
      \varphi_{c}(s) < \delta \quad \mbox{ and } \quad  \varphi_{r}(s) \geq \delta + k + 1.  
    \end{equation*}
\end{theorem}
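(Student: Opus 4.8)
The plan is to construct, for each $\delta>0$ and $k\ge 0$, an explicit \emph{forced-mismatch corridor} game in which a distinguished state $s$ lies deep inside the undelayed winning region $W$ (making $\varphi_{r}(s)$ large) but is already lost under any positive delay (making $\varphi_{c}(s)$ small). The point is to decouple the two quantities: robustness only measures the graph distance to a non-winning state, whereas controllability measures whether the delay forces the agent into an unavoidable error that need not be close to the unsafe region.

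\textbf{The construction.} Put $M=\delta+k+1$ and take $\Act=\{0,1\}$, $\mu=\delta$. For $0\le i<M$ introduce an environment state $E_i$ and two agent states $A_{i,0},A_{i,1}$, together with an absorbing unsafe sink formed by $A_{\mathrm{dead}},E_{\mathrm{dead}}\notin\F$. The transitions encode a guessing game on a ``position'' $i$ with a fresh hidden bit chosen by the environment: from $E_i$ the environment moves freely to $A_{i,0}$ or $A_{i,1}$; from $A_{i,c}$ the matching action $c$ returns to $E_i$ (position unchanged) while the mismatching action $1-c$ advances to $E_{i+1}$ (or to $E_{\mathrm{dead}}$ when $i=M-1$); finally $E_{\mathrm{dead}}\xrightarrow{u}A_{\mathrm{dead}}$ loops in the sink. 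The distinguished state is $s=A_{0,0}$.

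\textbf{Robustness.} First I would identify $W$: in the undelayed game the agent sees the current bit $c$ at $A_{i,c}$, plays the matching action, and loops at position $i$ forever, so every $A_{i,c}$ and $E_i$ with $i<M$ is winning and the only non-winning states are $A_{\mathrm{dead}},E_{\mathrm{dead}}$. Since a match leaves the position fixed and a mismatch raises it by exactly one, positions are non-decreasing along plays, so a non-winning agent state is reachable from $s$ only after $M$ mismatches. Choosing the all-mismatch action word (with the environment supplying compatible bits) witnesses $F_M(s,\overline{\sigma})\not\subseteq W$, while $F_j(s,\overline{\sigma})\subseteq W$ for all $j<M$ and all $\overline{\sigma}$; hence $\varphi_{r}(s)=M=\delta+k+1$.

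\textbf{Controllability.} Next I would show $s$ is uncontrollable at delay $\delta$ with memory $\mu=\delta$. The environment fixes each bit at $E_i$, which lies inside the last $\delta$ moves and hence beyond the agent's observation horizon; consequently, for every observed state and every action register the forward multiset of possible current states contains both $A_{i,0}$ and $A_{i,1}$, so even full memory never reveals the current bit. Any committed action therefore matches at most one of the two indistinguishable realities, and the adversarial environment (which can predict the delayed agent's move from the past) realises the other, forcing a mismatch and advancing the position. Iterating drives every play into $A_{\mathrm{dead}}$ in finitely many steps, so no delayed winning strategy exists and $\xi_{\delta,\delta}(s,\overline{\sigma})=\emptyset$ for all $\overline{\sigma}$. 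As $s\in W$ is controllable at delay $0$, monotonicity of controllability in the delay gives $\varphi_{c}(s)=0<\delta$.

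\textbf{Main obstacle.} The delicate part is the controllability argument rather than the robustness count. I must argue carefully that the two states $A_{i,0},A_{i,1}$ stay indistinguishable under \emph{full} memory --- that the register of the last $\delta$ actions never pins down the freshly chosen bit --- and that the environment can thereby enforce a \emph{strict} advance at every round, not merely infinitely often. Making ``the agent necessarily loses'' coincide exactly with emptiness of the maximally permissive strategy $\xi_{\delta,\delta}$ produced by Algorithm~\ref{alg:delayed_strategies_extended} (including the effect of $\mathtt{Shrink}$) is where the care lies.
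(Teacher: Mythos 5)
Your proof is correct, but it takes a genuinely different route from the paper's. The paper argues by induction on $k$: its base case (Figure~\ref{fig:prop2basecase}) is a gadget in which the environment makes a \emph{single} hidden choice whose ambiguity propagates down a corridor of length roughly $\delta$, and its induction step performs graph surgery on $\G_{\delta}^{k-1}$ --- enumerating the paths $\Pi_U(s)$ and the border tuples $(s_{b}, s_{d}, x)^{i,j}$ of the winning region, then splicing in complete-bipartite gadgets with new states $s_m^i$, $s_a^{i,j}$, $t_k^i$ (Figure~\ref{fig:prop2k33}) --- so that every shortest path to the unsafe region lengthens by one while $s$ stays uncontrollable. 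You instead give one uniform construction for all $k$: a corridor of $M=\delta+k+1$ positions in which the environment re-randomizes a hidden bit at \emph{every} round, so the most recent bit always lies beyond the observation horizon regardless of memory; a forced mismatch advances the position, and the sink is exactly $M$ mismatches away. What your route buys: the game has size linear in $\delta+k$, the verification is a short monotone-counter argument rather than the paper's bookkeeping over border tuples, and you get strictly stronger conclusions ($\varphi_c(s)=0$, i.e., uncontrollability already at delay $1$ for any memory, and $\varphi_r(s)$ equal to, not just at least, $\delta+k+1$). What the paper's route buys is modularity: its induction step is a general recipe for incrementing the robustness of \emph{any} existing counterexample rather than a bespoke one. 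On the point you flag as delicate --- identifying ``the agent necessarily loses'' with emptiness of $\xi_{\delta,\delta}(s,\cdot)$ --- your argument is sound and needs no appeal to Algorithm~\ref{alg:delayed_strategies_extended} or $\mathtt{Shrink}$: a delayed strategy is a function of the observed state and register only, the environment's latest bit choice is unconstrained by both, so every strategy admits a valid all-mismatch play ending in the unsafe sink, hence no winning strategy exists from $s$; this is the same semantic indistinguishability reasoning the paper itself uses in its base case, so your level of rigor matches the paper's.
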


\begin{proof}

We will do the proof by induction on $k$ for any delay $\delta$.

\emph{Base Case.} For $k = 0$, we need to construct a safety game $\mathcal G_\delta^k$ containing a state $s$, 
that is uncontrollable for delay $\delta$, 
but at least $\delta+1$ steps are needed to 
get to an unsafe state. 
A game with a section as depicted in Figure~\ref{fig:prop2basecase} serves as an example,
with action set $\Act = \{x,y\}$.

For any given delay $\delta$, 
    the dotted pattern in the middle of the figure repeats
    $\delta-3$ times. 
    In this case, we will prove that state $s$ is not controllable for delay $\delta$.
    The environment has a choice in $s_e$ for the next state to be $\hat s$ or $\tilde s$.
    When the observed state is $s$, 
    the current state is either $\hat{s}'$ or $\tilde{s}'$.
    Any action register will consist of a sequence of $ x$'s and $ y$'s, and
    the only information relevant in the register is the parity of $ y$'s:
    an even number of $y$ actions takes the state of the game 
    from $\hat s$ to $\hat s'$ or from $\tilde s$ to $\tilde s'$, 
    while an odd number of $y$ actions takes the state of the game
    from $\hat s$ to $\tilde s'$ or from $\tilde s$ to $\hat s'$.

    Therefore,
    in this game graph, without knowing the first choice of the environment -- state $\hat s$ or $\tilde s$ --,
    the agent cannot know whether the current state is 
    $\hat{s}'$ or $\tilde{s}'$. 
    Since a safe action in $\hat{s}'$ leads to the unsafe state $s_{\times}$ when taken from $\tilde{s}'$ and vice versa, 
    the state $s$ is uncontrollable for delay $\delta$.

\begin{figure}
    \centering
    \includegraphics[width=0.95\linewidth,page=1]{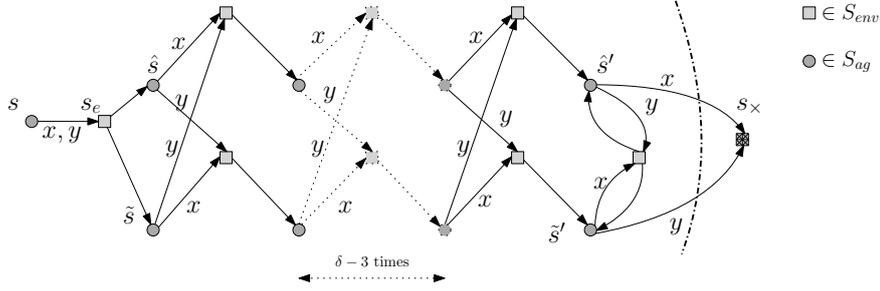}
    \caption[Construction for base case of Theorem~\ref{thm2}.]
    {Construction for base case of Theorem~\ref{thm2}.
    Square nodes represent states of the environment, circle nodes represent states of the agent.
    The dash-dotted line represents the divide between the winning region and the rest of the game. 
    Except for states $s_{e}$ and $s_{\times}$, 
    only states and transitions of the agent are labelled.
    }
    \label{fig:prop2basecase}
\end{figure}

\emph{Induction step.}  For a general $k$, we use the property for $k-1$. 
Consider the game graph $\G_{\delta}^{k-1}$ satisfying the hypothesis.

This graph contains a state $s$ uncontrollable for delay $\delta$ and with a robustness value of at least 
$\delta + (k-1) + 1 = \delta + k$.
Without loss of generality, we assume that the robustness value is exactly $\varphi_r(s) = \delta + k$.
If $\varphi_r(s)$ was larger,
$\G_{\delta}^{k-1}$ at state $s$ would serve already as $\G_{\delta}^k$ and the induction step would be finished.

State $s$ being uncontrollable means that for any action register
$\overline{\sigma} = (y_1\dots y_{\delta})$, 
the forward multiset $F_{\delta}(s,\overline{\sigma})$
contains at least another new uncontrollable state $s'$ 
--- which may be inside or outside $W$.
The same argument can be applied to the newly introduced uncontrollable states.
Therefore, for any action register $\ol{\sigma}$,
following repeatedly transitions from uncontrollable state to uncontrollable state,
eventually leads to a 
state outside of the winning region $W$.


We define 
$\Pi_U(s)$ as the set of all paths 
starting from $s$
that end outside of the winning region in exactly 
$\delta + k$ transitions.
That is, paths of the form 
$\tau = s, s_1,\dots, s_{2(\delta+k)}$, 
with $s_{2(\delta+k)}\notin W$.
This set is non-empty because $\varphi_{r}(s) = \delta + k$.
We enumerate paths in $\Pi_U(s) = \{\tau_i\::\: i\in I \}$ with some appropriate 
index set $I$.

While these paths are longer than $\delta$,
each of them contains at least one state that is uncontrollable because it leads directly outside the winning region $W$. 
For each $i\in I$, 
the path $\tau_i$ contains a state $s^i$ that is uncontrollable for delay $\delta$ because it leads directly outside the winning region (and not to a state uncontrollable but inside the winning region).

Since $s^i$ leads directly outside the winning region and is uncontrollable,
we have the following construction,
illustrated in Figure~\ref{fig:prop2border}.

From state $s^i$, 
there is at least a path of length $\delta$ that ends 
in a state $s^{b}\in S_{ag}$,
for which one transition with label $x$ leads to a state outside of the winning region $s_{d}\notin W$. 
Since $s_b\in W$, 
there is another transition $y$ leading to a safe state $s^{c}$.
Since $s^i$ is uncontrollable for delay $\delta$, 
there is at least another state $s'_{b}$ with a transition to an unsafe state $s'_{d}$
labeled by $y$, 
and a transition to a safe state $s'_{c}$ 
labelled by another action $z$.
This builds a tuple of states and actions 
$(s_{b}, s_{d}, x)$, 
as illustrated in Figure~\ref{fig:prop2border}.
We do not keep track of the rest of the states and actions defined but keep in mind that they exist.

Note that $s_{b} \neq s'_{b}$,
$s\neq s^i$ and 
$x \neq y \neq z$.
All the other states and actions could be the same. 
In particular, Fig.~\ref{fig:prop2k33} is drawn assuming 
$s_{d} = s'_{d}$.

There may be other tuples of states and outputs
bordering with the unsafe region. 
We enumerate them as
$(s_{b}, s_{d}, x)^{i,j}$, 
where $i$ is the index of $s^i$
and $j\in\{1,\dots, n_i\}$, 
where $n_i$ is the number of different tuples 
when fixed $i\in I$.
When it is convenient to distinguish concrete states and actions for each index $(i,j)$,  
we use the equivalent notation 
$(s_{b}^{i,j}, s_{d}^{i,j}, x^{i,j})$
instead of 
$(s_{b}, s_{d}, x)^{i,j}$.

\begin{figure}[t]
    \centering
    \includegraphics[width=0.85\linewidth,page=2]{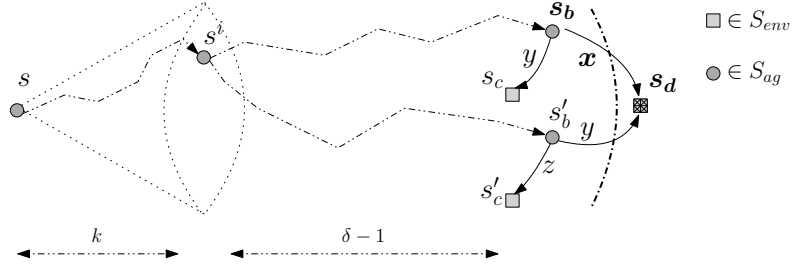}
    \caption[Tuple in the border of the winning region.]
    {Construction of a tuple 
    $(s_{b}, s_{d}, x)$ 
    on the border of the winning region as described in the text.
    Most $i$ superscripts are omitted to make the image cleaner. 
    Marked in bold are the elements that are part of the tuple.
    }
    \label{fig:prop2border}
\end{figure}

For each $i$, we make the following construction.
Consider all the tuples 
$(s_{b}^{i,j}, s_{d}^{i,j}, x^{i,j})$ as previously described 
for $j \in \{1, \dots, n_i\}$. 
We construct a new game $\mathcal G_{\delta}^{k-1}$ from $\mathcal G_{\delta}^k$
where we add a state of $s_{m}^i\in S_{env}$ 
such that 
$s_{b}^{i,j} \xrightarrow{x^{i,j}} s_m^i$ for all $i,j$. 
Then we add new states $s_{a}^{i,j}\in S_{ag}$,
one for each tuple.
We connect these states as follows, for all $i\in I$ and all $j\in \{1,\dots, n_i\}$:
\[
s_{m}^i \xrightarrow{u} s_{a}^{i,j}, \qquad \mbox{and} \qquad
s_{a}^{i,j} \xrightarrow{x^{i,j}} s_{d}^{i,j}.
\]

We also add another family of states, indexed by $k$, denoted $t_k^i\in S_{env}$. 
We add as many of them to make it such that for all $i,j$ and all action $x \neq x^{i,j}$, there exists $k$ such that \[
s_{a}^{i,j}\xrightarrow{x} t_k^i.
\]
We also connect all $t_k$ with all the newly added agent states, that is
\[
t_{k}^i \xrightarrow{u} s_{a}^{i,j}, \qquad
\mbox{for all } i,j,k.
\]
The unsafe states in $\G^k_\delta$ are inherited from $\G^{k-1}_\delta$. 
In particular, recall that the states $s_d^{i,j}$ are unsafe by construction for all $i$ and $j$.

In Figure~\ref{fig:prop2k33}, we illustrate this construction for the case of three tuples ($n_i=3$)
on a single index $i$ and an action set comprised of 
three actions ($\Act = \{x^1, x^2, x^3\}$).
For each value of $i$, 
there would correspond a similar separate construction. 
For larger values of $n_i$, 
the corresponding complete bipartite graph would become larger.

\begin{figure}[t]
    \centering
    \includegraphics[width=0.85\linewidth,page=3]{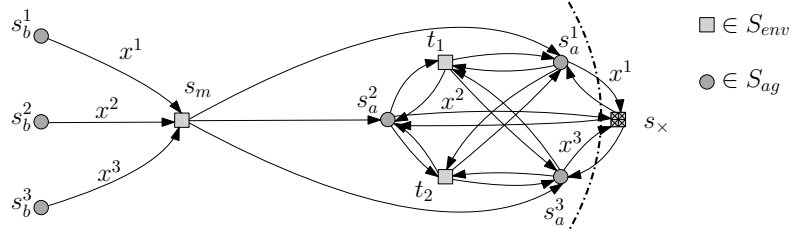}
    \caption[Bipartite graph for the induction case of Theorem~\ref{thm2}]
    {Construction of the bipartite complete graph described in the proof, 
    for a single index $i$ and $n_i=3$. 
    For the sake of simplicity
    all all three unsafe states $s_d^{i,j}$
    for $j\in \{1, 2, 3\}$ are collapsed into a single unsafe state $s_{\times}\notin W$.
    }
    \label{fig:prop2k33}
\end{figure}

The first observation is that the states $s_a^{i,j}$
are only controllable for delay $\delta=0$.
This is because for any register of action $\ol{\sigma} = [x]$, $s^{i,j}_a\xrightarrow{x} t_k$ for some $k$, 
and then $t_k\xrightarrow{u} s_{a}^{i',j'}$ for all $i', j'$. 
So when the observed state is $s_a^{i,j}$,
the current state (with delay $\delta=1)$, 
can be any of the states $s_a^{i',j'}$. 
By construction, any possible action $x$ will be $x=x^{i',j'}$ for some $i',j'$, 
and would lead to the state $s_d^{i',j'}\notin W$.

With this construction, 
the state $s_i$ is still uncontrollable for delay $\delta$, 
because any strategy that made it uncontrollable before
leads now to $s_{m}^i$, 
which is uncontrollable for any delay larger or equal to one, as we have explained in the previous paragraph.

The existence of at least one of the $t_k^i$ for each $i\in I$
is enough to ensure that states $s_{a}^{i,j}$ are safe, 
i.e., $s_a^{i,j}\in W$ for all $i,j$.
By adding enough states $t_k^i$ we ensure that each state newly added to $S_{ag}$
has a defined transition for each action in $\Act$.

Since the states $s_{a}^{i,j}$ are safe, 
the path $\tau_i$ needs to be extended by length 2 to arrive at an unsafe state, 
which would be one of the $s_{d}^{i,j}$.


Repeating this construction for each path $\tau_i$ 
of length $\delta +k$, 
we make all previous paths of length $\delta+k$ go through a construction as illustrated in Figure~\ref{fig:prop2k33} before reaching any unsafe state, making it take at least one more action to reach any unsafe state. 
Thus, the robustness value of $s$ is increased to $\delta+k+1$ in the new game graph $\G^k_\delta$, 
while the controllability value of $s$ stays the same as it was in $\G^{k-1}_\delta$.

\end{proof}


\begin{theorem}
\label{thm3}
    For all delay $\delta > 0$, 
    and all $k > 0$, 
    there exists a safety game
    $\mathcal G_{\delta,\mu=\delta}^k=\langle S, s_0, S_{ag}, S_{env}, \Act, \mathcal T, \mathcal F\rangle$ with one state $s\in S$ satisfying
    \begin{equation*}
      \varphi_{r}(s) \leq \delta \quad \mbox{ and } \quad  \varphi_{c}(s) \geq \delta + k.  
    \end{equation*}
\end{theorem}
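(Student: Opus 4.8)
The statement is the exact dual of Theorem~\ref{thm2}, so the plan is to produce an explicit counterexample that decouples the two quantities. The guiding intuition is that robustness $\varphi_r$ is a purely graph-theoretic shortest distance to the complement of the winning region $W$, whereas controllability $\varphi_c$ measures how much information about the current state survives the delay. If the environment has no genuine choice, then under full memory \emph{no} information is lost at any delay, so the state stays controllable for arbitrarily large delays, while the game graph can still place an unsafe state a single agent-move away from $s$. Thus I would aim for a single, $k$-independent game built around a deterministic environment.

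\textbf{The construction.} I would take a small safety game $\mathcal G$ whose environment is deterministic, i.e.\ every environment state has a unique successor, so $|\Act_{env}| = 1$ (permitted by Equation~\eqref{eq:safety-games-env-out-degree}). The designated state $s \in S_{ag}$ gets two actions: a safe action $y$ with $s \xrightarrow{y} s_y \xrightarrow{u} s$ (a two-step self-loop that keeps the play at $s$), and a dangerous action $x$ with $s \xrightarrow{x} s_x \xrightarrow{u} s_\times$, where $s_\times \notin \F$ is an unsafe sink made deadlock-free by a self-loop $s_\times \xrightarrow{x} s_\times' \xrightarrow{u} s_\times$. This $\mathcal G$ is agent-deterministic and deadlock-free, hence a well-formed safety game, and it depends on neither $\delta$ nor $k$.

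\textbf{The two estimates.} For robustness, the single path $s \xrightarrow{x} s_x \xrightarrow{u} s_\times$ gives $F_1(s,[x]) = \{s_\times\} \not\subseteq W$ (as $s_\times$ is unsafe, it lies outside $W$), so $\varphi_r(s) = 1 \leq \delta$, using $\delta \geq 1$. For controllability, the key observation is that with a deterministic environment and full memory (memory $\mu$ equal to the delay being tested), the observed state together with the stored action register determines the current state \emph{exactly}; consequently the delayed game has the same winning region $W$ as the undelayed one, and every state of $W$ remains controllable at every delay. Concretely, under the register $\overline{\sigma} = [y,\dots,y]$ the unique current state is again $s$, which is safe, so $\xi_{\delta',\delta'}(s,[y,\dots,y]) \ni y$ for every $\delta'$; in particular for $\delta' = \delta + k$, giving $\varphi_c(s) \geq \delta + k$ for all $k$. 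As a sanity check, Theorem~\ref{thm1} is respected: its corollary under full memory only forces $\varphi_r(s) \geq \varphi_c(s) - \varphi_c(s) + 1 = 1$, which indeed holds.

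\textbf{Main obstacle.} I expect the principal difficulty to be conceptual rather than computational, namely pinning down the memory convention so the statement is not blocked by Theorem~\ref{thm1}. Because that corollary yields $\varphi_r(s) \geq \varphi_c(s) - \mu + 1$, a genuinely \emph{fixed} memory $\mu = \delta$ would force $\varphi_r(s) \geq k+1$ and contradict $\varphi_r(s) \leq \delta$ as soon as $k \geq \delta$; the claim is therefore to be read in the full-memory regime where the memory tracks the tested delay, exactly as in the surrounding ``full memory'' subsection and in the dual Theorem~\ref{thm2}. The remaining work is routine: confirming that the deterministic environment makes each forward multiset $F_{\delta'}(s,\overline{\sigma})$ a singleton, so no \emph{indecision}-induced uncontrollability can arise from state ambiguity, and verifying the well-formedness conditions (agent-determinism, deadlock-freeness) of $\mathcal G$. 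If one prefers a robustness value closer to $\delta$ rather than $1$, the dangerous edge can be replaced by a length-$\delta$ deterministic branch equipped with safe alternatives at each step, without affecting the controllability argument.
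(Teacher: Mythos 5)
Your proposal is correct and follows essentially the same route as the paper: the paper's proof also takes a safety game whose environment has only one choice in each state, observes that a full-memory agent then knows the current state exactly, so every safe state remains controllable at every delay, while the game graph places unsafe states at small distance to keep $\varphi_r$ bounded. Your version merely makes explicit what the paper leaves to a figure and a short sketch — the concrete two-loop construction, the inductive computation showing $\xi_{\delta',\delta'}(s,[y,\dots,y]) \ni y$, and the full-memory reading of the statement needed for consistency with Theorem~\ref{thm1}.
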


\begin{proof}
    Consider a safety game 
    where the environment has only one choice in each state. 
    In these kind of games, 
    a player with memory can know exactly where it
    is making the next move, so each safe state is controllable.
    With this idea in mind, 
    we construct a family of safety games $\mathcal{G}^k$
    for which the initial state $s_0$ satisfies $\varphi_{r}(s_0) = k$ and is 
    controllable with memory for any delay.
    Figure~\ref{fig:prop3} illustrates this family of games.

\begin{figure}
    \centering
    \includegraphics[width=0.55\linewidth, page=4]{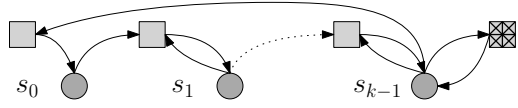}
    \caption{Game graph where $\varphi_{c}(s_0)$ is arbitrarily large, 
    and $\varphi_{r}(s_0) = k$.}
    \label{fig:prop3}
\end{figure}

\end{proof}

\section{Experimental Evaluation}
\label{sec:shields-experiments}

For our experimental evaluation, we evaluate different types of shields
resilient to delays with full memory on two use cases: a simple gridworld and a more complex scenario 
based on a realistic driving simulation.

\subsection{Shielding in a Gridworld}

\begin{figure}
 \centering
 \includegraphics[height=4.2cm,page=2]{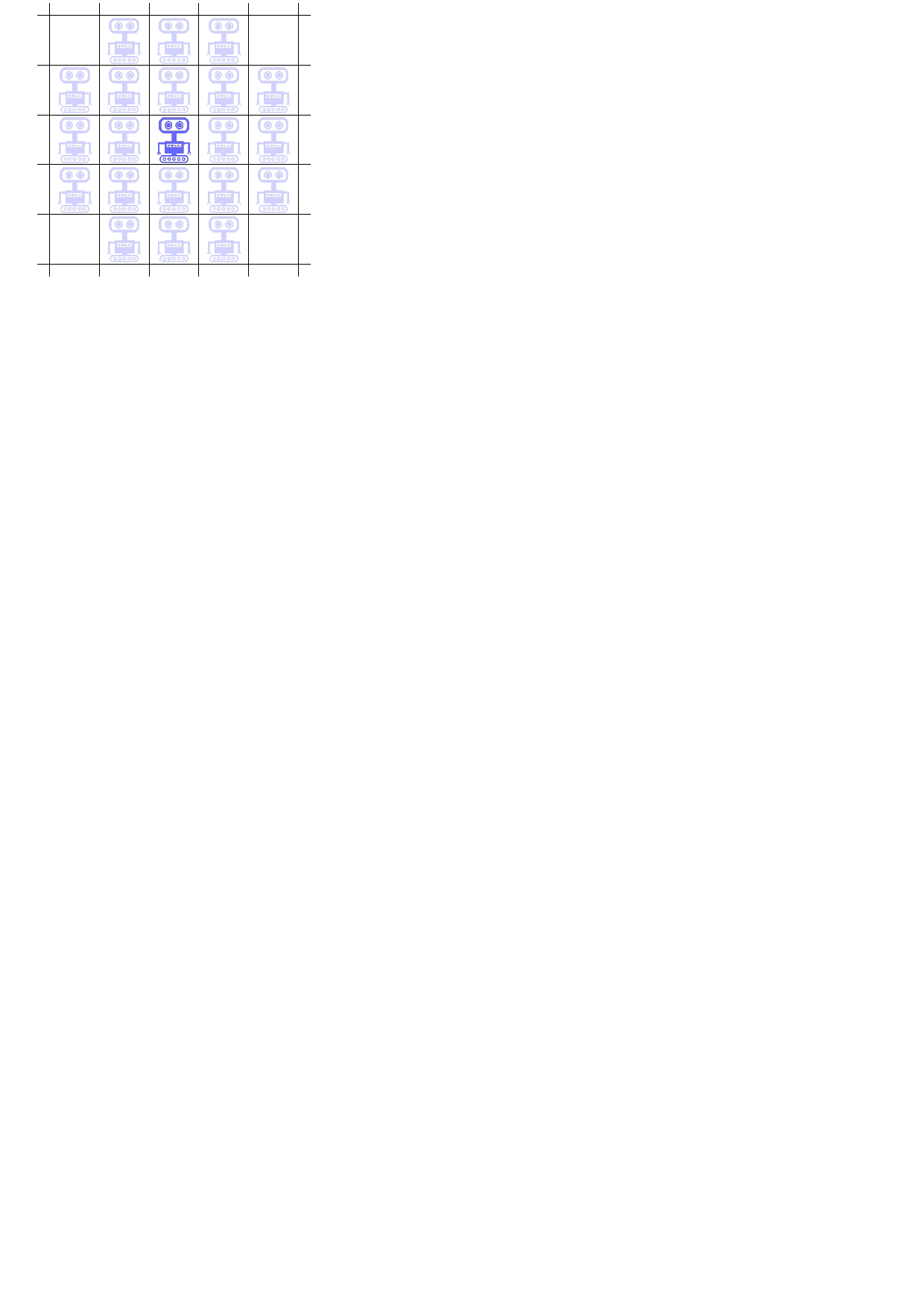}
 \caption{Gridworld with possible states after delay $\delta=1$.}
 \label{fig:delayed-shields-gridworld}
\end{figure}

\paragraph*{Setting.}
Our first case study is an extension of the one from~\cite{Chen2020IndecisionAD}.
Figure~\ref{fig:delayed-shields-gridworld} illustrates a grid world of size $3n+4\times 9$,
where the width is parameterised
by the number of pairs of dead-ends $ n$.
There are two actors that operate in the grid world: 
a robot (controlled by the agent), and 
a kid (controlled by the environment).
The safety specification requires the robot to avoid any collision with the kid.

\paragraph*{Game graph.}
The game graph 
encoding the relevant safety dynamics for the grid world is 
$\G =\langle S, s_0, S_{ag}, S_{env}, \Act, \mathcal T,\mathcal F\rangle$, defined as follows.
\begin{itemize}
    \item $S = X_{env}\times Y_{env} \times X_{ag} \times Y_{ag}\times \mathbb B \setminus P\times P \times \mathbb B$, 
    where $X_{ag/env}= \{1,\dots, 2n+5\}$ and 
    $Y_{ag/env}=\{1,\dots,9\}$ represent the
    $(x,y)$ position of the robot (agent) and the kid (environment), respectively.
    $\mathbb B$ indicates whether it is the turn of the robot or the kid, 
    and $P$ represents the illegal positions, marked in grey in Figure~\ref{fig:delayed-shields-gridworld}.
    Formally, 
    $P = \{((x,5), (2k+1,y)\::\: x\in \{3,\dots, 2n+3\}, \: y\in \{3,\dots, 7\}, \: k \in \{1,\dots, n+1\}\}$.
    The initial state is $s_0 = (0,0,2n+5, 9, \bot)$, 
    indicating that the robot is in the lower left corner, 
    the kid is in the upper right corner
    and it is the kid's turn to move.
    \item The unsafe states are 
    \begin{equation*}
      S\setminus\mathcal F = \{(x_{env}, y_{env}, x_{ag}, y_{ag}, b)\::\:
    (x_{env} = x_{ag}) \:\land\: (y_{env} = y_{ag})\}.
    \end{equation*}
    \item The moves of the kid are defined by an action set $\Act_{env} = \{\mathtt U', \mathtt D', \mathtt R', \mathtt L' \}$, 
    with the usual meanings of up, down, right, left.
    We define a richer action set for the robot
    to compensate for the existence of delays in the input. The action set is
    $\Act = \{\mathtt N, \mathtt U, \mathtt D, \mathtt R, \mathtt L,
    \mathtt{UU}, \mathtt{DD}, \mathtt{RR}, \mathtt{LL},
    \mathtt{UR}, \mathtt{RU}, 
    \mathtt{UL}, \mathtt{LU}, 
    \mathtt{DR}, \mathtt{RD},
    \mathtt{DL}, \mathtt{LD}, \newline
    \mathtt{UUR}, \mathtt{UUL},
    \mathtt{DDR}, \mathtt{DDL},
    \mathtt{RRU}, \mathtt{RRD},
    \mathtt{LLU}, \mathtt{LLD}
    \}$.
    In summary, the robot can move zero, one or two steps in each direction, and can also perform three-step L-shaped moves.
    \item The transitions work as expected. 
    Environment transitions modify the position of the kid $(x_{env},y_{env})$, 
    while the agent's actions modify the position of the robot $(x_{ag}, y_{ag})$.
    Any illegal transition (those that would go out of boundaries or clash with the grey region depicted in Figure~\ref{fig:delayed-shields-gridworld}) is changed to
    $\mathtt{N}$ (``no move'').
\end{itemize}

\begin{table}
\centering
{
\renewcommand{\arraystretch}{1.2} 
\begin{tabular}{llllll}
\multicolumn{2}{c}{Delay (steps)}                                & 0    & 1   & 2     & 3     \\ \hline 
\multirow{3}{*}{Score}                & Pre-shield    & 50.1    & 36.0  & 34.6   & 30.2   \\ \cline{2-6} 
                                      & Robustness    & 42.5    & 34.3  & 31.5   & 26.8   \\ \cline{2-6} 
                                      & Controllability   & 41.3  & 33.9 & 31.8      & 27.5    \\ \hline
\multirow{3}{*}{Interventions}        & Pre-shield    & 117.4   & 150.4  & 160.1 & 182.2  \\ \cline{2-6} 
                                      & Robustness    & 90.9   & 107.5  & 114.1 & 122.0  \\ \cline{2-6} 
                                      & Controllability & 85.0 & 95.9  & 106.9    & 122.7
\end{tabular}
}
\caption{Performance of different shielding strategies.}
\label{tab:exp3}
\end{table}


\paragraph*{Results: interference rates.} To evaluate the interference of the shields during runtime, we implemented a robot with the goal of collecting treasures that are placed at random positions in a grid world with $4$ dead ends. 
At any time step, there is one treasure placed in the grid world. 
As soon as this treasure is collected, the next treasure spawns at a random location.
Collecting a treasure rewards the agent with $+1$ score points.
The kid is implemented such that it chases the robot in a stochastic way.

The interference results are presented in Table~\ref{tab:exp3}.
In the table, the first three rows show the score obtained by the robot,
and the last three rows show the number of times the shield intervenes.
Both score and number of interventions correspond to the amount accumulated over a game of 2000 steps. 
We compare pre-shields with post-shields that maximise either robustness or controllability.
Since both the robot and the kid are implemented with stochastic behaviour,
each data point in the table is the average of 100 plays.

The results show that the agent's score decreases with the delay, as expected.
Since the shield has more uncertainty about the current position of the kid,
it enforces a larger distance between the current position of the robot and the last observed position of the kid. 
For the same reason, the shields need to interfere more frequently 
with increasing delays.
In general, pre-shields compare to post-shields show a better performance in terms of score
and a worse performance in terms of number of interventions, as it was expected.
Additionally, we compared the corrective actions 
chosen by post-shields that maximise controllability with the actions chosen by shields that maximise robustness. 
We noticed that in most states, both shields pick the same corrective action,
which is reflected in the similar results obtained.

\begin{figure}
\centering
\begin{subfigure}[b]{0.48\textwidth}
         \centering
         \includegraphics[width=\textwidth]{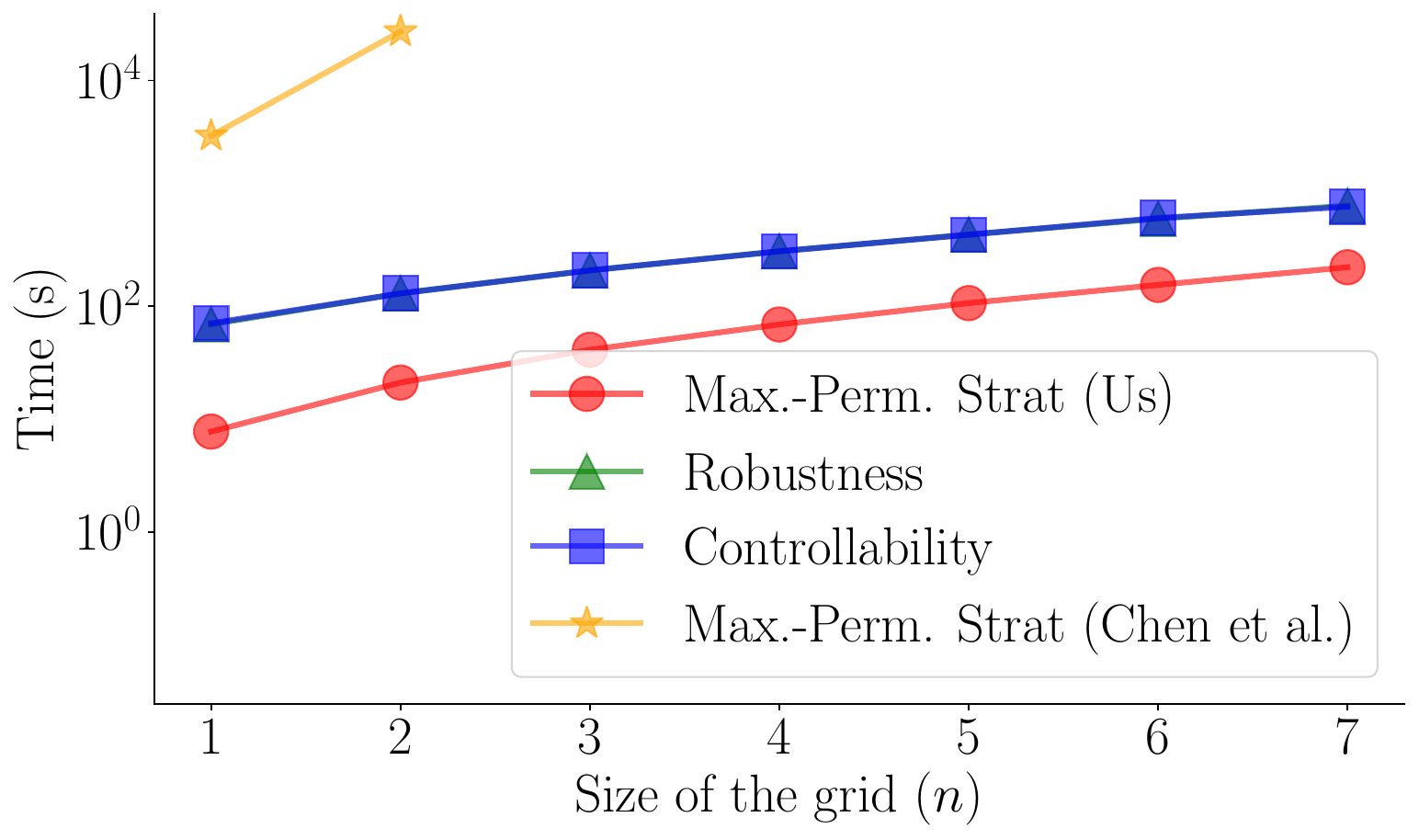}
         \caption{Fixed delay $\delta = 2$, increasing size of the grid.}
         \label{fig:delayed-shields-comptime-fixed-delay}
     \end{subfigure}
    \hfill
     \begin{subfigure}[b]{0.48\textwidth}
         \centering
         \includegraphics[width=\textwidth]{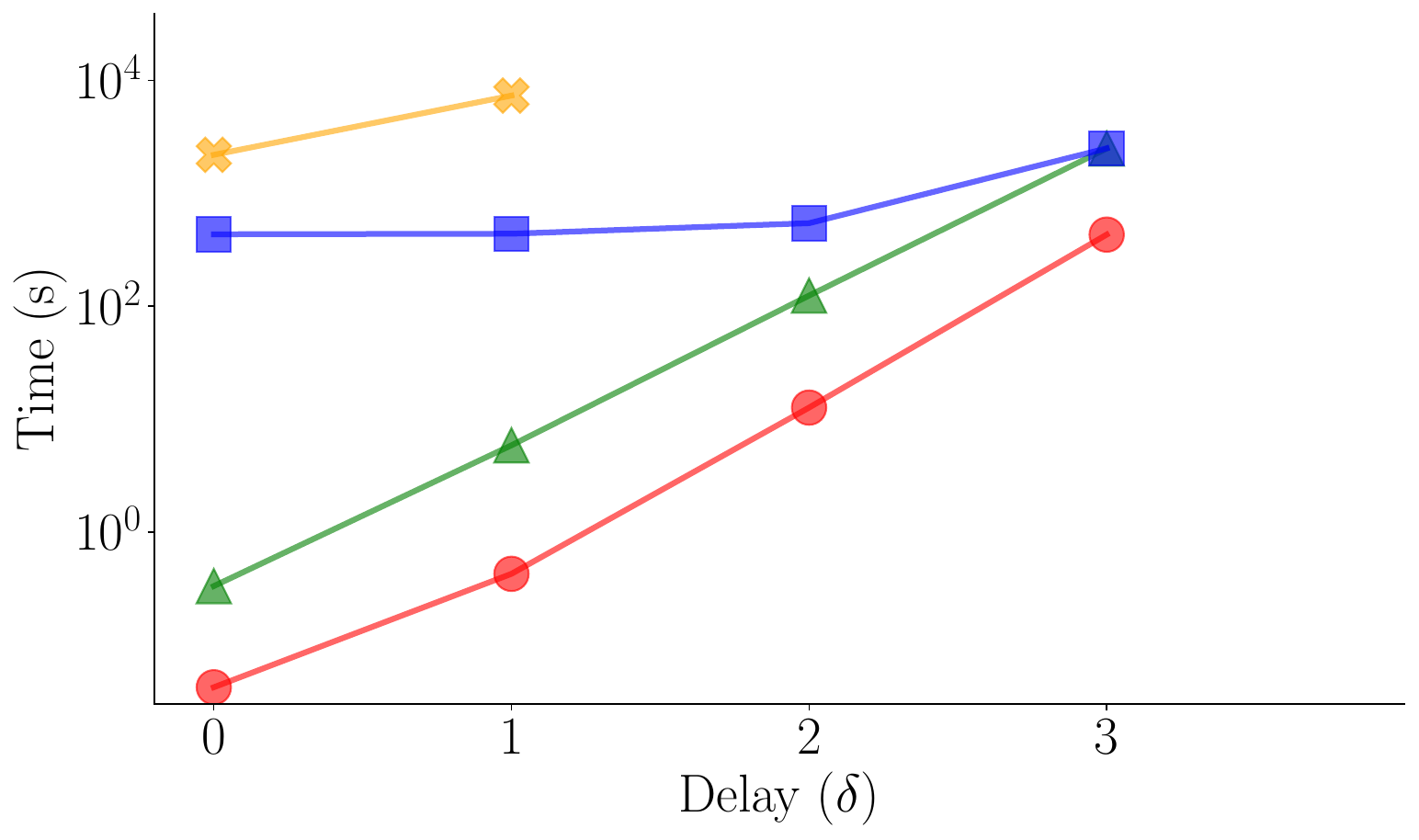}
         \caption{Fixed size of the grid $n = 2$, increasing delay.}
         \label{fig:delayed-shields-comptime-fixed-size}
     \end{subfigure}
        \caption{Shield synthesis times for the grid world experiments.}
        \label{fig:computation_time}
\end{figure}

\paragraph*{Results: synthesis times.}

We compute all types of presented shields. 
The synthesis times are presented in Figure~\ref{fig:computation_time}, 
where Figure~\ref{fig:delayed-shields-comptime-fixed-delay} corresponds to a fixed delay of $\delta=2$, 
and Figure~\ref{fig:delayed-shields-comptime-fixed-size} corresponds to a fixed-size grid with four dead-ends, i.e., $n=2$.

In the figure
we compare the synthesis times for the synthesis of shields, maximising 
robustness (\inlinegraphics{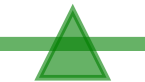}) and controllability (\inlinegraphics{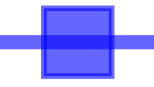}).
We also include the cost of computing the maximally permissive winning strategy, 
which is required for all shields and is the only cost associated with synthesising pre-shields. 
To compare with a baseline,
we show the cost of computing the
maximally-permissive strategy in the delayed safety game 
for our implementation (\inlinegraphics{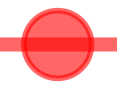})
and the implementation of~\cite{Chen2020IndecisionAD}
(\inlinegraphics{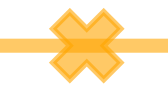}).

The improvement of our method compared to the baseline 
results from a faster implementation in
{$\mathtt{C}$\nolinebreak[4]\hspace{-.05em}\raisebox{.3ex}{\scriptsize\bf ++}},
with only minor algorithmic reasons.
The cutoff value for controllability is set to $\delta_{\max} = 3$.
Since the cost for computing shields grows exponentially with $\delta$,
the synthesis times for shields maximising robustness grow exponentially. 
This effect does not show for shields maximising controllability, 
as they always compute the maximally permissive strategy until delay $\delta_{\max}$ irrespective of the particular delay $\delta$.

\subsection{Shielded Driving in \textsc{Carla}}
We implemented our delayed shields in the driving simulator \textsc{Carla}~\cite{dosovitskiy_carla_2017}. 
In all scenarios, the default autonomous driver agent in \textsc{Carla} is used 
with adequate modifications to make it a more reckless driver.
To capture the continuous 
dynamics of \textsc{Carla}
using discrete models, 
we designed the safety game with 
overly conservative transitions, i.e., accelerations are overestimated, and braking power is underestimated.
In both scenarios, we use delay-resilient shields, maximising robustness.

\subsubsection{Shielding against Collisions with Cars}

We consider a scenario in which two cars (one of them controlled by the driver agent) approach an uncontrolled intersection. The shield has to guarantee collision avoidance for any braking and acceleration behaviour of the uncontrolled car while the observation of the uncontrolled car is delayed. 
A screenshot of the \textsc{Carla} simulation is given in Figure~\ref{fig:carscreenshot}.

\paragraph*{Game graph.}
To compute delay-resilient shields, the 
scenario is encoded as a safety game 
$\G =\langle S, s_0, S_{ag}, S_{env}, \Act, \mathcal T,\mathcal F\rangle$, 
defined as follows.

    The set of states is defined as $S =P_{\mathrm{ag}} \times P_{\mathrm{env}} \times V_{\mathrm{ag}} \times V_{\mathrm{env}}$, 
    where $P_{\mathrm{ag}}$ and $P_{\mathrm{env}}$ represent, 
    respectively,
    the distances of the agent's car and the environment's car to the crossing,
    and 
    $V_{\mathrm{ag}}$ and $V_{\mathrm{env}}$ represent the velocity of the agent's car and the environment's car, respectively.
    The range of modeled distances is $P_\mathrm{ag} = P_\mathrm{env} = \{0, 2, 4, \dots, 100 \}~\unit{\metre}$.
    The range of modelled velocities is
    $V_\mathrm{agent} = V_\mathrm{env} = \{0, 1, 2, \dots, 20 \}~\unit[per-mode=symbol]{\metre\per\second}$.

Each time step in the game corresponds to $\Delta t= 0.5~\unit{\second}$ in the simulation.
Each car can perform three actions: 
$\mathtt a$ (accelerate), $\mathtt b$ (brake) or $\mathtt c$ (coast, touch no pedal). 
Therefore, the set of environment actions is
$\A_{env} = \{\mathtt a_\mathrm{env}, 
\mathtt b_\mathrm{env}, \mathtt c_\mathrm{env}\}$ and the set of actions of the agent is
$\A=\{\mathtt a_\mathrm{ag}, 
\mathtt b_\mathrm{ag}, \mathtt c_\mathrm{ag}\}$.
In our model, braking and throttling have the effect of applying a constant acceleration of $a=\pm 2~\unit[per-mode=symbol]{\metre\per\second^2}$.
Therefore, the position $p_t$ and the velocity $v_t$ at time step $t$ is updated
as
\begin{equation}
    \label{eq:delayed-shields-carla-carcrossing-updates}
    p_{t+\Delta t} = p_t - v_t\Delta t - \tfrac{1}{2}a\Delta t^2, \qquad
    v_{t+\Delta t} = v_t + a\Delta t.
\end{equation}
Unsafe states represent collisions, therefore
$\mathcal{S_\mathrm{unsafe}}=\{(p_\mathrm{agent}, v_\mathrm{agent},
p_\mathrm{env}, v_\mathrm{env}) : p_\mathrm{agent} = p_\mathrm{env}\}$.
From the safety game, we compute delay-resilient shields that maximise the expected robustness.
Note that in the transitions in 
Equation~\eqref{eq:delayed-shields-carla-carcrossing-updates}
the velocity is applied as negative because the car
gets closer to the intersection at every step.

In this use case, we implemented post-shields that always correct to the 
most conservative safe action, with the understanding that $\mathtt c$ (coast) is more conservative than $\mathtt{a}$ (accelerate) and that $\mathtt{b}$ (brake) is more conservative than both $\mathtt{a}$ and $\mathtt{c}$.
 
\begin{figure}
    \centering
     \begin{subfigure}[b]{0.49\textwidth}
         \centering
         \includegraphics[width=\textwidth]{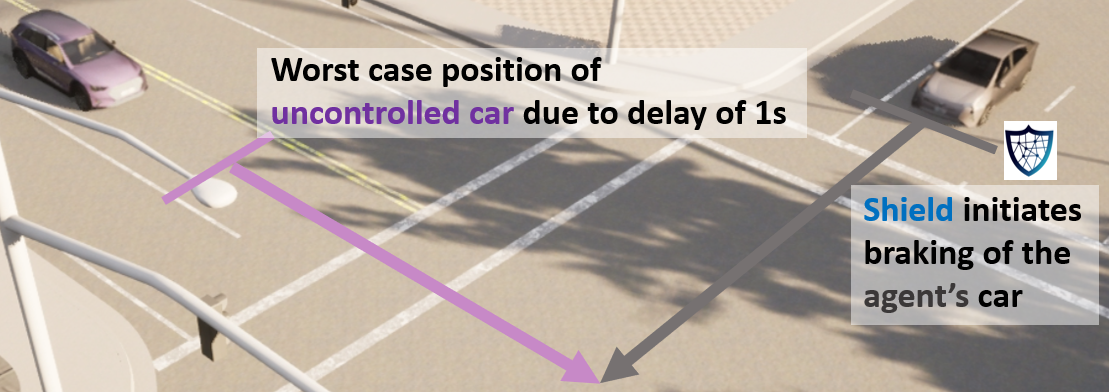}
         \caption{Car intersection.}
         \label{fig:carscreenshot}
     \end{subfigure}
     \hfill
     \begin{subfigure}[b]{0.49\textwidth}
         \centering
         \includegraphics[width=\textwidth]{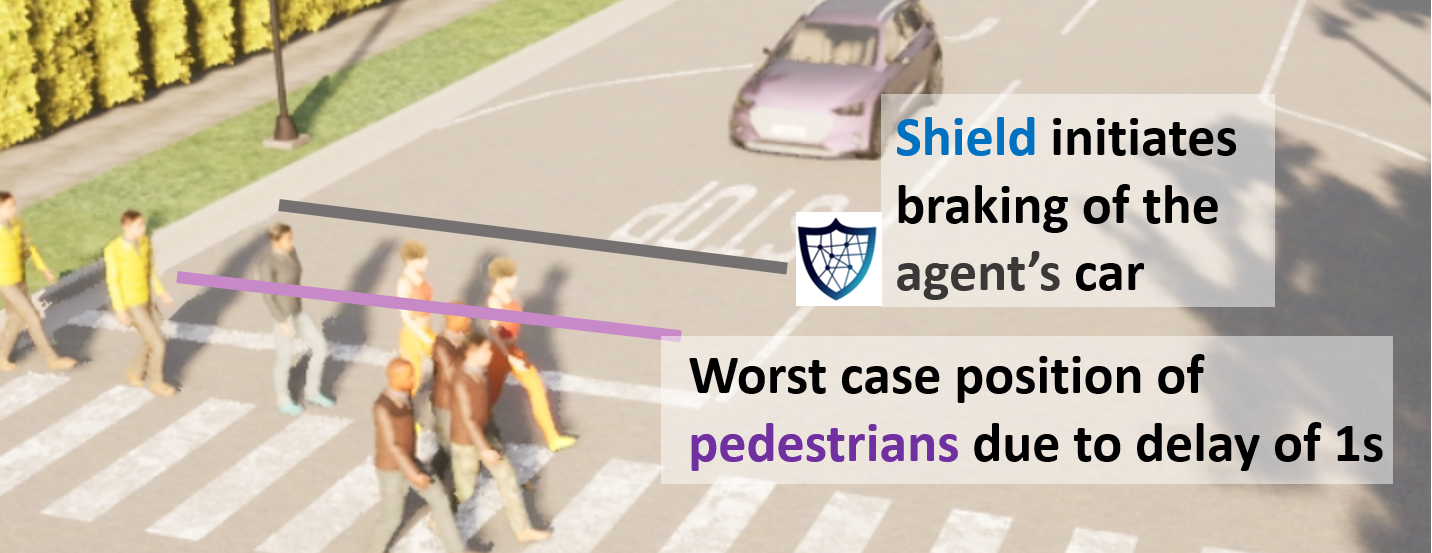}
         \caption{Pedestrians at a crosswalk.}
         \label{fig:carlascreenshot}
     \end{subfigure}
    \caption{Screenshots of the \textsc{Carla} simulator.}
    \label{fig:delayed-shields-carla-screenshot}
\end{figure}

\begin{table}[b]
\centering
{
\renewcommand{\arraystretch}{1.2} 
\begin{tabular}{llllll}
\multicolumn{2}{l}{Delay (in steps)}                                & 0    & 1   & 2     & 3     \\ \hline
\multirow{2}{*}{Synthesis times (in s)}                & Car example   & 
1.5    & 13   & 48   & 167   \\ \cline{2-6} 
                                      & Pedestrian  example 
& 0.8  & 9 & 34     & 119 
\end{tabular}
}
\caption{Shield synthesis times (in seconds).}
\label{tab:computingtimes}
\end{table}

\paragraph*{Results.}
In Figure~\ref{fig:car_results}, we present the speed of the agent's car over time, alongside the occurrences of shield interventions, represented as coloured bars, for various delays measured in increments of $\Delta t = 0.5~\unit{\second}$. As anticipated, the duration of shield interference increases with larger delays.

For a delay of 0, the agent's car brakes continuously until it exits the danger zone. However, as the delay increases, the shield intervenes earlier, ensuring the car accounts for the worst-case behaviour of the other vehicle. The shield always assumes the most adverse environmental conditions, even when these conditions fail to materialise. This conservative approach explains the frequent switching between active and inactive shield states within the same execution, particularly for larger delays.

We evaluated the shields across multiple safety-critical scenarios by varying initial positions and velocities. In all cases, the shields successfully prevented collisions, demonstrating their robustness. Table~\ref{tab:computingtimes} provides the synthesis times required to compute the shields. Each delay step listed in Table~\ref{tab:computingtimes} corresponds to an increment of $\Delta t = 0.5~\unit{\second}$.

\begin{figure}
     \centering
     \begin{subfigure}[b]{0.52\textwidth}
         \centering
         \includegraphics[width=\textwidth]{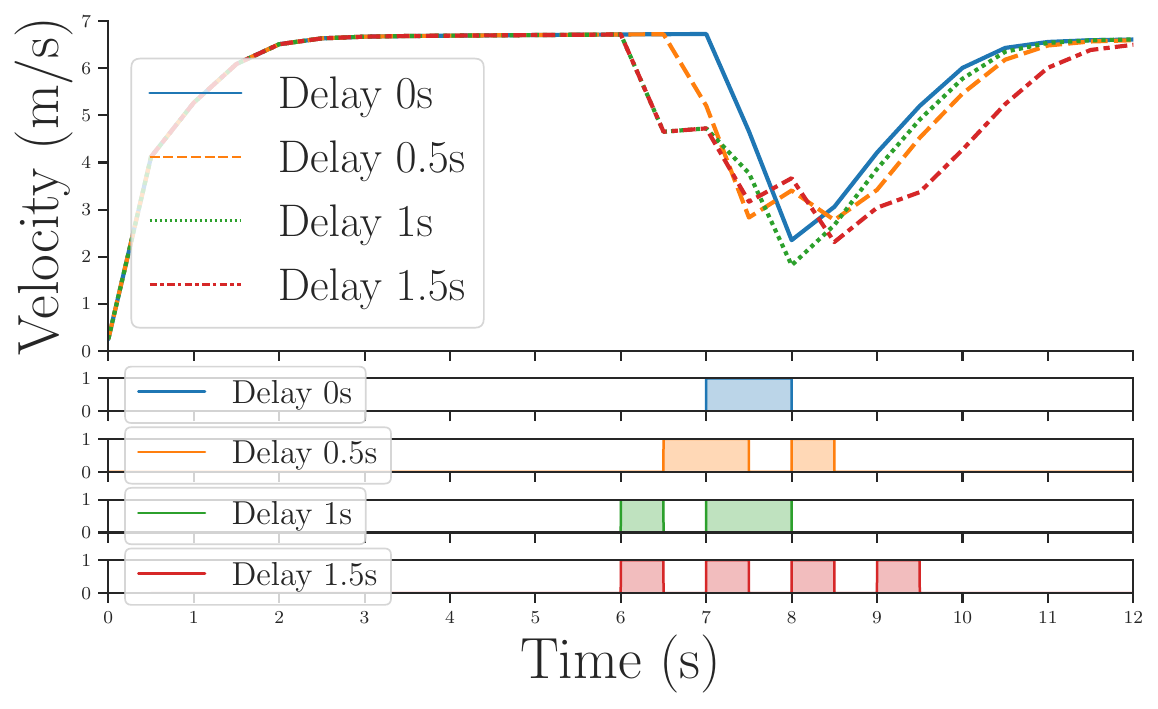}
         \caption{Activation times results.}
         \label{fig:car_results}
     \end{subfigure}
     \hfill
     \begin{subfigure}[b]{0.47\textwidth}
         \centering
         \includegraphics[width=\textwidth]{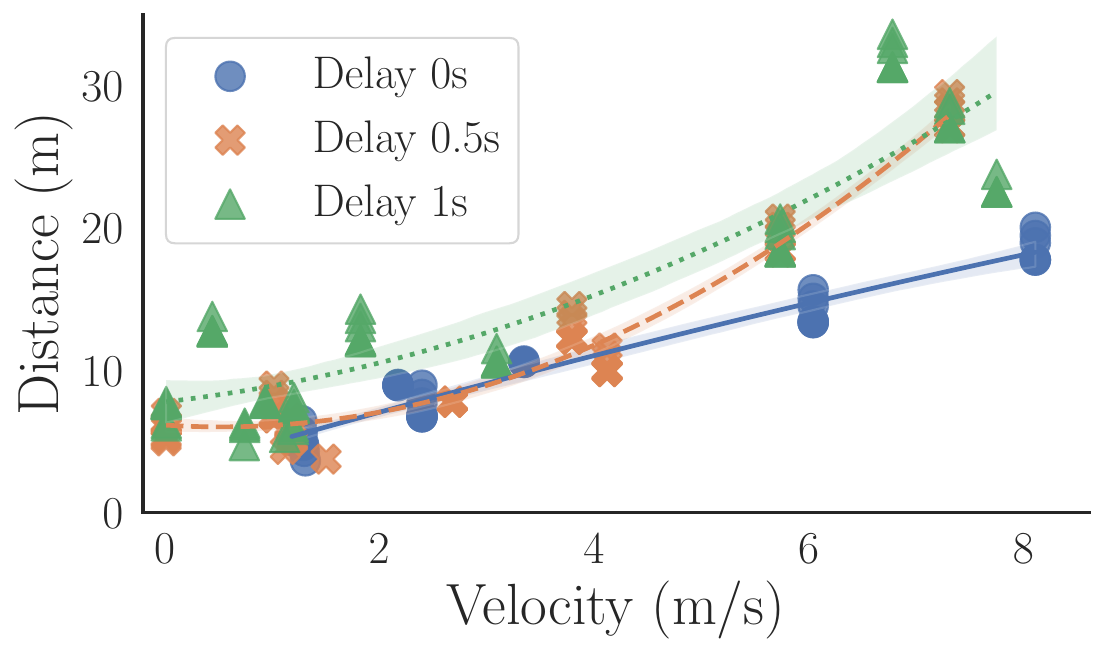}
         \caption{Activation speeds and distances.}
         \label{fig:carlacross}
     \end{subfigure}
    \caption[Experimental results on the driving simulator]
    {Experimental results on the driving simulator. 
    In these experiments, we measure when and how often the shields get activated in each scenario.}
    \label{fig:delayed-shields-carla-experiment-results}
\end{figure}

\subsubsection{Shielding against Collisions with Pedestrians}
In the second experiment, we compute shields for collision avoidance with pedestrians. Similar to before, the shields guarantee safety under delay, even under the worst possible behaviour of the pedestrians. 
A screenshot of the \textsc{Carla} simulation is given in Figure~\ref{fig:carlascreenshot}.

\paragraph*{Shield computation.}
The car, which is controlled by the driver agent, is modelled in the same manner as before. Pedestrians are controlled by the environment and only have their position as state variables. 
In our model, we assume that a pedestrian can move $1~\unit{\metre}$ in any direction within one timestep of $\Delta t=0.5~\unit{\second}$.
We consider a state to be unsafe whenever the ego car moves fast while being close to a pedestrian and the pedestrian is closer to the crosswalk than the car.
Formally
\begin{equation*}
    \mathcal{S_\mathrm{unsafe}}=\big\{(p_\mathrm{ag}, v_\mathrm{ag},
    p_\mathrm{ped}) : 
    (v_\mathrm{ag} > 2~\unit[per-mode=symbol]{\metre\per\second}
    \land |p_\mathrm{ag}-p_\mathrm{ped}|<5~\unit{\metre}
    \land p_\mathrm{ped} < p_\mathrm{ag})\big\}   
\end{equation*}

\paragraph*{Results.}
In Figure~\ref{fig:carlacross}, we illustrate the shield's interference points by plotting the distance to the pedestrian and the car's speed at the moment of each shield intervention. Because pedestrians are modelled to potentially move toward the car, the shield must account for closer pedestrian positions than those directly observed, as delays in sensing introduce uncertainty.

With larger delays, this uncertainty grows, requiring the shield to initiate braking earlier to ensure safety. This conservative approach ensures that the shield compensates for any positional ambiguity introduced by the delay. The synthesis times required for the shield computation are provided in Table~\ref{tab:computingtimes}.

In our experiments, we occasionally observed the system entering states with no available strategy due to discretisation errors. However, despite these occurrences, the safety specification was never violated. Our findings suggest that by using a sufficiently fine-grained model, these discretisation errors can be minimised to the point of being negligible, ensuring the system operates reliably under all tested conditions.

\section{Discussion}
\label{sec:delayed-shields-discussion}
\subsection{Limitations}

We have demonstrated with our experiments
that shielding can be a useful tool to ensure
safety specifications in an application so complex as autonomous driving.
However, this is still a methodology that is not ready to be implemented in today's technology. 
In this section, we discuss the main limitations that hinder the applicability of our method. 
Further research and development is needed to address these challenges.

\paragraph*{Requiring a deterministic model.} 
One of the foundational assumptions of our method is the availability of a deterministic model of the system and its environment. 
While we tackle one of the sources of uncertainty
in this chapter by proposing shields resilient to delayed information, 
this assumption may still be unrealistic.
Many real-world systems operate in inherently stochastic environments, where uncertainties arise due to sensor noise, unpredictable human behaviour, or dynamic external factors. 
Attempting to model such systems deterministically may lead to oversimplifications, resulting in shields that fail to capture the full complexity of the environment. 
In our driving simulator experiments, we circumvent this limitation by finding a rather conservative model that reflects the reality most of the time, 
and observed that this model was good enough in our experiments to enforce the safety specification.
This is an imperfect solution and requires fine-tuning the model for each application, 
making the implementation of shielding more labour-intensive.

\paragraph*{Overly conservative strategies}
Safety shields are designed to handle worst-case scenarios, ensuring that safety is maintained regardless of how adverse conditions may become.
While this is the only way to get the strong safety guarantees that shielding provides,
it can lead to overly conservative strategies that limit the agent's performance and utility excessively. 
Furthermore, the handling of delayed observations only accentuates the shield's conservativism.
Over-conservatism can also erode user trust, as the system may appear unnecessarily cautious or suboptimal in typical operational conditions. 
This trade-off between safety and performance poses a significant challenge and calls into question whether deterministic shielding can be a viable solution.

\paragraph*{Handling divergences from the model}
A fundamental limitation of safety shielding lies in its reliance on a predefined safety game model.
In practice, the real world may diverge from these models due to inaccuracies in modeling or changes in the environment over time. 
When such divergences occur, the shield is forced to react to situations that were not considered reachable in the original safety game. 
In our current approach, the shield's behaviour is undefined in these situations.

Overcoming these challenges will be essential for deploying safety shielding techniques in increasingly complex and dynamic real-world systems.

\subsection{Related Work}
Runtime enforcement is a technique in which a monitor modifies the execution of a system to comply with a specified property~\cite{FalconeP19,RenardFRJM19}. 
Shields for discrete systems were introduced in \cite{DBLP:conf/tacas/BloemKKW15}
and several extensions and applications have already been published~\cite{AlshiekhBEKNT18,DBLP:conf/atal/Elsayed-AlyBAET21,DBLP:conf/amcc/PrangerKTD0B21,0001KJSB20,ijcai2023p637}.

Chen et al.~\cite{chen2018s,Chen2020IndecisionAD}
first investigated the synthesis problem for time-delay discrete systems by the reduction to solving 
two-player safety games. 
We base our shields 
on the proposed algorithm for solving delayed safety game. 
Note that the delayed games discussed by Zimmermann et al.~\cite{KleinZimmermann15,KleinZimmermann15b,Zimmermann17a,DBLP:journals/iandc/WinterZ20}
follow a concept different from the delayed safety games considered in this paper.
In their setting, \emph{a delay is a lookahead} that grants an advantage to the delayed player: 
the delayed agent player P1 lags behind
the environment player P0 in that P1 has to produce the $i$-th action when
$i+j$ environment actions are available. 
In contrast 
to Zimmermann et al., 
we do not grant a lookahead into future inputs but consider delays in the input data. 
It was shown in~\cite{chen2018s} that the different concepts of delays could not be exchanged for each other by a swap of roles
i.e., by exchanging the players and then giving a lookahead of $j$ to the input player in order to simulate a delay of $j$ for the output player.

The notion of delay employed in this paper is different from that in timed games~\cite{DBLP:conf/cav/BehrmannCDFLL07}.
In timed games, delay refers to the possibility of deliberately
delay the next single action. 
However, both players have full and up-to-date information in timed games. 
In~\cite{DBLP:conf/pts/DavidLMNR13} 
a general framework on using UPPAAL-TIGA with partial observability was presented.
Combining both approaches to synthesise delay-resilient shields
from timed automata specifications is potential future work. 

For runtime enforcement in continuous dynamical and hybrid systems, control barrier functions~\cite{DBLP:conf/eucc/AmesCENST19} are used to verify and to enforce safety properties.
Prajna and Jadbabaie extended the notion of barrier certificates to time-delay systems~\cite{Prajna05}.
Bai et al.~\cite{BaiGJX0Z21} introduced a new model of hybrid
systems, called delay hybrid automata, to capture the continuous dynamics of dynamical
systems with delays. 
However, this work does not address the fact that 
state observation in embedded systems is \textit{de facto} in discrete time 
and that a continuous-time shielding mechanism, therefore would require adequate interpolation between sampling points, 
which could be an interesting future endeavour.

\chapter[Probabilistic Shielding]{Probabilistic Shielding for Autonomous Valet Parking}
\label{chap:foceta}
\ifthenelse{\boolean{includequotes}}{
\begin{quotation}
    \textit{Qui no s'arrisca no pisca.}
    \footnote{The one who does not risk, does not gain.}
    \hfill
    --- Catalan popular saying.
\end{quotation}
}{}

\section{Motivation and Outline}

In this chapter, we present work done in the framework of the \textsc{Foceta}~\cite{aisola23foceta} project.
One of the two use cases of the project consists of building an autonomous driving car
capable of operating safely and effectively in a parking lot. 
To achieve this goal, various partners incorporate different components for perception, planning, and movement execution. 
The work presented in this chapter is the theoretical conception and experimental evaluation of a safety element in the form of a probabilistic shield designed to prevent car collisions with pedestrians in the parking lot. 

We have already demonstrated the usability of deterministic shields in autonomous driving use cases in Chapter~\ref{chap:delayed_shields}.
In this chapter, we present an alternative approach using probabilistic shielding. 
By assuming a probabilistic model of the environment, 
the shield can consider low probability events as possible but only 
react to them when the probability of a harmful event goes over a particular threshold value. 

A \emph{probabilistic shield} (see Section~\ref{sec:rdm-probabilistic-shielding-mdps}) is an enforcer that 
overwrites control commands when the probability of violating a safety specification is 
larger than some state-dependent thresholds. 
While probabilistic shielding does not enforce safety with full reliability, 
a probabilistic safety guarantee makes the use of costly measures like emergency braking less intrusive during execution.
In our use case, the agent being shielded is an RL-based controller trained to follow 
a pre-computed trajectory along the parking lot.
Probabilistic shielding is especially well-suited for RL-based controllers, 
since, in both cases, the underlying model of the system is a Markov Decision Process (MDP).

When given a control command $a$, the shield maps the information available 
(from sensors, previous actions, etc.) 
to a state in the MDP and checks that the maximum probability
of avoiding a collision with a pedestrian after executing $a$ in the MPD is large enough. 
In case it is not, the shield overwrites the control command
appropriately.
This probability is computed using probabilistic model checking techniques~\cite{katoen2016probabilistic}.
To do so, we need an explicit description of the MDP. 

Constructing an appropriate MDP model is a challenging task. 
The MDP must faithfully represent the agent and its environment while remaining compact enough to enable feasible model-checking computations. 
However, our model only needs to capture the safety-relevant dynamics of the system.
Our approach factors the model into two components: the ego car, controlled by our agent, and the pedestrians. 
For the ego car, we build an abstraction based on the digital twin model of the Simrod vehicle~\cite{simrod}. 
To derive a tractable MDP from the digital twin~\cite{jones2020characterising,singh2021digital}, we discretise actions and states, incorporating uncertainty into transitions to account for discretisation errors.
Pedestrian behaviour is modelled with movement speeds following a normal distribution, varying the parameters for three different types of pedestrians: adults, elders, and children.

Finally, we evaluate our shielding strategy in several scenarios of the car interacting with moving pedestrians. 
These scenarios are implemented using the proprietary driving simulator \texttt{Prescan}\footnote{\texttt{https://plm.sw.siemens.com/en-US/simcenter/autonomous-vehicle-solutions/prescan}}.
The goal of our experiments is to show that shielding provides a more gentle and efficient safety layer than the coarser approach of an automatic emergency brake (AEB) based only on the expected time to collision.

\paragraph*{Contribution.}
The work presented in this chapter constitutes the first instance of implementing a 
probabilistic shielding approach in a realistic driving simulation environment.
To make it possible, we had to:
\begin{itemize}
    \item Design a suitable MDP structure for the car and the pedestrian, 
    and populate it by mimicking a digital twin model of the car and suitable 
    behavioural models of the pedestrians.
    \item Implement our shielding pipeline and integrate it into an existing agent controlling an autonomous vehicle in a simulated environment.
    \item Validate experimentally the fitness of the model as well as the effectiveness of shielding as a safety measure.
\end{itemize}

\paragraph*{Outline.} In Section~\ref{sec:prob-shields-methodology}
we explain how we build the models required for shielding,
as well as the integration of shielding into the existing controller-simulation framework.
In Section~\ref{sec:prob-shields-evaluation}, we show the results of our experimental evaluation.
Finally, in Section~\ref{sec:prob-shields-discussion}, we discuss limitations and related work.

\paragraph*{Declaration of sources.} 
This chapter is based on work performed by the author of this thesis in the 
framework of the \textsc{Foceta} project~\cite{aisola23foceta}, 
and it reuses material from currently unpublished deliverables of said project.

\section{Methodology}
\label{sec:prob-shields-methodology}

In this section, we discuss the methodology used to adapt the theoretical concept of shielding to our realistic use case. 
We discuss the shielding setting, the constructions of MDP models for the ego car
and the pedestrians, the computation of shields for the models and the integration
of the shielding module on the whole autonomous driving controller.

\subsection{Modeling Scenarios as Markov Decision Processes}

\begin{figure}
    \centering
    \includegraphics[width=0.75\linewidth]{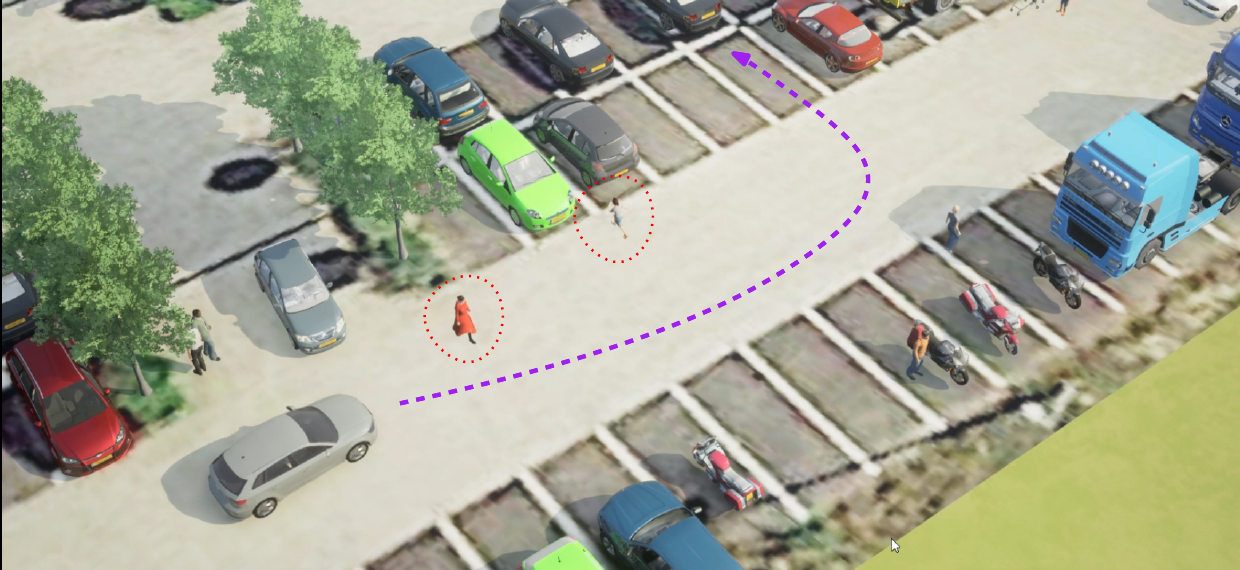}
    \caption[Screenshot of the parking lot simulation]
    {Screenshot of the \texttt{Prescan} simulation. 
    The path of the ego vehicle is marked with a dashed purple line.
    The two pedestrians that are currently being shielded, the ones that are close enough, are marked with red dotted circles.}
    \label{fig:prescan-screenshot}
\end{figure}

In Figure~\ref{fig:prescan-screenshot}, we illustrate the use case with a car being controlled by 
our shielded agent and several pedestrians that move around the parking lot.
The global car controller is broadly composed of a path-planning module and an RL agent that controls the pedals and steering wheel. 
Given an initial position and orientation of the ego car, a target position and orientation, and a map with the static elements of the parking lot, the path-planning module computes the path to follow, and the RL agent controls the pedal and steering commands to best follow said path.
While the goal of the RL agent is to follow the predefined path, 
the goal of the shield is to ensure that the car does not collide with any of the pedestrians while following the path.

Instead of having a unique shield that models interactions with all other pedestrians, we develop shields that model the interaction between the ego car and a single pedestrian. 
For each pedestrian detected in the vicinity of the ego car, we instantiate a shield that ensures collision avoidance against that one pedestrian. 
All instantiated shields work then cooperatively: each shield computes the set of actions that are safe according to its own safety specification. 
Finally, the action proposed by the agent is checked against the intersection of all safe actions and overwritten if needed.

This approach lets us use models that are less complex, with fewer states and transitions, by not modelling pedestrian-to-pedestrian interactions.
The reduced complexity of the models permits using accurate models with low  
computational cost, as well as producing models that are easier to develop, test, and understand.

The drawback is that we have no theoretical guarantee that the intersection of all safe actions is non-empty. It is theoretically possible to have a scenario with two pedestrians in which the only way to avoid colliding with the first pedestrian is accelerating, while the only way to avoid colliding with the second pedestrian is braking. In such cases, the behaviour of the shield is undefined. 
While this is theoretically possible, we have not encountered such cases in our experiments, since braking at maximum strength is typically a safe action at relatively low speeds, regardless of the positions of the pedestrians.

\subsection{MDP Structure and State Discretisation}

We need to model the relevant dynamics of the ego car and the behaviour of pedestrians with an MDP with finite sets of states and actions.
The states are built from sensor readings,
with relevant magnitudes such as positions and velocities being continuous. 
We need a discrete representation of those readings. 
Similarly, the set actions proposed by the agent are continuous, 
so we will need to find a suitable discretisation for the action space.

\index{Markov decision process (MDP)!product MDP}

As stated before, we instantiate an individual shield for each pedestrian. 
Under the assumption that the behaviour of the pedestrian and the ego vehicle are independent, we can build the model for a shield as a product MDP
$\M = \Mcar\times\Mped$,
where $\Mcar = (\Scar, \A, \Pcar)$
is an MDP that encodes the dynamics of the ego vehicle and
$\Mped = (\Sped, \Pped)$
is a Markov chain that encodes the behaviour of the pedestrian.
For both $\Mcar$ and $\Mped$,
the transition-probability function models transitions of a fixed timestep $\Delta t$.

In the following sections, we describe $\Mcar$ 
that models the ego vehicle's dynamics, 
and $\Mped$ that models the behaviour of a pedestrian.
But before that, we need to introduce the three coordinate systems we will be using 
for building the shields.

\begin{figure}
	\centering
	\includegraphics[width=0.55\linewidth,page=1]{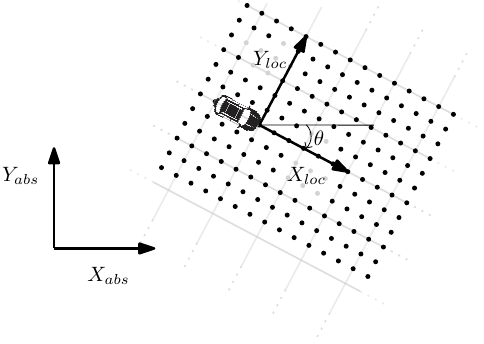}
	\caption[Representation of absolute and local coordinate systems]
    {Representation of absolute and local coordinate systems.
    The dotted grid represents the local-discrete system.}
	\label{fig:coordinates}
\end{figure}

\subsubsection{Coordinate Systems.}
Both sets of states $\Scar$ and $\Sped$ are defined in terms of positions and velocities of the ego vehicle and the pedestrian, respectively. 
To define positions and velocities, we work with three coordinate systems,
depicted in Figure~\ref{fig:coordinates}.
We call them \emph{absolute}, \emph{local-continuous} and \emph{local-discrete}.
\begin{itemize}
    \item \textbf{Absolute}. The absolute coordinate system has its origin at an arbitrary reference point $O$,
    and two orthogonal axes $X$ and $Y$. 
    This coordinate system does not move during an execution, 
    and the input from sensors is assumed to be given in absolute coordinates.
    Coordinates can be any real numbers, and values are in units of 
    metre ($\unit{\metre}$) and metre per second ($\unit{\metre\per\second}$).
    \item \textbf{Local-continuous}. 
    The local-continuous coordinate system is a reference frame that moves with the ego vehicle and is oriented in such a way that the $X$ axis is always parallel to the ego vehicle's velocity.
    In local-continuous coordinates, the position of the ego vehicle is always $(0,0)$, and the velocity is $\left(\sqrt{v_x^2 + v_y^2},0\right)$, where $v = (v_x, v_y)$ is the velocity of the ego car in absolute coordinates.
    Coordinates can be any real numbers, and values are in units of $\unit{\metre}$ and $\unit{\metre\per\second}$.
    The difference between the absolute and local-continuous systems is a translation and a rotation. 
    Therefore, magnitudes stay constant, and the formula to change from local to absolute coordinates is
    \[
    \begin{bmatrix} x\\y\end{bmatrix}_{abs} = \begin{bmatrix} \textrm{car}_x\\\textrm{car}_y \end{bmatrix}_{abs} + 
    \begin{bmatrix}
        \cos\theta & -\sin\theta \\ \sin\theta & \cos\theta   
    \end{bmatrix}\cdot \begin{bmatrix} x'\\y' \end{bmatrix}_{loc},
    \]
    where $\theta$ is the angle between the absolute $X$-axis and the velocity of the ego car, 
    as depicted in Figure~\ref{fig:coordinates}.
    The rotation matrix 
    can be inverted to change from absolute to local coordinates.
    \item \textbf{Local-discrete}. The local-discrete coordinate system is a discretization of the local-continuous system.
    This is the coordinate system used internally in the shield. 
    We define two multipliers $\mu_{\mathrm{pos}}, \mu_{\mathrm{vel}} \in \mathrm{R}$ for the magnitudes of distance and velocity, respectively.
    Magnitudes in local-discrete coordinates are therefore given in units of $\frac{1}{\mu_{\mathrm{pos}}}~\unit{\metre}$ for positions and 
    $\frac{1}{\mu_{\mathrm{vel}}}~\unit{\metre\per\second}$ for velocities. 
    Given a distance magnitude $x$ in local-continuous coordinates, 
    it is transformed to local-discrete as $X = \round{\mu\cdot x}$, 
    where $\mu$ is the corresponding multiplier for the type of magnitude $x$ and $\round{z}$ is the result of rounding $z$ to its closest integer.
\end{itemize}

Since both local-continuous and local-discrete coordinate systems share the same 
$X$-axis and $Y$-axis, 
we will call them in the following the local $X$-axis and local $Y$-axis,
without specifying discrete or continuous.

Finding adequate multipliers $\mu_{\mathrm{pos}}$ and $\mu_{\mathrm{vel}}$
is a matter of finding a suitable compromise.
Larger values provide coarser discretisations, so the models are smaller and easier to work with. However, coarse discretisation incurs larger modelling errors. 
After some trial and error, we found $\mu_{\mathrm{pos}} = \mu_{\mathrm{vel}} = 1/2$ to be a good value for our experiments.

The absolute reference is the reference frame used by the simulator. 
Global variables monitored in the experiments, such as vehicle and pedestrian positions and velocities, are logged in this reference system. 
The local continuous is the system used internally by the car sensors.
The information that arrives to the shield, such as the relative positions of pedestrians to the ego car is obtained in this reference frame.
Finally, the local-discrete is the internal system that the shield uses to represent states in the MDP and to decide on the safety of actions depending on the results given by the model checker. 
During execution, the conversion from local-continuous to local-discrete and back is necessary for every action taken, as the information needs to arrive to the shield and be given back to the car controller. 
In contrast, conversion from absolute to local-continuous is only required to interpret the results of the experiments.

\subsection{Model of the Car}
\label{sec:prob-shielding-model-of-car}

The model of the car is based on the digital twin model of the Simrod vehicle~\cite{simrod}.
The digital twin model was provided to us as a functional mock-up unit (FMU) compatible with both C++ and Simulink.
We are only allowed to interact with this unit as a black box: we can initialise certain variables such as positions and velocities, set an action profile, and let the digital twin run for a given span of time $\Delta t$. 
At the end, we read the new values of the variables of interest.
The digital twin only models the dynamics of the car on an empty road, so any interactions of the ego car with other road users have to be modelled on top of it.

\subsubsection{State Space}

We consider the position and velocity along the $X$-axis of the local-discrete coordinate system. 
Since we only shield for throttle and brake, 
we can assume that the direction of the ego vehicle stays locally unchanged.

Moreover, the MDP model used by the shield is constrained to positions no greater than a predefined limit, $x_{\max}$, and velocities not exceeding a threshold, $v_{\max}$. The value of $v_{\max}$ is set slightly above the maximum speed permitted on the road, ensuring realistic constraints. In contrast, $x_{\max}$ is a more flexible parameter, chosen to be sufficiently large to encompass all possible consequences of the ego vehicle's decisions made at the origin of the local-continuous coordinate system.

Formally, we consider $\Scar = \Xcar \times \Vcar$, where $\Xcar = [0,\dots,x_{\max}]$ represents the position of the ego car along the $X$-axis and $\Vcar = [0,\dots,v_{\max}]$ represents the velocity of the ego car.
The values in $\Xcar$ are integers and represent positions in units of $\frac{1}{\mu_{\mathrm{pos}}}~\unit{\metre}$,
while values in $\Vcar$ are integers representing velocities in units of 
$\frac{1}{\mu_{\mathrm{vel}}}~\unit{\metre\per\second}$.

\subsubsection{Action Space}
Following the convention found in the Simrod model, 
we consider actions in the range $[-1,1]$. 
For an action $a\in [-1,1]$, the sign indicates which pedal to press (brake for $a<0$, throttle for $a>0$), and $|a|$ indicates how much 
the pedal is pressed.
Since the MDP needs to work with a discrete set of actions, 
we define a set of representative actions 
$\mathcal A = \{\alpha_1,\dots,\alpha_n\}$, 
satisfying
$-1 \leq \alpha_1 < \cdots < \alpha_n \leq 1$. 

For both space and action spaces, the input received by sensors and controller is rounded to the closest value in the MDP discretisation.

For the sake of simplicity, the shield can only overwrite the throttle and brake commands and leaves the steering command untouched.

\subsubsection{Transition Probabilities}
We obtain the transition probabilities for our model by probing the Simrod digital twin model at concrete values, as we will describe in this section. 
These experiments are best thought of within the local-continuous coordinate system.

Given an initial position $(x, y)$ and initial velocity $(v_x, v_y)$ 
for the ego car, a driving command $a\in [-1, 1]$ 
and a timestep $\Delta t$, 
the Simrod digital twin provides us with a new position 
$(x+\Delta x, y+\Delta y)$ and 
velocity $(v_x +\Delta v_x, v_y + \Delta v_y)$, 
as the result of applying the driving command $c$ 
for a duration of $\Delta t$.
Figure~\ref{fig:fmu_experiment} illustrates such an experiment.
\begin{figure}
	\centering
	\includegraphics[width=0.55\linewidth, page=2]{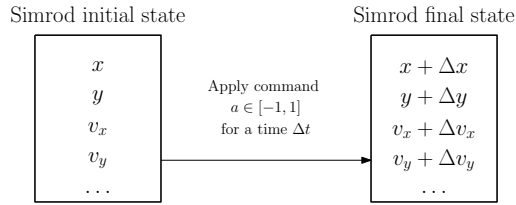}
	\caption{Overview of an experiment on the Simrod model.}
	\label{fig:fmu_experiment}
\end{figure}
In an ideal scenario, we would execute the experiment in Figure~\ref{fig:fmu_experiment} 
for each combination of initial positions, velocities and driving commands. 
However, this is unfeasible because of the complexity of the digital twin model, 
and the need to account for a discretised MDP model.
Therefore, we build the transition probabilities by adjusting them 
to experimental data obtained by the Simrod model. 
We design the following experiment:
\begin{itemize}
    \item We define $m$ reference velocities, 
    $0 < v_1 < \dots < v_{m} < v_{\max}$.
    For each reference velocity $v_j$, 
    the interval
    $rv_j = [(v_{j-1}+v_j)/2, (v_j+v_{j+1})/2)$
    is the range of velocities that have $v_j$ as its reference 
    velocity, i.e. for all $v\in rv_j$, $v_j$ is the closest 
    velocity to $v$ among the set of reference velocities.
    For the first and last ones, we include the minimum and maximum
    velocities, respectively, i.e.
    $rv_0 = [0, (v_1 + v_2)/2)$ and
    $rv_{m} = [(v_{m -1}+v_{m})/2, v_{\max}]$.
    \item Similarly, for each value in the action space $\alpha_i$,
    we define the action range $r\alpha_i = [(\alpha_{i-1}+\alpha_i)/2, (\alpha_i+\alpha_{i+1})/2)$, 
    with the special cases of 
    $r\alpha_0 = [0, (\alpha_1 + \alpha_2)/2)$ and
    $r\alpha_{n} = [(\alpha_{n -1}+\alpha_{n})/2, \alpha_{\max}]$.
    In Figure~\ref{fig:action_ranges}, we provide a visual representation of these ranges for actions. 
    The ranges for velocities are analogous.
    \item We choose a value $N$ of the number of samples used for each reference action and velocity.
    For each action range $r\alpha_i$ and velocity range $rv_j$, 
    we sample $N$ pairs $(a_i^k, u_j^k)$, for $k\in\{1,\dots, N\}$ 
    uniformly at random from $r\alpha_i\times rv_j$. 
    \item For each pair $(a_i^k, u_j^k)$, 
    we perform the experiment described in Figure~\ref{fig:fmu_experiment}, 
    setting the initial velocity in the $X$-direction to $v_x = u_j^k$,
    the remaining initial parameters to zero (i.e. $x=y=v_y=0$)
    and the driving command to $a^k_i$. 
    We label the position and velocity increases as 
    $\Delta x = \Delta x_{i,j}^k$ and $\Delta v_x = \Delta v_{i,j}^k$.
\end{itemize}

\begin{figure}
	\centering
	\includegraphics[width=0.7\linewidth,page=3]{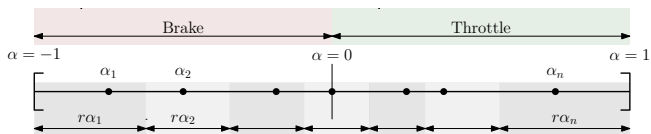}
	\caption{Scheme of reference actions and action ranges.}
	\label{fig:action_ranges}
\end{figure}

To build the transition probabilities of $\Pcar$ from the available data, we adopt the following two assumptions:
\begin{itemize}
    \item \emph{Assumption 1.} 
    The increase in velocity, $\Delta v_{i,j}^k$, primarily depends on the reference action ($\alpha_i$) and the velocity at which it is applied ($u_{i,j}^k$), and does not depend significantly on other variables such as position, previous accelerations, or the concrete applied action ($a_i^k$).
    \item \emph{Assumption 2.} 
    The increase in position, $\Delta x_{i,j}^k$ is proportional to the initial velocity ($u_{i,j}^k$), and the proportionality factor primarily depends on the applied action ($a_i^k$).
\end{itemize}

Assumption 1 implies that transition probabilities should be derived as statistical measures based on the experimental data described earlier,
with a single probability distribution for each reference action $\alpha_i$.
Assumption 2 suggests that for each reference velocity $v_j$ and each reference action $\alpha_i$, there exists a value $\gamma_{i,j}$ such that
$
\Delta x_{i,j}^k \approx \gamma_{i,j}^k u_{i,j}^k
$
provides a good approximation. 
To determine $\gamma_{i,j}$, we minimise the error using the following optimisation criterion. 
For each $i\in [1,\dots, n]$ and $j\in[1,\dots, m]$:
\begin{equation}
\label{eq:thegammaeq}
    \gamma_{i,j} = \underset{\gamma\in\RR}{\arg\min} \sum_{k=1}^N 
    \left(\Delta x_{i,j} - \gamma u_{i,j}^k\right)^2.    
\end{equation}

With all this data, we compute the transition probabilities as follows.
For each $s=(X,V)\in \mathcal \Scar$ and action $\alpha_i\in \Act$, 
let $v_j$ be the corresponding reference velocity to $V$, i.e.,
the only $j$ for which $V/\mu_{\mathrm{vel}}\in rv_j$. 
Then for each $s' = (X+\Delta X, V+ \Delta V)$ such that $\Delta X = \round{\gamma_{i,j}V}$:
\begin{equation}\label{eq:transition_probability}
    \Pcar\big(s, \alpha_i, s'\big) = 
    \frac{1}{N} \#\left\{k\in [1,\dots, m] \::\: \round{\Delta v_{i,j}^k\cdot \mu_{\mathrm{vel}}} = \Delta V\right\}.
\end{equation}
For any $s'$ such that $\Delta X \neq \round{\gamma_{i,j}V}$, 
the transition probability is $\Pcar(s,\alpha_i,s') = 0$.
In Equation~\eqref{eq:transition_probability}, 
the transition probability for a given velocity increase $\Delta V$ is 
defined as the relative frequency of the increase in velocity that 
gets discretized into $\Delta V$.

\subsection{Model of the Pedestrian}

The behaviour of the pedestrian is modelled by a Markov chain
$\Mped = (\Sped, \Pped)$, 
in which the states $\Sped = \Xped \times \Yped$
represent the position of the pedestrian relative
to the same coordinate origin as the car in the local-discrete system.
The probability transition function $\Pped$ determines how the pedestrian behaves in a stochastic manner. 

At each timestep, the pedestrian moves in the $X$-axis and the $Y$-axis following two independent Gaussian distributions. 
The distributions are centred at $0~\unit[per-mode=symbol]{\metre\per\second}$ so that the pedestrian is equally likely to move in any direction. 
The standard deviation $\sigma_{\mathrm{ped}}$ indicates how erratic the movement of the pedestrian tends to be. 
We set $\sigma_{\mathrm{ped}} = 2~\unit[per-mode=symbol]{\metre\per\second}$ for adults, $\sigma_{\mathrm{ped}} = 1~\unit[per-mode=symbol]{\metre\per\second}$ for elders and $\sigma_{\mathrm{ped}} = 3~\unit[per-mode=symbol]{\metre\per\second}$ for children. 
With this model, the average speed of a pedestrian is $\sqrt{2} \sigma_{\mathrm{ped}}$.

A limitation of this model is that pedestrians, as we model them, are non-inertial, i.e., their velocity at one step does not influence their velocity at the next step.
While inertial pedestrians would certainly be more realistic, the issue is mitigated by the fact that the pedestrian's velocities are small.

\subsection{Shield Computation}
A safety specification is given in the form of a set of states 
$S_{\mathrm{crash}}\subseteq \Scar\times\Sped$ 
representing collisions, 
a safety threshold $\lambda\in(0,1)$, and a bounded horizon $k\in \NN$.
Given the models $\Mcar$ and $\Mped$ and a safety specification,
we use \textsc{Tempest}~\cite{tempest} to compute a shield that enforces 
the safety specification, as described in Equation~\eqref{eq:property-of-prob-shields}.
For each tuple $(s_{\mathrm{car}}, s_{\mathrm{ped}}, a) \in \Scar \times \Sped \times \A$,
\textsc{Tempest} determines whether executing action $a$ at
$(s_{\mathrm{car}}, s_{\mathrm{ped}})$ is safe according to the specification.
An action $a$ is considered safe from state $s$ if
\begin{equation*}
 \PP^\M_{\max}\left(\Avoid_{\leq k}(s, a, S_{\mathrm{crash}})\right) \geq \lambda\cdot
 \PP^\M_{\max}\left(\Avoid_{\leq k}(s, S_{\mathrm{crash}})\right).  
\end{equation*}
\textsc{Tempest} produces a lookup table that specifies, for each state,
a safe alternative action to replace any unsafe action.
This lookup table constitutes the shield.


\begin{figure}
\centering
\begin{subfigure}[b]{0.48\textwidth}
         \centering
         \includegraphics[width=\textwidth]{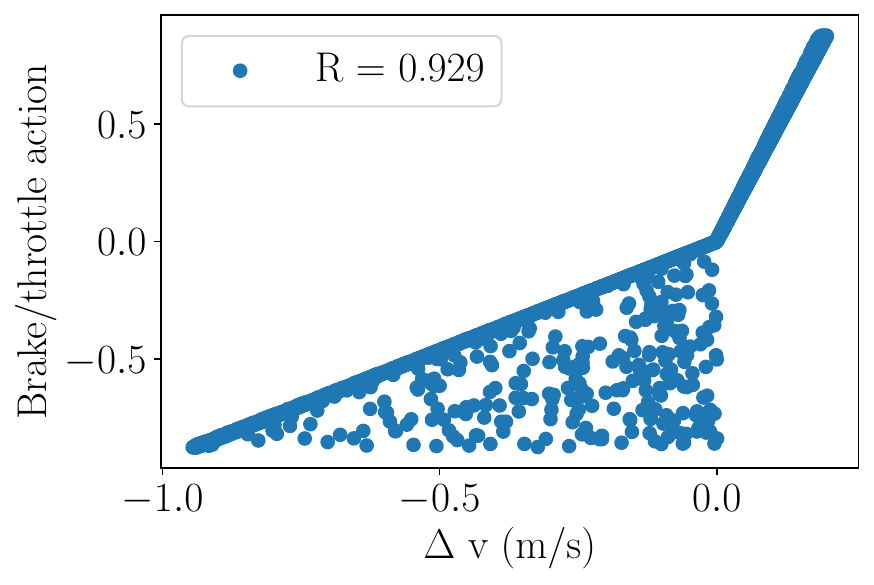}
         \caption{Scatter plot of action $(\alpha)$ vs. $\Delta v$.}
         \label{fig:alpha-deltav}
     \end{subfigure}
    \hfill
     \begin{subfigure}[b]{0.48\textwidth}
         \centering
         \includegraphics[width=\textwidth]{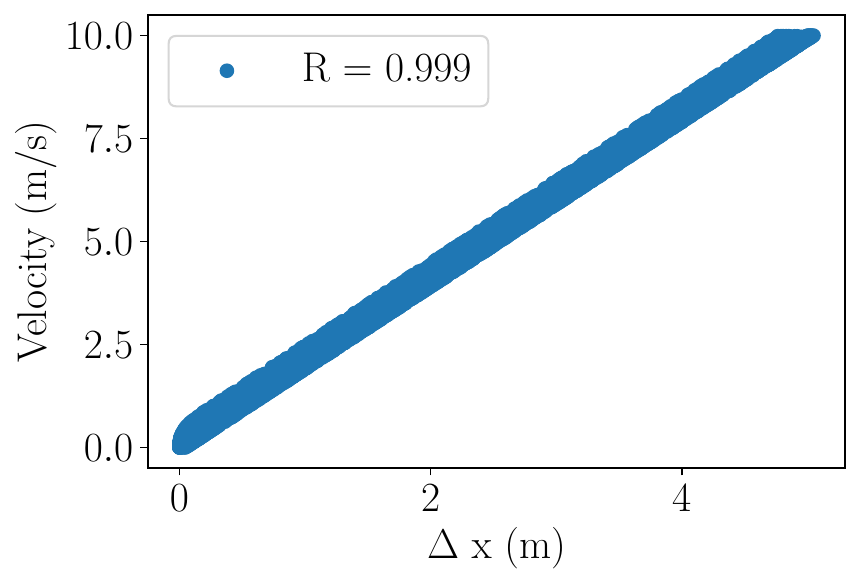}
         \caption{Scatter plot of velocity $(v_x)$ vs. $\Delta x$.}
         \label{fig:deltax-deltav}
     \end{subfigure}
        \caption[Scatter plots to validate the MDP model of the car]
        {Scatter plots to validate $\Mcar$.
        The high correlation factor ($R$) in both cases validates the assumptions on probabilities for the ego car model.
        In particular, the high correlation between the chosen action and the increase in velocity (a) validates Assumption 1, 
        while the high correlation between velocity and $\Delta x$ (b) validates Assumption 2.
        }
        \label{fig:delta-vs-delta}
\end{figure}

\section{Experimental Evaluation}
\label{sec:prob-shields-evaluation}

Our experimental evaluation pursues two primary objectives. First, we empirically validate the assumptions underlying the construction of the car MDP derived from the digital twin FMU model. Next, we assess whether the integrated shields can effectively prevent collisions with pedestrians and determine if their approach outperforms the automated emergency braking system.

\subsection{Validation of the Car Model}

In Section~\ref{sec:prob-shielding-model-of-car} we describe 
a method to obtain experimental data from the Simrod model in a structured way, 
that we then use to build our MDP model of the car. 
The transition probabilities are built from the data, taking two assumptions 
on the dependency of the increases in velocity and position 
($\Delta v$ and $\Delta x$) on the actions applied and the current velocity.

We can validate experimentally the assumptions made to build this model 
by checking the correlations in the data obtained from the experiment described.
In Figure~\ref{fig:delta-vs-delta} we show that both key correlations between $\Delta v$ and $\alpha$ (Assumption 1), 
and between $\Delta x$ and $v_x$ (Assumption 2) are very high.
The data comes from performing the experiment for reference velocities 
$v = [0,1,2,\dots,10]$ m/s, action set $\mathcal A = [-0.75, -0.5, -0.25, 0, 0.25, 0.5, 0.75]$,
and a number of samples $N=100$.
 The high correlation factor ($R$) in both cases validates the assumptions on probabilities for the ego car model.

In Figure~\ref{fig:alpha-deltav} 
there are some data points that look a bit odd for brake actions $(\alpha <  0)$, 
where the decrease in velocity $-\Delta v$ is smaller than expected.
These data points correspond to individual experiments where the initial velocity is already very small, 
so that the brake action applied is more than enough to fully stop the car, 
even with a deceleration smaller than typical for such action.
We also observe a steeper curve in the throttle range than in the brake range.
This indicates that, for this car, 
the deceleration produced by the brake pedal is more potent than the acceleration
produced by the throttle pedal at the same level. 
This is a standard safety feature in automobiles.

\begin{figure}
    \centering
    \includegraphics[width=\linewidth]{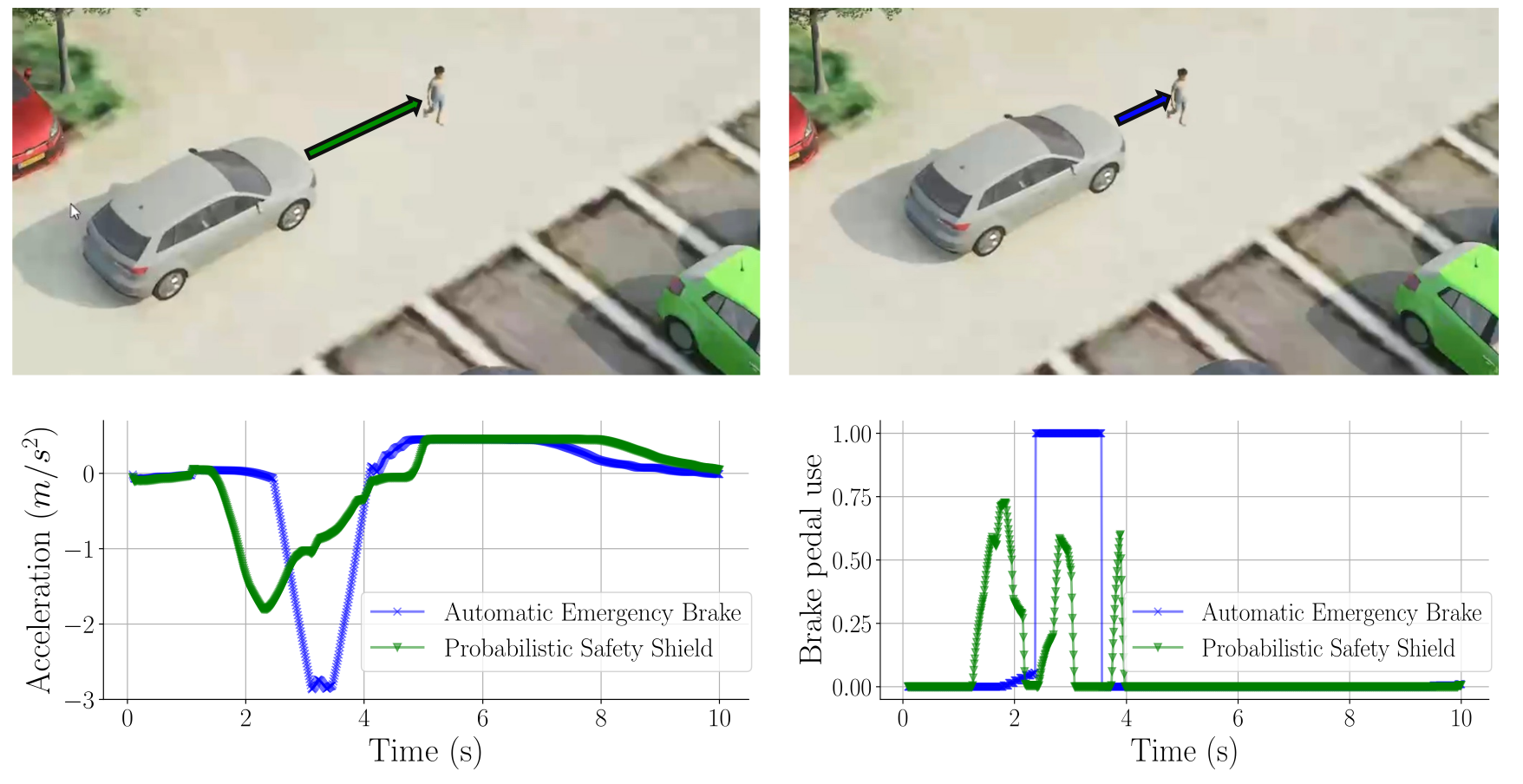}
    \caption[Probabilistic shielding in autonomous valet parking example]
    {Example of the advantage of shielding with respect to an automated emergency brake, avoiding the collision in a smoother and more efficient way.}
    \label{fig:prob-shields-experiment-figure}
\end{figure}

\subsection[Safety Shielding vs. AEB]
{Safety Shielding vs. Automatic Emergency Brake}
To evaluate the performance of the shield, 
we integrate it as part of the agent controlling the ego vehicle developed by several partners in the \textsc{Foceta} project.

In evaluating the performance of the safety shield, we focus on how effective the shield is in enforcing the safety specification. We also assess whether the shield is an improvement with respect to the AEB in terms of efficiency, perceived safety by other road users, and comfort of the passengers. 
Figure~\ref{fig:prob-shields-experiment-figure}
illustrates a case where the shield mitigates collision risks more efficiently and earlier than the standard emergency brake, resulting in lower usage of the brake pedal and more gentle deceleration.
To evaluate the performance of the safety shield in a systematic way, we created a scene in the AVP scenario with fixed initial and goal positions and added several pedestrians. We produced 20 configurations of the pedestrians’ initial positions and moving patterns by random sampling the pedestrians' initial positions and velocities. We executed the scene for a fixed timespan of 20 seconds, having (a) only the shield as a safety enforcer, (b) only the emergency brake as a safety enforcer and (c) both the shield and the emergency brake together. In the latter case, whenever both systems propose an enforcement action, the emergency brake takes priority over the shield. We tested the following metrics.

\begin{itemize}
    \item \textbf{Effectiveness in avoiding collisions.}
    \begin{itemize}
        \item \emph{Distance to pedestrian.}
        The average distance from the front of the car to the pedestrian. 
        Larger values indicate increased safety.
        \item \emph{AEB activation.} The percentage of time in the 20-second experiment where the automated emergency brake is active. 
    \end{itemize}
    \item \textbf{Efficiency in driving.}
    \begin{itemize} 
        \item \emph{Brake pedal.} The average value of the brake pedal signal. Recall that both pedals have a signal from 0 -- not activated -- to 1 -- fully activated.
        \item \emph{Throttle pedal.} Use of the throttle pedal, analogous to the brake pedal. 
        A higher value for either pedal metric suggests a driving style that may accelerate the wear and tear on the vehicle.
        \item \emph{Distance to goal.} Average distance to the goal position during the experiment. Lower values indicate increased efficiency in reaching the goal.
    \end{itemize}
    \item \textbf{Comfort.}
    \begin{itemize}
        \item \emph{Acceleration.} Average value of the acceleration of the ego car. The average is taken in absolute value. 
        Lower values indicate increased efficiency and comfort.
        \item \emph{Jerk.} Average value of the jerk felt by the ego car, that is, 
        the variation in acceleration. 
        This is a standard measure of comfort.
        \item \emph{Time to collision.} The hypothetical time that it would take to collide with the closest pedestrian if the car would maintain its current speed and trajectory. 
        It is a measure of perceived safety by other road users.
    \end{itemize}
\end{itemize}

\begin{table}
\centering
{
\renewcommand{\arraystretch}{1.2} 
\begin{tabular}{l|ccc}
    Test & Only AEB & Both & Only shield \\ 
    \midrule
    Distance to pedestrian ($m$) & 
    $\bm{4.20} \pm 1.00$ & $3.95\pm 0.86$ & $4.07\pm 0.96$ \\
    AEB activation ($\%$) & 
    $19 \pm 11$ & $\bm{8} \pm 8$ & --- \\
    Brake pedal (avg. use, $0$ to $1$) &
    $0.39 \pm 0.22$ & $0.49 \pm 0.26$ & $\bm{0.29} \pm 0.27$ \\
    Throttle pedal (avg. use, $0$ to $1$) &
    $0.51 \pm 0.21$ & $0.46 \pm 0.24$ & $\bm{0.40} \pm 0.18$ \\ 
    Distance to goal ($m$) &
    $13.2 \pm 4.9$ & $14.2 \pm 4.4$ & $\bm{12.8} \pm 5.2$ \\
    Acceleration ($m/s^2$) & 
    $0.49 \pm 0.21$ & $0.44 \pm 0.18$ & $\bm{0.36} \pm 0.20$ \\
    Jerk ($m/s^3$) & 
    $1.00 \pm 0.80$ & $0.93 \pm 0.86$ & $\bm{0.65} \pm 0.65$ \\
    Time to collision ($s$) &
    $4.2 \pm 1.8$ & $\bm{4.4} \pm 1.6$ & $4.0 \pm 2.0$ \\    
\end{tabular}
}
\caption[Quantitative analysis of probabilistic shielding]
{Quantitative analysis of probabilistic shielding. 
Marked in boldface the best result for each metric on average.}
\label{tab:prob-shielding-results}
\end{table}

In Table~\ref{tab:prob-shielding-results}, we present the results we obtained. For each metric being measured, we provide mean and standard deviation across all our experiments.
We do not include the number of collisions as the scenarios are designed in such a way that there would be a collision, but the safety mechanism (be it the shield, the AEB, or both) has to act to prevent it.

In terms of effectiveness, we can see that the three methods show a similar performance in terms of maintaining a safe distance with respect to the closest pedestrian, and we see that when the shield is active, the use of the emergency brake is down by half. 
In terms of driving efficiency, we see that the shield tends to produce less use of both the brake and throttle pedals while maintaining a low distance to the goal. 
In our results, however, we do observe that having both enforcing systems together produces higher use of the brake pedal and higher distance to the goal, suggesting that the resulting controller may be overly conservative.
In terms of comfort, we observe mainly a notable difference in jerk, where shielding significantly reduces the discomfort to the passengers due to high jerk, associated with a harsh use of the brake and throttle pedals. 
The data from acceleration supports this claim as well, albeit in a lower magnitude. Time to collision proves to be very similar across the board, with the shielded controller showing a slight reduction.

\section{Discussion}
\label{sec:prob-shields-discussion}

\subsection{Limitations}
\paragraph*{Probabilistic safety guarantees.}
The approach towards safety using probabilistic shielding mitigates two of the main concerns discussed in 
the previous chapter with regard to deterministic shields,
namely the requirement of a deterministic model of what are sometimes inherently stochastic phenomena, and the worst-case scenario guarantees generating overly conservative controllers.
This step up is made available at the price of relaxing the safety guarantees.
This relaxed specification is also somewhat unintuitive, 
as we have seen in Example~\ref{ex:MDP-shielding-counter},
which can work towards eroding the trust of the user in the runtime enforcement method.

\paragraph*{Model size and control variables.}
Discretising both the observation and action spaces inevitably introduces errors in the model. However, this trade-off is necessary to keep the model size manageable, enabling the use of probabilistic model-checking methods with reasonable resource consumption.
This affects, in our case, both the car and the pedestrian models.
For the pedestrian, a richer model of their behaviour,
distinguishing behaviour modes depending on their context
or having a more fine-grained account of their velocities would make the model 
significantly more useful.
For the car, allowing a larger model would allow us to introduce steering as a control variable to be shielded. 
While steering is not required to enforce safety in our use case,
some pedestrians can be more efficiently avoided by a gentle steer than the
use of the brake pedal.

\subsection{Related Work}
Probabilistic shielding in MDPs was first introduced in~\cite{0001KJSB20},
and has been extended to partially observable MDPs~\cite{Carr2022}.
To the best of our knowledge, probabilistic shielding on MDPs 
has not been previously used for realistic self-driving use cases.

\paragraph*{Probabilistic model checking tools.}
Several tools implement probabilistic model checking to verify the safety of RL agents. 
COOL-MC~\cite{coolMC} takes a Gymnasium~\cite{towers2024gymnasium} environment and an MDP model as inputs, querying the agent's decisions across all MDP states and using the resulting Markov chain to verify a user-defined property. 
MoGym~\cite{gros2022mogym} converts a user-provided MDP into a Gymnasium-compatible environment, enabling RL policy training and statistical model checking through policy queries.
Unlike these verification-based approaches, shielding does not verify the agent's policy but ensures its correct execution at runtime. 
The most widely used tools for probabilistic model checking include \textsc{Storm}~\cite{hensel2022probabilistic}, \textsc{Prism}~\cite{kwiatkowska2011prism}, 
\textsc{Modest}~\cite{HartmannsH14},
and PET~\cite{meggendorfer2024}. Our work utilises \textsc{Tempest}~\cite{tempest}, a fork of \textsc{Storm} specifically designed to synthesise shields for Markov decision processes and stochastic multiplayer games. 

\paragraph*{Shielding methods for autonomous driving.}
RL has been one of the main methods to develop autonomous driving agents~\cite{kiran2021deep,Pan2017VirtualDriving}.

There is further work on probabilistic safe RL methods that fit in the shielding framework, 
even though they do not use the same formalism of model checking on MDPs. 
Most of this work focuses on collision avoidance and safe driving~\cite{Bouton2019ReinforcementDriving, Krasowski2020SafePrediction, Lin2024Safety-awareDriving, He2023Fear-Neuro-InspiredDriving, Saxena2019DrivingLearning}. 
Shielding methods for RL have been used to optimise the navigation path of a self-driving car~\cite{haritz2024enhancing, vu2024validation} or a platoon of self-driving cars~\cite{brorholt2024compositional}.
Moreover, they have also been used for vehicle trajectory tracking control tasks~\cite{xuan2022sem}, as well as to optimise self-driving car fuel consumption during traffic congestion~\cite{Cheng2019End-to-EndTasks}.
Most of the work that integrates any type of shielding for self-driving car applications has been tested in simulation \cite{Cheng2019End-to-EndTasks, brorholt2024compositional, haritz2024enhancing, xuan2022sem, vu2024validation}.
There is also recent work implementing autonomous driving capabilities in car scale prototypes such as the F1Tenth~\cite{Okelly2020F1TENTH:Learning} to validate the proposed solutions~\cite{Kochdumper2023ProvablyZonotopes}.


\chapter[Enforcing Fairness Properties]{Fairness Shields: Enforcing Fairness Properties for Bounded and Periodic Horizons}
\label{chap:fairness}
\ifthenelse{\boolean{includequotes}}{
\begin{quotation}
    \textit{Jeder nach seinen Fähigkeiten, jedem nach seinen Bedürfnissen.}\footnote{From each according to their ability, to each according to their needs.}
    \textcolor{white}{a} \hfill
    --- Popular socialist slogan.
    \footnote{While it was Karl Marx who most popularized this saying, it was a common slogan within the socialist movement of the XIX century, and its origin is still a disputed fact.}
\end{quotation}
}{}

\section{Motivation and Outline}
\label{sec:fairness-motivation-outline}

With the rise of machine learning (ML) in human-centric decisions, such as banking and college admissions, concerns about bias based on protected attributes like gender and race have grown~\cite{dressel2018accuracy,obermeyer2019dissecting,scheuerman2019computers,liu2018delayed,berk2021fairness}. 
Mitigating such biases is a crucial and active research area in AI.

Most bias prevention methods rely on \emph{design-time} interventions, such as pre-processing training data~\cite{kamiran2012data,calders2013unbiased}, modifying loss functions~\cite{agarwal2018reductions,berk2017convex} --- known as \emph{in-processing methods} ---, or post-processing decisions with calibrated output functions~\cite{hardt2016equality,caton2020fairness}. 
We introduce \emph{fairness shielding}, the first \emph{run-time} intervention method to safeguard fairness in deployed decision-makers.

Fairness shields address fairness in \emph{sequential} decision-making, where observations come one after the other, and decisions have to be made without knowledge of the following observations.
While fairness has traditionally been studied in a history-independent way, the sequential setting better models real-world decisions~\cite{zhang2021fairness}. 
Prior work mostly focuses on fairness over the long run in unbounded horizons~\cite{hu2022achieving}, but \emph{finite-horizon} and \emph{periodic} fairness --- evaluating fairness over fixed timeframes --- better align with real-world regulatory assessments~\cite{pmlr-v235-alamdari24a}. Our fairness shields enforce these fairness guarantees by monitoring decisions and intervening only when necessary. 

Figure~\ref{fig:shield-schematic} illustrates fairness shielding. Given a predefined fairness criterion and time horizon, the shield observes the protected attribute, classifier (agent) recommendation, and the cost of altering that recommendation. The final decision ensures fairness while minimizing intervention costs, with costs either specified by the decision-maker or assumed constant.

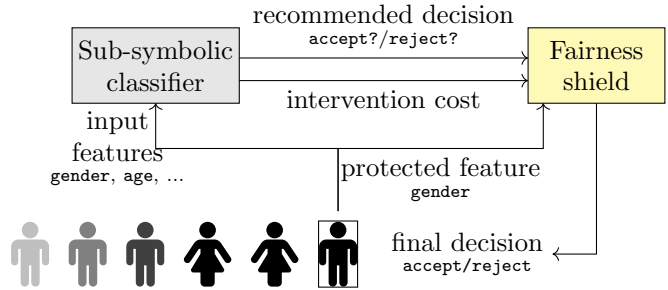
\begin{figure}[t]
    \centering
    \begin{tikzpicture}
        \draw[fill=black!10!white]   (-1,0)   rectangle   (1.2,1)   node[pos=0.5, align=center]    {Sub-symbolic\\classifier};
        \draw[fill=yellow!35!white]   (5,0)   rectangle   (6.8,1) node[pos=0.5,align=center]  {Fairness\\ shield};

        \draw[->]   (1.2,0.3)   --  node[below,align=center]    {intervention cost}  (5,0.3);
        \draw[->]   (1.2,0.6)   --  node[align=center,above]    {recommended decision\\[-0.15cm] {\scriptsize \texttt{accept?}/\texttt{reject?}}}    (5,0.6);

        \node   (a)  at   (2.5,-2)   {\fbox{\Huge\faMale}};
        \node   (b) [left=0.01cm of a]    {\Huge\faFemale};
        \node  (c) [left=0.01cm of b]   {\Huge\faFemale};
        \node[black!75!white] (d) [left=0.01cm of c]    {\Huge\faMale};
        \node[black!50!white] (e) [left=0.01cm of d]    {\Huge\faMale};
        \node[black!25!white] (f) [left=0.01cm of e]    {\Huge\faMale};

        \draw[->]   (a)  -- (2.5,-0.6)   --  (0.1,-0.6) --  (0.1,0) node[align=center]    at    (-0.4,-0.6)  {input\\ features\\[-0.15cm] {\scriptsize \texttt{gender},  \texttt{age}, ...}};
        \draw[->]   (a) --  (2.5,-0.6)  -- node[below,align=center]  {protected feature\\[-0.15cm] {\scriptsize \texttt{gender}}} (5.2,-0.6)  --  (5.2,0);

        \node[align=center] (x)   at  (4.2,-2)  {final decision\\[-0.15cm] {\scriptsize \texttt{accept/reject}}};
        \draw[->]   (5.9,0)   --  (5.9,-2)    --  (x);
    \end{tikzpicture}
    \caption{The operational diagram of fairness shields.}
    \label{fig:shield-schematic}
\end{figure}

\begin{example}[Running example - Bilingual team]
\label{ex:running example}
    Consider the process of assembling a customer service team for a company in a bilingual country, where language $A$ and $B$ hold both official status.
    Because of the nature of the task, it is essential to maintain a balanced representation of native speakers of both languages.
    To achieve this, the company enforces a policy requiring that the difference between the number of employees proficient in each language must not exceed $20\%$ of the total team size. 
    The hiring process operates within a bounded time horizon, with a fixed number of $T$ candidates to be screened.
    Candidates apply sequentially, and decisions about each applicant must be made before considering future candidates. 
    Suppose the company uses an ML model to screen candidates, which is designed without considering linguistic balance, as it is irrelevant in other regions where the company operates, and is biased towards language $A$ candidates.
    Relying solely on this ML model's recommendations could lead to an unbalanced team composition.
    The recruitment team could follow the ML models' recommendation until achieving a balanced team becomes impossible, and then hire some language $B$ candidates.
    This would create a situation in which 
    many qualified language A candidates might have to be rejected. 
    Furthermore, it could also prolong the process of finding suitable language B candidates. 
    Even if the ML model aims for long-term workforce balance, an influx of strong candidates from one language group could still skew the team's composition.
    
    A fairness shield can be deployed, which will monitor and intervene in the decisions at runtime to guarantee that the final team is linguistically balanced as required while keeping the deviations from the decision-maker's at a minimum.
\end{example}

\paragraph*{Computation of fairness shields.} 
Fairness shields are computed by solving bounded-horizon optimal control problems, which incorporate a \emph{hard fairness constraint} and a \emph{soft cost constraint} designed to discourage interventions. 
For the hard fairness constraint, we consider the empirical variants of standard group fairness properties, like demographic parity and equal opportunity. We require that the \emph{empirical bias remains below a given threshold} with the bias being measured either at the end of the horizon or periodically. This hard constraint corresponds to the shield being ``correct'' with respect to a given specification (Definition~\ref{def:correctness-of-a-shield}).

For the soft cost constraint, we assume that the shield receives a separate cost penalty for each decision modification. 
The shield is then required to minimize the total expected future cost, either over the entire horizon or within each period.
The definition of cost may vary by application.
In general, the cost should be associated to the confidence in the classification, 
with high-confidence recommendations requiring high costs.

For shield computation, we assume that the distribution over future decisions (of the agent) and costs are known, either from the knowledge of the model or \emph{learned} from queries.
Fairness shields are computed through dynamic programming. 
While the straightforward approach would require exponential time and memory,
we present an efficient abstraction for the dynamic programming algorithm that reduces the complexity.

\paragraph*{Types of fairness shields.}
We propose four types of shields: (i)~\FinShield, (ii)~\StaticDP, (iii)~\StaticBAR, and (iv)~\Dyn shields. 
\FinShield is specific to the bounded-horizon problem, ensuring fairness in every run while being cost-effective. 
The other three are suited for the periodic setting, 
guaranteeing fairness under diverse assumptions on how often individuals from each group will appear in a period.
\StaticDP and \StaticBAR reuse a statically computed \FinShield shield for each period, while \Dyn shields require online re-computation of shields at the start of each period.


\paragraph*{Experiments.} 
We empirically demonstrate the effectiveness of fairness shielding on various ML classifiers trained on well-known datasets.
While unshielded classifiers often show biases, their shielded counterparts are fair in \emph{every} run in the bounded-horizon setting and in most runs in the periodic setting.
In most cases, the shielded classifiers exhibit a slightly lower classification accuracy as their unshielded counterparts. This discrepancy is more pronounced under stricter fairness conditions and less pronounced, if the classifier was already trained to be fair.

\paragraph*{Contributions.} 
The contributions presented in this work can be summarized as follows.
\begin{itemize}
    \item We formalize the concept of fairness shields, the first runtime intervention procedure for safeguarding the fairness of already deployed decision-makers.
    \item We propose an efficient algorithm for synthesizing fairness shields for finite horizons and explain how it can be extended to fairness shields in a periodic setting.
    \item We study the problem of safeguarding for periodic fairness and propose three solutions formalized in three types of shields: static-fair, static with bounded welfare, and dynamic.
    For each of the proposed solutions, we study under which assumptions they guarantee periodic fairness.
    \item We evaluate our shields with extensive experiments on several benchmark datasets, shielding ML agents trained with state of the art in-processing fairness learning methods.
    In our experiments, we show the effectiveness of our shields, validating our theoretical results and evidencing the gap between theoretical and practical guarantees.
\end{itemize}

\paragraph*{Outline.}
In Section~\ref{sec:fairness-shielding-setting} we present the formal setting and how it fits within the general reactive decision-making framework presented in Chapter~\ref{chap:reactive_decision_making}. 
In Section~\ref{sec:synthesis} we present our main algorithm to synthesize fairness shields for the finite horizon, which is later re-used for the periodic fairness setting. 
We present a general algorithm and a more efficient version for typical fairness properties.
In Section~\ref{sec:unbounded} we present diverse approaches to extend finite horizon shields to an unbounded horizon in a periodic manner. In the periodic setting we loose the strong fairness guarantees of the finite horizon setting, so we focus most of the section on results studying under which conditions fairness can be guaranteed. 
We present our experimental evaluation in Section~\ref{sec:fairness-experimental-evaluation}, and finish the chapter in Section~\ref{sec:fairness-discussion} discussing edge cases, limitations, and related work.

\paragraph*{Declaration of sources.}

This chapter is partially based and reuses material from the following source
previously published by the author of this thesis:

\cite{aaai25} \fullcite{aaai25},

\cite{arxivVersion} \fullcite{arxivVersion}.

\section{Fairness Shielding Setting}
\label{sec:fairness-shielding-setting}

In this section, we present the setting and notation elements that will be used throughout this chapter.
As illustrated in Figure~\ref{fig:shield-schematic}, the problem of fair classification in this chapter can also be interpreted in the general reactive decision-making framework. 
As we have done in previous chapters, we will use a slightly adapted notation, focusing on the relevant elements of the work presented in this chapter. In particular, in this chapter we use for the first time shields that are not minimally correct, but are rather synthesized minimizing a certain cost function.
We continue using the bilingual team-building problem as our running example to illustrate the notation elements being introduced.
We reserve Section~\ref{sec:fairness-within-rdm} to connect the formalization of this chapter with the general framework presented in Chapter~\ref{chap:reactive_decision_making}.

\subsection{Environment and Shielding Setting}
\label{sec:fairness-env-and-shielding}

\paragraph*{Data-driven classifier.}
We are given a population of individuals, described by features.
Among them, we consider one binary feature to be \emph{protected} or \emph{sensitive}. 
Typical protected features are race, gender, language, etc.
Without loss of generality, the protected feature takes values in the set $\G = \{a,b\}$, and
the population can be therefore partitioned into groups $a$ and $b$, according to the value of the protected feature.
We consider a data-driven classifier that at each step samples one individual from the population, and outputs a \emph{recommended decision} from the set $\BB=\set{1,0}$ along with an intervention cost from the finite set $\costset\subset \RR_{\geq 0}$. 
As convention, decisions ``1'' and ``0'' will correspond to ``accept'' and ``reject,'' respectively.
We assume that the sampling and classification process gives rise to a given \emph{input distribution} $\sgen\in \distrset(\X)$, where the set $\X\coloneqq (\G\times\BB\times\costset)$ is called the \emph{input space}.
The non-protected features of individuals are hidden from the input space because they are irrelevant for shielding.
We will assume that $\sgen$ is given, i.e., the shields are computed using knowledge about $\sgen$.
When doing experiments, we estimate an approximation of $\sgen$ from the available data, as we detail in Section~\ref{sec:experimental-setup}.

\begin{example}[Continuation of Example~\ref{ex:running example}]
\label{ex:running_example2}
    In the bilingual team example, an individual is represented by a tuple $(g, z) \in \G \times \mathcal{Z}$, 
    where $\G = \{a, b\}$ denotes the language in which the candidate is proficient,
    and $\mathcal{Z}$ encompasses all non-protected features relevant to evaluating a candidate's suitability for the job,
    such as years of experience, relevant education, and so on.
    For simplicity, we assume that a candidate is proficient in only one of the two languages.
    
    The company uses a classifier 
    $f \colon \G \times \mathcal{Z} \to \BB \times \costset$, 
    which outputs a preliminary decision for each candidate (accept or reject) along with a cost associated with altering that decision. 
    The cost reflects the classifier’s confidence: candidates who are clearly good or bad incur a high cost for decision changes, 
    while borderline candidates can have their decisions reversed at a lower cost.
\end{example}

\paragraph*{Shields.}
A \emph{shield} is a symbolic decision-maker that selects the \emph{final decision} from the \emph{output space} $\Y \coloneqq \BB$ after observing a given input from $\X$, and possibly accounting for past inputs and outputs.

Formally, a shield is a function $\sshield\colon \left(\X\times\Y\right)^*\times \X \to \Y$, and its bounded-horizon variants are functions of the form $\left(\X\times\Y\right)^{\leq t}\times \X \to \Y$, for a given $t$.
Following the notions introduced in Chapter~\ref{chap:reactive_decision_making}, a fairness shield is a particular case of a \emph{post-shield} (Definition~\ref{def:postshield-abstract}). In particular, its output is a concrete \texttt{accept/reject} decision, and not a set of allowed decisions.
Note that the input of a shield is $(\tau, x)$, where $\tau\in(\X\times\Y)^*$ a sequence of previous inputs and outputs, 
and $x \in \X$ is a tuple $x = (g,b,c)$ representing the last individual, for which a final decision has not yet been made, where $g\in \G$ is the group membership, $b$ is the \texttt{accept/reject} recommendation of the classifier and $c\in\costset$ is the cost of overwritting the classifier's recommendation.
We will write $\sShield$ and $\sShield^t$ to respectively denote
the set of all shields and the set of bounded-horizon shields with horizon $t$
\footnote{We define bounded shields and the set $\sShield^t$ to emphasize that our synthesis algorithm only defines the behaviour for traces $\tau$ with a length at most $t$.}.
The \emph{concatenation} of a sequence of shields $\sshield_1,\sshield_2,\ldots\in \sShield^t$ is a shield $\sshield$, such that for every trace $\tau$, if $\tau$ can be decomposed as $ \tau\tau'$ with $|\tau|=jt$ for some $j$ and $\tau'<t$, then $\sshield(\tau,x) \coloneqq \sshield_{j+1}(\tau',x)$.


\paragraph*{Sequential decision making setting.}
We consider the sequential setting where inputs are sampled from $\sgen$ one at a time, and the shield $\sshield$ needs to produce an output without seeing the inputs from the future.
Formally, at every time $i=1,2,\ldots$, we sample an input 
$x_i=(g_i, r_i, c_i)$ from $\sgen$. 
The probability of getting input $x_i$ is $\sgen(x_i)>0$. 
The shield's output at time $i$ is  $y_i=\sshield([(x_1,y_1),\ldots,(x_{i-1},y_{i-1})],x_i)$.
After applying this process of sampling input and getting the corresponding shield output for $t$ time-steps, 
the resulting finite sequence $\tau = (x_1,y_1),\ldots,(x_t,y_t)$ is called a \emph{trace} induced by $\sgen$ and $\sshield$, and the integer $t$ is called the \emph{length} of the trace, denoted as $\len{\tau}$. 
We use $\Trfeas^t_{\sgen,\sshield}$ to denote the set of every such trace. 
For every $t$, the probability distribution $\sgen$ and the shield $\sshield$ induce a probability distribution $\PP(\cdot;\sgen,\sshield)$ over the set $(\X\times\Y)^t$ as follows.
For every trace $\tau\in (\X\times\Y)^t$, 
\begin{equation}
    \PP(\tau;\sgen,\sshield) \coloneqq 
    \begin{cases}
        \prod_{i=1}^t \sgen(x_i) & \mbox{ if } \tau\in\Trfeas^t_{\sgen,\sshield}. \\
        0 & \mbox{ otherwise. }
    \end{cases}
\end{equation}
The notation 
$\PP(\tau;\sgen,\sshield)$ is to be read as ``the probability of obtaining the trace $\tau$ when sampling inputs from the input distribution $\sgen$, and applying shield outputs from $\sshield$''.
Note that $\sgen$ and $\sshield$ are \emph{parameters} of the distribution, i.e., $\PP(\cdot; \sgen, \sshield)$ is a probability distribution, while $\tau$ is the element in the sample space (denoted $\Omega$ in Section~\ref{sec:prelim-probability-theory}) that has a certain probability to be sampled.
%
%
Given a prefix $\tau$, the probability of observing the trace $\tau\cdot \tau'$, for some $\tau'\in (\X\times\Y)^*$, is $\PP(\tau'\mid \tau;\sgen,\sshield) = \PP(\tau\cdot \tau';\sgen,\sshield)/\PP(\tau;\sgen,\sshield)$.
Note that the statistical dependence of $\tau'$ on $\tau$ is due to $\sshield$'s history-dependence.

\smallskip
\paragraph*{Cost.}
Let $\tau = (x_1,y_1),\ldots,(x_t,y_t)$ be a trace of length $t$, where $x_i = (g_i,r_i,c_i)$.
At time $i$, the shield pays the cost $c_i$ if its output $y_i$ is different from the recommended decision $r_i$.
The \emph{total} (intervention) \emph{cost incurred by the shield} on $\tau$ up to a given time $s\leq t$ is 
\begin{equation}
\label{eq:cost-defining-eq}
 \cost(\tau;s) \coloneqq \sum_{i=1}^s c_i\cdot\indicator{r_i\neq y_i}.
\end{equation}
The cost incurred up to time $t$ (the length of $\tau$) is simply written as $\cost(\tau)$, instead of $\cost(\tau;t)$.
For a given time horizon $t$, we define the expected value of cost after time $t$ as 
\begin{equation}
    \label{eq:expected-cost-time-t}
  \expe[\cost;\sgen,\sshield,t] \coloneqq \sum_{\tau\in (\X\times\Y)^t} \cost(\tau)\cdot \PP(\tau;\sgen,\sshield),
\end{equation}
and if additionally a prefix $\tau$ is given, the conditional expected cost after time $t$ (from the end of $\tau$) is 
\begin{equation}
    \label{eq:expected-cost-after-tau-time-t}
  \expe[\cost\mid \tau;\sgen,\sshield,t] \coloneqq \sum_{\tau'\in (\X\times\Y)^t} \cost(\tau')\cdot \PP(\tau'\mid\tau;\sgen,\sshield).  
\end{equation}
Note that the difference between Equations~\eqref{eq:expected-cost-time-t}~and~\eqref{eq:expected-cost-after-tau-time-t} is that in the second one, the prefix $\tau$ is given as part of the trace to the shield.
One can see Eq.~\eqref{eq:expected-cost-time-t} as a particular case of Eq.~\eqref{eq:expected-cost-after-tau-time-t} when the prefix is the empty trace $\tau = \eps$.
If $\tau$ is ``very fair'', i.e., $\spec(\tau) \ll \kappa$, the cost of enforcing fairness in the next $t$ steps will be generally lower than the case where $\tau$ is on the limit of being fair, i.e., $\spec(\tau)\approx \kappa$. 
This effect is amplified with longer prefixes.

The shield $\sshield$ is an element external to the classifier. 
It takes the protected feature of the candidate and the classifier's recommendation as inputs and has the authority to issue a final \texttt{accept/reject} decision. 
If the shield's decision differs from the classifier's, the incurred cost is as specified by the classifier.
The shield's inputs are the features of candidates, the classifier's decisions, and the costs, and the input distribution is assumed to be known in advance.

Note that, from the shield's perspective,
the distribution of non-protected features is unimportant,
as these features are already processed by the data-driven classifier and summarized into a single cost value. 
By sampling individuals from the candidate pool and processing them through both $f$ and $\sshield$, 
we obtain a trace $\tau$ that records the individuals and their decisions. 
In the case of our running example,
this trace encapsulates the results of the hiring process, including the linguistic distribution of hired candidates and the total cost incurred by the shield.

\subsection{Fairness Enforcement with Minimal Cost}
\label{sec:fairness-enf-minimal-cost}

\paragraph*{Fairness.}
We model (group) \emph{fairness properties} as functions that map every finite trace to a real-valued \emph{bias} level through intermediate statistics. 
A \emph{statistic} $\mu$ maps each finite trace $\tau$ to the values of a finite set of counters, represented as a vector in $\NN^p$, where $p$ is the number of counters.
The \emph{welfare} for group $g\in\set{a,b}$ is a function $\welfare{g}\colon \NN^p\to \RR$.
When $\mu$ is irrelevant or clear from the context, we will write $\welfare{g}(\tau)$ instead of $\welfare{g}(\mu(\tau))$.
A fairness property $\spec$ is an aggregation function mapping $(\welfare{a}(\tau),\welfare{b}(\tau))$ to a real-valued \emph{bias}.
Table~\ref{tab:fairness properties} summarize how existing fairness properties, namely demographic parity (DP)~\cite{dwork12fairness}, disparate impact (DI)~\cite{feldman2015certifying}, and equal opportunity (EqOpp)~\cite{hardt2016equality} can be cast into this form.

Estimating EqOpp requires the ground truth labels of the individuals be revealed after the shield has made its decisions on them.
To accommodate ground truth, we 
introduce the set $\Z = \set{0,1}$, such that traces are of the form $\tau = (x_1,y_1,z_1),\ldots,(x_t,y_t,z_t)\in (\X\times\Y\times\Z)^*$, where each $z_i$ is the ground truth label of the $i$-th individual.
The shield is adapted to $(\X\times\Y\times\Z)^*\times\X\to\Y$, where the set $\Z$ is treated as another input space and the probability distribution $\PP(\Z=z_i\mid \X=x_i)$ is assumed to be available.

\begingroup
\renewcommand{\arraystretch}{1.1} 
\begin{table}
    \centering
    \begin{tabular}{lccc}
    \toprule
        {Name} & {Counters} & $\welfare{g}$ & $\spec$ \\
        \midrule
         Demographic parity (DP) & $n_a,n_{a1},n_b,n_{b1}$ & $n_{g1}/n_g$ & $|\welfare{a}(\tau)-\welfare{b}(\tau)|$ 
         \\
         Disparate impact (DI) & $n_a,n_{a1},n_b,n_{b1}$ & $n_{g1}/n_g$  & $\left| 
     \welfare{a}(\tau)\div\welfare{b}(\tau)  \right|$ 
     \\
         Equal opportunity (EqOpp) & $n_a',n_{a1}',n_b',n_{b1}'$ & $n_{g1}'/n_g'$ & $|\welfare{a}(\tau)-\welfare{b}(\tau)|$ 
         \\
         \bottomrule
    \end{tabular}
    \caption[Empirical variants of fairness properties]
    {Empirical variants of fairness properties: For $g\in\set{a,b}$, the counters $n_g$ and $n_{g1}$ represent the total numbers of individuals from group $g$ who appeared and were accepted, respectively. Counters $n_g'$ and $n_{g1}'$ denote the total numbers of appeared and accepted individuals whose ground truth labels are ``$1$.'' 
    If a welfare value is undefined due to a null denominator, we set $\spec = 0$.
    }
    \label{tab:fairness properties}
\end{table}

\endgroup

\begin{example}[Continuation of Example~\ref{ex:running_example2}]
\label{ex:running_example4}
    In the bilingual team example, the welfare of a linguistic group $g$ is defined as the fraction of the team proficient in language $g$. 
    A more nuanced interpretation considers the welfare of group $g$ as the fraction of accepted candidates among those proficient in language $g$, which is the empirical variant of demographic parity (DP). 
    This measure accounts for the possibility that the linguistic distribution of the population may not be evenly split. 
    If one language is more prevalent in the target population,
    the hired team should proportionally include more members proficient in that language.
    To obtain an empirical variant of equal opportunity (EqOpp), 
    we would need to assume the existence of a ground truth on whether a candidate is actually a good employee for this job.
    This can typically only be assessed after the candidate actually works for some time with the team, so it is not easy to estimate a priori. 
    In such case, the welfare of a group would be the fractions of hired candidates with respect to the \emph{actually good} candidates for each linguistic group.
\end{example}

\paragraph*{Bounded-horizon fairness shields.}
From now on, 
we use the convention that $\sgen$ is the input distribution,
$\spec$ is the fairness property, 
and $\kappa$ is the \emph{bias threshold}.
Let $T$ be a given time horizon.
The set $\sShieldFeas^{\sgen,T}$  of \emph{fairness shields over time $T$} is the set of every shield that fulfills $\spec(\cdot)\leq \kappa$ after time $T$, i.e., 
\begin{equation}
    \label{eq:bounded-horizon-def}
  \sShieldFeas^{\sgen,T} \coloneqq \set{ \sshield\in\sShield^T \mid \forall \tau\in \Trfeas^{T}_{\sgen,\sshield}\;.\; \spec(\tau) \leq \kappa}.   
\end{equation}
%
%
%
%
We now define optimal bounded-horizon fairness shields as below.

\index{shield!fairness shield in finite horizon}
\begin{definition}[Finite Horizon Shields]
\label{def:finshield}
    Let $T>0$ be the time horizon.
    A \emph{finite horizon shield} (usually abbreviated to \FinShield) is the one that solves:
    \begin{align}\label{eq:finite horizon shield}
        \sshield^* \coloneqq \argmin_{\sshield\in\sShieldFeas^{\sgen,T}} \expe[\cost;\sgen,\sshield,T].
    \end{align}
\end{definition}

Note that even if the input distribution $\sgen$ is learned and imprecise, as long as it shares the same support as the true distribution, the fairness guarantees provided by the shield remain unaffected; only the cost-optimality may be compromised.


\paragraph*{Periodic fairness shields.}
\FinShield shields stipulate that fairness be satisfied at the end of the given horizon.
However, in many situations, it may be desirable to ensure fairness not only at the end of the horizon but also at intermediate points occurring at regular intervals.
For instance, a human resources department that is required to maintain a fair distribution of employees over the course of a quarter
might also need to ensure a similar property for their yearly revision, after four quarters.
This type of fairness is referred to as \emph{periodic fairness} in the literature~\cite{pmlr-v235-alamdari24a}.
For this class of fairness properties, we define the set of $T$-periodic fairness shields as 
\begin{equation}
\label{eq:sShieldFeasPeriodic_definition}
   \sShieldFeasPeriodic\coloneqq 
\set{ \sshield\in\sShield \mid \forall m\in \NN \;.\; \forall \tau\in \Trfeas^{mT}_{\sgen,\sshield}\;.\; \spec(\tau) \leq \kappa}.
\end{equation}

Note that $\sShieldFeasPeriodic$ does not force every subtrace of length $T$, or $mT$ for some $m\in\NN$, to satisfy a certain fairness constraint.
The reader may think of the multiples of the period $T, 2T, 3T, \dots$ as ``examination dates'':
the trace will be inspected at time $mT$, and by then it has to be correct.
Therefore, subtraces of any length that do not end in an examination date may have bias values slightly over the threshold.


\index{shield!fairness shield in periodic horizon}
\begin{definition}[Optimal $T$-periodic fairness shield]
    \label{def:periodically-fair}
    Let $T>0$ be the time period.
    An \emph{optimal $T$-periodic fairness shield} is given by:
    \begin{equation}\label{eq:periodic def.}
        \sshield^* \coloneqq \argmin_{\sshield\in\sShieldFeasPeriodic}\sup_{\substack{m\in\NN\\ \tau\in \Trfeas_{\sgen,\sshield}^{mT}}} \expe[\cost\mid \tau;\sgen,\sshield,T].
    \end{equation}
\end{definition}
Equation~\eqref{eq:periodic def.} requires fairness at each $mT$-th time (measured from the beginning), and minimizes the maximum expected cost over each period. 
The existence of this minimum remains an open question. 
In Section~\ref{sec:unbounded}, we propose three "best-effort" approaches to compute periodically fair shields (under mild assumptions) that are as cost-optimal as possible.

\subsection[Relation to the Reactive Decision Making Framework]
{Fairness Shielding within the Reactive Decision Making Framework}
\label{sec:fairness-within-rdm}
The sequential input can be modelled by an environment $\Env=(\Obs,\Act,\Trans)$, with $\Obs = \X$ (what we called the input space, $\X = \G\times\BB\times\costset$), $\Act = \Y$, and a transition function $\Trans$ characterized by a single input distribution. That is, for all trace $\tau\in(\Obs\times\Act)^*$, there is a unique distribution $\sgen\in\D(\X)$ such that $\Trans(\tau) = \sgen$.

The main difference between this shielding setting and those presented in Chapters~\ref{chap:delayed_shields}~and~\ref{chap:foceta} is that correctness is not established by avoiding concrete observations states, but rather correctness depends on the whole trace via a series of counters. 
In fact, every observation has a probability that is independent from the behaviour of the agent or the shield.
Another notable difference in this case is that we are not interested in the notion of minimal interference as expressed in Definitions~\ref{def:interferenceSet}~and~\ref{def:minimal_correctness}. 
In contrast, we assign a specific weight, or cost, to each interference, and build the shield that minimizes interferences in expectation.

\section{Algorithm for Finite Horizon Shield Synthesis}
\label{sec:synthesis}

We present our algorithm for synthesizing \FinShield shields as defined in Definition~\ref{def:finshield}.
A \FinShield shield $\sshield^*$ computes an output $y=\sshield^*(\tau,x)$ for every trace $\tau\in(\X\times\Y)^{\leq T}$ and every input $x\in\X$. 
Our synthesis algorithm builds $\sshield^*$ recursively for traces of increasing length, 
using an auxiliary \emph{value function} $v(\tau)$ that represents the minimal expected cost conditioned on traces with prefix $\tau$.
To define $v(\tau)$, we generalize fairness shields with the condition that a certain trace has already occurred. 
Given a time horizon $t$ and a trace $\tau$, whose length can differ from $t$, the set of \emph{fairness shields over time $t$ after $\tau$} is defined as
\[
\sShieldFeas^{\sgen,t \mid 
\tau} \coloneqq \set{ \sshield\in\sShield^t \mid \forall \tau'\in (\X\times\Y)^{t}\;.\; \tau\tau'\in \Trfeas^{\len{\tau}+t}_{\sgen,\sshield}
\implies \spec(\tau\tau') \leq \kappa}.
\] 
Then $v(\tau)$ is given by:
\begin{equation}
\label{eq:v-def-eq}
    v(\tau) \coloneqq \min\limits_{\sshield\in \sShieldFeas^{\sgen,(T-\len{\tau}) \mid\tau}} \expe[\cost\mid \tau;\sgen,\sshield,T-|\tau|].
\end{equation}
For every trace $\tau$ and every input $x\in \X$, 
the optimal value of the shield is 
$\sshield^*(\tau,x) = \argmin_{y\in\Y} v(\tau, (x,y))$.

In Section~\ref{sec:fair-naive-recursive-solution}, we present a recursive dynamic programming algorithm for computing $v(\tau)$, whose complexity grows exponentially with the length of $\tau$.
In Section~\ref{sec:fair-synthesis-efficient}, we present show how the algorithm proposed in Section~\ref{sec:fair-naive-recursive-solution} can actually be adapted to use only the $p$ counters defining the fairness property, thus solving the synthesis problem more efficiently,
with polynomial complexity for the vast majority of fairness properties.


\subsection{Recursive Computation of the Value Function}
\label{sec:fair-naive-recursive-solution}

We compute the value function recursively, defining a trivial value for traces of length $|\tau| = T$, 
and showing how the value function for traces of lenght $|\tau| < T$ can be computed by simulating a single optimization step by the shield and using the value function for traces of length $|\tau|+1$.

\paragraph*{Base case.} Let $T$ be the time horizon and $\tau$ be a trace of length $T$.
Since the horizon has already been reached, if $\spec(\tau)\leq \kappa$, then the expected cost is zero because fairness is already satisfied and no more cost needs to be incurre.
On the other hand, if $\spec(\tau)>\kappa$, the expected cost is infinite,
because, no matter what cost is paid, fairness can no longer be achieved. Formally,
\begin{equation}\label{eq:v-basecase}
 v(\tau) = 
        \begin{cases}
            0   & \spec(\tau)\leq\kappa,\\
            \infty  &   \text{otherwise}.
        \end{cases}
\end{equation}

\paragraph*{Recursive case.}
Let $\tau$ be a trace of length smaller than $T$.
The probability of the next input being $x=(g,r,c)$ is $\sgen(x)$, and the shield decides to output $y$ that either agrees with the recommendation $r$ (the case $y=r$) or differs from it (the case $y\neq r$)---whichever minimizes the expected cost. 
When $y=r$, the trace becomes $(\tau\cdot (x,y=r))$. 
Therefore, no cost is incurred and the total cost is the same as $v(\tau\cdot(x,y=r))$. 
When $y\neq r$, the trace becomes $(\tau\cdot (x,y\neq r))$. 
Thus, the incurred cost is $c$ and the new total cost becomes $c+v(\tau\cdot(x,y=r))$. Therefore
\begin{equation}\label{eq:v-recursion}
    \! v(\tau) = \sum_{x = (g,r, c)\in\X} \sgen(x)\cdot \min\left\lbrace\begin{matrix}
        v(\tau\cdot(x,y=r)),\\
        v(\tau\cdot(x,y\neq r)) + c
    \end{matrix}\right\rbrace.
\end{equation}
Equations~\eqref{eq:v-basecase}~and~\eqref{eq:v-recursion} can be used to recursively compute $v(\tau)$ for every $\tau$ of length up to $T$, and the time and space complexity of this procedure is $\mathcal{O}(|\X\times\Y|^T)$.
%
The correctness of Equation~\eqref{eq:v-recursion} is formally proven in Lemma~\ref{lem:v-recursion}.
Before formalizing the argument, let us see an example of its inner workings.

\begin{example}[Continuation of Example~\ref{ex:running_example4}]
\label{ex:running_example5}
    Consider the task of hiring a linguistically balanced team with a horizon of $T = 50$ candidates and a target demographic parity property $\spec$ with a threshold $\kappa = 0.2$, i.e., $20\%$.
    By the end of the process, a trace $\tau$ of $|\tau| = 50$ candidates must satisfy $\spec(\tau) < \kappa$.

    Consider the following situation.
    Suppose $\tau'$ be the trace obtained after observing the first $48$ candidates, i.e., $|\tau'|=48$, and just two more candidates are going to be observed before the horizon ends.
    In $\tau'$, $24$ candidates have been observed for each language proficiency group among $A$ and $B$, and among them $12$ from group $A$ and $17$ from group $B$ have been accepted, resulting in $\spec(\tau') = |12/24 - 17/24| = 0.208 > \kappa$. This temporary violation of DP is allowed since the process is ongoing.
    Suppose a new candidate $x = (g, r, c)$ appears, with $g = B$. The classifier tentatively accepts $x$ and informs the shield that reversing this decision would incur a cost $c$.
    If the shield accepts $x$,
    the shield will be forced to reject the next candidate proficient in $B$ or accept the next candidate proficient in $A$, regardless of the cost. 
    Conversely, if the shield rejects $x$, it incurs an immediate cost of $c$ but balances the languages to a point where intervention will not be required for the next decision.
    The shield must therefore weigh its options: either incur a known cost $c$ now by rejecting $x$ or risk an unknown future cost $c'$ by accepting $x$. 
    If the candidate is exceptionally qualified, i.e., the classifier recommends acceptance with a high cost of modifying its decision, the shield might choose to accept $x$, accepting the potential risk of rejecting another well-qualified candidate proficient in $B$ in the next round.
    On the other hand, when the shield is considering a borderline candidate, it may be better to pay a small price with the current candidate and ensuring that the agent's decision will be respected for the next candidate, whatever the decision is.
\end{example}

\begin{lemma}\label{lem:v-recursion}
    Let $\sgen\in \distrset(\X)$ be a given joint distribution of sampling individuals and the output of the agent, let $\kappa>0$ be a given threshold for a fairness property $\spec$, and let $T>0$ be a time horizon.
For a trace $\tau\in (\X\times\Y)^{\leq T}$, 
let $v(\tau)$ be  the minimum expected cost after $\tau$,
formally defined as
\begin{align*}
    v(\tau) \coloneqq \min\limits_{\sshield\in \sShieldFeas^{\sgen,(T-\len{\tau}) \mid\tau}} \expe[\cost\mid \tau;\sgen,\sshield,T-|\tau|].
\end{align*}
Then for $\tau$ with length $|\tau| = T$
\begin{equation}\label{eq:v-basecase-bis}
 v(\tau) = 
        \begin{cases}
            0   & \spec(\tau)\leq\kappa,\\
            \infty  &   \text{otherwise},
        \end{cases}
\end{equation}
for $\tau$ with $|\tau| < T$
\begin{equation}\label{eq:v-recursion-bis}
    \! v(\tau) = \sum_{x = (g,r, c)\in\X} \sgen(x)\cdot \min\left\lbrace\begin{matrix}
        v(\tau\cdot (x,y=r)),\\
        v(\tau\cdot (x,y\neq r)) + c
    \end{matrix}\right\rbrace,
\end{equation}
and the shield defined as $\sshield^*(\tau, x) \coloneqq \argmin_{y\in \Y} v(\tau, (x,y))$ is an optimal fairness bounded horizon fairness shield, i.e., 
\begin{equation}
    \label{eq:fairness-bounded-horizon-shield-rec-proof}
    \sshield^* = \argmin_{\sshield\in\sShieldFeas^{\sgen,T} } \expe[\cost;\sgen,\sshield,T].
\end{equation}
\end{lemma}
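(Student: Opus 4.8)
The plan is to prove the lemma by backward induction on the quantity $T - |\tau|$, establishing simultaneously that the recursion in Equations~\eqref{eq:v-basecase-bis}~and~\eqref{eq:v-recursion-bis} correctly computes $v(\tau)$, and that the greedy shield $\sshield^*(\tau,x) = \argmin_{y\in\Y} v(\tau,(x,y))$ achieves this minimum. The key object is the set $\sShieldFeas^{\sgen,(T-|\tau|)\mid\tau}$ of shields that guarantee $\spec(\tau\tau')\leq\kappa$ for all valid continuations $\tau'$; I would keep the feasibility constraint explicit throughout so that both optimality and correctness (membership in $\sShieldFeas^{\sgen,T}$) are tracked together.

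First I would treat the base case $|\tau| = T$. Here there are no further decisions to make, so the only admissible continuation is the empty trace, and $\expe[\cost\mid\tau;\sgen,\sshield,0] = 0$ for any shield. The feasible set $\sShieldFeas^{\sgen,0\mid\tau}$ is nonempty exactly when $\spec(\tau)\leq\kappa$; when $\spec(\tau)>\kappa$ the feasible set is empty and the minimum over an empty set is $+\infty$ by convention. This yields Equation~\eqref{eq:v-basecase-bis} directly.

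For the inductive step, assume the formula and the optimality of the greedy shield hold for all traces of length $|\tau|+1$, and consider a trace $\tau$ with $|\tau|<T$. The argument has two directions. For the upper bound, I would exhibit the greedy shield and compute its expected cost: conditioning on the next input $x=(g,r,c)$, which occurs with probability $\sgen(x)$ independently of the shield's past behaviour (this independence is exactly the special structure of the fairness environment, where $\Trans(\tau)=\sgen$ for all $\tau$), the shield picks whichever of $y=r$ (incurring $0$) or $y\neq r$ (incurring $c$) leads to lower continuation value, and then behaves optimally thereafter by the induction hypothesis. Summing over $x$ with weights $\sgen(x)$ gives precisely the right-hand side of Equation~\eqref{eq:v-recursion-bis}, and one must check that the greedy choice preserves feasibility—i.e., that choosing the $\min$ never drives $v$ to $\infty$ when a finite-cost feasible continuation exists. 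For the lower bound, I would take any feasible shield $\sshield\in\sShieldFeas^{\sgen,(T-|\tau|)\mid\tau}$, decompose its expected cost by conditioning on the first input $x$, observe that after fixing $x$ and the shield's output $y$ the residual shield lies in $\sShieldFeas^{\sgen,(T-|\tau|-1)\mid\tau(x,y)}$, and apply the induction hypothesis to bound each conditional term below by $v(\tau(x,y))$. Since the shield's output for each $x$ can be chosen independently, the pointwise $\min$ over $y$ is achievable, showing no feasible shield beats the recursion.

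The main obstacle I anticipate is the careful bookkeeping at the boundary between feasibility and cost, namely handling the $+\infty$ values consistently. One must verify that the conditional-expectation decomposition of $\expe[\cost\mid\tau;\sgen,\sshield,T-|\tau|]$ into a $\sgen$-weighted sum of continuation costs remains valid even when some continuation values are $\infty$ (this holds because $\sgen(x)>0$ for all $x$ in the support, so an infinite term cannot be masked), and that a shield is feasible if and only if it avoids every branch whose continuation value is $\infty$. A secondary subtlety is confirming that the statistic $\mu(\tau\tau')$, and hence $\spec$, depends on $\tau$ and $\tau'$ only through their concatenation, so that the continuation problem after $\tau(x,y)$ is genuinely of the same form with an updated prefix—this is what legitimizes the recursive structure. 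Once these points are settled, Equation~\eqref{eq:fairness-bounded-horizon-shield-rec-proof} follows by specializing to the empty prefix $\tau=\eps$, where $v(\eps)$ equals the optimal value and $\sshield^*$ the optimal shield of Definition~\ref{def:finshield}.
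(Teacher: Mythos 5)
Your proposal is correct and follows essentially the same route as the paper's proof: both rest on the one-step decomposition of $\expe[\cost\mid \tau;\sgen,\sshield,T-|\tau|]$ obtained by conditioning on the next input $x$ and using that $\PP(x\,\sshield(\tau,x) \mid \tau;\sgen,\sshield)=\sgen(x)$ regardless of history, so that the expected cost splits into the immediate cost plus the continuation cost. The only difference is organizational — you make explicit the backward induction, the two-sided argument (greedy shield for the upper bound, arbitrary feasible shield for the lower bound), the min--sum interchange justified by the independence of choices across inputs, and the $\infty$/feasibility bookkeeping — all of which the paper's proof compresses into its closing assertion that pointwise minimization over $\sshield(\tau,x)$ yields the recursion.
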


\begin{proof}
    Consider the term to be minimized:
    \begin{align}\label{eq:local5}
        \expe[\cost\mid \tau;\sgen,\sshield,T-|\tau|] = 
        \sum_{\tau'\in (\X\times\Y)^{T-|\tau|}}
        \cost(\tau')\cdot
        \PP(\tau'\mid\tau;\sgen,\sshield).
    \end{align}
    The sum over traces $\tau'\in (\X\times\Y)^{T-|\tau|}$
    can be partitioned into a sum over inputs $x\in\X$
    and traces $\tau''\in (\X\times\Y)^{T-|\tau|-1}$,
    by taking $\tau' = x\sshield(\tau,x)\tau''$.
    The cost term is then
    \[
    \cost(\tau') = 
    \cost(x\sshield(\tau,x)\tau'') = \cost(\tau'') + \cost(x\sshield(\tau,x)).
    \]
    If $x = (g,r,c)$, then
    \[
    \cost(x\sshield(\tau,x)) = \begin{cases}
        0 \quad \mbox{ if } r = \sshield(\tau,x) \\ 
        c \quad \mbox{ otherwise. }
    \end{cases}
    \]
    The probability term is then:
    \[
    \PP(\tau'\mid\tau;\sgen,\sshield) = 
    \PP(x\sshield(\tau,x) \mid \tau;\sgen,\sshield)\cdot
    \PP(\tau''\mid \tau x\sshield(\tau,x);\sgen,\sshield)
    \]
    The value in Equation~\eqref{eq:local5} can be written as 
    \begin{equation}\label{eq:local6}
    \sum_{\tau'\in (\X\times\Y)^{T-|\tau|}}
        \cost(\tau')\cdot
        \PP(\tau'\mid\tau;\sgen,\sshield) = 
        A + B,
    \end{equation}
    where 
    \begin{equation}
    \label{eq:local7}
        A = \sum_{x\in\X
        }\sum_{\tau''\in (\X\times\Y)^{T-|\tau|-1}}
        \cost(\tau'')
        \cdot \PP(x\sshield(\tau,x) \mid \tau;\sgen,\sshield)\cdot
        \PP(\tau''\mid \tau x\sshield(\tau,x);\sgen,\sshield),
    \end{equation}
    and 
    \begin{equation}
    \label{eq:local8}
    B =\sum_{x\in\X
        }\sum_{\tau''\in (\X\times\Y)^{T-|\tau|-1}}
    \cost(x\sshield(\tau,x))\cdot  
    \PP(x\sshield(\tau,x) \mid \tau;\sgen,\sshield)\cdot
    \PP(\tau''\mid \tau x\sshield(\tau,x);\sgen,\sshield).
    \end{equation}
     Note that the term $\PP(x\sshield(\tau,x) \mid \tau;\sgen,\sshield)$ appears several times. 
    This is the probability of getting a trace $x\sshield(\tau,x)$ after having seen a trace $\tau$. 
    This is, by definition $\PP(x\sshield(\tau,x) \mid \tau;\sgen,\sshield) = \sgen(x)$.
    
    Since $\sgen(x)$
    does not depend on $\tau''$, 
    the sum in $A$ can be rearranged as 
    \begin{equation}
         A = \sum_{x\in\X
        }
        \sgen(x)\cdot
        \sum_{\tau''\in (\X\times\Y)^{T-|\tau|-1}}
        \cost(\tau'')
        \cdot
        \PP(\tau''\mid \tau x\sshield(\tau,x);\sgen,\sshield),
    \end{equation}
    and therefore
    \begin{equation}
         A = \sum_{x\in\X
        }\sgen(x)
        \cdot
        \expe[\cost\mid \tau x\sshield(\tau,x);\sgen,\sshield,T-|\tau|-1].
    \end{equation}
    The term $B$ can be similarly rearranged, taking into consideration that in this case $\cost(x\sshield(\tau,x))$ is also independent of $\tau''$:
    \begin{equation}
    B =\sum_{x\in\X
        }
    \cost(x\sshield(\tau,x))\cdot
    \sgen(x) \cdot
    \sum_{\tau''\in (\X\times\Y)^{T-|\tau|-1}}
    \PP\left(\tau''\mid \tau x\sshield(\tau,x);\sgen,\sshield\right).
    \end{equation}
    The hanging term is the sum of probabilities, so by definition adds up to 1:
    \[
     \sum_{\tau''\in (\X\times\Y)^{T-|\tau|-1}}
    \PP\left(\tau''\mid \tau x\sshield(\tau,x);\sgen,\sshield\right) = 1.
    \]
    Therefore 
    \begin{equation}
        B =\sum_{x\in\X
        }\sgen(x)\cdot
    \cost(x\sshield(\tau,x))
    \end{equation}
    Putting $A$ and $B$ together we get:
    \begin{multline}
        \expe[\cost\mid \tau;\sgen,\sshield,T-|\tau|] 
        = \\ =
        \sum_{x\in\X} \sgen(x)\cdot\Big(
        \cost(x\sshield(\tau,x)) + 
        \expe[\cost\mid \tau x\sshield(\tau,x);\sgen,\sshield,T-|\tau|-1]
        \Big).
    \end{multline}
    This partitions the value of $\expe[\cost\mid \tau;\sgen,\sshield,T-|\tau|]$ into a sum of cost of current decision ($\cost(x\sshield(\tau,x))$)
    and expected cost in the rest of the trace. 
    For every $x$, the optimal value of the shield $\sshield(\tau,x)$ is the one that minimizes 
    \begin{equation*}
        \cost(x\sshield(\tau,x)) + \\
        \expe[\cost\mid \tau x\sshield(\tau,x);\sgen,\sshield,T-|\tau|-1].
    \end{equation*}
    This is precisely, the recursive property that we want to prove.

    Finally, to prove Equation~\eqref{eq:fairness-bounded-horizon-shield-rec-proof}, just note that for the empty trace $\tau = \eps$, we have $\sShieldFeas^{\sgen,(T-|\tau|)\mid\tau} = \sShieldFeas^{\sgen,T}$, 
    which is precisely the set of shields set as minimization domain in Equation~\eqref{eq:fairness-bounded-horizon-shield-rec-proof}.
\end{proof}

\subsection[Efficient Value Function Computation]{Efficient Value Function Computation through Trace Abstraction}
\label{sec:fair-synthesis-efficient}
We now present an efficient recursive procedure for computing \FinShield shields that runs in polynomial time and  space. 
The key observation is that $\spec$ is a fairness property that depends on $\tau$ through a statistic that uses $p$ counters,
as defined in Section~\ref{sec:fairness-enf-minimal-cost}.
Consequently, the value function $v(\tau)$ in Equation~\eqref{eq:v-basecase} and Equation~\eqref{eq:v-recursion} depends only on counter values, not on exact traces. 
This allows us to define our dynamic programming algorithm over the set of counter values taken by the statistic $\mu$.
Let $R_{\mu,T} \subseteq \mathbb{N}^p$ be the set of values the statistic $\mu$ can take from traces of length at most $T$.
We have the following complexity result.
\begin{theorem}\label{thm:bounded-horizon shield synthesis-bis}
    Solving the bounded-horizon shield-synthesis problem 
requires $\mathcal{O}(|R_{\mu, T}| \cdot|\X|)$-time and $\mathcal{O}(|R_{\mu,T}|\cdot |\X|)$-space.
\end{theorem}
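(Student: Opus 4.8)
The plan is to prove that the value function $v$ of Lemma~\ref{lem:v-recursion} factors through the statistic $\mu$, and then to run the same dynamic program over the finite set $R_{\mu,T}$ of reachable counter values instead of over the exponentially many traces. First I would isolate the structural property that makes this possible: the statistics in Table~\ref{tab:fairness properties} are \emph{incrementally computable}, i.e., there is an update map $\mathrm{upd}\colon\NN^p\times\X\times\Y\to\NN^p$ such that $\mu(\tau\cdot(x,y))=\mathrm{upd}(\mu(\tau),x,y)$ for every trace $\tau$, input $x$, and output $y$. For the concrete counters $n_a,n_{a1},n_b,n_{b1}$ (and their primed versions for EqOpp) this map just increments the relevant entries, so it runs in $\mathcal O(1)$ time. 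I would also note that $\spec(\tau)$ depends on $\tau$ only through $\mu(\tau)$, which is immediate since $\spec$ is built from $\welfare{a}$ and $\welfare{b}$, and these are functions of the counters alone.

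Next I would introduce an abstracted value function $\tilde v\colon R_{\mu,T}\to\RR\cup\set{\infty}$ and prove, by backward induction on the remaining horizon $T-\len{\tau}$, that $v(\tau)=\tilde v(\mu(\tau))$ for every $\tau$ with $\len{\tau}\leq T$. The base case rewrites Equation~\eqref{eq:v-basecase} as $\tilde v(\mathbf n)=0$ when $\spec(\mathbf n)\leq\kappa$ and $\infty$ otherwise, which is well defined precisely because $\spec$ factors through the counters. The inductive step rewrites the recursion of Equation~\eqref{eq:v-recursion} purely in terms of counters,
\[
\tilde v(\mathbf n)=\sum_{x=(g,r,c)\in\X}\sgen(x)\cdot\min\bigl\{\tilde v(\mathrm{upd}(\mathbf n,x,r)),\;\tilde v(\mathrm{upd}(\mathbf n,x,\bar r))+c\bigr\},
\]
where $\bar r$ denotes the flipped decision; the incremental property guarantees that the successor counters $\mu(\tau\cdot(x,y))$ occurring in Equation~\eqref{eq:v-recursion} equal $\mathrm{upd}(\mu(\tau),x,y)$, so the trace-level and counter-level recursions coincide. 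One subtlety to treat carefully is that the base case is keyed on the \emph{length} $\len{\tau}=T$: for DP and DI the length is recovered as $n_a+n_b$, but for EqOpp it is not determined by the counters, so I would either adjoin a length counter to $\mu$ or organise the computation in levels indexed by length, so that the reachable states are exactly the intended reading of $R_{\mu,T}$.

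Once $v=\tilde v\circ\mu$ is established, the complexity bound is a direct count. The dynamic program evaluates $\tilde v$ on each element of $R_{\mu,T}$ exactly once, from length $T$ down to $0$; at a single counter state it sums over $x\in\X$ and, for each $x$, performs two $\mathcal O(1)$ update-and-lookup operations and one $\min$ over the two outputs in $\Y=\BB$, giving $\mathcal O(|\X|)$ work per state and $\mathcal O(|R_{\mu,T}|\cdot|\X|)$ total time. For space, storing $\tilde v$ alone needs only $\mathcal O(|R_{\mu,T}|)$ entries, but the synthesised shield is represented explicitly as the lookup table $\sshield^*(\mathbf n,x)=\argmin_{y\in\Y}\bigl(c\cdot\indicator{r\neq y}+\tilde v(\mathrm{upd}(\mathbf n,x,y))\bigr)$ over all pairs $(\mathbf n,x)\in R_{\mu,T}\times\X$ (with $x=(g,r,c)$), which dominates at $\mathcal O(|R_{\mu,T}|\cdot|\X|)$. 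The main obstacle is the factoring argument of the second paragraph---formalising the incremental-update property and resolving the length-tracking caveat---rather than the final counting step, which is routine once the abstraction is justified.
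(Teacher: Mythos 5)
Your proposal is correct and takes essentially the same approach as the paper: abstract traces to their counter values in $R_{\mu,T}$ and run the dynamic program of Section~\ref{sec:synthesis} over counters instead of traces, then count one $\mathcal{O}(|\X|)$-work evaluation per counter state. Your write-up is in fact more careful than the paper's brief proof---notably the explicit induction establishing $v=\tilde v\circ\mu$ via the incremental-update map, and the length-recoverability caveat for $\EO$, which the paper only resolves implicitly by adjoining extra counters in Section~\ref{sec:shield-computation-times}---but the decomposition and the key abstraction idea are the same.
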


\begin{proof}
    In this section we have described a dynamic programming approach to synthesize the shield by recursively computing $v(\tau)$ for all possible traces $\tau\in(\X\times\Y)^{\leq T}$.
    As explained before, 
    these computations do not depend directly on $\tau$, 
    but rather on the statistic $\mu$, 
    that depends on $p$ counters, taking values in the set $R_{\mu, T}$. 
    We need to build a table with the shield values for every pair of counter values and input.
    Therefore, the table occupies a space $\mathcal O(|R_{\mu, T}|\cdot|\X|)$. 
    Every element of the table has to be computed only once, and it is done as a sum over all elements of $x$, thus the cost in time is $\mathcal O(|R_{\mu, T}|\cdot |\X|)$.
 \end{proof}

Most fairness properties, e.g., DP and EqOpp, have a range of $R_{\mu, T} = [0, T]^p$, 
where $p$ is the number of counters ($p=4$ for DP, and $p=5$ for EqOpp), 
making the complexity polynomial in the length of the time horizon.


\section{Algorithms for Periodic Shield Synthesis}
\label{sec:unbounded}
\index{fairness property!DoR property}

Until now, we have described a method to synthesize finite-horizon shields, that is, shields that ensure fairness after a finite horizon $T$ (Definition~\ref{def:finshield}).
In this section we explore the problem of synthesizing $T$-periodic shields, which guarantee fairness for unbounded traces at every $T$ decisions (Definition~\ref{def:periodically-fair}).
As previously noted, we leave the question of computing optimal $T$-periodic shields open, and present three ``best-effort'' solutions to the problem, each with different costs and guarantees.

We present algorithms for computing periodic fairness shields for a broad subclass of group fairness properties, which we call \emph{difference of ratios} (DoR) properties.
A statistic $\mu$ is \emph{single-counter} if it maps every trace $\tau$ to a single counter value, i.e., $\mu(\tau) \in \NN$, and \emph{additive} if $\mu(\tau\cdot \tau') = \mu(\tau) + \mu(\tau')$ for any traces $\tau$ and $\tau'$. 
A group fairness property $\spec$ is DoR if 
\begin{enumerate}[label=(\alph*)]
    \item for each group $g$, $\welfare{g}(\tau) = \numer^g(\tau)/\den^g(\tau)$, where $\numer^g(\tau)$ and $\den^g(\tau)$ are additive single-counter statistics, and
    \item $\spec(\tau) = |\welfare{a}(\tau) - \welfare{b}(\tau)|$.
\end{enumerate}
Many fairness properties, including DP and EqOpp, are DoR. 
See, for example,~\cite[Table 3.5]{barocas2023fairness} for a non-exhaustive list.
In this case, DI is an exception because it violates the condition (b).

For DoR fairness properties, we propose two approaches for constructing periodic fairness shields: \emph{static} and \emph{dynamic}, and we explore their respective strengths and weaknesses.

\subsection{Periodic Shielding: The Static Approach}
In the static approach, 
a periodic shield is obtained 
by \emph{concatenating infinitely many identical copies of a statically computed bounded-horizon shield} $\sshield$, synthesized with the time period $T$ as the horizon.
We present two ways of computing $\sshield$ so that its infinite concatenation is $T$-periodic fair.
%
%
%

\subsubsection{Approach I: \StaticDP Shields.}



\begin{definition}[\StaticDP shields]
    A shield is called \StaticDP if it is the concatenation of infinite copies of a \FinShield shield (from Definition~\ref{def:finshield}).
\end{definition}
%
Unfortunately, \StaticDP shields do not always satisfy periodic fairness.
Consider a trace $\tau=\tau_1\ldots\tau_m$ for an arbitrary $m>0$, generated by a \StaticDP shield, such that each segment $\tau_i$ is of length $T$.
It follows from the property of \FinShield shields that $\spec(\tau_i) \leq \kappa$ for each individual $i$.
However, $T$-periodic fairness may be violated because $\spec(\tau)$ need not be bounded by $\kappa$.

\begin{example}
\label{ex:counter-example-periodic-naive-bis}
    Consider DP with $0<\kappa<1-2/T$.
    Suppose $\tau_1$ and $\tau_2$ are traces of length $T$, defined as follows. 
    The first trace $\tau_1$ contains $1$ candidate from group $A$, $T-1$ candidates from group $B$, and none were accepted. 
    The second trace $\tau_2$ contains $T-1$ candidates from group $A$, $1$ candidate from group $B$, and all were accepted. 
    Both traces are fair, since $\spec(\tau_1) = \spec(\tau_2) = 0$.
    However, when concatenating the two traces together, the resulting trace $\tau_1\cdot\tau_2$ is very biased, 
    since it contains $T$ candidates from both group $A$ and $B$, 
    but only one accepted candidate from group $B$,
    while having $T-1$ candidates accepted from group $A$.
    Concretely, $\spec(\tau_1\tau_2) = |(T-1)/T-1/T |=1-2/T > \kappa$ (biased).
    This example is summarized in Table~\ref{tab:counter-example-periodic-naive-bis}.
\end{example}

An important feature of these counter-examples is the excessive skewness of appearance rates across the two groups.
We further explore this phenomenon in Section~\ref{sec:counterexamples-static-fair}.
We show that \StaticDP shields are $T$-periodic fair if the 
appearance rates of the two groups are constant across every period.

\begin{table}[b]
    \centering
    \begin{tabular}{l|ccccc|c}
          & $n_a$ & $n_{a1}$ & $n_b$ & $n_{b1}$ & DP ($\spec$) &  $\spec \leq \kappa$? \\
          \midrule
         $\tau_1$ & $1$ & $0$ & $T-1$ & $0$ & $0$ & \cmark\\
         $\tau_2$ & $T-1$ & $T-1$ & $1$ & $1$ & $0$ & \cmark \\
         $\tau_1\tau_2$ & $T$ & $T-1$ & $T$ & $1$  & $1-2/T$ & \xmark\\
    \end{tabular}
    \caption[Counterexample: \StaticDP shields are not periodically fair]
    {Counterexample  showing that \StaticDP shields may not be periodically fair for DP.
    Suppose the bias threshold is $0 < \kappa < 1-2/T$.
    The traces $\tau_1,\tau_2$ fulfill DP but their concatenation does not.
}
    \label{tab:counter-example-periodic-naive-bis}
\end{table}

\begin{theorem}[Conditional correctness of \StaticDP shields]
\label{thm:static shield with bounded DP}
    Let $\spec$ be a DoR fairness property. Consider a \StaticDP shield $\sshield$, and let $\tau = \tau_1 \ldots \tau_m \in \Trfeas^{mT}_{\sgen, \sshield}$ be a trace such that $\len{\tau_i} = T$ for all $i \leq m$. 
    If $\den^g(\tau_i) = \den^g(\tau_j)$ for every $i,j \leq m$ and $g\in\{a,b\}$, then the fairness property $\spec(\tau) \leq \kappa$ is guaranteed.
\end{theorem}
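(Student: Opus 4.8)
The plan is to exploit the additivity of the numerator and denominator statistics together with the constant-denominator hypothesis to show that the welfare of the whole trace $\tau$ is the arithmetic \emph{average} of the per-segment welfares, and then to close the argument with the triangle inequality.

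First I would record the per-segment guarantee. Since $\sshield$ is a \StaticDP shield, it is the concatenation of copies of a single \FinShield shield $\sshield'\in\sShieldFeas^{\sgen,T}$, and by the concatenation convention the decisions inside segment $i$ depend only on the within-segment prefix. Hence each segment $\tau_i$, viewed as a standalone length-$T$ trace, lies in $\Trfeas^{T}_{\sgen,\sshield'}$. By the defining property of bounded-horizon fairness shields (Equation~\eqref{eq:bounded-horizon-def}), every such trace satisfies $\spec(\cdot)\leq\kappa$, so $\spec(\tau_i)\leq\kappa$ for all $i\leq m$.

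Next I would unfold the DoR structure. Writing $d_g\coloneqq\den^g(\tau_i)$ (independent of $i$ by hypothesis) and using additivity of $\numer^g$ and $\den^g$, I obtain $\den^g(\tau)=m\,d_g$ and $\numer^g(\tau)=\sum_{i=1}^m\numer^g(\tau_i)$, whence
\[
\welfare{g}(\tau)=\frac{\numer^g(\tau)}{\den^g(\tau)}=\frac{1}{m}\sum_{i=1}^m\frac{\numer^g(\tau_i)}{d_g}=\frac{1}{m}\sum_{i=1}^m\welfare{g}(\tau_i).
\]
The crucial point — and the step I expect to be the main obstacle to state cleanly — is that the constant-denominator hypothesis is exactly what turns $\welfare{g}(\tau)$ into a \emph{uniform} average of the segment welfares with the \emph{same} weight $1/m$ for both groups. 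Without it, $\welfare{g}(\tau)$ would be a $g$-dependent weighted average of the $\welfare{g}(\tau_i)$ with weights $\den^g(\tau_i)/\sum_j\den^g(\tau_j)$, and the subtraction $\welfare{a}-\welfare{b}$ could no longer be pushed inside a single sum.

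Finally I would combine the two ingredients via the triangle inequality:
\[
\spec(\tau)=\bigl|\welfare{a}(\tau)-\welfare{b}(\tau)\bigr|
=\Bigl|\tfrac{1}{m}\sum_{i=1}^m\bigl(\welfare{a}(\tau_i)-\welfare{b}(\tau_i)\bigr)\Bigr|
\leq\tfrac{1}{m}\sum_{i=1}^m\bigl|\welfare{a}(\tau_i)-\welfare{b}(\tau_i)\bigr|
=\tfrac{1}{m}\sum_{i=1}^m\spec(\tau_i)\leq\kappa,
\]
where the last inequality uses the per-segment bound from the first step. This yields $\spec(\tau)\leq\kappa$, as required. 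As a sanity check, the counterexample in Example~\ref{ex:counter-example-periodic-naive-bis} violates precisely the averaging identity above (the two segments have mismatched appearance counts, i.e.\ different denominators), confirming that the constant-denominator hypothesis is essential and cannot simply be dropped.
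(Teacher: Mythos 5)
Your proposal is correct and follows essentially the same argument as the paper's proof: the constant-denominator hypothesis plus additivity turns the whole-trace welfare into a uniform $\tfrac{1}{m}$-average of per-segment welfares, and the triangle inequality together with the per-segment bound $\spec(\tau_i)\leq\kappa$ closes the argument. Your version is slightly more explicit than the paper's in two places—justifying via the concatenation convention why each $\tau_i$ lies in $\Trfeas^{T}_{\sgen,\sshield'}$, and spelling out the role of additivity in $\den^g(\tau)=m\,d_g$—but these are details the paper assumes implicitly, not a different route.
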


\begin{proof}
    Given the condition, we can name $\den^a$ and $\den^b$ to the unique values of $\den^a(\tau_i)$ and $\den^b(\tau_i)$.
    For each $i\leq m$, we have the condition that
    \begin{equation}
    \label{eq:aux6}
        \left| \frac{\numer^a(\tau_i)}{\den^a} -  \frac{\numer^b(\tau_i)}{\den^b}  \right| \leq \kappa.
    \end{equation}
    We want to prove a fairness condition for the trace $\tau_1\dots\tau_m$, that is expressed as
    \begin{equation}
    \label{eq:aux7}
        \left| 
        \frac{\sum_{i=1}^m  \numer^a(\tau_i)}{m\cdot \den^a} -
        \frac{\sum_{i=1}^m  \numer^b(\tau_i)}{m\cdot \den^b}
        \right| \leq \kappa.
    \end{equation}
    Because of the denominators being the same across all traces, we can reorder the left-hand-side of Equation~\eqref{eq:aux7} as
    \begin{equation}
    \label{eq:aux8}
        \frac{1}{m}
        \left|
        \sum_{i=1}^m \left(\frac{\numer^a(\tau_i)}{\den^a} - \frac{\numer^b(\tau_i)}{\den^b}\right)
        \right|.
    \end{equation}

    Applying the triangular inequality to Equation~\eqref{eq:aux8} and the condition in Equation~\eqref{eq:aux6}, we get
    \begin{align*}
        \frac{1}{m}
        \left|
        \sum_{i=1}^m \left(\frac{\numer^a(\tau_i)}{\den^a} - \frac{\numer^b(\tau_i)}{\den^b}\right)
        \right| \leq & 
        \frac{1}{m} \sum_{i=1}^m \left|\frac{\numer^a(\tau_i)}{\den^a} - \frac{\numer^b(\tau_i)}{\den^b}\right| \leq \\
        \leq & \frac{1}{m} \cdot m \kappa = \kappa.
    \end{align*}
 \end{proof}

While the condition in Theorem~\ref{thm:static shield with bounded DP} appears conservative, 
we show in Section~\ref{sec:counterexamples-static-fair} (Theorem~\ref{thm:counterexamples-fair-comp}) that it is in fact tight. 
The tightness result is expressed in terms of \emph{balanced traces}, 
which is a concept that will appear also in the following section.
\begin{definition}[Balanced traces]
    Let $\mu^a,\mu^b\colon (\X\times\Y)^*\to \NN$ be a pair of single-counter statistics, $T>0$ be a given time horizon, and
    $N\leq T/2$ be a given integer.
    A trace $\tau$ of length $T$ is \emph{$N$-balanced with respect to} $\mu^a$ and $\mu^b$ if both $\mu^a(\tau) \geq N$ and $\mu^b(\tau)\geq N$.
    We denote the set of all $N$-balanced traces of length $t$ as $\Trbalance^T(\mu^a,\mu^b,N)$.
\end{definition}
A particular case of Theorem~\ref{thm:static shield with bounded DP} is that fairness is guaranteed when all traces are $(T/2)$-balanced with respect to the denominators.
In Theorem~\ref{thm:counterexamples-fair-comp}, we show, for the case of demographic parity, with $\mu^a, \mu^b = \den^a, \den^b$, for every $\kappa$, there exist $m$ and $\lfloor(T-1)/2\rfloor$-balanced traces $\tau_1,\dots,\tau_m$ such that $\spec_\DP(\tau_i)\leq \kappa$ for each $i$, but $\spec_\DP(\tau_1\dots\tau_m) > \kappa$.
%
However, these are worst-case scenarios and are uninteresting from a practical point of view. 
In our experiments, \StaticDP shields
fulfill periodic fairness in a majority of cases even if the traces violate the condition in Theorem~\ref{thm:static shield with bounded DP}.


\subsubsection{Approach II: \StaticBAR Shields.}
\label{sec:approach-staticBW}
When the condition of Theorem~\ref{thm:static shield with bounded DP} is violated, \StaticDP shields cannot guarantee fairness as the bound on the bias is not closed under concatenation of traces (see Example~\ref{ex:counter-example-periodic-naive-bis}).
A stronger property that is indeed closed under concatenation is when a bound is imposed on each group's welfare. 
Let $l,u$ be constants with $0\leq l<u\leq 1$.
A trace $\tau$ has \emph{bounded welfare} (BW) if for each group $g\in\G$,
$\welfare{g}(\tau)=\numer^g(\tau)/\den^g(\tau)$ belongs to $[l,u]$.
The pair $(l,u)$ will be called \emph{welfare bounds}.
We show that BW is closed under trace concatenations, which depends on the additive property of $\numer^g$ and $\den^g$.

\begin{lemma}
    \label{prop:acc-rates}
    Let $(l,u)$ be given welfare bounds, and $\welfare{g}(\cdot)\equiv \numer^g(\cdot)/\den^g(\cdot)$ for additive $\numer^g,\den^g$.
    For a trace $\tau = \tau_1\dots \tau_m$,
    if for each $i$, 
    $\welfare{g}(\tau_i) \in [l,u]$, 
    then 
    $\welfare{g}(\tau) \in [l,u]$.
\end{lemma}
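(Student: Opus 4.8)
The plan is to exploit the additivity of $\numer^g$ and $\den^g$ to express $\welfare{g}(\tau)$ as a weighted average (the \emph{mediant}) of the per-segment welfares $\welfare{g}(\tau_i)$, and then invoke the elementary fact that a convex combination of numbers in $[l,u]$ stays in $[l,u]$. The clean way to organize this, which also sidesteps any division-by-zero worries, is to clear denominators first and only divide at the very end.

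Concretely, I would abbreviate $n_i \coloneqq \numer^g(\tau_i)$ and $d_i \coloneqq \den^g(\tau_i)$. The hypothesis $\welfare{g}(\tau_i)=n_i/d_i\in[l,u]$ presupposes $d_i>0$, and rewriting it without the fraction gives the two-sided inequality
\begin{equation}
    l\,d_i \;\leq\; n_i \;\leq\; u\,d_i \qquad \text{for each } i\leq m.
\end{equation}
Summing these inequalities over $i$ and using that $\numer^g$ and $\den^g$ are additive statistics, so that $\sum_{i=1}^m n_i = \numer^g(\tau)$ and $\sum_{i=1}^m d_i = \den^g(\tau)$, I would obtain
\begin{equation}
    l\,\den^g(\tau) \;\leq\; \numer^g(\tau) \;\leq\; u\,\den^g(\tau).
\end{equation}
Since every $d_i>0$, the total denominator $\den^g(\tau)=\sum_i d_i$ is strictly positive, so dividing through by $\den^g(\tau)$ yields $l\leq \welfare{g}(\tau)\leq u$, which is exactly $\welfare{g}(\tau)\in[l,u]$, as required. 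The argument is identical for both $g\in\{a,b\}$, so it suffices to carry it out for a generic $g$.

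The only genuine subtlety — and the step I would flag as the main obstacle — is the treatment of vanishing denominators. The multiply-through formulation above is deliberate: by never dividing at the segment level, the inequalities remain meaningful even in degenerate cases, and the positivity of $\den^g(\tau)$ needed for the final division follows automatically from $d_i>0$. If one wanted to allow some segments to have $d_i=0$ (so that $\welfare{g}(\tau_i)$ is undefined and, per the convention in Table~\ref{tab:fairness properties}, excluded from the hypothesis), the same summation still goes through as long as at least one segment has a positive denominator; the zero-denominator segments simply contribute $0\leq 0\leq 0$ and do not affect the bound. I do not expect this to require more than a sentence, but it is the place where the proof could silently break if written carelessly.
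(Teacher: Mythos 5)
Your proof is correct, but it takes a genuinely different route from the paper's. The paper first establishes a standalone mediant inequality (for positive reals $a_i,b_i$, the ratio of sums $\sum_i a_i/\sum_i b_i$ lies between $\min_i a_i/b_i$ and $\max_i a_i/b_i$), proved by induction on $m$ from the classical two-term mediant fact, and then applies it with the segment welfares playing the role of the ratios: since each $\welfare{g}(\tau_i)\in[l,u]$, so are the min and max, hence so is the aggregate welfare. You instead clear denominators at the segment level, sum the inequalities $l\,d_i\leq n_i\leq u\,d_i$, invoke additivity, and divide once at the end. Your argument is more elementary — no induction, no auxiliary lemma — and it is also more robust in the degenerate cases: the paper's mediant lemma is stated for \emph{positive} numerators and denominators, so it silently excludes segments with $n_i=0$ or $d_i=0$ (e.g., a period in which no individual from group $g$ was accepted), whereas your multiply-through formulation handles $n_i=0$ for free and, as you note, tolerates $d_i=0$ segments so long as $\den^g(\tau)>0$. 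What the paper's route buys in exchange is a reusable intermediate fact (the min/max sandwich is strictly stronger than the $[l,u]$ containment and is the natural "weighted average" explanation of why concatenation preserves bounded welfare); your route buys brevity and cleaner edge-case handling. Either proof is acceptable here.
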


To prove Lemma~\ref{prop:acc-rates}, we first need to prove the following auxiliary result.

\begin{lemma}\label{lem:min-frac-max-bis}
    Let $a_1, \dots, a_m$, $b_1, \dots, b_m$ be positive real numbers. 
    Then
    \begin{equation}\label{eq:min-frac-max-lemma}
        \min_{i\in\{1\dots m\}} \frac{a_i}{b_i} \leq 
        \frac{\sum_{i=1}^m a_i}{\sum_{i=1}^m b_i} \leq 
        \max_{i\in\{1\dots m\}} \frac{a_i}{b_i}.
    \end{equation} 
\end{lemma}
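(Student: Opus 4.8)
The plan is to prove the mediant (Cauchy--Schwarz-style) inequality by the elementary strategy of bounding each numerator $a_i$ between fixed multiples of its denominator $b_i$, then summing. Concretely, I would set
\[
\alpha \coloneqq \min_{i\in\{1,\dots,m\}} \frac{a_i}{b_i}
\qquad\text{and}\qquad
\beta \coloneqq \max_{i\in\{1,\dots,m\}} \frac{a_i}{b_i},
\]
so that by definition $\alpha \leq a_i/b_i \leq \beta$ holds for every index $i$.

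The key step is to clear denominators. Since every $b_i > 0$ by hypothesis, multiplying the chain $\alpha \leq a_i/b_i \leq \beta$ through by $b_i$ preserves the direction of the inequalities, yielding $\alpha\, b_i \leq a_i \leq \beta\, b_i$ for each $i$. Summing these $m$ inequalities term by term gives
\[
\alpha \sum_{i=1}^m b_i \;\leq\; \sum_{i=1}^m a_i \;\leq\; \beta \sum_{i=1}^m b_i.
\]
Because $\sum_{i=1}^m b_i > 0$ (a finite sum of positive reals), I can divide throughout by it without flipping the inequalities, which produces exactly
\[
\alpha \;\leq\; \frac{\sum_{i=1}^m a_i}{\sum_{i=1}^m b_i} \;\leq\; \beta,
\]
i.e.\ Equation~\eqref{eq:min-frac-max-lemma} once $\alpha$ and $\beta$ are unfolded back into the min and max expressions.

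Honestly, there is no genuine obstacle here: the result is the standard mediant inequality, and the only facts used are the positivity of each $b_i$ (so that multiplication and the final division preserve inequality directions) and the additivity of finite sums. The one point requiring a modicum of care is to make explicit that positivity of the $b_i$ is what licenses both clearing denominators and the final division step; if any $b_i$ were allowed to vanish, the individual ratios $a_i/b_i$ would be undefined, so the hypothesis that all $b_i$ are positive is exactly what keeps every quantity well defined throughout. With this inequality in hand, the additive structure of $\numer^g$ and $\den^g$ then feeds directly into the proof of Lemma~\ref{prop:acc-rates}, applied separately with $a_i = \numer^g(\tau_i)$ and $b_i = \den^g(\tau_i)$.
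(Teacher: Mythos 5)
Your proof is correct, but it takes a genuinely different route from the paper's. You prove the inequality directly: setting $\alpha$ and $\beta$ to the minimum and maximum ratios, you bound $\alpha\, b_i \leq a_i \leq \beta\, b_i$ for each $i$ using positivity of $b_i$, sum over $i$, and divide by $\sum_i b_i > 0$. The paper instead proceeds by induction on $m$, taking as its base fact the two-term mediant inequality --- for positive $w,x,y,z$ with $w/x < y/z$ one has $w/x \leq (w+y)/(x+z) \leq y/z$ --- which it invokes as a known result, and then absorbing one fraction at a time into the running sum. Your argument is the more self-contained and economical of the two: it needs no induction, no appeal to an external inequality, and makes transparent that the only ingredients are positivity (licensing the multiplication and the final division) and linearity of finite sums. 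The paper's inductive proof buys nothing extra here in terms of generality; its main feature is that it isolates the two-term mediant inequality as the reusable atomic step, but since that step itself is exactly your argument specialized to $m=2$, your direct proof subsumes it. Either proof serves equally well as the input to Lemma~\ref{prop:acc-rates}.
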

\begin{proof}
    This is an extension of the following known inequality: 
    given positive numbers
    $w, x, y, z$, if $w/x < y/z$, then
    $\frac{w}{x} \leq \frac{w+y}{x + z} \leq \frac{y}{z}$.
    We can restate it as:
    \begin{equation} \label{eq:min-frac-max1-bis}
        \min\left(
        \frac{w}{x}, \frac{y}{z} 
        \right)
        \leq \frac{w + y}{x + z} 
        \leq 
        \max\left(
        \frac{w}{x}, \frac{y}{z} 
        \right).
    \end{equation}

    We prove this result by induction on $m$. The base case for $m=1$ is trivial.
    
    For a general $m$, we start applying inequality~\eqref{eq:min-frac-max1-bis} with
    $w = \sum_{i=1}^{m-1} a_i$, 
    $x = \sum_{i=1}^{m-1} b_i$,
    $y = a_m$, and $z = b_m$, 
    to obtain:
    \begin{equation*}
        \frac{\sum_{i=1}^m a_i}{\sum_{i=1}^m b_i} \leq
        \max\left(
        \frac{\sum_{i=1}^{m-1} a_i}{\sum_{i=1}^{m-1} b_i}, 
        \frac{a_m}{b_m}
        \right).
    \end{equation*}
    Applying the induction hypothesis we have that 
    \begin{equation}
        \frac{\sum_{i=1}^{m-1} a_i}{\sum_{i=1}^{m-1} b_i} \leq 
        \max_{i\in \{1\dots m-1\}} \frac{a_i}{b_i},
    \end{equation}
    and therefore:
    \begin{equation*}
        \frac{\sum_{i=1}^m a_i}{\sum_{i=1}^m b_i} \leq
        \max\left(
        \max_{i\in \{1\dots m-1\}} \frac{a_i}{b_i}, 
        \frac{a_m}{b_m}
        \right) = \max_{i\in \{1\dots m\}} \frac{a_i}{b_i}.
    \end{equation*}
    This proves the right-side inequality of Equation~\eqref{eq:min-frac-max-lemma}. 
    The left-side is analogous.
\end{proof}

\begin{proof}[Proof (Of Lemma~\ref{prop:acc-rates})]
    Let $n^a_i = \den^a(\tau_i)$, $n^{a1}_i = \numer^a(\tau_i)$, 
    $n^b_i = \den^b(\tau_i)$, and $n^{b1}_i = \den^b(\tau_i)$.
    Applying Lemma~\ref{lem:min-frac-max-bis}, we have 
    for all $g\in\G$
    that 
    \begin{equation}\label{eq:min-frac-max-lemma-bis}
        \min_{i\in\{1\dots n\}} \frac{n^{g1}_i}{n^g_i} \leq 
        \frac{\sum_{i=1}^n n^{g1}_i}{\sum_{i=1}^n n^{g}_i} \leq 
        \max_{i\in\{1\dots n\}} \frac{n^{g1}_i}{n^g_i}.
    \end{equation} 

    And we also know that all welfare values are bounded by $l$ and $u$. That is, for all $i\in\{1\dots n\}$ and all $g\in\G$
    \begin{equation} \label{eq:beta-alpha-bis}
        l \leq  \frac{n^{g1}_i}{n^{g}_i} \leq u
    \end{equation}
    In particular, Equation~\eqref{eq:beta-alpha-bis} applies to the maximum and minimum welfare values. 
    This, together with Equation~\eqref{eq:min-frac-max-lemma-bis}
    finishes the proof.
\end{proof}

For DoR properties, BW implies fairness when $u-l\leq \kappa$.
Combining this with Lemma~\ref{prop:acc-rates}, we infer that if $\sshield$ is a bounded-horizon shield that fulfills BW on every trace $\tau$ of length $T$ for welfare bounds $(l,u)$ with $u-l\leq \kappa$, then the concatenation of infinite copies of $\sshield$ would be a $T$-periodic fairness shield.
The natural course of action for computing shields 
that fulfill BW is to 
mimic Definition~\ref{def:finshield}, 
replacing the condition on $\spec$ with a condition on welfare. 
However, if we define the set of BW-fulfilling shields as
\[
     \sShieldFeasBounded^{\sgen,T}\coloneqq  \set{\sshield\in\sShield \mid \forall \tau \in \Trfeas^T_{\sgen, \sshield} \;.\;
    \forall g\in \set{a,b}\;.\;
    l\leq \welfare{g}(\tau) \leq u },
\]
the set $\sShieldFeasBounded^{\sgen,T}$ can be empty for some $T,l,u$.
Following is an example.
\begin{example}\label{ex:bounded-acc-rates}
    Suppose $\welfare{g}(\tau) = n_{g1}/n_g$, where $n_{g1}$ and $n_g$ are the total numbers of accepted and appeared individuals from group $g$ (as in DP).
    Suppose $T=2, l=0.2,u=0.4$.
    It is easy to see that no matter what the shield does, for every $\tau$ of length $2$, $\welfare{g}(\tau)\in \set{0,0.5,1}$.
    Therefore, $\sShield^{2}_{[0.2,0.4]}=\emptyset$.
\end{example}
The emptiness of $\sShieldFeasBounded^{\sgen,T}$ is due to a large disparity between the appearance rates of individuals from the two groups, which occurs for shorter time horizons and for datasets where one group has significantly lesser representation than the other group.
To circumvent this technical inconvenience, 
we make the following assumption on observed traces.

\begin{assumption}\label{assump:static-BAR}
    Let $l,u$ be welfare bounds, and $\tau = \tau_1\ldots\tau_m\in \Trfeas^{mT}_{\sgen,\sshield}$ be a trace with $|\tau_i|=T$ for each $i$.
    Every $\tau_i$ is $N$-balanced w.r.t.\ $\den^a$ and $\den^b$ for $N= \left\lceil 1/(u-l)\right\rceil$.
\end{assumption}

Assumption~\ref{assump:static-BAR} may be reasonable depending on $l$, $u$, $T$, and the input distribution $\theta$. 
Intuitively, for a larger $T$ and a smaller skew of appearance probabilities for individuals between the two groups, the probability of fulfilling Assumption~\ref{assump:static-BAR} is larger (for a given finite $m$).
At the end of this section (Equation~\eqref{eq:existence-of-BW-shields})
we quantify it as the probability of a sample from a binomial distribution lying between $N$ and $T-N$.
%


\begin{definition}[\StaticBAR shields]
    Let $l,u$ be given welfare bounds, and $T$ be a given time period.
    A \StaticBAR shield is the concatenation of infinite copies of the shield $\sshield^*$ solving
    \begin{align}\label{eq:static-BAR-shield optimality}
        \sshield^* = \argmin_{\sshield\in\sShieldFeasBounded^{\sgen,T, N}} \expe[\cost;\sgen,\sshield,T], 
    \end{align}
    where $N= \left\lceil 1/(u-l)\right\rceil$, and
    \begin{equation*}
     \sShieldFeasBounded^{\sgen,T,N}\coloneqq  \set{\sshield\in\sShield \mid \forall \tau \in \Trfeas^T_{\sgen, \sshield} \cap \Trbalance_N^T.\,
    \forall g\in \set{a,b}\;.\;
    l\leq \welfare{g}(\tau) \leq u }.
\end{equation*}
\end{definition}

With the following technical result prove that 
$\sShieldFeasBounded^{\sgen,T,N}$ is indeed non-empty when Assumption~\ref{assump:static-BAR} is fulfilled.
We do so by constructing the shield that keeps $\welfare{g}(\tau)$ just above $l$
and showing that it also guarantees $\welfare{g}(\tau)\leq u$ when the trace is sufficiently balanced.

\begin{lemma}
    \label{thm:acc-rates-existence}
    Let $\spec$ be a DoR property with $\spec(\tau) = |\welfare{a}(\tau) - \welfare{b}(\tau)|$, and $\welfare{g}(\tau) = \numer^g(\tau)/\den^g(\tau)$.
    Let $0\leq l < u\leq 1$ be a pair of welfare bounds. The set of shields
    \begin{equation*}
             \sShieldFeasBounded^{T, N} \coloneqq \set{\sshield\in\sShield \mid \forall \tau \in \Trfeas^t_{\sgen, \sshield} \cap \Trbalance_N^T \;.\; 
            \forall g\in \set{a,b}\;.\;
            l\leq \welfare{g}(\tau) \leq u }        
    \end{equation*}
is not empty for $N \geq \left\lceil \frac{1}{u-l}\right\rceil$.
\end{lemma}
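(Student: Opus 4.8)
The plan is to prove non-emptiness constructively, by exhibiting a single shield, call it $\sshield_l$, that keeps each group's welfare pinned just above the lower bound $l$, and then showing that the balance hypothesis forces this welfare to also stay below $u$. First I would isolate the relevant per-group structure of a DoR property: for an incoming individual of group $g$, the denominator contribution to $\den^g$ is fixed by the input (for DP it is the appearance, for EqOpp the appearance of a ground-truth-positive individual) and is either $0$ or $1$, while $\numer^g$ increases by that same amount precisely when the shield accepts. On individuals that do not increase any denominator, the decision is irrelevant to welfare, so $\sshield_l$ simply follows the classifier there; on individuals that increase $\den^g$ by one, $\sshield_l$ \emph{rejects iff rejecting keeps $\welfare{g}\geq l$, and accepts otherwise}. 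Since the counters for $a$ and $b$ evolve independently, the two groups are handled separately without conflict, so $\sshield_l$ is well defined.

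The core of the argument is a two-sided invariant maintained by $\sshield_l$ after every prefix, for each group $g$ with $\den^g\geq 1$, namely
\begin{equation*}
    l\cdot \den^g \;\leq\; \numer^g \;\leq\; l\cdot \den^g + 1,
\end{equation*}
equivalently $l \leq \welfare{g} \leq l + 1/\den^g$. I would prove both inequalities by induction on prefix length, splitting into the accept and reject branches of the greedy rule. For the lower inequality, the reject branch holds by the branch condition, and the accept branch uses the induction hypothesis together with $l\leq 1$; for the upper inequality, the reject branch uses that $\numer^g$ is unchanged while $\den^g$ grows, and the accept branch uses that the greedy rule only accepts when $\numer^g < l(\den^g+1)$, so that $\numer^g+1 < l(\den^g+1)+1$. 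The degenerate start ($\den^g=0$, welfare undefined) is handled directly and is irrelevant to the final bound, since any $N$-balanced trace with $N\geq 1$ has $\den^g\geq 1$.

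Finally I would combine the invariant with the balance hypothesis. For a trace $\tau \in \Trbalance^T(\den^a,\den^b,N)$ with $N=\lceil 1/(u-l)\rceil$, each group satisfies $\den^g(\tau)\geq N \geq 1/(u-l)$, hence $1/\den^g(\tau)\leq u-l$. Plugging this into the upper invariant gives $\welfare{g}(\tau)\leq l + 1/\den^g(\tau) \leq l + (u-l) = u$, while the lower invariant gives $\welfare{g}(\tau)\geq l$; thus $\welfare{g}(\tau)\in[l,u]$ for both groups, so $\sshield_l \in \sShieldFeasBounded^{T,N}$ and the set is non-empty. I expect the main obstacle to be the bookkeeping of the upper invariant: it is exactly where ``just above $l$'' must be made quantitative and where the choice $N\geq\lceil 1/(u-l)\rceil$ is pinned down. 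Some care is also needed to confirm that the reduction to the one-counter-per-individual picture genuinely covers the DoR properties of interest, in particular that the denominator increment never depends on the shield's own decision.
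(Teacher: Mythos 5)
Your proof is correct and takes essentially the same route as the paper: your greedy shield that rejects whenever rejection keeps $\welfare{g}\geq l$ maintains $\numer^g = \lceil l\cdot\den^g\rceil$, which is exactly the explicit sequence $x_n=\lceil ln\rceil$ the paper constructs, and your two-sided invariant $l\cdot\den^g \leq \numer^g \leq l\cdot\den^g+1$ together with $1/\den^g \leq u-l$ on $N$-balanced traces is the same arithmetic the paper uses to pin down $N\geq\lceil 1/(u-l)\rceil$. The concern you raise at the end (that denominator increments must not depend on the shield's decision) is also implicitly assumed in the paper's argument, so your proposal matches it in both substance and scope.
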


\begin{proof}
    For a shield to exist that can enforce bounds $[l, u]$ on the welfare, there must exist, for every value of $\den^g(\tau)$, 
    at least one way of deciding for increasing or not $\numer^g(\tau)$ that maintains the welfare in the desired bounds. 
    Since we do not know \textit{a priori} 
    the value of $\den^g(\tau)$, this decision must be incremental, 
    and be such that the welfare is maintained for any value of $\den^g(\tau)$. 

    To express this, there needs to exist a sequence $(x_n)\subseteq\mathbb N$ for all $n\geq N$ such that
    \begin{equation}\label{eq:xn}
        l \leq \frac{x_n}{n} \leq u, \quad \mbox{and} 
        \quad x_{n+1} - x_n \in \{0,1\}.
    \end{equation}
    Given $l$, and $u$, if $\den^g(\tau)$ is at least $N$ for a given group $g$, the shield can force $\numer^g(\tau)$ to be exactly $x_n$ to ensure the bound on welfare is met. 

    The condition in Equation~\eqref{eq:xn} can be 
    reformulated as $ ln \leq x_n \leq  un$,
    and since $x_n$ needs to be an integer, 
    we can tighten it to 
    \begin{equation}\label{eqn:local2}
        \left\lceil l n \right\rceil \leq x_n \leq 
        \left\lfloor u n \right\rfloor.
    \end{equation}
    One option is to try $x_n = \left\lceil l n \right\rceil$.
    We have to prove that this choice satisfies two conditions: 
    (i) $x_{n+1} - x_n \in \{0,1\}$, and
    (ii) Equation~\eqref{eqn:local2}.

    \begin{enumerate}
        \item[(i)] 
        This is true for any sequence $x_n$ built as the integer part of $nl$, where $l\in[0,1]$. 
        For any number $x$, it is known that 
        $x = \lceil x\rceil - \{x\}$, where $0 \leq \{x\} < 1$. 
        Applying this inequality twice, we get 
        \begin{equation*}
         x_{n+1} - x_n = \lceil l(n+1)\rceil - \lceil l n\rceil
         < l (n+1) - \lceil l n\rceil \leq l(n+1)- l n = 1+ l < 2.
        \end{equation*}
        Since $\lceil l (n+1)\rceil - \lceil l n\rceil$ is an integer strictly smaller than 2, it is smaller or equal than 1. It is also clearly non-negative, so it has to be either $0$ or $1$.

        \item[(ii)]
        By construction, $l n \leq \lceil l n\rceil$. 
        Now we have to see that $\lceil l n\rceil \leq u n$. 
        If $\lceil l n\rceil = l n$, 
        then for any $n\geq 1$, we have $x_n \leq u n$ on account of $l < u$. 
        If $\lceil l n\rceil = l n +1$, 
        we need $l n + 1 \leq u n$, 
        which is equivalent to $n \geq \frac{1}{u-l}$. 
        Since $n$ needs to be an integer, 
        selecting $N = \left\lceil \frac{1}{u-l}\right\rceil$ ensures this condition is satisfied for all $n\geq N$.
    \end{enumerate}
\end{proof}

This result guarantees that the optimization problem in \eqref{eq:static-BAR-shield optimality} is feasible, 
and thus \StaticBAR shields are well-defined.
Intuitively, we obtain a ``best-effort'' solution for $\sshield^*$: 
when a trace satisfies Assumption~\ref{assump:static-BAR}, 
$\sshield^*$ guarantees that $\tau$ satisfies BW with minimum expected cost. 
Otherwise, $\sshield^*$ has no BW requirement, 
and thus for traces that violate Assumption~\ref{assump:static-BAR}, the shield will incur zero cost by never intervening,
voiding any potential fairness guarantee.

Synthesis of \StaticBAR shields
follows the same approach as in Section~\ref{sec:synthesis} with Equation~\eqref{eq:v-basecase} replaced by: 
 \begin{equation}
 \label{eq:synthesis-BW-shields}
 v(\tau) = 
        \begin{cases}
            0   & \mbox { if } \tau\notin \Trbalance_N^T\lor \bigwedge_{g\in\set{a,b}} \welfare{a}(\tau)\in[l,u],\\
            \infty  &   \mbox{ otherwise. }
        \end{cases}
\end{equation}

We summarize the fairness guarantee below.

\begin{theorem}[Conditional correctness of \StaticBAR shields]
\label{thm:correctness-staticBW}
    Let $\spec$ be a DoR fairness property.
    Let $l,u$ be welfare bounds such that $u-l\leq \kappa$.
    For a given \StaticBAR shield $\sshield$, 
    let $\tau = \tau_1\ldots\tau_m\in \Trfeas^{mT}_{\sgen,\sshield}$ be a trace with $\len{\tau_i}=T$ for each $i\leq m$.
    If Assumption~\ref{assump:static-BAR} holds, then the fairness property $\spec(\tau)\leq \kappa$ is guaranteed.
\end{theorem}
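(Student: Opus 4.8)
The plan is to chain three facts already established in this section, with essentially no new computation. First I would unfold the \StaticBAR shield $\sshield$ into its constituent copies of the statically synthesized shield $\sshield^*\in\sShieldFeasBounded^{\sgen,T,N}$, where $N=\lceil 1/(u-l)\rceil$. By the concatenation semantics, $\sshield$ applies a fresh copy of $\sshield^*$ on each period, so each segment $\tau_i$ (of length $T$) is itself a feasible trace of $\sshield^*$, i.e.\ $\tau_i\in\Trfeas^T_{\sgen,\sshield^*}$. Lemma~\ref{thm:acc-rates-existence} guarantees that the constraint set $\sShieldFeasBounded^{\sgen,T,N}$ is non-empty under $N\geq\lceil 1/(u-l)\rceil$, so $\sshield^*$ is well-defined in the first place.

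Second, I would use Assumption~\ref{assump:static-BAR}, which states precisely that each $\tau_i$ is $N$-balanced with respect to $\den^a$ and $\den^b$, i.e.\ $\tau_i\in\Trbalance_N^T$. Combining this with the previous step gives $\tau_i\in\Trfeas^T_{\sgen,\sshield^*}\cap\Trbalance_N^T$, so the defining property of $\sShieldFeasBounded^{\sgen,T,N}$ forces the per-group welfare to be bounded on every segment:
\[
l\leq \welfare{g}(\tau_i)\leq u \quad\text{for all } i\leq m \text{ and } g\in\set{a,b}.
\]

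Third, I would invoke closure of bounded welfare under concatenation. Since $\spec$ is DoR, we have $\welfare{g}(\tau)=\numer^g(\tau)/\den^g(\tau)$ with $\numer^g,\den^g$ additive, which is exactly the hypothesis of Lemma~\ref{prop:acc-rates}. Applying it with the segment bounds above yields $\welfare{g}(\tau)\in[l,u]$ for the full trace $\tau=\tau_1\ldots\tau_m$ and each $g$. Finally, because both welfares lie in the common interval $[l,u]$,
\[
\spec(\tau)=|\welfare{a}(\tau)-\welfare{b}(\tau)|\leq u-l\leq \kappa,
\]
which is the claim.

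The main obstacle is really bookkeeping rather than analysis: the single delicate point is the first step, namely justifying that the reset-and-reuse structure of a \StaticBAR shield makes each period $\tau_i$ an \emph{independent} feasible trace of $\sshield^*$, so that its welfare guarantee applies segment-by-segment, rather than a trace whose welfare is controlled only jointly with earlier periods. Once that is pinned down cleanly from the concatenation definition, the additivity-plus-closure argument (Lemma~\ref{prop:acc-rates}) and the bound $u-l\leq\kappa$ do all the substantive work.
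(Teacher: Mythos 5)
Your proposal is correct and follows essentially the same route as the paper's proof, which likewise combines Lemma~\ref{thm:acc-rates-existence} (non-emptiness of $\sShieldFeasBounded^{\sgen,T,N}$) with Lemma~\ref{prop:acc-rates} (closure of bounded welfare under concatenation) and the bound $u-l\leq\kappa$. In fact, your write-up is more explicit than the paper's terse two-line argument, particularly in spelling out how the concatenation semantics makes each period an independent feasible trace of $\sshield^*$ so that the segment-wise BW guarantee applies.
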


\begin{proof}
    This is a direct consequence of Lemmas~\ref{prop:acc-rates}~and~\ref{thm:acc-rates-existence}.
    If Assumption~\ref{assump:static-BAR} holds, Lemma~\ref{thm:acc-rates-existence} ensures that the set of shields is non-empty. 
    Furthermore, any such shield satisfies the fairness condition $\spec(\tau)\leq \kappa$ for any trace in $\tau\in\Trfeas^{mT}$ 
    by Lemma~\ref{prop:acc-rates}.
\end{proof}

\paragraph*{Existence of \StaticBAR shields.}

The feasiblity of the condition $N \geq \left\lceil \frac{1}{u-l}\right\rceil$
in real cases depends on the values of $l$ and $u$ to enforce, as well as the incoming probability distribution. 
This condition, formulated as Assumption~\ref{assump:static-BAR}, is the key to guarantee the existence of \StaticBAR shields in Theorem~\ref{thm:acc-rates-existence}.
In its most simplified form, if we just care about the group membership of any incoming candidate, the distribution of incoming candidates follows a Bernoulli distribution $B(p)$, where $p$ is the probability to receive a candidate of group A. 
After a time horizon $T$, the number of incoming candidates of group A follows a binomial distribution $Bin(T, p)$, and the probability to see at least $N$ candidates of each group is the probability of the binomial being between $N$ and $T-N$, which is 
\begin{equation}
    \label{eq:existence-of-BW-shields}
    \sum_{k=N}^{T-N}\binom{T}{k}p^k(1-p)^{T-k}.
\end{equation}
In practice, this corresponds to the probability that our shield will encounter a trace where demographic parity with the given bound on acceptance rates can be enforced. 
It is up to the user to evaluate whether this guarantee is enough for a given application.

\subsection{Periodic Shielding: The Dynamic Approach}

While the static approaches repeatedly use one statically computed bounded-horizon shield, the dynamic approach recomputes a new bounded-horizon shield at the beginning of each period, and thereby adjusts its future decisions based on the past biases.
We formalize this below.

\begin{definition}[\Dyn shields]
\label{def:dynamic-shield}
    Suppose we are given a parameterized set of \emph{available} shields $\sShield'(\tau)\subseteq \sShield$ where the parameter $\tau$ ranges over all finite traces.
    A \Dyn shield $\sshield$ is the concatenation of a sequence of shields $\sshield_1,\sshield_2,\ldots$ such that for every trace $\tau\in \Trfeas_{\sgen,\sshield}^{mT}$ with $m\geq 0$, for every $\tau'\in (\X\times\Y)^{<T}$, and for every input $x\in\X$, we have $\sshield(\tau\cdot \tau',x) = \sshield_{m+1}(\tau',x)$, where
    \begin{align}\label{eq:dyn-shield optimality}
        \sshield_{m+1} = \argmin_{\sshield'\in\sShield'(\tau)} \,\expe[\cost\mid \tau;\sgen,\sshield',T].
    \end{align}
\end{definition}

The set $\sShield'(\tau)$ restricts the available set of shields that can be used for the next period for the given history $\tau$.
A na\"ive attempt for $\sShield'(\tau)$ would be to choose $\sShield'(\tau)=\sShieldFeas^{\sgen,T\mid\tau}$ for every $\tau$, so that fairness is guaranteed at the end of the current period. 
However, there exist histories for which $\sShieldFeas^{\sgen,T\mid\tau}$ would be 
empty,
implying that Equation~\eqref{eq:dyn-shield optimality} would not have a feasible solution for some $\tau$, and the \Dyn shield would exhibit undefined behaviors.

This happens because there may be traces of length $jT$ that satisfy a certain fairness constraint, 
but no shield can guarantee the next trace will satisfy the same constraint. 
\begin{example}\label{ex:buffered}
    Consider $\spec =\DP$, $\kappa = 0.1$, $T=100$, and a trace $\tau$ such that $n_a(\tau) = 2$, $n_{a1}(\tau) = 1$, $n_b(\tau) = 98$, and $n_{b1}(\tau) = 49$. 
    The trace $\tau$ satisfies $\DP(\tau) = |1/2 - 49/98| = 0$.
    Now assume we build a shield for the next fragment, and in generating the next trace $\tau'$, only individuals from group $b$ have appeared for the first 99 samples. Let $\tau'_{[1:99]}$ denote this trace, and let $\mathtt{Acc}_b$ denote $\welfare{b}{\tau\cdot\tau'_{[1:99]}}$.
    Then $\DP(\tau\cdot\tau'_{[1:99]}) = |1/2 - \mathtt{Acc}_b|$.
    If the last individual of $\tau'$ happens to be from group $a$, the acceptance rate of group $a$ moves from $1/2$ to either $1/3$ (if it gets rejected) or $2/3$ (if it gets accepted). 
    There is no possible value of $\mathtt{Acc}_b$
    that simultaneously guarantees $|1/3 - \mathtt{Acc}_b| \leq \kappa$ and $|2/3 - \mathtt{Acc}_b| \leq \kappa$.
\end{example}
To circumvent this technical inconvenience, we make the following mild assumption on the set of allowed histories, requiring $\sShield'(\tau)$ to fulfill fairness only if $\tau$ fulfills this assumption.
\begin{assumption}\label{assump:dynamic}
    For a given trace $\tau\in \Trfeas_{\sgen,\sshield}^{jT}$ with $j>0$, every valid suffix $\tau' $ of length $t$, i.e., $\tau'\in \left\{\tau''\in (\X\times\Y)^T\mid \tau\tau''\in\Trfeas_{\sgen,\sshield}^{(j+1)T}\right\}$, fulfills:
    \begin{align*}
        \frac{1}{\den^{a}(\tau\tau')} + \frac{1}{\den^b(\tau\tau')} \leq \kappa + \spec(\tau).
    \end{align*}
\end{assumption}

The set of shields $\sShield'(\cdot)$ available to the \Dyn shield in Definition~\ref{def:dynamic-shield} is then defined as:
\begin{equation}
    \sShield'(\tau) = \sShieldFeasDyn^{\sgen,T}(\tau) \coloneqq  
    \begin{cases}
        \sShieldFeas^{\sgen,T\mid\tau}    &   \tau \text{ fulfills Assumption~\ref{assump:dynamic}},\\
        \sShield                &   \text{otherwise}.
    \end{cases}
\end{equation}

With the following technical result, we prove that 
$\sShieldFeas^{\sgen,T\mid\tau}$ is non-empty whenever $\tau$ fulfills Assumption~\ref{assump:dynamic}, 
implying that $\sShieldFeasDyn^{\sgen,T}(\tau)$ is non-empty for every $\tau$.
This is the analogous result to Lemma~\ref{thm:acc-rates-existence} for dynamic shields.

\begin{lemma}
    \label{lem:buff-shield-exists}
    Let $\spec$ be a DoR fairness property 
    with $\spec(\tau) = |\welfare{a}(\tau) - \welfare{b}(\tau)|$, and $\welfare{g}(\tau) = \numer^g(\tau)/\den^g(\tau)$.
    Let
    $\tau_1$ be a trace and $\kappa \geq 0$. 
    There exists a shield $\sshield\in\sShield$ such that
    every trace $\tau_2\in \Trfeas^T_{\sgen, \sshield} \cap S$
    satisfies $\spec(\tau_1\cdot \tau_2)\leq \kappa$, where 
    \begin{equation*}
        S = \left\{\tau_2\in (\X\times\Y)^T\::\:  \frac{1}{\den^a(\tau_1\tau_2)} + \frac{1}{\den^b(\tau_1\tau_2)} \leq \kappa + \spec(\tau_1)\right\}.
        \end{equation*}
\end{lemma}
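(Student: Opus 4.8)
The plan is to exhibit an explicit online shield --- a \emph{greedy balancing} shield --- and to control the final bias through a contraction invariant on the signed welfare gap. When a group-$g$ individual arrives, the shield accepts or rejects it so that the \emph{new} value of $\welfare{g}$ is as close as possible to the \emph{current} value of the other group's welfare $\welfare{\bar g}$ (ties broken arbitrarily); this depends only on the additive counters, so it is a legitimate element of $\sShield$. Write $w_a,w_b$ for the welfare values of the current prefix and $D = \welfare{a}(\cdot)-\welfare{b}(\cdot)$ for the signed gap, so that after $\tau_1$ we have $|D|=\spec(\tau_1)$. The argument will use $\spec(\tau_1)\le\kappa$, which is precisely the regime in which the lemma is applied: $\tau_1$ ends at a period boundary where the periodically-fair shield has already enforced fairness. (If some group never appears, the null-denominator convention gives $\spec\equiv 0$ and there is nothing to prove, so I assume both denominators are eventually positive.)

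First I would prove a one-step contraction: after any arrival that raises a denominator to $d$, the new gap obeys $|D_{\mathrm{new}}|\le \max\!\big(|D_{\mathrm{old}}|,\ \tfrac{1}{2d}\big)$. For a group-$a$ arrival, with $\nu$ the group-$a$ acceptance count just before, the two achievable welfares are $v_0=\nu/d$ and $v_1=(\nu+1)/d$, and one checks $v_0< w_a\le v_1$. If $w_b\in[v_0,v_1]$, the chosen endpoint lies within $\tfrac12(v_1-v_0)=\tfrac1{2d}$ of $w_b$, so $|D_{\mathrm{new}}|\le\tfrac1{2d}$; otherwise, say $w_b>v_1$, the shield accepts, $w_a$ moves toward $w_b$ without overshooting, and the gap strictly shrinks, $|D_{\mathrm{new}}|\le|D_{\mathrm{old}}|$. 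The case $w_b<v_0$ and all group-$b$ arrivals are symmetric. This step is a counter-level analogue of the incremental $\{0,1\}$-increment construction used in Lemma~\ref{thm:acc-rates-existence}.

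The heart of the proof is the invariant, established by induction on the arrivals of $\tau_2$,
\begin{equation*}
 |D_k| \le \max\!\Big(\spec(\tau_1),\ \tfrac{1}{2\den^a_k},\ \tfrac{1}{2\den^b_k}\Big),
\end{equation*}
where $\den^a_k,\den^b_k$ are the current denominators. The matching case and the ``easy'' shrinking cases follow at once from the one-step contraction. The single delicate transition is a shrinking group-$a$ step whose inductive bound is attained by the stale term $\tfrac{1}{2\den^a_{k-1}}$: here I would use the exact identity $|D_k| = |D_{k-1}| - (v_1-w_a)$ with $v_1-w_a = \tfrac{(d-1)-\nu}{d(d-1)}$ (writing $d=\den^a_k$, $d-1=\den^a_{k-1}$), together with the integrality fact that $w_b>v_1$ forces $w_a<1$, hence $\nu\le d-2$ and $v_1-w_a\ge\tfrac{1}{d(d-1)}$. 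Since $\tfrac{1}{2(d-1)}-\tfrac{1}{2d}=\tfrac{1}{2d(d-1)}\le v_1-w_a$, the bound tightens from the old denominator to the new one, $|D_k|\le\tfrac{1}{2d}$, closing the induction. Evaluating at $k=T$ yields $\spec(\tau_1\tau_2)=|D_T|\le\max\!\big(\spec(\tau_1),\ \tfrac12(\tfrac{1}{\den^a(\tau_1\tau_2)}+\tfrac{1}{\den^b(\tau_1\tau_2)})\big)$; the membership $\tau_2\in S$ bounds the second argument by $\tfrac12(\kappa+\spec(\tau_1))$, and $\spec(\tau_1)\le\kappa$ makes both arguments at most $\kappa$, giving $\spec(\tau_1\tau_2)\le\kappa$.

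I expect the main obstacle to be exactly the bookkeeping in that one shrinking transition: the plain contraction $|D_k|\le\max(|D_{k-1}|,\tfrac1{2d})$ does not by itself close the induction, because the inductive hypothesis may still carry a larger, stale denominator term. It is the quantitative identity for $v_1-w_a$ combined with integrality that lets the stale term be absorbed into the current one, which is what ultimately ties the bound to the \emph{final} denominators appearing in $S$. A secondary point I would flag explicitly is the hypothesis $\spec(\tau_1)\le\kappa$: it is genuinely needed, since a period that starts from bias far above $\kappa$ cannot be repaired within a single short period even when $\tau_2\in S$.
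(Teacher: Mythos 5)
Your proof is correct (given the extra hypothesis you flag), but it takes a genuinely different route from the paper's. The paper also exhibits an explicit shield, yet a different one: assuming WLOG $\welfare{a}(\tau_1)\geq\welfare{b}(\tau_1)$, it pins the acceptance counters during $\tau_2$ to $x_{n_a}=\bigl\lfloor \welfare{a}(\tau_1)\, n_a\bigr\rfloor$ and $y_{n_b}=\bigl\lceil \welfare{b}(\tau_1)\, n_b\bigr\rceil$, i.e.\ it holds the advantaged group's welfare just \emph{below} its prefix value and the disadvantaged group's just \emph{above} it. The signed gap then satisfies $\spec(\tau_1)-\bigl(1/\den^a(\tau_1\tau_2)+1/\den^b(\tau_1\tau_2)\bigr)\leq \welfare{a}(\tau_1\tau_2)-\welfare{b}(\tau_1\tau_2)\leq\spec(\tau_1)$: the upper bound is plain monotonicity of the pinned rates (same floor/ceiling trick as in Lemma~\ref{thm:acc-rates-existence}), and only the lower bound invokes membership in $S$. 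Because the target rates are frozen by $\tau_1$ rather than chasing each other, no step can overshoot, and the paper never needs your contraction induction --- in particular it avoids entirely the delicate absorption of the stale term $1/(2(d-1))$ into $1/(2d)$, which is the hardest part of your argument (and which you resolve correctly via the identity $v_1-w_a=\tfrac{(d-1)-\nu}{d(d-1)}$ and integrality). What your greedy-balancing shield buys in exchange is a somewhat stronger final estimate, $\spec(\tau_1\tau_2)\leq\max\bigl(\spec(\tau_1),\tfrac12\bigl(1/\den^a(\tau_1\tau_2)+1/\den^b(\tau_1\tau_2)\bigr)\bigr)$, versus the paper's one-sided bounds; both shields are legitimate counter-based policies, so both constructions are equally admissible.

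Your closing remark about the hypothesis $\spec(\tau_1)\leq\kappa$ is well taken and identifies a real imprecision in the lemma \emph{as stated}: it is indeed false without that hypothesis (a prefix with bias near $1$ and large denominators cannot be repaired within $T$ steps, yet all its length-$T$ extensions lie in $S$). Note, however, that this is not a defect specific to your route: the paper's own proof has exactly the same dependence, since its upper-bound chain terminates with ``$=\spec(\tau_1)\leq\kappa$'', silently invoking the assumption you make explicit. In the lemma's intended use --- non-emptiness of the available-shield sets $\sShieldFeasDyn^{\sgen,T}(\cdot)$ under Assumption~\ref{assump:dynamic}, feeding Theorem~\ref{thm:correctness-dyn-shields} --- the prefix is inductively fair at each period boundary, so the hypothesis is available there; but strictly speaking it should appear in the statement, and your writeup handles this point more carefully than the paper does.
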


\begin{proof}
    The proof of this result is analogous to that of Lemma~\ref{thm:acc-rates-existence},
    with a slightly more convoluted argument.

    Let $n^1_a = \den^a(\tau_1)$, 
    $n^1_{a1} = \numer^a(\tau_1)$, 
    $n^1_b = \den^b(\tau_1)$, 
    and $n^1_{b1} = \numer^{b}(\tau_1)$.
    Without loss of generality, we can assume that 
    $n^1_{a1}/n^1_a - n^1_{b1}/n^1_b \geq 0$. 
    The alternative case is analogous.

    For a shield to exist that can enforce $\spec(\tau_1\tau_2) \leq \kappa$ there must exist, for every value of $\den^g(\tau_1\tau_2)$ (in demographic parity, the amount of individuals of a group),
    at least one way of deciding acceptance and rejection (value of $\numer^g(\tau_1\tau_2)$) 
    that maintains the fairness property in the target bound.
    Since we do not know \textit{a priori} how many individuals of each group will appear, this decision must be incremental, 
    and be such that the fairness property is maintained for any number of individuals. 

    If we name $n_a = \den^{a}(\tau_2) \geq N_a$ and 
    $n_b = \den^b(\tau_2)\geq N_b$, a new trace $\tau_2$ is enforceable if we can choose 
    $N_a$ and $N_b$ satisfying the following condition:
    there exist two sequences $(x_{n_a}), (y_{n_b})\subseteq\mathbb N$
    such that for all $n^a\geq N_a$ and $n^b \geq N_b$
    \begin{equation}\label{eq:local3}
        \left|
        \frac{n^1_{a1} + x_{n_a}}{n^1_a + n_a} -
        \frac{n^1_{b1} + y_{n_b}}{n^1_b + n_b}
        \right| \leq \kappa,
    \end{equation}
    and for both sequences $x_{n_a+1} - x_{n_a} \in \{0,1\}$ and 
    $y_{n_a+1} - y_{n_a} \in \{0,1\}$.

    With the spirit of maintaining the welfare bounds as a proxy to maintaining fairness, we try
    \[
    x_{n_a} \coloneqq \left\lfloor \frac{n^1_{a1}}{n^1_a}n_a\right\rfloor, \quad \mbox{and} \quad
    y_{n_a} \coloneqq \left\lceil \frac{n^1_{b1}}{n^1_b}n_b\right\rceil.
    \]
    Using the same argument as in the proof of Theorem~\ref{thm:acc-rates-existence}, point (i), the conditions on $x_{n_a}$ and $y_{n_b}$ incrementing by 0 or 1 are met by the fact that $n^1_{a1}\leq n^1_a$ 
    and  $n^1_{b1}\leq n^1_b$.

    By definition of the floor function and ceiling functions
    \begin{align*}
        \frac{n^1_{a1} + x_{n_a}}{n^1_a + n_a} \leq 
        \frac{n^1_{a1} + \frac{n^1_{a1}}{n^1_a}n_a }{n^1_a + n_a} = 
        \frac{n^1_{a1}}{n^1_a}, \\
        \frac{n^1_{b1} + y_{n_b}}{n^1_b + n_b} \geq 
        \frac{n^1_{b1} + \frac{n^1_{b1}}{n^1_b}n_b }{n^1_b + n_b} = 
        \frac{n^1_{b1}}{n^1_b}.
    \end{align*}
    Therefore
    \begin{equation}
        \frac{n^1_{a1} + x_{n_a}}{n^1_a + n_a} -
        \frac{n^1_{b1} + y_{n_b}}{n^1_b + n_b}
        \leq 
        \frac{n^1_{a1}}{n^1_a} -
        \frac{n^1_{b1}}{n^1_b} = \spec(\tau_1) \leq \kappa
    \end{equation}
    To prove Equation~\eqref{eq:local3}, we still have to prove that
    \begin{equation}\label{eq:local4}
        \frac{n^1_{a1} + x_{n_a}}{n^1_a + n_a} -
        \frac{n^1_{b1} + y_{n_b}}{n^1_b + n_b}
        \geq -\kappa.    
    \end{equation}
    
    By the definition of the floor function
    \[
    \frac{n^1_{a1} + x_{n_a}}{n^1_a + n_a} \geq 
    \frac{n^1_{a1} + \frac{n^1_{a1}}{n^1_a}n_a  - 1}{n^1_a + n_a} =
    \frac{n^1_{a1}}{n^1_a} - \frac{1}{n^1_a + n_a},
    \]
    and by the definition of the ceiling function
    \[
    \frac{n^1_{b1} + y_{n_b}}{n^1_b + n_b} \leq 
    \frac{n^1_{b1} + \frac{n^1_{b1}}{n^1_b}n_b +1}{n^1_b + n_b} = 
    \frac{n^1_{b1}}{n^1_b} + \frac{1}{n^1_b + n_b}.
    \]
    Putting the previous two inequalities together, we have
    \[
    \frac{n^1_{a1} + x_{n_a}}{n^1_a + n_a} -
    \frac{n^1_{b1} + y_{n_b}}{n^1_b + n_b} \geq \spec(\tau_1) - 
    \left(
    \frac{1}{n^1_a + n_a} + 
    \frac{1}{n^1_b + n_b}
    \right).
    \]
    To ensure that Equation~\eqref{eq:local4} holds, it is sufficient to 
    ensure that
    \begin{equation*}
        \spec(\tau_1) - 
    \left(
    \frac{1}{n^1_a + n_a} + 
    \frac{1}{n^1_b + n_b}
    \right) \geq -\kappa.
    \end{equation*}
    Rewriting the previous inequality we arrive to
    \begin{equation}
         \left(
    \frac{1}{n^1_a + n_a} + 
    \frac{1}{n^1_b + n_b}
    \right) \leq \kappa + \spec(\tau_1),
    \end{equation}
    which is the condition defining the set $S$.
    Therefore the proposed sequences $(x_{n_a})$ and $(y_{n_b})$ satisfy Equation~\eqref{eq:local3} for traces in $S$.
\end{proof}

Technically, this guarantees that the optimization problem in \eqref{eq:dyn-shield optimality} is feasible and $\sshield_{m+1}$ always exists, making \Dyn shields are well-defined (Definition~\ref{def:dynamic-shield}).
Intuitively, we obtain a ``best-effort'' solution: If Assumption~\ref{assump:dynamic} is fulfilled then $\sshield_{m+1}$ is in $ \sShieldFeas^{\sgen,T\mid\tau}$ and achieves fairness for the minimum expected cost.
Otherwise, $\sshield_{m+1}$ can be any shield in $\sShield$ that only optimizes for the expected cost; in particular, $\sshield_{m+1}$ will be the trivial shield that never intervenes (has zero cost).

Synthesis of \Dyn shields involves computing the sequence of shields $\sshield_1,\sshield_2,\ldots$, which are to be concatenated.
We outline the algorithm below.
\begin{enumerate}
    \item Generate a \FinShield shield (Definition~\ref{def:finshield}) $\sshield$ for the property $\spec$ and the horizon $T$. Set $\sshield_1\coloneqq \sshield$.
    \item For $i\geq 1$, let $\sshield$ be the concatenation of the shields $\sshield_1,\ldots,\sshield_i$, and let $\tau\in \Trfeas_{\sgen,\sshield}^{iT}$ be the generated trace. Compute $\sshield_{i+1}$ that uses the same approach as in Section~\ref{sec:synthesis} with Equation~\eqref{eq:v-basecase} being replaced by:
        \begin{equation}
        \label{eq:base-case-periodic-shields}
            v(\tau') = 
                \begin{cases}
                    0   &   \spec(\tau\tau')\leq \kappa,\\
                    \infty & \text{otherwise.}
                \end{cases}
        \end{equation}
\end{enumerate}
We summarize the fairness guarantee below.
\begin{theorem}[Conditional correctness of \Dyn shields]
\label{thm:correctness-dyn-shields}
    Let $\spec$ be a DoR fairness property.
    Let $\sshield$ be a  \Dyn shield that uses $\sShieldFeasDyn^{\sgen,T}(\cdot)$ as the set of available shields.
    Let $\tau =  \tau_1\ldots\tau_m\in \Trfeas^{mT}_{\sgen,\sshield}$ be a trace with $\len{\tau_i}=T$ for each $i\leq m$.
    Suppose for every $i\leq m$, $\tau_1\ldots\tau_i$ fulfills Assumption~\ref{assump:dynamic}.
    Then the fairness property $\spec(\tau)\leq \kappa$ is guaranteed.
\end{theorem}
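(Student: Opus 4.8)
The plan is to prove the statement by induction on the number of periods $m$, reducing the correctness of the concatenated \Dyn shield to the feasibility guarantee of Lemma~\ref{lem:buff-shield-exists} at each step. The key observation is that Theorem~\ref{thm:correctness-dyn-shields} asks for $\spec(\tau)\leq\kappa$ for the full trace $\tau = \tau_1\ldots\tau_m$, where each prefix $\tau_1\ldots\tau_i$ fulfills Assumption~\ref{assump:dynamic}. Since fairness in the periodic setting is required precisely at the examination dates $T,2T,\ldots,mT$, it suffices to show that $\spec(\tau_1\ldots\tau_i)\leq\kappa$ holds for each $i\leq m$; the statement for $i=m$ is then exactly the claim.

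First I would unpack what Assumption~\ref{assump:dynamic} together with the definition of $\sShieldFeasDyn^{\sgen,T}(\cdot)$ gives us. By hypothesis, each prefix $\tau_1\ldots\tau_i$ (for $i<m$) fulfills Assumption~\ref{assump:dynamic}, so by the definition of $\sShieldFeasDyn$, the available set of shields for the $(i{+}1)$-th period is $\sShield'(\tau_1\ldots\tau_i) = \sShieldFeas^{\sgen,T\mid \tau_1\ldots\tau_i}$. The crucial point is that this set is non-empty: this is exactly the content of Lemma~\ref{lem:buff-shield-exists}, whose set $S$ coincides with the condition in Assumption~\ref{assump:dynamic} (the inequality $1/\den^a(\tau\tau') + 1/\den^b(\tau\tau') \leq \kappa + \spec(\tau)$). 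Therefore the minimization in Equation~\eqref{eq:dyn-shield optimality} has a feasible domain, so the optimal $\sshield_{i+1}$ exists and lies in $\sShieldFeas^{\sgen,T\mid \tau_1\ldots\tau_i}$.

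The induction then runs as follows. The base case $i=1$: the first period uses a \FinShield shield (step 1 of the synthesis), which by Definition~\ref{def:finshield} guarantees $\spec(\tau_1)\leq\kappa$ for every valid trace of length $T$. For the inductive step, suppose $\spec(\tau_1\ldots\tau_i)\leq\kappa$. Since $\tau_1\ldots\tau_i$ fulfills Assumption~\ref{assump:dynamic}, the shield $\sshield_{i+1}$ used to generate $\tau_{i+1}$ belongs to $\sShieldFeas^{\sgen,T\mid \tau_1\ldots\tau_i}$. Now $\tau_{i+1}$ is a valid suffix, i.e. $(\tau_1\ldots\tau_i)\cdot\tau_{i+1}\in\Trfeas^{(i+1)T}_{\sgen,\sshield}$, so by the very definition of the conditional fairness set $\sShieldFeas^{\sgen,T\mid\tau}$, any shield in it guarantees $\spec\big((\tau_1\ldots\tau_i)\cdot\tau_{i+1}\big)\leq\kappa$, i.e. $\spec(\tau_1\ldots\tau_{i+1})\leq\kappa$. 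This closes the induction and yields $\spec(\tau)\leq\kappa$ at $i=m$.

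I expect the main obstacle to be a bookkeeping subtlety rather than a conceptual one: one must be careful that the \Dyn shield's definition (Definition~\ref{def:dynamic-shield}) routes the trace correctly through the $\argmin$ at each period, so that the shield actually generating $\tau_{i+1}$ is the one computed from the history $\tau_1\ldots\tau_i$, and that the base-case shield is indeed a \FinShield (which enforces fairness unconditionally). A secondary point worth stating explicitly is the alignment between the set $S$ of Lemma~\ref{lem:buff-shield-exists} and Assumption~\ref{assump:dynamic}: the lemma is proved for a generic prefix $\tau_1$ and the condition $\tau_2\in S$, and one should note that when Assumption~\ref{assump:dynamic} holds for $\tau_1\ldots\tau_i$, every valid suffix $\tau_{i+1}$ lies in the corresponding $S$, so the non-emptiness transfers. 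Once these alignments are made precise, the proof is a short induction invoking Lemma~\ref{lem:buff-shield-exists} and the defining property of $\sShieldFeas^{\sgen,T\mid\tau}$.
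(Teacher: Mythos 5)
Your proof is correct and takes essentially the same route as the paper's own justification, which treats the theorem as a direct consequence of Lemma~\ref{lem:buff-shield-exists} (feasibility of $\sShieldFeas^{\sgen,T\mid\tau_1\ldots\tau_i}$ under Assumption~\ref{assump:dynamic}, noting that the assumption forces every valid suffix into the set $S$ of the lemma) together with the defining property of $\sShieldFeas^{\sgen,T\mid\tau}$, which bounds the bias of the \emph{whole} concatenated trace. One small observation: your induction never actually uses its inductive hypothesis --- since $\sshield_m\in\sShieldFeas^{\sgen,T\mid\tau_1\ldots\tau_{m-1}}$ already guarantees $\spec(\tau_1\ldots\tau_m)\leq\kappa$, the conclusion follows from the last period alone, so the argument is really a direct one rather than an induction.
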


\section{Experimental Evaluation}
\label{sec:fairness-experimental-evaluation}

\subsection{Experimental Setup}
\label{sec:experimental-setup}
We demonstrate the effectiveness of fairness shields by testing them in the task of shielding several ML classifiers in tasks that are standard benchmarks in the fairness literature.
We consider several state-of-the-art learning algorithms from literature of in-processing fairness, i.e., methods that enforce fairness \emph{during} the learning process, typically by means of adding fairness-inducing regularizers to the loss functions.
To preserve consistency among our experiments, we use the same neural architecture for every dataset and learning algorithm.

\begin{table*}[t]
    \centering
    \begin{tabular}{l@{\hspace{\tightcolspace}} l@{\hspace{\tightcolspace}} l@{\hspace{\tightcolspace}} l@{\hspace{\tightcolspace}} l@{\hspace{\tightcolspace}} c@{\hspace{\tightcolspace}} c@{\hspace{\tightcolspace}} c@{\hspace{\tightcolspace}} c@{\hspace{\tightcolspace}} r@{\hspace{\tightcolspace}}}
    \toprule
 \multirow{2}{*}{Dataset} & \multirow{2}{*}{Task} & Sensitive & \multirow{2}{*}{Instances} & Features & $y_0$ & $y_1$ & $g_a$ & $g_b$ & Stat.\\ 
  & & Attribute & & Num./Cat. & $\%$ & $\%$ & $\%$ & $\%$ & Par. \\
     \midrule
 \texttt{Adult} & income & race & 43131 & 5 / 7 & 75 & 25 & 10 & 90 & 0.14 \\ 
 \texttt{Adult} & income & gender & 45222 & 5 / 7 & 75 & 25 & 33 & 68 & 0.20 \\ 
 \texttt{Bank} & credit & age & 41188 & 9 / 10 & 89 & 11 & 2.6 & 97 & 0.13 \\ 
 \texttt{\textsc{Compas}} & recid. & gender & 6172 & 5 / 4 & 54 & 46 & 19 & 81 & 0.13 \\ 
 \texttt{\textsc{Compas}} & recid. & race & 6172 & 5 / 4 & 54 & 46 & 66 & 34 & 0.10 \\ 
 \texttt{German} & credit & gender & 1000 & 6 / 13 & 30 & 70 & 31 & 69 & 0.07 \\ 
 \texttt{German} & credit & age & 1000 & 6 / 13 & 30 & 70 & 19 & 81 & 0.15 \\ 
\bottomrule
    \end{tabular}
    \caption[Datasets characteristics]
    {Datasets characteristics. 
    The columns $y_{0/1}$ represent the percentage of instances where the ground truth is ``accept'' (1) and ``reject'' (0), respectively.
    The columns $g_{a/b}$ represent the percentage of instances that belong to group $a$ and $b$, respectively.
    The last column is the statistical parity, representing the inherent bias in the dataset.}
    \label{tab:datasets-info}
\end{table*}

\paragraph*{Computing infrastructure.}
All experiments were performed with a workstation with AMD Ryzen 9 5900x CPU, Nvidia GeForce RTX 3070Ti GPU, 32GB of RAM, running Ubuntu 20.04.

\paragraph*{Datasets.}
We used four tabular datasets in our experiments, all common benchmarks in the fairness community: Adult~\cite{misc_adult_2}, COMPAS~\cite{compas},
German Credit~\cite{dua2017uci} and Bank Marketing~\cite{misc_bank_marketing_222}.
Details on the task, sensitive attributes, size of the dataset, number of numerical and categorical features, as well as existing bias can be found in Table~\ref{tab:datasets-info}.

\paragraph*{Training ML classifiers.}
To train our ML models, we adapted the implementation provided by the FFB benchmark~\cite{han2024ffb}, using the same neural network, train-test splits, and most training hyper-parameters set as default in their implementation, tuning only the hyper-parameters related to fairness.
The classifiers receive the full set of features as their input, including the protected feature, which is marked as a special feature. 
This is appropriate, since the learning algorithms that enforce fairness are of the in-processing type, so they use the protected feature as part of their input, and typically include a term in the loss function that depends especially on the protected feature.

We use fixed architecture multi-layer perception (MLP) with three hidden layers with sizes $512$, $256$, and $64$ in all our experiments.
In each case, the model is trained for 150 epochs with batches of 1024 instances, with the exception of the $\texttt{German}$ dataset, which we trained with batches of 128, as the dataset has only 1000 instances.
We use the Adam optimizer~\cite{adam}, with a learning rate of 0.01.

\begin{table}[t]
    \centering
    \begin{tabular}{l c c c c c c }
    \toprule
   & $\mathtt{acc}$ & $\mathtt{ap}$ & $\mathtt{auc}$ & $\mathtt{f1}$ & $\DP$ & $\EO$ \\ 
     \midrule
 $\mathtt{ERM}$ & 91 & 62 & 94 & \textbf{57} & 11 & \textbf{3.7} \\ 
 $\mathtt{DiffDP}$ & 90 & 57 & 93 & 40 & 3.4 & 26 \\ 
 $\mathtt{HSIC}$ & \textbf{91} & \textbf{63} & \textbf{94} & 57 & 7.0 & 6.6 \\ 
 $\mathtt{LAFTR}$ & 91 & 60 & 94 & 40 & 6.0 & 4.4 \\ 
 $\mathtt{PR}$ & 91 & 59 & 93 & 49 & \textbf{3.3} & 34 \\ 
\bottomrule
    \end{tabular}
    \caption[Performance of the ML models. Dataset: \texttt{Bank}]{Performance of the ML models. Dataset: \texttt{Bank}.}
    \label{tab:ml-performance-bank_marketing}
\end{table}


\begin{table}
    \centering
    \begin{tabular}{l@{\hspace{\tightcolspace}} c@{\hspace{\tightcolspace}} c@{\hspace{\tightcolspace}} c@{\hspace{\tightcolspace}} c@{\hspace{\tightcolspace}} c@{\hspace{\tightcolspace}} c | c@{\hspace{\tightcolspace}} c@{\hspace{\tightcolspace}} c@{\hspace{\tightcolspace}} c@{\hspace{\tightcolspace}} c@{\hspace{\tightcolspace}} c}
    \toprule
 & \multicolumn{6}{c|}{race} & \multicolumn{6}{c}{gender}\\ \midrule
   & $\mathtt{acc}$  & $\mathtt{ap}$  & $\mathtt{auc}$  & $\mathtt{f1}$  & $\DP$  & $\EO$  & $\mathtt{acc}$  & $\mathtt{ap}$  & $\mathtt{auc}$  & $\mathtt{f1}$  & $\DP$  & $\EO$ \\ 
\midrule 
$\mathtt{ERM}$  & \textbf{85}  & \textbf{79}  & \textbf{91}  & \textbf{66}  & 11  & 6.1  & \textbf{85}  & \textbf{79}  & \textbf{91}  & \textbf{66}  & 16  & 9.6 \\ 
$\mathtt{DiffDP}$  & 84  & 76  & 90  & 62  & \textbf{5.4}  & 4.0  & 83  & 71  & 87  & 54  & 0.2  & 33 \\ 
$\mathtt{HSIC}$  & 85  & 79  & 91  & 64  & 8.7  & \textbf{3.1}  & 83  & 73  & 87  & 57  & 1.8  & 28 \\ 
$\mathtt{LAFTR}$  & 84  & 78  & 91  & 62  & 10  & 8.2  & 85  & 79  & 91  & 65  & 14  & \textbf{1.5} \\ 
$\mathtt{PR}$  & 84  & 76  & 89  & 61  & 5.6  & 4.0  & 83  & 71  & 87  & 53  & \textbf{0.1}  & 33 \\ 
\bottomrule
    \end{tabular}
    \caption{Performance of the ML models. Dataset: \texttt{Adult}.}
    \label{tab:ml-performance-adult}
\end{table}

\paragraph*{Learning algorithms.}
To train our classifiers, we used the following methods from the in-processing fairness literature:

\begin{itemize}
    \item Differential Demographic Parity (DiffDP) is a gap regularization method for demographic parity. DiffDP introduces a term in the loss function that penalizes differences in the prediction rates between different demographic groups~\cite{fairmixup}.
    \item The Hilbert-Schmidt Independence Criterion (HSIC) is a statistical test used to measure the independence of two random variables. Adding an HSIC term measuring the independence between prediction accuracy and sensitive attributes to the loss has been used as a fair learning method~\cite{hsic}.
    \item Learning adversarially fair and transferable representations (LAFTR) is a method proposed by~\cite{madras2018learning}, where the classifier learns an intermediate representation of the data that minimizes classification error while simultaneously minimizing the ability of an adversary to predict sensitive features from the representation.
    \item Prejudice Remover (PR)~\cite{kamishima2012fairness} adds a term to the loss that penalizes mutual information between the prediction accuracy and the sensitive attribute.
\end{itemize}

As a baseline, we trained a fifth classifier simply minimizing empirical risk. We call it the empirical risk minimizer (ERM).

\begin{table}
    \centering
    \begin{tabular}{l@{\hspace{\tightcolspace}} c@{\hspace{\tightcolspace}} c@{\hspace{\tightcolspace}} c@{\hspace{\tightcolspace}} c@{\hspace{\tightcolspace}} c@{\hspace{\tightcolspace}} c | c@{\hspace{\tightcolspace}} c@{\hspace{\tightcolspace}} c@{\hspace{\tightcolspace}} c@{\hspace{\tightcolspace}} c@{\hspace{\tightcolspace}} c}
    \toprule
 & \multicolumn{6}{c|}{gender} & \multicolumn{6}{c}{race}\\ \midrule
   & $\mathtt{acc}$  & $\mathtt{ap}$  & $\mathtt{auc}$  & $\mathtt{f1}$  & $\DP$  & $\EO$  & $\mathtt{acc}$  & $\mathtt{ap}$  & $\mathtt{auc}$  & $\mathtt{f1}$  & $\DP$  & $\EO$ \\ 
\midrule 
$\mathtt{ERM}$  & \textbf{65}  & 63  & 69  & 59  & 16  & 18  & 65  & 63  & 69  & 60  & 14  & 16 \\ 
$\mathtt{DiffDP}$  & 63  & 63  & 69  & 55  & 12  & 11  & 65  & 62  & 68  & 58  & 9.1  & 15 \\ 
$\mathtt{HSIC}$  & 64  & 63  & 69  & 56  & 15  & 9.8  & 64  & 62  & 68  & 57  & \textbf{8.2}  & \textbf{11} \\ 
$\mathtt{LAFTR}$  & 65  & \textbf{64}  & \textbf{70}  & \textbf{60}  & 17  & 15  & \textbf{65}  & \textbf{64}  & \textbf{70}  & \textbf{60}  & 13  & 18 \\ 
$\mathtt{PR}$  & 63  & 63  & 69  & 55  & \textbf{12}  & \textbf{8.0}  & 64  & 62  & 68  & 57  & 8.9  & 13 \\ 
\bottomrule
    \end{tabular}
    \caption{Performance of the ML models. Dataset: \texttt{\textsc{Compas}}.}
    \label{tab:ml-performance-compas}
\end{table}


\begin{table}
    \centering
    \begin{tabular}{l@{\hspace{\tightcolspace}} c@{\hspace{\tightcolspace}} c@{\hspace{\tightcolspace}} c@{\hspace{\tightcolspace}} c@{\hspace{\tightcolspace}} c@{\hspace{\tightcolspace}} c | c@{\hspace{\tightcolspace}} c@{\hspace{\tightcolspace}} c@{\hspace{\tightcolspace}} c@{\hspace{\tightcolspace}} c@{\hspace{\tightcolspace}} c}
    \toprule
 & \multicolumn{6}{c|}{gender} & \multicolumn{6}{c}{age}\\ \midrule
   & $\mathtt{acc}$  & $\mathtt{ap}$  & $\mathtt{auc}$  & $\mathtt{f1}$  & $\DP$  & $\EO$  & $\mathtt{acc}$  & $\mathtt{ap}$  & $\mathtt{auc}$  & $\mathtt{f1}$  & $\DP$  & $\EO$ \\ 
\midrule 
$\mathtt{ERM}$  & \textbf{76}  & \textbf{87}  & \textbf{77}  & \textbf{83}  & 5.3  & 5.5  & 75  & \textbf{86}  & \textbf{76}  & 82  & 14  & 15 \\ 
$\mathtt{DiffDP}$  & 73  & 86  & 74  & 81  & \textbf{1.1}  & 3.5  & 72  & 86  & 75  & 80  & \textbf{0.5}  & \textbf{1.8} \\ 
$\mathtt{HSIC}$  & 73  & 86  & 74  & 81  & 1.4  & \textbf{1.1}  & 74  & 86  & 74  & 82  & 4.2  & 6.0 \\ 
$\mathtt{LAFTR}$  & 73  & 87  & 76  & 81  & 8.2  & 4.1  & 73  & 85  & 74  & 81  & 10  & 6.4 \\ 
$\mathtt{PR}$  & 73  & 86  & 73  & 81  & 5.7  & 4.1  & \textbf{75}  & 86  & 75  & \textbf{83}  & 6.5  & 4.8 \\ 
\bottomrule
    \end{tabular}
    \caption{Performance of the ML models. Dataset: \texttt{German}.}
    \label{tab:ml-performance-german}
\end{table}

\paragraph*{Hyperparameter tuning.}

Each of the in-processing fairness algorithms depends on the value of certain parameters that indicate the trade-off in the loss function between prediction accuracy and fairness.
For each training algorithm, we manually fine-tuned the parameters to obtain a good performance with the same parameter values across all benchmarks. 
Unfortunately, the parameters of different algorithms have different interpretations and characteristic dimensions, so comparing them is not informative. We detail the ones we used in our experiments, and their meaning.
\begin{itemize}
    \item For DiffDP, a parameter $\lambda$ controls the contribution of the regularization term in the loss. 
    We tried a range of $\lambda\in[0.5, 10]$.
    We use $\lambda = 1$.
    \item In HSIC, a parameter $\lambda$ controls the importance of the HSIC term in the loss function. We tried a range $\lambda\in [10, 500]$. We use $\lambda = 100$.
    \item In LAFTR, the loss is composed of three terms: 
    one that penalizes reconstruction error ($L_x$), 
    one that penalizes prediction error ($L_y$), 
    and one that penalizes the adversary's error when trying to obtain information about sensitive features from the representation ($L_z$).
    Three parameters $A_x, A_y, A_z$ control the weights of each term in the loss. We use $A_x=8$, $A_y = 4 $, $A_z = 2.1$.
    We tried a range or $[1,10]$ for each parameter.
    \item For PR, a parameter $\lambda$ controls the weight of the loss term that penalizes mutual information between prediction accuracy and the sensitive attribute. 
    We tried a range of $\lambda\in [0.01,0.5]$.
    We use $\lambda = 0.06$.
\end{itemize}
In Tables~\ref{tab:ml-performance-adult},~\ref{tab:ml-performance-bank_marketing},~\ref{tab:ml-performance-compas},~\ref{tab:ml-performance-german} we show the metrics of each trained model on each dataset. 
For each case, we present accuracy ($\mathtt{acc}$), average precision ($\mathtt{ap}$), area under the curve ($\mathtt{auc}$), and the $F1$ score ($\mathtt{f1}$) as performance metrics, 
while demographic parity ($\DP$) and equal opportunity ($\EO$) are presented as fairness metrics.
The numbers are presented as percentages.
In each column, the best performer is marked in boldface.


\paragraph*{Approximation of the input distribution.}
\label{sec:cost_alignment}
For shield synthesis, we need a distribution of the input space $\sgen\in \distrset(\G\times\BB\times\costset)$. 
In the ideal case, $\sgen\in \distrset(\G\times\BB\times\costset)$ is the exact joint distribution of group membership, agent recommendation and cost. 
However, this is unrealistic most of the time, as it assumes knowledge of the underlying distribution and the classifier. 
Furthermore, the distribution of cost given by the agent may be continuous, but we assume that there is a finite set $\costset$ of costs allowed. 

For our experiments, we used 
a simple approach that is agnostic to the ML classifier.
We assume there is a cost set of $k$ possible values $\costset = \{c_1,\dots, c_k\}$ uniformly distributed in the interval $[0,1]$, and that any recommendation is equally likely.
Therefore for all $c_i\in\costset$, $g\in \G$ and $r\in\BB$, we have
$\sgen(g,b,c_i) = 1/4k$.
This approximation is easy to compute and agnostic to the ML classifier.

\begin{figure}
     \centering
     \begin{subfigure}[b]{0.43\linewidth}
         \centering
         \includegraphics[width=\linewidth]{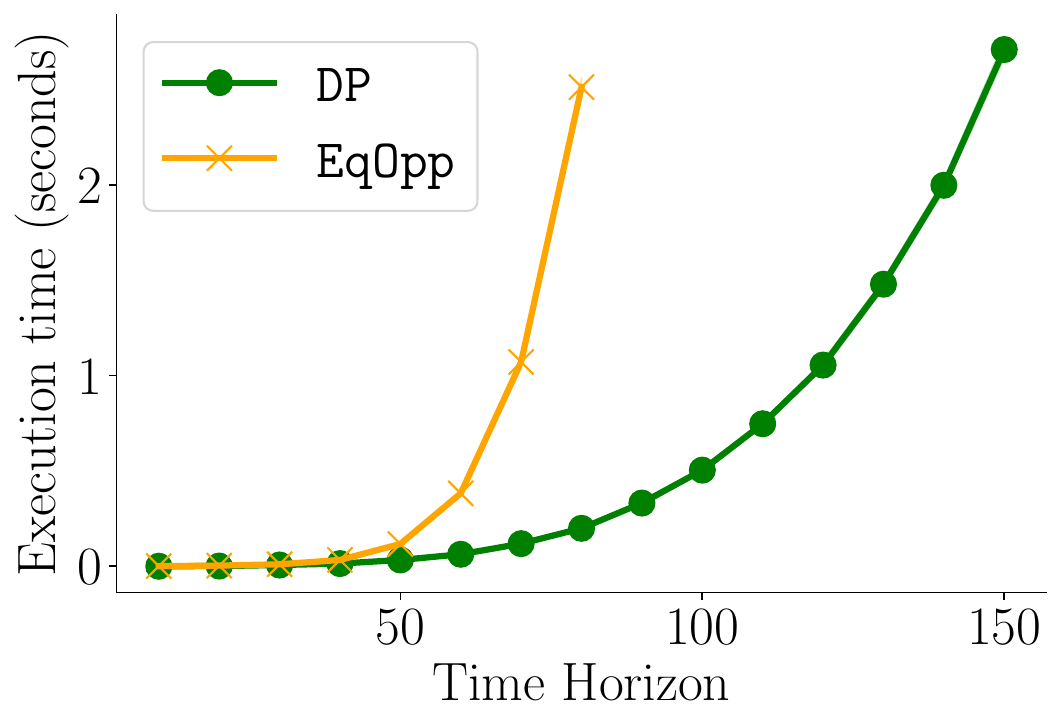}
         \caption{Time.}
         \label{fig:comp_time}
     \end{subfigure}
     \hfill
     \begin{subfigure}[b]{0.43\linewidth}
         \centering
         \includegraphics[width=\linewidth]{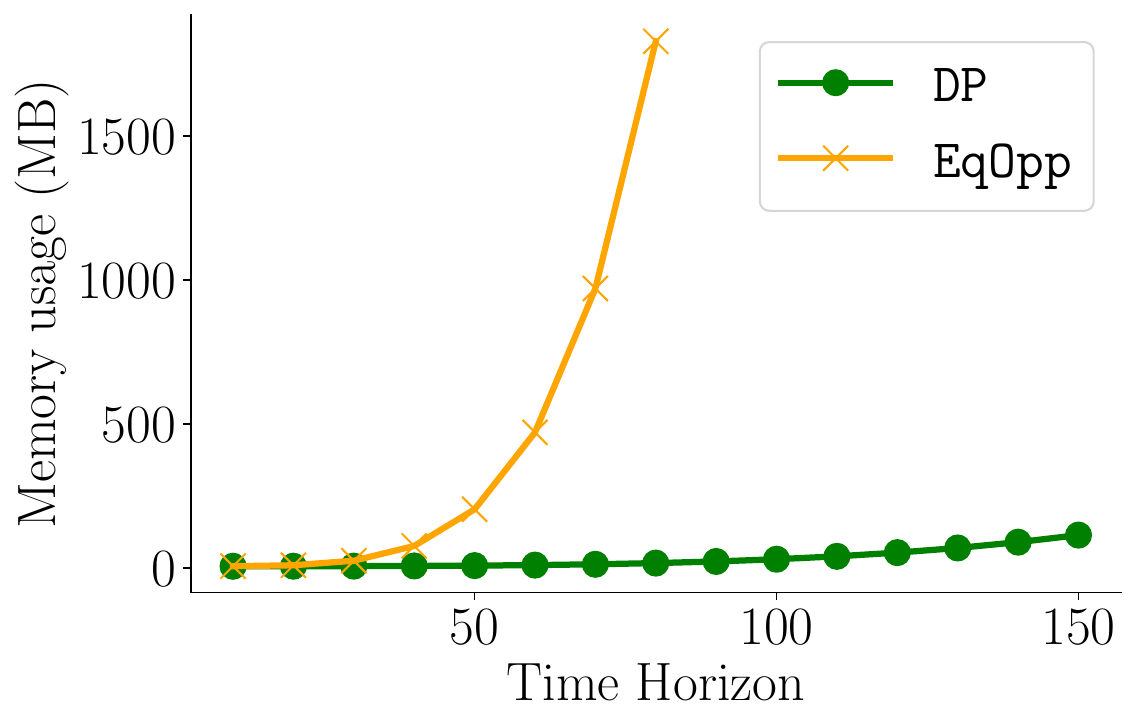}
         \caption{Memory.}
         \label{fig:comp_memory}
     \end{subfigure}
        \caption[Resource usage for fairness shield synthesis]{Resource usage for shield synthesis with increasing time horizons.}
        \label{fig:comp_resources}
\end{figure}

\paragraph*{Cost given by the classifier.}
While $\sgen$ is the theoretical distribution that has to be used to synthesize the shield, 
at the time of deployment the classifier has to choose an output in the form of a recommendation and a cost. 
In this case, the natural choice is given by the last layer of the neural network.
The last layer in a neural network for classification is usually a softmax layer that assigns for each label a value between 0 and 1, that we can interpret it as the ``confidence value'' that the classifier gives to that label being true. 
We use this ``confidence value'' as the cost.

\subsection{Shield Synthesis Computation Times}
\label{sec:shield-computation-times}

As pointed out in Theorem~\ref{thm:bounded-horizon shield synthesis-bis}, 
our shield synthesis algorithm has a polynomial complexity for both DP and EqOpp, and the degree of the polynomial is the number of counters required to keep track of the fairness property. 
For $\DP$ it is sufficient to track 4 counters: the number of instances appeared and accepted of each group. 
For $\EO$, we also need 4 counters for the number of instances appeared and accepted of each group, counting only those for which $z=1$. Furthermore, we need two extra counters: one to count all instances with $z =0$, and one to keep track of the last decision for which ground truth has not yet been revealed, for a total of 6 counters. 

In Figure~\ref{fig:comp_resources} we show the computation time and memory usage of our shield synthesis algorithm for a fixed problem with increasing time horizon. Figure~\ref{fig:comp_resources} does not show variability, because the synthesis algorithm, as described, is deterministic.

\begin{table}
    \centering
\begin{tabular}{llrrrrrr}
\toprule
 &  & Q1  & Median & Q3 & Mean & St. Dev. & Above \\
\midrule
\multirow[c]{2}{*}{DP} & No Shield & 0.38 & 0.83 & 1.59 & 1.22 & 1.29 & 42.46 \% \\
 & Static-Fair & 0.18 & 0.42 & 0.74 & 0.46 & 0.31 & 0.00\% \\
 \midrule
\multirow[c]{2}{*}{EqOpp} & No Shield & 0.67 & 1.76 & 3.62 & 2.76 & 3.01 & 65.06 \%\\
 & Static-Fair & 0.00 & 0.21 & 0.50 & 0.27 & 0.28 & 0.00 \%\\
 \bottomrule
\end{tabular}
    \caption[Statistic of normalized fairness]
    {Statistic of normalized fairness for finite horizon shields.}
    \label{tab:fair_distr}
\end{table}

\begin{figure}
   \centering
   \includegraphics[width=0.9\linewidth]{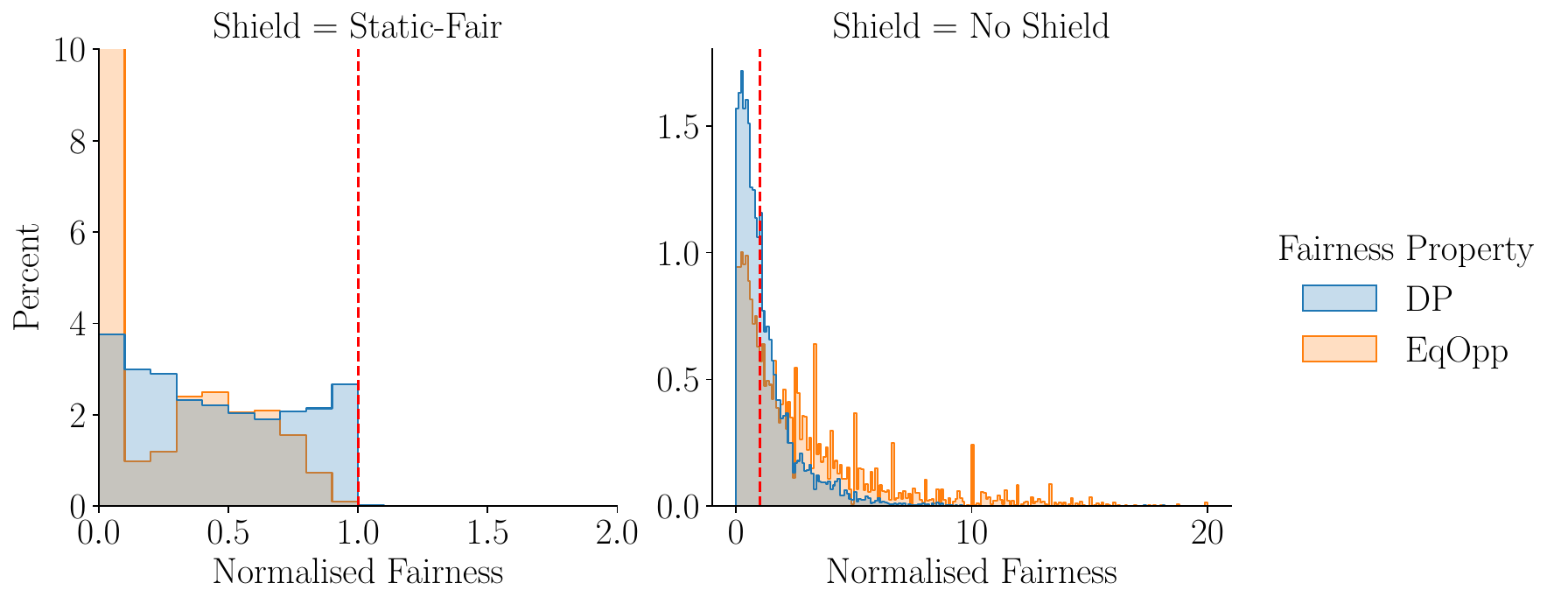}  %
    \caption[Distribution of normalized bias]
    {Distribution of normalized bias, i.e. Bias / $\kappa$, across all runs with (left) and without shield (right) for both demographic parity and equal opportunity.}
    \label{fig:fair-distr}
\end{figure}

\subsection{Performance of Finite Horizon Shields}
In this group of experiments, we investigate the performance of \FinShield shields on a single period. 
We use a time horizon of $T=100$ for DP and $T=75$ for EqOpp,
with fairness thresholds $\kappa\in\{0.05, 0.1, 0.15, 0.20\}$.
For each setting we synthesized a \FinShield shield and simulated $30$ runs.

\paragraph*{Performance in terms of fairness.}
In Table~\ref{tab:fair_distr} we present the aggregated results of our experiments in terms of normalized fairness, i.e., the fairness value divided by the given fairness threshold. 
When normalized, a value smaller than 1 indicates that the algorithm is within the constraints, 
while a value larger than 1 indicates the algorithm is being too biased.
In Figure~\ref{fig:fair-distr} we illustrate the same distributions on a more graphical way, by plotting the corresponding value distributions.
In the table we summarize the distribution by showing mean, median, standard deviation, Q1 (i.e., the $25\%$ quantile) and Q3 (i.e., the $75\%$ quantile).
The last row shows the percentage of the samples that go over the fairness constraint, i.e., with a normalized fairness value larger than 1.
As expected, all shielded samples are compliant with the fairness constraint. 
Note that most runs with shield achieve a fairness value significantly below the threshold.
Another common trend is that equal opportunity is in general a harder constraint to satisfy than demographic parity in our experiments.

\begin{table}[t]
    \centering
    \begin{subtable}{\linewidth}
        \centering
        \includegraphics[width=\linewidth]{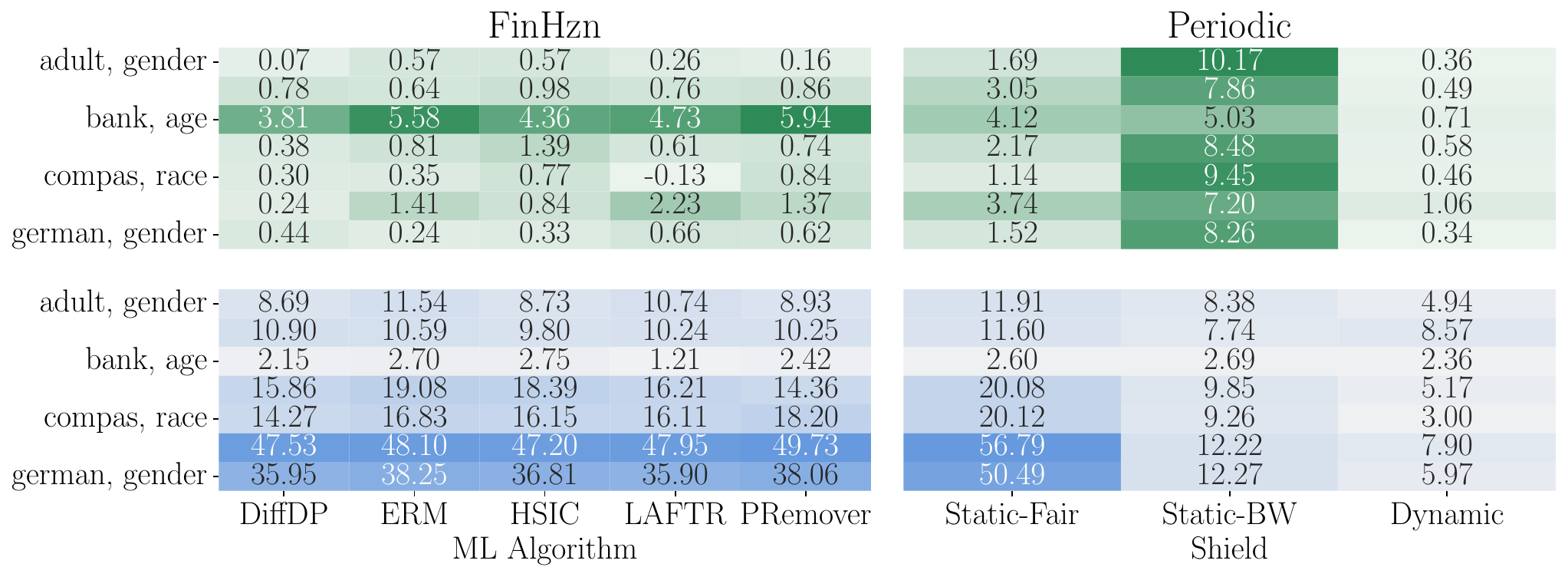}
        \caption{DP (top, green) and EqOpp (bottom, blue) with $\kappa=0.15$.}
        \label{tab:utility-comparison-0.15}
    \end{subtable}
    \vspace{1em} 
    \begin{subtable}{\linewidth}
        \centering
        \includegraphics[width=\linewidth]{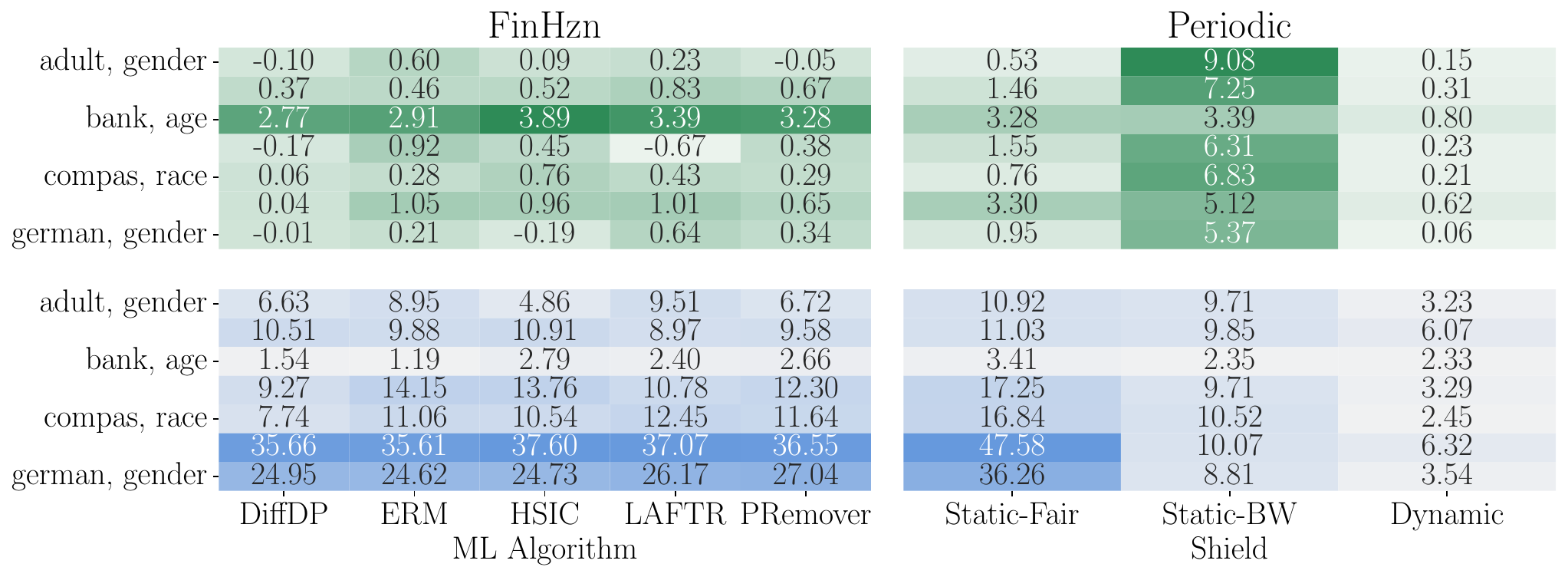}
        \caption{DP (top, green) and EqOpp (bottom, blue) with $\kappa=0.2$.}
        \label{tab:utility-comparison-0.2}
    \end{subtable}
    \caption[Comparison of utility loss]
    {Comparison of utility loss (in \%). Left: finite horizon shields and different ML models. Right: periodic shields only on the ERM model.}
    \label{tab:utility-comparison}
\end{table}

\paragraph*{Performance in terms of utility loss.}

The utility of classification tasks is measured by classification accuracy. Interventions by the fairness shield, which occasionally convert ``correct'' classifications into ``incorrect'' ones for fairness, typically reduce this utility\footnote{The scary quotes in ``correct'' and ``incorrect'' are here to emphasize that we are considering correctness with respect to the ground truth of the given --- potentially biased --- dataset. 
We do not enter here in the debate of whether a classifier that is more fair and less accurate with respect to the trained data is more or less correct in a general sense.}. 
We measure \emph{utility loss} on a given run as the difference in utility between the unshielded and shielded runs, relative to the utility of the former.

Table~\ref{tab:utility-comparison} (left) shows the average utility loss across all simulations for a threshold of $\kappa = 0.15$ and $\kappa = 0.2$, respectively, for finite horizon shields. 
We can observe that the mean utility loss is smaller when the classifier is trained to be fair, as fewer interventions are needed. In general, utility loss increases as the bias threshold $\kappa$ decreases, with more pronounced differences between classifiers for smaller $\kappa$. 
We also observe that that most of the variability comes from the dataset rather than from the ML algorithm.
These observations are also supported by Figure~\ref{fig:utility_mlalgo_box}, which provide insight into the distribution of utility loss for each ML algorithm accross all datasets.



Finally, we compared the values of utility loss to the cost incurred by the shield. 
We do this to validate our approach: we compute shields by minimizing their expected cost, but our real target when deployed is to minimize the utility loss.
In Figure~\ref{fig:cost-utility-corr} we show that indeed shield cost and utility loss are very much correlated, 
validating the use of one as a proxy for the other.

\begin{figure}
   \centering
   \includegraphics[width=0.9\linewidth]{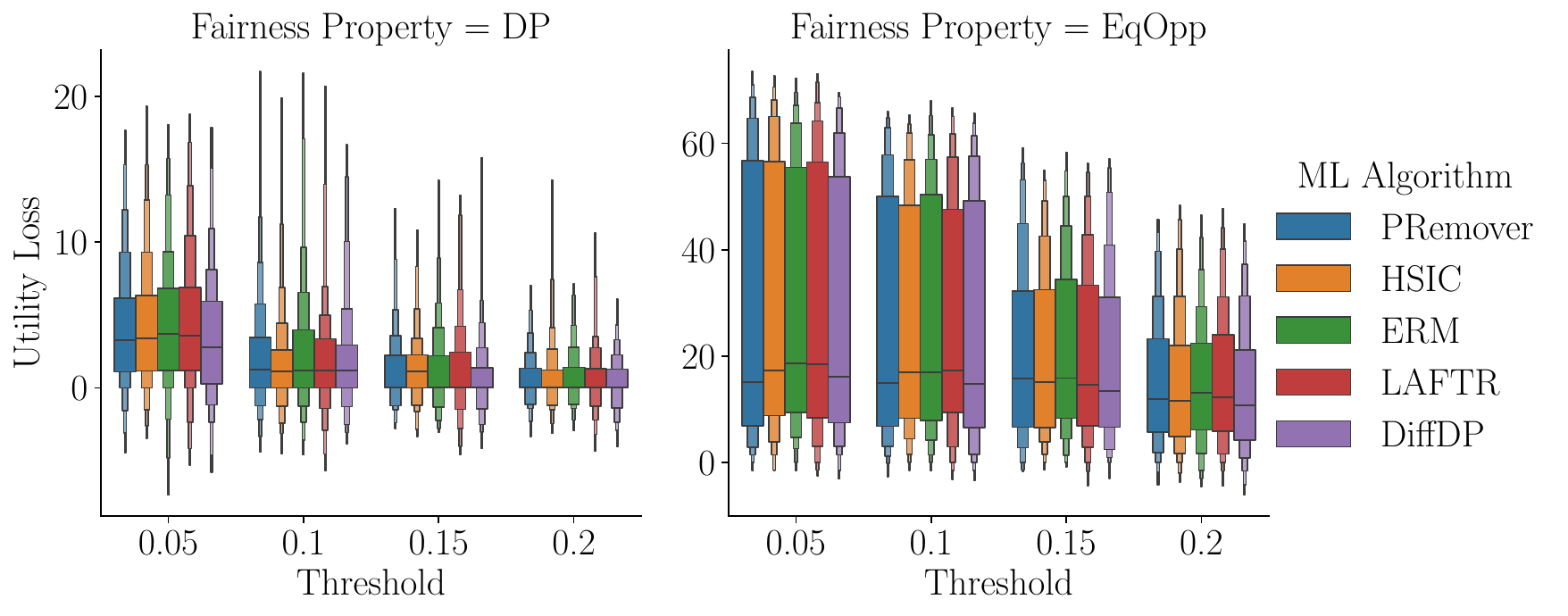}  %
    \caption[Utility loss accross ML algortithms and fairness thresholds]
    {Distribution of utility loss (in $\%$) incurred by \FinShield aggregated across all environments for DP (left) and EqOpp (right). The hight of the boxes indicate the spread of the distribution. }
    \label{fig:utility_mlalgo_box}
\end{figure}

\begin{figure}
   \centering
   \includegraphics[width=\linewidth]{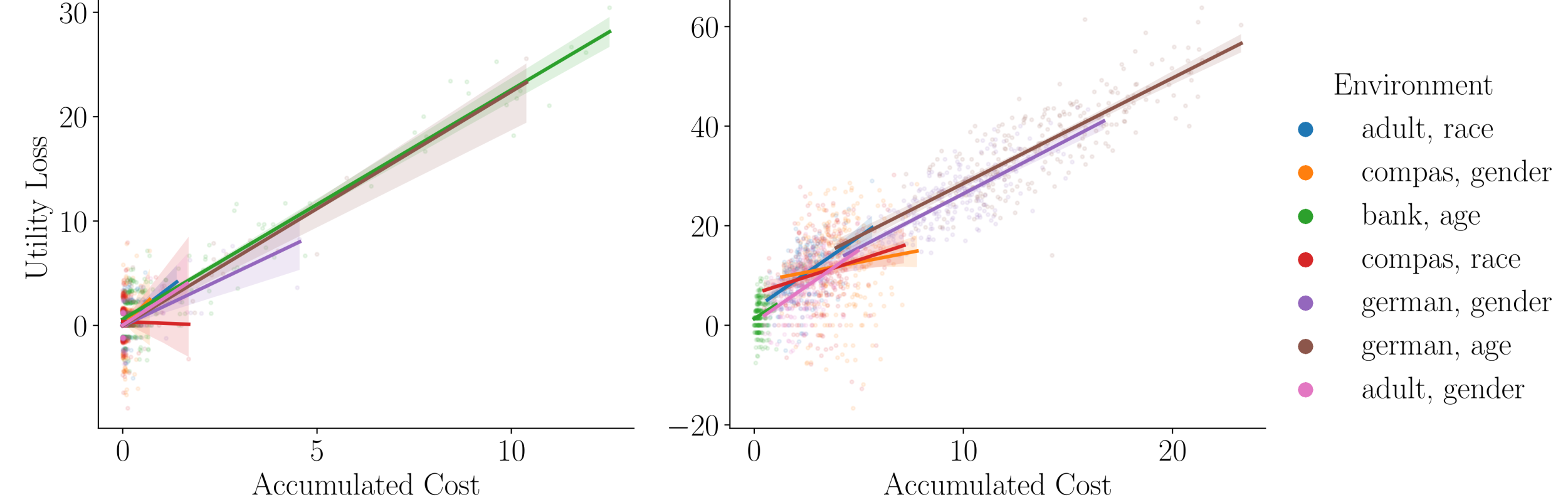}  %
    \caption[Utility loss vs. shield cost regression plot]
    {Regression plot depicting the relationship between utility loss and cost for $\kappa=0.2$ for each dataset.
    The results for other values of $\kappa$ are analogous.
    DP (left) and EqOpp (right).}
    \label{fig:cost-utility-corr}
\end{figure}

\subsection{Periodic Shielding}

In this group of experiments, we investigate the performance of periodic shields.
We synthesized \StaticDP, \StaticBAR, and \Dyn shields with $T=50$ for DP and EqOpp, 
with fairness thresholds $\kappa\in\{0.05, 0.1, 0.15, 0.20\}$,
and simulated them for $10$ periods.
We compare the models' performances, with and without shielding, across $20$ simulated runs. 

\begin{table}
    \centering
    \small
    \begin{tabular}{l l l c r r}
    \toprule
          & \multirow[c]{2}{*}{Assumption} & \multirow[c]{2}{*}{$\spec$} & Assumption & Fairness  \\
          & & & satisfied  & satisfied \\
          \midrule
            \multirow[c]{2}{*}{\StaticDP} & \multirow[c]{2}{*}{$\den^{a,b}(\tau_i) = \den^{a,b}(\tau_j)$} & DP       & $0.0 \,\%$ & $95.7 \,\%$   \\
                                          & & EqOpp & $0.0 \,\%$ & $100 \,\%$   \\
          \midrule
          \multirow[c]{2}{*}{\StaticBAR} &  \multirow[c]{2}{*}{$\den^a(\tau_i), \den^b(\tau_i) \geq \lceil\frac{1}{u-l}\rceil$} & DP       & $43.8 \,\%$ & $83.1 \,\%$  \\
                                          & & EqOpp & $4.1 \,\%$ & $56.4 \,\%$  \\
          \midrule
          \multirow[c]{2}{*}{\Dyn} & \multirow[c]{2}{*}{$\frac{1}{\den^{a}(\tau\tau')} + \frac{1}{\den^b(\tau\tau')} \leq \kappa + \spec(\tau)$} & DP       & $100 \,\%$ & $100 \,\%$ \\
                                          & & EqOpp & $49.8 \,\%$ & $100 \, \%$ \\
          \bottomrule
    \end{tabular}
    \caption{Comparison of different types of fairness shields.
    }
    \label{tab:extensions-comparison-bis}
\end{table}

\begin{figure}
   \centering
   \includegraphics[width=1.1\linewidth]{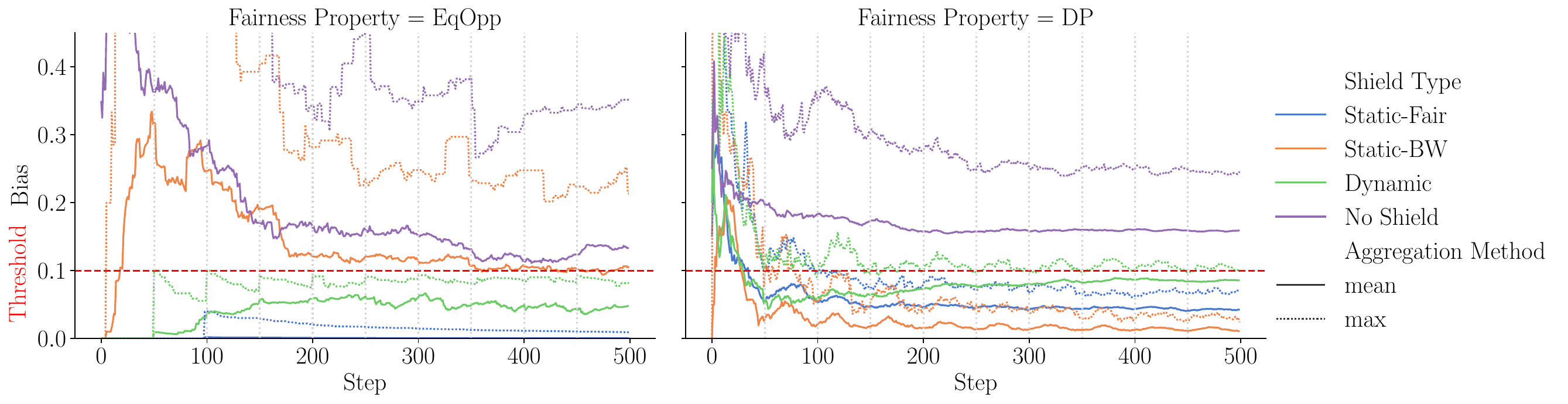}  %
    \caption[Bias over time with and without periodic shielding]
    {
    Variations of bias over time for the ERM classifier on the Adult dataset with and without periodic shielding.
    }
    \label{fig:single-run}
\end{figure}

\paragraph*{Guaranteed fairness vs. actual fairness.}
Since in the periodic case we have lost the hard fairness guarantees, we first aim to determine how effective are different types of periodic shields in enforcing their corresponding fairness property. 
We summarize our results in Table~\ref{tab:extensions-comparison-bis}.
The ``Assumption'' column is a reminder of the theoretical assumption under which each of the shields works, 
as seen in Theorem~\ref{thm:static shield with bounded DP} for \StaticDP shields, 
Assumption~\ref{assump:static-BAR} for \StaticBAR shields, and Assumption~\ref{assump:dynamic} for \Dyn shields.
For each type of shield and fairness property, we present how often the assumption is satisfied across all experiments, and how often the fairness property is satisfied. 
Since each assumption has its corresponding result guaranteeing fairness (Theorems~\ref{thm:static shield with bounded DP},~\ref{thm:correctness-staticBW},~and~\ref{thm:correctness-dyn-shields}),
we know a priori for each instance that the fairness target is going to be satisfied at least as often as the assumption.
We observe that 
the assumption for \StaticDP is almost never met, 
the assumption for \StaticBAR is also often violated,
and the assumption for \Dyn is almost always satisfied.
Nevertheless, both \StaticDP and \StaticBAR still perform well as heuristics, with many runs satisfying the fairness constraint. 
It is notable that \StaticDP offers better empirical performance than \StaticBAR, 
even though the balance assumption is almost never met.
The more expensive \Dyn shields outperform both static approaches in terms of both assumption satisfaction and fairness satisfaction.

It is in principle unclear what to do in cases where the fairness guarantees are not satisfied. 
Static-fair shields have a defined behaviour regardless of the input. 
However, \StaticBAR and \Dyn shields, we synthesize them by modifying the conditions in the base case of the recursion (Equation~\eqref{eq:v-basecase}) for a condition on the new fairness target and the assumption. Concretely, the traces that do not satisfy the fairness guarantee are given cost zero if they also fail to satisfy the assumption. The concrete conditions are detailed in Equations~\eqref{eq:synthesis-BW-shields}~and~\eqref{eq:base-case-periodic-shields}.
This choice ensures that traces outside the assumption do not hinder the optimization process.
This translates in deployment as shields that work for ensuring fairness until the trace reaches a point where it can no longer recover from failing the corresponding assumption.
If and when this point arrives, the shield ``gives up'' and becomes transparent until the beginning of the next period.

\paragraph*{Performance in terms of fairness.}
As an illustrative example, we show in Figure~\ref{fig:single-run} a single run of the ERM trained model on the Adult dataset, with fairness shields synthesized for $\kappa = 0.1$,
with \texttt{gender} as the sensitive feature.
Recall from Table~\ref{tab:datasets-info} that the statistical parity of the dataset is $0.2$, 
so fairness enforcement will be required.
The different colors indicate the different types of shields. 
The main observation is that only \Dyn shields show the truly periodic behaviour, 
where fairness is guaranteed at the end of each period by a minimal margin.

\begin{figure}
   \centering
   \begin{subfigure}[b]{\linewidth}
       \centering
       \includegraphics[width=0.9\linewidth]{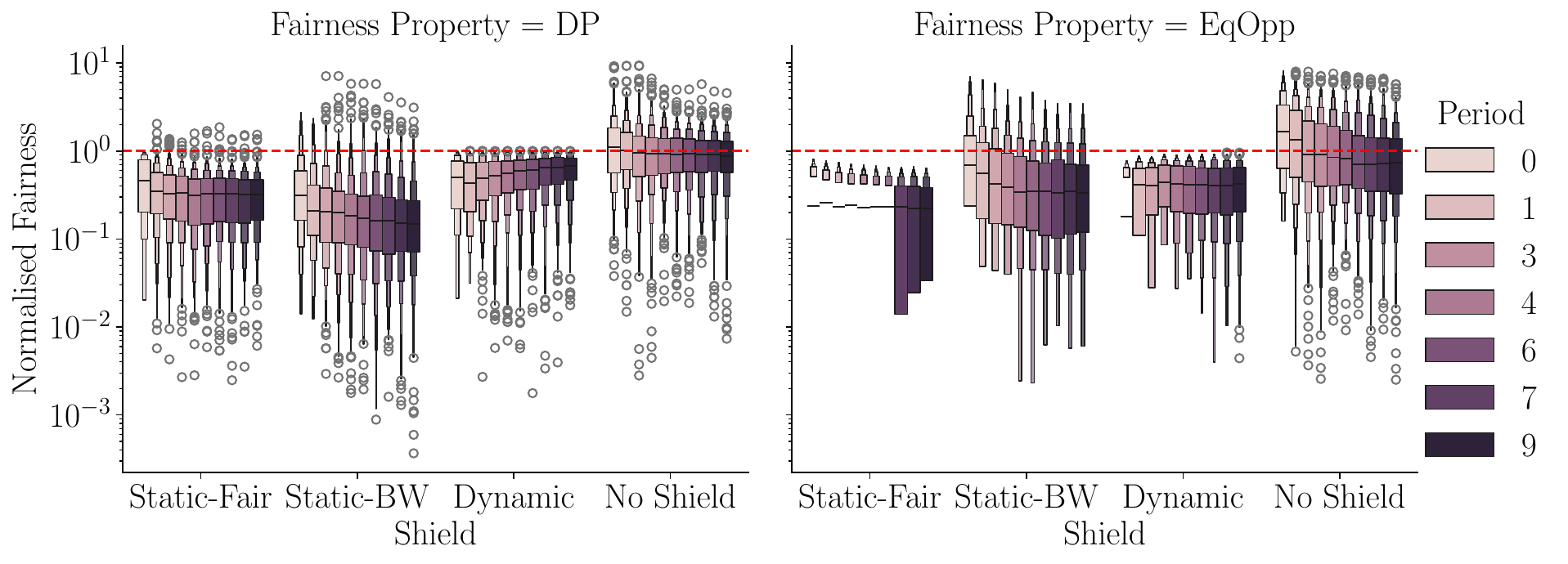}
       \caption{Distribution of normalized bias for each period for all runs.}
       \label{fig:fair-period}
   \end{subfigure}
   \vspace{1em} 
   \begin{subfigure}[b]{\linewidth}
       \centering
       \includegraphics[width=0.9\linewidth]{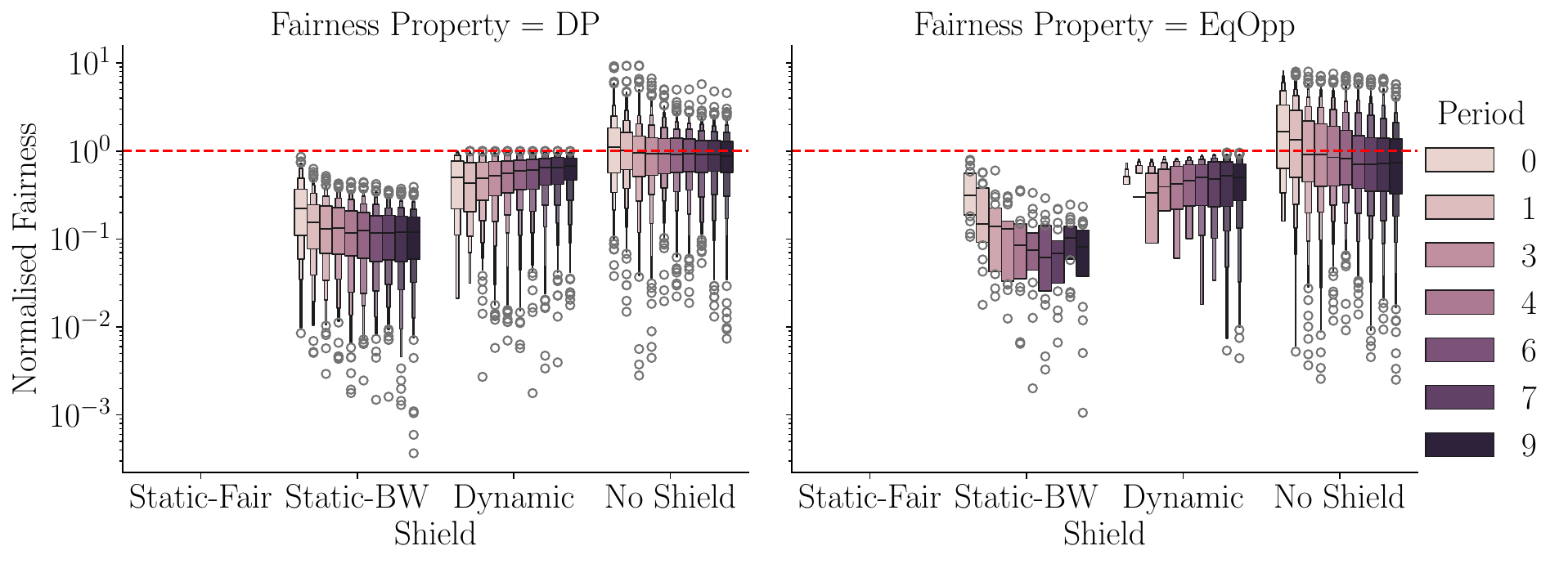}
       \caption{Distribution of normalized bias for all runs \emph{where the assumption is satisfied}.}
       \label{fig:fair-period-cond}
   \end{subfigure}
   \caption[Distribution of normalized bias for each period]
   {Distribution of normalized bias, for each period. 
   Each run below the red line satisfies the fairness condition, in terms of DP (left) and EqOpp (right).}
   \label{fig:normalized-bias-periodic-fair-comparison}
\end{figure}

In Figure~\ref{fig:normalized-bias-periodic-fair-comparison} we provide insight into the distribution of the normalized bias. 
The former aggregates over all runs, while the latter considers only those runs for \StaticBAR and \Dyn that satisfy the assumption. 
We can observe that \StaticBAR shields has a relatively high rate of violation in general (Figure~\ref{fig:fair-period}), 
while at the same time being overly conservative when the assumption is satisfied (Figure~\ref{fig:fair-period-cond}).
This problem does not exist with \Dyn shields, as the normalized fairness of the average run is only slightly below the threshold. 
In all cases we can report an improvement over the unshielded runs in terms of fairness satisfaction.

\paragraph*{Performance in terms of utility loss.}
Table~\ref{tab:utility-comparison} (right) shows the average utility loss for the ERM model across all simulations for a threshold of $\kappa = 0.15$ and $\kappa = 0.2$, respectively, for periodic shields.
In general, if the assumptions are satisfied, $\Dyn$ shields incur the least loss and $\StaticBAR$ shields incur the most, which is due to their stricter BW objectives. 
However, an assumption violation forces both $\Dyn$ and $\StaticBAR$ shields to go inactive incurring no additional utility loss. Therefore, the low utility loss of $\StaticBAR$ shields in EqOpp can be explained by the frequent assumption violations.
For demographic parity, \Dyn shields outperform \StaticDP and \StaticBAR shields, 
with \StaticBAR shields experiencing the highest utility loss due to their stricter BW objective.
For EqOpp, the difference is less pronounced, partly because for \StaticBAR the frequent assumption violation forces the shield to go idle.
We also observe, as expected, that the utility loss decreases when increasing $\kappa$.

We finish by studying the distribution of utility loss incurred across the different periods, 
as depicted in Figure~\ref{fig:utility-periods}. 
That is, for each run we normalize the utility loss per period by the total utility loss of the run. 
We observe that \Dyn shields incurr most of their losses in the earlier periods,
a trend not observed for the other shields. 
Negative values indicate some rare periods in which the shield actually increases the utility of the classifier.

\begin{figure}
   \centering
   \includegraphics[width=0.9\linewidth]{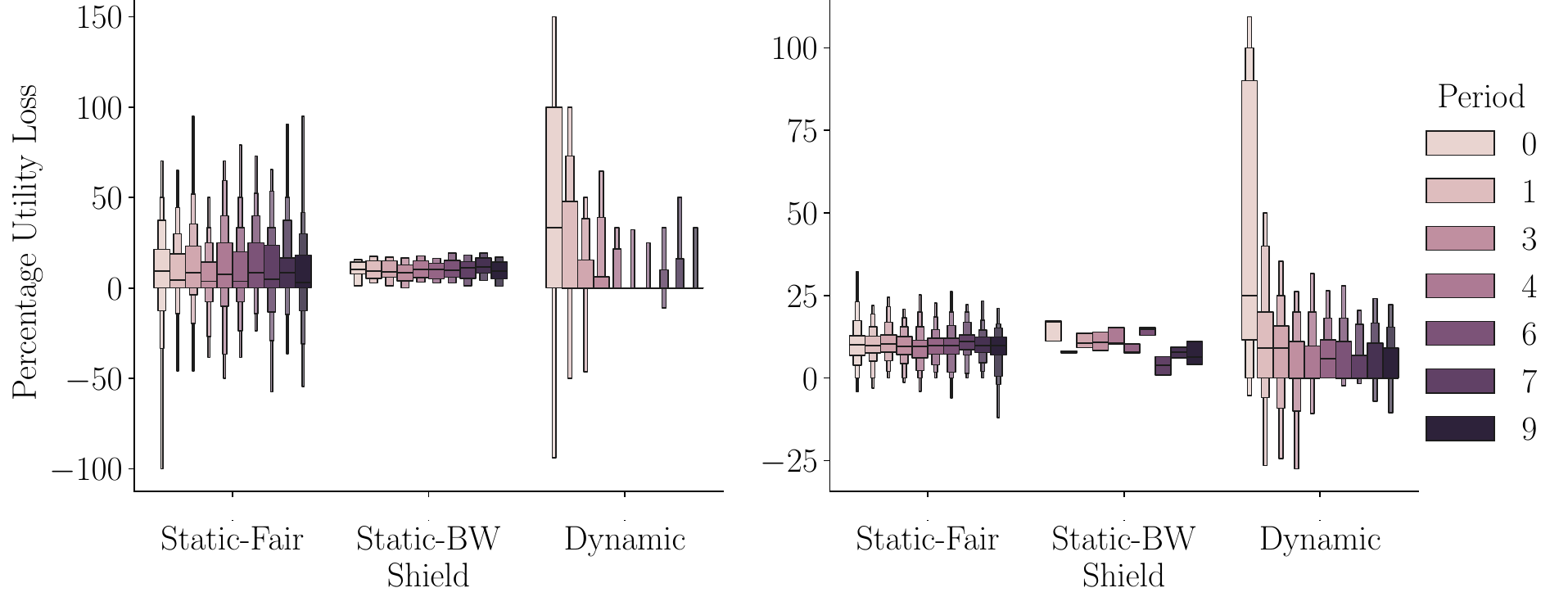}  %
    \caption[Percentage of total utility loss in periodic shields]
    {Percentage of total utility loss (in $\%$) for each period incurred by \StaticDP, \StaticDP and \Dyn across all runs for $\kappa=0.15$. 
    DP (left) and EqOpp (right).}
    \label{fig:utility-periods}
\end{figure}

\section{Discussion}
\label{sec:fairness-discussion}

\subsection{Existence and Composability of Finite Horizon Shields}

When approaching the problem of enforcing fairness properties through finite horizon shielding,
there are two phenomena that can appear unintuitive in the beginning.
The first is the fact that finite horizon shields always exists for DoR properties, independent on the specification threshold, time horizon or trace balance.
The second is that finite horizon shields for a certain specification allow arbitrarily biased traces when concatenating them on a static manner --- what we called \StaticDP shields.
In this section we explore these two phenomena and shed some light on the type of edge cases where we observe them.

\subsubsection{Existence of Finite Horizon Shields}
\label{sec:existence-finite-horizon-shields}
The set of feasible solutions of the optimization problem in Equation~\eqref{eq:finite horizon shield} is nonempty for DoR properties, because the fairness-shield that always accepts or always rejects each candidate from each group is a solution that trivially fulfils $\spec(\tau)\leq \kappa$.
In fact, in these cases, the feasible traces always satisfy $\spec(\tau) = 0$.
    
Even nontrivial optimal fairness-shields may exhibit such degenerate behaviors at runtime, when the order of appearances of individuals from the two groups is excessively skewed. 
Consider the following example for $\spec = \DP$ (demographic parity).
Let $T\in\NN$ be a time horizon and 
$\kappa < 1/T$.
As we know, given a trace $\tau$, demographic parity is defined as
\[
\DP(\tau) = \left| 
 \frac{n_{a1}(\tau)}{n_a(\tau)} - \frac{n_{b1}(\tau)}{n_b(\tau)}  \right|.
\]
Suppose at time $T-1$, all the individuals seen so far were from group $a$ (i.e., $n_a=T-1$ and $n_b=0$).
If some of the individuals were accepted and the rest rejected, then $0<n_{a1}<n_a$, implying $\kappa<\frac{n_{a1}}{n_a}< 1-\kappa$.
Now if the $T$-th individual $x$ is from group $b$, $n_b$ becomes $1$, and no matter which action the shield picks, DP will be violated:
If $x$ is accepted, then $n_{b_1} = \frac{n_{b1}}{n_b}=1$, and if $x$ is rejected, then $n_{b1}=\frac{n_{b1}}{n_b}=0$.
In both cases, 
$\DP\left(\tau\right)>\kappa$.
Therefore, the shield must have made sure that each individual until time $T-1$, all of whom were from group $a$, were either accepted or rejected.
Luckily, the chances of such skewness of appearance orders are rare in most applications, so that \FinShield as in Definition~\ref{def:finshield} exhibit effective, non-trivial behaviours in most cases, as seen from our experiments. 

\subsubsection{Counterexample Families for \StaticDP Shields Being Not Composable}
\label{sec:counterexamples-static-fair}

We have already shown in Example~\ref{ex:counter-example-periodic-naive-bis} that traces can have zero bias in terms of $\DP$, and when composed have a bias arbitrarily close to 1. 
While the family of counterexamples presented in Example~\ref{ex:counter-example-periodic-naive-bis} is quite degenerate in the sense that acceptance rates are always either 0 or 1, 
we present here another family of counterexamples that is less degenerate.
We write these examples for demographic parity, but the same ideas can be applied to build counterexamples for any DoR property.

Let $T > 0$ and $ 0 < K < T/2$. The family of counterexamples will be parametrized by $(T, K)$. 
For a pair $(T,K)$ consider 
traces $\tau_1$, $\tau_2$ such that 
$(n_{a1}, n_a, n_{b1}, n_b)(\tau_1) = (1, K, 1, T-K)$, 
and 
$(n_{a1}, n_a, n_{b1}, n_b)(\tau_2) = 
(T-K-1, T-K, K-1, K)$.

In the trace $\tau_1$, exactly one element of each group was accepted, while in the trace $\tau_2$, all but one element of each group were accepted. 
The values of demographic parity are:
\begin{equation} \label{eq:dp1}
    \DP(\tau_1)_{T,K} = \left|
    \frac{1}{K} - \frac{1}{T-K}\right| = \frac{T-2K}{(T-K)K}.
\end{equation}

\begin{equation}\label{eq:dp2}
    \DP(\tau_2)_{T,K} = \left|
    \frac{T-K-1}{T-K} - \frac{K-1}{K}
    \right| = 
    \frac{T-2K}{(T-K)K}.
\end{equation}

\begin{equation}\label{eq:dp1+2}
    \DP(\tau_1\tau_2)_{T,K} = \left|
    \frac{T-K}{T} - \frac{K}{T}
    \right| = 
    \frac{T-2K}{T}.
\end{equation}

These pairs of traces are not a counterexample for every pair $(T,K)$.
However, we can observe that, once fixed $K$, the limit when $T\to \infty$ of Equation~\eqref{eq:dp1} and Equation~\eqref{eq:dp2} is $1/K$, but the limit when $T\to\infty$ of Equation~\eqref{eq:dp1+2} is 1.
Therefore, for every $\varepsilon$, we can find $K$ large enough such that $1/K < \varepsilon/2$, 
and then find $T$ large enough such that the corresponding $\DP$ values are close enough to the limit.

We now build a different family of counterexamples that show that the condition for correctness of \StaticDP shields given in Theorem~\ref{thm:static shield with bounded DP} is as tight as can be.

\begin{theorem}
\label{thm:counterexamples-fair-comp}
    For all $\kappa > 0$, there exists 
    $\kappa_1$ and $\kappa_2$
    $\kappa_1 \leq  \kappa \leq \kappa_2$,
    such that for $i\in\{1,2\}$, there exists $t_i$
    and traces $\tau_i, \tau_i'$ that are 
    $\lfloor\frac{t_i-1}{2}\rfloor$-balanced
    such that 
    \[\DP(\tau_i)\leq \kappa_i, \,\,\DP(\tau_i')\leq \kappa_i , \quad \mbox{and} \quad 
    \DP(\tau_i\tau_i') > \kappa_i.
    \]
\end{theorem}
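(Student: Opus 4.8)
The plan is to let the two explicit two-trace families introduced immediately before the statement (in Section~\ref{sec:counterexamples-static-fair}) carry all the weight, tuning their single free parameter to straddle the given $\kappa$ from both sides. Fix a target $\kappa>0$. For integers $t>2K>0$ I would consider the pair of length-$t$ traces $\sigma_1,\sigma_2$ determined by the counter vectors $(n_{a1},n_a,n_{b1},n_b)(\sigma_1)=(1,K,1,t-K)$ and $(n_{a1},n_a,n_{b1},n_b)(\sigma_2)=(t-K-1,t-K,K-1,K)$. First I would record the three $\DP$ values directly from $\DP(\tau)=|n_{a1}/n_a-n_{b1}/n_b|$:
\[
\DP(\sigma_1)=\DP(\sigma_2)=\frac{t-2K}{(t-K)K}, \qquad \DP(\sigma_1\sigma_2)=\frac{t-2K}{t},
\]
where the last equality uses that the concatenated counters are $(t-K,\,t,\,K,\,t)$, so both denominators collapse to $t$. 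These are exactly the computations already laid out in Section~\ref{sec:counterexamples-static-fair}, so this step is routine.

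Next I would pin down the balance. Since $K\le t/2$, both $\sigma_1$ and $\sigma_2$ satisfy $\min(n_a,n_b)=K$, so each is exactly $K$-balanced with respect to $\den^a,\den^b$. Choosing $K=\lfloor (t-1)/2\rfloor$ makes them $\lfloor (t-1)/2\rfloor$-balanced, and this is the \emph{sharp} choice: for odd $t=2\ell+1$ the constraints $\min(n_a,n_b)\ge\ell$ and $K\le t/2$ force $K=\ell$, so the example sits exactly one notch below the equal-denominator (full $\lfloor t/2\rfloor$-balance) regime in which Theorem~\ref{thm:static shield with bounded DP} guarantees composition—this is what makes the construction a genuine tightness witness.

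Then I would verify the separating inequality. With $t>2K$ the common individual value $\tfrac{t-2K}{(t-K)K}$ is strictly below the concatenated value $\tfrac{t-2K}{t}$ iff $(t-K)K>t$; plugging in $K=\lfloor(t-1)/2\rfloor$ this holds for all large $t$ (for odd $t=2\ell+1$ it reduces to $\ell^2-\ell-1>0$, i.e.\ $\ell\ge2$; for even $t=2\ell$, with $K=\ell-1$, to $\ell^2-2\ell-1>0$, i.e.\ $\ell\ge3$). Hence for each admissible $(t,K)$ the whole half-open interval $\bigl[\tfrac{t-2K}{(t-K)K},\,\tfrac{t-2K}{t}\bigr)$ consists of valid thresholds: any $\kappa_i$ in it satisfies $\DP(\sigma_1)=\DP(\sigma_2)\le\kappa_i$ while $\DP(\sigma_1\sigma_2)>\kappa_i$. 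To produce $\kappa_1\le\kappa\le\kappa_2$ I would use the odd-$t$ family, whose admissible thresholds $\tfrac{1}{\ell(\ell+1)}$ tend to $0$, to obtain a $\kappa_1<\kappa$ (take $\ell$ large); and the even-$t$ family, whose intervals $\bigl[\tfrac{2}{\ell^2-1},\tfrac1\ell\bigr)$ reach up toward $\tfrac13$, to supply a $\kappa_2\ge\kappa$ in the relevant range. In particular, whenever $\kappa$ itself falls inside one of these intervals, I can simply take $\kappa_1=\kappa_2=\kappa$ with the same witnessing pair.

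The main obstacle, and the step I would treat most carefully, is the bracketing/coverage argument rather than the per-example algebra. I must check that the admissible threshold intervals overlap as $t$ grows, so that no target $\kappa$ (in the meaningful small-threshold regime) slips into a gap: concretely, for the even family the lower end $\tfrac{2}{\ell(\ell-2)}$ of interval $\ell-1$ lies below the upper end $\tfrac1\ell$ of interval $\ell$ exactly when $\ell>4$, which I would use to show the union is an interval up to $\tfrac13$. The second delicate point, which I would state explicitly, is that the balance is \emph{exactly} $\lfloor(t-1)/2\rfloor$ and not larger—this is precisely the property that separates the present counterexamples from the composition guarantee of Theorem~\ref{thm:static shield with bounded DP}, which requires equal denominators.
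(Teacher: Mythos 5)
Your core computations are correct, and your route is genuinely different from the paper's proof, so it is worth recording both what differs and what each buys. The paper does \emph{not} reuse the rates-near-$0$-and-$1$ family of Section~\ref{sec:counterexamples-static-fair} that you chose; it builds new pairs of traces whose acceptance rates sit near $1/2$: for $t=2T+1$ with $T$ even, the trace with counters $n_a=T+1$, $n_{a1}=T/2+1$, $n_b=T$, $n_{b1}=T/2$ and its mirror image, giving $\DP(\tau_1)=\DP(\tau_2)=\frac{1}{2(T+1)}$ and $\DP(\tau_1\tau_2)=\frac{1}{2T+1}$, and it then brackets $\kappa$ between the two lattice thresholds $\frac{1}{2(T_i+1)}$ obtained from the floor and ceiling of $\frac{1-2\kappa}{2\kappa}$; a second, even-$t$ construction is sketched for the other parity. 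Your version buys more. First, coverage: your interval-tiling argument shows the even-$t$ thresholds cover all of $(0,1/3)$, so there you can take $\kappa_1=\kappa_2=\kappa$, whereas the paper's thresholds form a discrete set bounded above by $1/6$ and only sandwich $\kappa$. Second, rigor: you actually verify the separation inequality $(t-K)K>t$ in both parity cases; the paper never checks the analogous inequality for its even-$t$ family, and there it holds \emph{only} for $T=5$, since one needs both $\frac{2}{T^2-1}<\frac{1}{2T}$ (i.e.\ $T\geq 5$) and $\frac{T-3}{T^2-1}<\frac{1}{2T}$ (i.e.\ $T^2-6T+1<0$, so $T\leq 5$); the paper also mis-states the concatenated bias as $\frac{1}{2T+1}$ instead of $\frac{1}{2T}$, and its floor/ceiling step ignores that $T$ must remain even. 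So where your proof applies, it is tighter and sounder than the published one.

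The genuine caveat is the one you tucked into ``in the relevant range'': as written, your argument produces no witnessed $\kappa_2\geq\kappa$ once $\kappa\geq 1/3$, so the literal claim ``for all $\kappa>0$'' is not established. The paper's proof has the same defect already above $1/6$, hedged only by its informal pre-theorem remark about a dense subset of $[0,1]$, so this does not disqualify your proposal relative to the paper — but you should state the restriction explicitly, or close the gap. Your family generalizes to do the latter: take $\tau$ with counters $(n_{a1},n_a,n_{b1},n_b)=(m,\ell,0,\ell+1)$ and $\tau'$ with $(\ell+1,\ell+1,\ell-m,\ell)$, both $\lfloor(t-1)/2\rfloor$-balanced for $t=2\ell+1$; the individual biases are both $m/\ell$ and the concatenated bias is $\frac{2m+1}{2\ell+1}$, which exceeds $m/\ell$ exactly when $m<\ell$, so every threshold in $\left[\frac{m}{\ell},\frac{2m+1}{2\ell+1}\right)$ is witnessed. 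Every rational $\kappa=p/q\in(0,1)$ lies in such an interval (take $\ell$ a multiple of $q$ and $m=\kappa\ell$), and that already yields witnessed thresholds $\kappa_1\leq\kappa\leq\kappa_2$ for every real $\kappa\in(0,1)$.
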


Before we start the proof, let us unpack the meaning of this theorem:
for any value of $\kappa$, we can find traces that are just one value off of being $(T/2)$-balanced where
composability of \StaticDP shields yields a traces outside of the fairness constraint.
The reason why the result is in terms of $\kappa_1,\kappa_2$ surrounding $\kappa$ is because in the prove we have to use at some point values of $\kappa$ that have a certain rational form, so we cannot prove our result for all $\kappa\in [0,1]$, but for all $\kappa$ in a dense subset of $[0,1]$.

\begin{proof}
    We prove this theorem by constructing families of counterexamples. 
    For this proof, we use the (slightly abusive) notation that a trace is composed by its four counters, 
    so $\tau = (n_{a}(\tau), n_{a1}(\tau), n_b(\tau), n_{b1}(\tau))$.

    Let $t = 2T+1$ with $T$ even. Consider the traces $\tau_1 = (T+1, T/2 + 1, T, T/2)$ and $\tau_2 = (T, T/2, T+1, T/2)$. Both traces are $T$-balanced. Let's compute demographic parity:
    \begin{equation*}
        \DP(\tau_1) = \frac{T/2 +1}{T+1} - \frac{T/2}{T} = \frac{1}{2(T+1)}
    \end{equation*}
    \begin{equation*}
        \DP(\tau_2) = \frac{T/2}{T} - \frac{T/2}{T+1} = \frac{1}{2(T+1)}
    \end{equation*}
    \begin{equation*}
        \DP(\tau_1\tau_2) = \frac{T+1}{2T+1} - \frac{T}{2T+1} = \frac{1}{2T+1}
    \end{equation*}

    It is clear that $\DP(\tau_1) = \DP(\tau_2) < \DP(\tau_1\tau_2)$.
    Ideally, we would choose $T$ such that $\frac{1}{2(T+1)} = \kappa$, 
    which can be rewritten to $T = \frac{1-2\kappa}{2\kappa}$. 
    However, this may not be an integer. So, given $\kappa$, we take 
    \[
    T_1 = \left\lfloor \frac{1-2\kappa}{2\kappa} \right\rfloor,
    \quad \mbox{and} \quad
    T_2 = \left\lceil \frac{1-2\kappa}{2\kappa} \right\rceil,
    \]
    and define $\kappa_i = \frac{1}{2(T_i+1)}$.
    
    This finishes the construction for an odd $t$. 
    For an even $t$, we show a similar construction. 
    Let $t = 2T$. Consider the traces 
    $\tau_1 = (T+1, 2 ,T - 1, 1)$, $\tau_2 = (T-1, 1, T+1, 1)$. 
    Both traces are $(T-1)$-balanced.
    Let's compute demographic parity:
    \begin{equation*}
        \DP(\tau_1) = \frac{2}{T+1} - \frac{1}{T-1} = \frac{T-3}{T^2-1}
    \end{equation*}
    \begin{equation*}
        \DP(\tau_2) = \frac{1}{T-1} - \frac{1}{T+1} = \frac{2}{T^2-1}
    \end{equation*}
    \begin{equation*}
        \DP(\tau_1\tau_2) = \frac{T+1}{2T+1} - \frac{T}{2T+1} = \frac{1}{2T+1}.
    \end{equation*}
    This finishes the proof.   
\end{proof}
The construction proving the theorem is for time horizons $t$ that are $t \equiv 1 (\mod 4)$. Similar constructions can be found for other congruence classes.

\subsection{Limitations}

\paragraph*{Static vs. dynamic shielding in the periodic setting.}
   Static shields are computationally cheaper than \Dyn shields and have no runtime overhead, making them ideal for fast decision-making applications like online ad-delivery~\cite{ali2019discrimination}.
   However, they can't adjust decisions based on the actual history, leading to overly restrictive and frequent interventions---particularly in the long run.
   In contrast, \Dyn shields adapt to historical data, resulting in fewer interventions over time, making them suitable for applications like banking where decision-making can afford longer computation times~\cite{liu2018delayed}.

\paragraph*{On the assumptions in periodic shielding.}
All three periodic shielding approaches come with assumptions on the numbers of individuals seen from the two groups in each period.
For \StaticDP shields, the assumption provides a tight sufficient condition for the fairness guarantee to be satisfied.
For \StaticBAR and \Dyn shields, the assumptions (Assumption~\ref{assump:static-BAR}, \ref{assump:dynamic}) rule out ``edge cases'' like in Example~\ref{ex:counter-example-periodic-naive-bis}, to give the shield enough advantage to be able to uphold fairness.
We argue that in real-world scenarios and particularly for longer time periods, such edge cases are indeed rare.

\paragraph*{On the existence of periodic shields.}
The existence of the optimal $T$-periodic shield, as defined in Definition~\ref{def:periodically-fair},
is left as an open question, but we conjecture it will be true. 
First, the same argument of Section~\ref{sec:existence-finite-horizon-shields} applies to show that $\sShieldFeasPeriodic$ is not empty because it contains a trivial ``reject-all'' shield.
We still need to prove that there actually exists a shield that minimizes the expression in Equation~\eqref{eq:periodic def.}.

A more interesting question is whether there exists an optimal $T$-periodic shield \emph{that can be computed with finite resources}.
We conjecture that such shield does not exist. Our best solution is in the form of dynamic shields, which we can only synthesize on-the-go because a complete description would require infinite memory.
And even with dynamic shields, there are still some rare feasible traces that fail the fairness constraint, so dynamic shields are technically not in $\sShieldFeasPeriodic$.

\paragraph*{On the feedback effect in sequential decision-making.}
In the sequential setting, decisions that seem fair from a standalone perspective may create biases in the population over time~\cite{liu2018delayed,d2020fairness,sun2023algorithmic}.
This can be modeled by making the input distribution $\sgen$ be a function of the trace seen so far.
In this chapter, we assumed $\sgen$ to remain constant, thereby leaving out such feedback effects that are inherent in sequential decision-making.
We point out that our basic recursive synthesis algorithm from Section~\ref{sec:synthesis} could potentially be adapted to trace-dependent $\sgen$ by modifying Equation~\eqref{eq:v-recursion}, although a detailed extension is out of the scope of this work.

\paragraph*{On considering unrealistic traces.}
As discussed in Section~\ref{sec:existence-finite-horizon-shields}, our shields consider the possibility of very skewed traces. 
This can be seen as overly conservative, as we could safely assume in most realistic applications that such degenerate traces will not occur, and optimize cost under such assumptions. 
The price to pay, from a theoretical perspective, is that the probability of an input would be different depending on the trace history. 
While this is out of the scope of this thesis, we believe this restriction can be modelled using conditional MDPs~\cite{baier2014tacas}.

\paragraph*{Fairness shields with humans in the loop.}
In some applications, decisions are made by human experts, and AI-based systems (like classifiers) are deployed to guide the decision-making process~\cite{green2019principles}.
In these cases, shields may not have the authority to make final decisions.
But they can serve as a runtime ``fairness filter,'' which would modify and de-bias the original outputs of the decision-maker before passing them on to the human expert.
This way they can compliment the decision-making process from the fairness standpoint.

\subsection{Related Work}

Existing works on fairness address the question of how to \emph{specify}, \emph{design}, and \emph{verify} AI decision makers that are fair in their decisions.
From the specification standpoint, several criteria have been proposed to quantify fairness between groups~\cite{feldman2015certifying,hardt2016equality} and between individuals~\cite{dwork12fairness}.
From the design standpoint, many approaches have been developed to ensure that decision-makers are fair with respect to a given fairness objective \cite{hardt2016equality,gordaliza2019obtaining,zafar2019fairness,agarwal2018reductions,wen2021algorithms}.
From the verification standpoint, several static~\cite{albarghouthi2017fairsquare,bastani2019probabilistic,sun2021probabilistic,ghosh2020justicia,meyer2021certifying,li2023certifying} and runtime~\cite{albarghouthi2019fairness,henzinger2023monitoring,henzinger2023dynamic,henzinger2023partial} approaches have been invented for verifying how fair or biased a given decision-maker is.
Our fairness shielding combines the design and verification aspects, as shields are \emph{verified} to be fair by \emph{design}.
Additionally, the design of our fairness shields do not require any knowledge about the underlying decision-maker, and therefore they can be used as trusted third-party intervention mechanisms to guarantee fairness of arbitrary AI-based decision makers.

Traditionally, fairness is defined using the decision-maker's output distribution.
However, it has been shown that a decision-maker that is fair according to its output distribution may exhibit biases over short horizons, which could be undesirable in many situations~\cite{pmlr-v235-alamdari24a}.
To mitigate this issue, we adopt the recently proposed bounded-horizon fairness properties~\cite{pmlr-v235-alamdari24a}, which require that decisions remain empirically fair over a given finite horizon.
To the best of our knowledge, our work is the first to provide systematic algorithmic support for guaranteeing bounded-horizon fairness properties.

We consider the setting of sequential decision making, where a fairness shield needs to make decisions without knowing the inputs from the future.
Similar problems has been extensively studied under the umbrella of optimal stopping problems~\cite{shiryaev2007optimal,bandini2018backward,ankirchner2019verification,bayraktar2020optimal,palmer2017optimal,kallblad2022dynamic}.
The focus of these works has been the analytical design of policies that are as close as possible to the hypothetical policy having the perfect foresight about the future.
Unfortunately, statistical properties like fairness remain beyond the reach of existing algorithms from the optimal stopping literature.

Our design algorithms for fairness shields are inspired by a recent work~\cite{fscd24}, which proposed sequential decision making algorithms for the general class of finite-horizon statistical properties.
They showed that the standard dynamic programming algorithm gets computationally significantly cheaper and produces the same output if the statistically indistinguishable traces are combined together.
This idea is mirrored in our design algorithm for finite horizon shields as well, where traces with the same counter values remain indistinguishable.

\chapter[Analyzing Intentional Behaviour]{Analyzing Intentional Behaviour in Autonomous Agents}
\label{chap:intention}

\ifthenelse{\boolean{includequotes}}{
\begin{quotation}
    \textit{Guardeu-vos forces, bona gent, potser ens veurem un altre dia.
Sabem que volíeu fer més, però, què hi farem, així és la vida:
t'equivoques d'uniforme i dispares a qui més estimes;
t'equivoques de remei i va i s'infecta la ferida.}
\footnote{Save your strength, good people, maybe we'll see each other another day.
We know you wanted to do more, but what can we do, that's life:
you wear the wrong uniform and shoot the one you love the most;
you use the wrong remedy, and the wound gets infected.}
    \hfill 
    --- Guillem Gisbert, El Miquel i l'Olga tornen.
\end{quotation}
}{}

\section{Motivation and Outline}
In this chapter, we focus on explaining the decisions of autonomous agents in terms of intentional behaviour.
Beyond explainability, understanding intention is also key to accountability. 
Since formal verification of software for autonomous agents is often infeasible, these agents may cause harm. 
In such instances, determining whether an agent acted intentionally, negligently, or accidentally helps clarify accountability. 
The study of intention thus not only strengthens explainability but also serves as an essential tool for assessing responsibility. 
Because we cannot predict when harm may occur, examining the agent's software after the incident is necessary to address accountability questions. 
Although a comprehensive liability framework for autonomous agents has yet to be developed, it is reasonable to hold manufacturers of agents that intentionally cause harm to a higher standard than those whose agents cause harm negligently or accidentally. 
Therefore, defining and understanding intention is crucial for establishing accountability.

Historically, symbolic AI has produced a substantial body of work focused on formally specifying and designing ``rational'' autonomous agents.
Such agents explicitly derive decisions based on their beliefs, desires, and intentions, in the so-called BDI approach~\cite{bratman1987intention,rao1995bdi}. 
Determining whether an autonomous agent has acted with a given intention is straightforward for BDI agents. Their intentions are explicitly encoded in their inner workings and can, therefore, be readily examined. 
However, the statistical nature of modern machine-learning-based agents makes interpreting their decision-making in probabilistic settings a much greater challenge, since intentions are not explicitly present in such models.

Traditionally, intention is connected to planning through either cognitive or computational reasoning. 
Intention is a nuanced term in legal and philosophical contexts; here, we use it in the restricted sense of the ``state of the world'' the agent plans towards. 
Whether human or machine, a rational agent with bounded resources must plan towards a goal to successfully achieve it~\cite{bratman1987intention,cohen1990intention}. 
Modern machine-learned agents plan implicitly through techniques like reinforcement learning~\cite{sutton2018reinforcement}.

A sizeable portion of the literature on intention in AI relates to the internal beliefs of an agent~\cite{rao1995bdi,halpern2018towards}.
Since we do not model the internal beliefs or reasoning processes of the agent, we can only claim that an agent shows evidence of intending something. 
While an intentional agent would behave in this way, a random agent might also exhibit such behaviour by chance. 
We model uncertainty arising from diverse sources as probabilistic behaviour, aligning with modern machine-learning techniques for designing autonomous decision-makers. 
Therefore, our definitions are inherently quantitative. 
Rather than stating that an agent shows evidence of intending something, we provide concrete values quantifying the amount of evidence and the confidence level in our assessment.

\paragraph*{Quantitative assessment of agent intentions.}
We consider an autonomous agent operating 
within a probabilistic environment. 
Specifically, we model the environment as a Markov Decision Process (MDP), and the agent as a policy within the MDP (recall Section~\ref{sec:prelim-MDPs}).
We express goals as reaching certain sets of states in an unbounded time horizon. 
Our aim is to analyze whether the agent's decision-making policy shows evidence of intentional behaviour towards a goal.

At the core of our methodology lie the concepts of \emph{agency} and \emph{intention quotient}.
From a given state of the world, an agent employing the optimal policy to reach a goal would achieve it with a certain probability, which we call $P_{\max}$. 
Conversely, an agent using the optimal policy to avoid the same goal would reach it with a smaller probability, denoted as $P_{\min}$. 
We define the difference between $P_{\max}$ and $P_{\min}$ as the \emph{agency} --~or scope of agency, or extent of agency~--,
as it indicates how much the agent can affect the outcome in terms of reaching the goal. 
If the difference is one, it means the agent has complete command over the outcomes:
it can ensure either reaching the goal or avoiding it with certainty. 
If the difference is close to zero, it means the probability of reaching the goal does not change significantly regardless of the agent's actions.

Between $P_{\min}$ and $P_{\max}$ lies the probability that the agent, with its specific policy, will reach the goal; we denote this probability as $\Pag$.
The relative position of $\Pag$ with respect to $P_{\min}$ and $P_{\max}$ indicates the extent of effort the agent is showing towards reaching the goal, always within the constraints imposed by the extent of agency allowed to the agent.
We call this relative position the intention quotient (IQ). 
Whenever $\Pag$ is close to $P_{\max}$, we say that the agent shows evidence of intentional behaviour, with IQ representing the amount of evidence and the scope of agency representing the confidence level of the assessment.
We use probabilistic model checking to compute these probabilities.

Using the concepts of agency and intention quotient, we can assess an agent's intention at a single state of the world. 
Extending to all relevant states of the world, 
these concepts can be directly used for a quantitative analysis of intentional behaviour in an agent. 

\paragraph*{Retrospective methodology for analyzing intentional behaviour.}
Motivated by the problem of accountability ``after the fact'', we propose a method to analyse intentional behaviour in a retrospective manner\footnote{For example, an autonomous driver crashes a car against a tree. \emph{After the harm has occurred}, we study the actions of the agent leading to that harm to determine accountability.}.
We assume a given sequence of events has occurred and we aim to assess whether an agent would show intentional behaviour toward reaching a certain goal along said sequence.

Given an agent, a goal, and a sequence of states of 
the world, which corresponds to a \emph{trace}, 
we start by computing the intention quotients at the states in the trace, 
and aggregating them through a weighted average, 
where the weights are proportional to the agency.
If the aggregated intention quotient is not sufficiently high or low, or if the confidence in the assessment (average agency along the trace) is too low, we conclude that the given trace does not provide enough evidence and that we need to analyze counterfactual scenarios.

If the evidence is not sufficient, we generate a diverse set of counterfactual traces close to the original sequence of events under study and repeat our assessment by aggregating results from all states in the counterfactual traces. This loop of counterfactual generation and intention assessment can be repeated until the confidence level of the assessment is sufficiently high or a threshold number of iterations is reached.

This retrospective method is more involved, and it is intended for use in an accountability process after harm has occurred, where the focus is not so much to understand the agent in general but rather to understand the behaviour of the agent in a concrete sequence of events leading to a harmful consequence.

Our framework is strongly inspired by methods for explainability and accountability using counterfactual analysis~\cite{wachter2017counterfactual,guidotti2024counterfactual}.
In computing the intention quotient, we are asking \textit{``what could the agent do differently?''}, 
and in investigating counterfactual traces we are asking \textit{``what would the agent do in different situations?''}.




\paragraph*{Contributions.}
The contributions of the work presented in this chapter are:
\begin{itemize}
    \item We present a framework for studying intentional behaviour of agents in MDPs directly from policies. Our method uses model checking to automatically relate the agent’s policy to any other possible policy. 
    \item We propose a specific methodology for assessing evidence of intentional behaviour after a concrete sequence of events has happened,
    designed to be used as part of an accountability process.
    Furthermore, our method applies counterfactual reasoning to increase the reliability of the assessment.
    \item To showcase the usefulness of our retrospective method, we provide a case study in which we analyze potential intentional behaviour in the same scenario for different implementations of driving agents.
\end{itemize}

\paragraph*{Outline.}
In Section~\ref{sec:intention-modelling}, we describe the main concepts that we use to quantitatively assess intention throughout the paper and how they are grounded in previous notions. In Section~\ref{sec:intention-retrospective}, we present our specific methodology for retrospective analysis of intention, which builds counterfactuals to a reference trace and uses them to make an assessment.
We report the results of a case study using our retrospective methodology in a traffic-related scenario in Section~\ref{sec:intention-experiments}.
We conclude the chapter in Section~\ref{sec:intention-discussion} discussing potential limitations, extensions, and relation to other work in the literature on intention analysis in AI.

\paragraph*{Declaration of sources.}
This chapter is partially based and reuses material from the following source
previously published by the author of this thesis:

\cite{cano2023analyzing} \fullcite{cano2023analyzing}.

\section[Modelling Intentional Behaviour]
{Modelling Intentional Behaviour in Agents on MDPs}
\label{sec:intention-modelling}

In this section, we give the definitions for \emph{evidence of intentional behaviour} of policies in the presence of uncertainty.
We use an MDP $\mathcal{M} = (\mathcal{S},\mathcal{A},\mathcal{P})$
to model the interaction of the agent and the environment.
In the following sections, we will then propose and implement a method to analyze intentional behaviour according to the definitions of this section.

\subsection{Modelling Environment, Agents, and Intentions}

We model the environment as a Markov decision process (MDP)
\footnote{Recall definitions in Section~\ref{sec:prelim-MDPs}.}
$\M = (\S, \A, \P)$, together with a finite set of atomic propositions $\AP$ and a valuation function $\Val\colon\AP \to 2^\S$. 
A state represents ``one way the world can exist'',
so any information available to the agent for 
deciding what to do is included in the state of the MDP.
The set $\A$ contains every possible action that can be taken
by the agent. 
As usual, 
given $s,s'\in \S$ and $a\in\A$, $\P(s,a,s')$ represents the probability to transition to state $s'$ from state $s'$ when executing action $a$. Also, for each $s\in \mathcal S$ and $a\in\mathcal A$, 
$\sum_{s'\in \mathcal S} \mathcal P(s,a,s') \in \{0,1\}$.

The literals in $\AP$ indicate properties of interest of the MDP, like $\mathtt{goal}$ or $\mathtt{collision}$, and the valuation function $\Val$ indicates, for each property, which states satisfy it.
The valuation function can be extended to any Boolean formula over $\AP$ with the standard conventions as follows. For any pair of formulae $\I,\J$:
\begin{itemize}
    \item $\Val(\I \land \J) = Val(\J)\cap \Val(\J)$,
    \item  $\Val(\I\lor\J) = Val(\I)\cup \Val(\J)$, and
    \item $\Val(\lnot \I) = \S \setminus \Val(\I)$.
 \end{itemize}
Given a Boolean formula $\I$, we denote the set of states where it is satisfied as $S_\I\coloneqq \Val(\I)$.

The agent is modelled by a memoryless and deterministic \emph{policy} 
$\Pol\colon \S\to \A$ over $\M$ that assigns an action to each state.
In Section~\ref{sec:generalized_policies}, we discuss how our method can be extended to consider strategies with non-determinism and memory.

We model potential goals as Boolean expressions over $\AP$ and express intentions as reachability properties of goals with an unbounded horizon.
Given $\I$, a Boolean expression over $\AP$, 
and a state $s\in\S$,
we are interested in the properties of the type 
$\spec = \Reach(s, S_\I)$.


\subsection{Intention of Agents with Perfect Information}

Following classic works in BDI models~\cite{rao1995bdi},
an \emph{intention} of an agent is a set of states the agent committed to reach. 
In our case, we model the set of states with a Boolean formula $\I$ over $\AP$, 
whose corresponding set of states is $S_\I = \Val(\I)$.
Therefore, the agent that intends $\I$ should act towards reaching $S_\mathcal I$  to the best of its knowledge.

Let us assume that the agent has perfect knowledge about the environment and is optimally implemented.
For a formula $\I$ to be an intention of an agent, the agent has to implement a policy  $\Pol$ that maximizes the probability of reaching $S_{\I}$. 
Formally,
$\I$ is an \emph{intention} of the agent $\Pol$, if and only if for any $s\in\S$
\begin{equation}
\label{eq:intention-perfect-info}
    \PP_{\Pol} (\Reach(s, S_\I)) = \PP_{\max} (\Reach(s, S_\I)).
\end{equation}

The policies considered to compute $\PP_{\max}$
can be restricted to a set of policies $\Pi$, if there are policies that should be excluded for comparison. For example,
we may only be interested in policies for comparison that
satisfy certain properties like fairness or progress properties.
In such cases, the right-hand side of Equation~\ref{eq:intention-perfect-info} transforms into 
$\PP_{\max\mid\Pi} (\Reach(s,S_\I))$.

\begin{definition}[Intention in perfect-information settings]
\label{def:intentionperfect}
An agent $\Pol$ shows \emph{evidence of intentional behaviour} in a state $s$ towards $\I$ 
among policies in $\Pi$
if 
$\Pol$ maximizes the probability of reaching $S_\mathcal{I}$, i.e.,
\begin{equation*}
    \PP_{\Pol} (\Reach(s,S_\I)) = \PP_{\max\mid\Pi} (\Reach(s,S_\I)).
\end{equation*}

\end{definition}
Note that we do not phrase Definition~\ref{def:intentionperfect} in terms of the optimal policy $\Pol_{\max}$, because there may not be a unique policy that maximizes reachability probabilities.

\subsection{Intention of Agents Under Uncertainty}

The definition of intention presented earlier assumes perfect knowledge of the environment and that the agent implements an optimal policy for reaching $S_\I$. However, our goal is to analyze intention quantitatively, recognizing that agents acting intentionally do not necessarily follow the optimal policy.

An agent intending to reach $S_\I$ may deviate from the optimal policy for various reasons. We distinguish three primary categories of such deviations:

\begin{itemize}
    \item \emph{Imperfect training.} The agent is trained to reach $S_\I$, but training concludes before convergence to an optimal policy.
    \item \emph{Trade-off among multiple goals.} The agent is trained with several goals simultaneously, with reaching $S_\I$ being only one among these objectives. Consequently, the learned policy might be suboptimal due to balancing multiple conflicting goals.
    \item \emph{Imperfect environment modeling.} The agent is trained to reach $S_\I$ in an MDP $\mathcal{M}'$ that slightly differs from the actual environment model $\mathcal{M}$.
\end{itemize}

In the third scenario, discrepancies between $\mathcal{M}'$ and $\mathcal{M}$ might exist solely in transition probabilities or extend to the action and state spaces. When $\mathcal{M}'$ shares the same action and state sets as $\mathcal{M}$ but differs slightly in transition probabilities, the resulting policy may be suboptimal due to either insufficiently precise modeling of environmental uncertainty or distributional shifts occurring between training and deployment. Such discrepancies naturally emerge from attempts to model real-world uncertainties.

Alternatively, one of the models might represent an abstraction of the other. For instance, an agent could be trained in a continuous environment $\mathcal{M}'$, which must then be abstracted into a discrete model $\mathcal{M}$ for intention analysis. Although abstraction generally preserves overall agent behavior, some fine-grained details might be lost.

In all cases described, we assert that an agent still demonstrates intention through policies that, while potentially suboptimal, remain close to optimal. This observation motivates a relaxation of Definition~\ref{def:intentionperfect}, enabling a quantitative measure of intention under conditions of uncertainty, irrespective of the uncertainty's origin.

\subsubsection{Single State Analysis}

In order to analyze an agent $\Pol$ under uncertainty, we first define the \emph{intention quotient} for a state $s\in\S$, which represents how close $\Pol$ is to the policy optimal for satisfying $\I$ from state $s$.

\begin{definition}[Intention quotient]
\label{def:intention quotient}
Given an agent $\Pol$ at a state $s\in\S$ and a formula $\I$ over $\AP$, 
the \emph{intention quotient} is defined as follows:
\begin{equation}
\label{eq:willingness}
    \rho_{\Pol}(s,\I) = 
    \dfrac{\PP_{\Pol} (\Reach(s,S_\I)) - \PP_{\min\mid\Pi} (\Reach(s,S_\I))}
    {\PP_{\max\mid\Pi} (\Reach(s,S_\I)) - \PP_{\min\mid\Pi}(\Reach(s,S_\I))}. \nonumber
\end{equation}
\end{definition}

Whenever $\I$ is clear by context, we may drop it from the notation.
In contrast to the case of perfect information, 
the uncertainty in the agent's knowledge and resources 
implies uncertainty in the assessment of intentional behaviour.

In general, the higher the value of the intention quotient $\rho_{\Pol}(s, \I)$, 
the more evidence the policy $\Pol$ shows of intentionally trying to satisfy $\I$.
The lower the value of  $\rho_{\Pol}(s,\I)$, the more evidence the policy $\Pol$ shows on acting without the intention to satisfy $\I$, although high values at a single state may be explained by other means.

An additional source of uncertainty is introduced by the agency of a state.
In situations where the agent's actions have little effect on satisfying $\I$,
there is not enough evidence to support a claim of intentional behaviour. 
For this reason, we take the agency into account for our assessment of intentional behaviour.

\begin{definition}[Agency]
\label{def:agency}
    Given a state $s\in\S$ and a formula $\I$ over $\AP$, 
    the \emph{agency} $\sigma(s,\I)$ at a state $s$
    is defined as the gap between the best and the worst policy in terms of satisfying $\I$.
    Formally, it is given by
    \begin{equation}
    \sigma(s,\I) =  \PP_{\max\mid\Pi} (\Reach(s,S_\I)) - \PP_{\min\mid\Pi}(\Reach(s,S_\I)).
    \end{equation}
\end{definition}

\subsubsection{Multiple-State Analysis}

The concepts of agency and intention quotient apply to a single state in the MDP.
However, when studying an agent in particular, 
we are not only interested in how the agent behaves in one state, but in many states.
We extend the definition of agency by averaging the value along a set of states.
\begin{definition}[Agency for sets of states]
    For a set of states $S\subseteq \S$ and a formula $\I$, 
    the \emph{agency} of $S$ is
    \begin{equation}
        \sigma(S,\I) = \frac{1}{|S|}\sum_{s\in S} \sigma(s,\I)
    \end{equation}
\end{definition}

Since the scope of agency indicates how important is a given state in assessing the outcome of an agent's actions, 
we aggregate the intention quotients of the individual states using the agency as the weighting factor.
This way, the weight of the decision at each state is directly proportional to the impact that an agent can have in that state towards satisfying $\I$.

\begin{definition}[Intention quotient for sets of states]
    For an agent $\Pol$ 
    operating around a set of states $S\subseteq \S$,
    and a formula $\I$ over $\AP$,
    the intention quotient $\rho_\Pol(S,\I)$
    is given as 
    the weighted average
    \begin{equation}
        \rho_\Pol(S,\I) = \frac{1}{\sum_{s\in S}\sigma(s,\I)} 
        \sum_{s\in S} \sigma(s,\I) \rho_{\Pol}(s,\I).
        \nonumber
    \end{equation}
\end{definition}

We consider two types of sets of states that are of interest from the point of view of studying intentional behaviour: balls and traces.

\paragraph*{Balls.}
A ball around a set $S_\I$ represents the states that are \textit{close} to $S_\I$, according to some distance in $\M$. 
Sometimes, instead of analysing the agent in the whole environment, we are interested in how an agent behaves in the vicinity of a set of states. 
For example, we may be interested in a car's behaviour towards crashing into a wall only when a wall is nearby.
This may also be useful for practical reasons: instead of trying to model and understand the agent in a large environment with too many states, we can focus on a ball of a certain radius of influence, where states outside of this radius can be considered unimportant.

\paragraph*{Traces.}
A trace introduces an order and a concept of time passing to a set of states,
and we will especially focus on analyzing traces in the retrospective method. 
Traces are also useful because we know that agents will follow valid traces when deployed.
In Figure~\ref{fig:intention quotient} we depict the concepts of agency and intention quotient over a trace.
In the hypothetical case represented in Figure~\ref{fig:intention quotient}, 
we see an agent that behaves towards satisfying a certain formula $\I$ most of the time, with most of the states, especially those with high agency, showing a probability close to the maximum. The only exception is the last state, where the probability is closer to the minimum at that state. Even in such a case, this would be too little and too late to exonerate the agent.

This ordering in time allows us to define a notion of commitment.
Since very prominent existing theories of intention in autonomous systems take commitment as a central concept~\cite{cohen1990intention,vanZee2020}, we give here our take, defining commitment for traces in a quantitative way, using the concepts of agency and intention quotient.

\begin{definition}[Commitment along a trace]
    \label{def:commitment}
    For an agent $\Pol$, a trace $\tau = (s_1, \dots, s_n)$, 
    a threshold $\delta_B\in (0,1)$, and a threshold $\delta_I\in (0,1)$,
    we say that the agent is \emph{commited} towards satisfying $\I$ if there exists $k\in[1,n]$ 
    such that for all $i\geq k$, 
    \begin{equation*}
       \Big((\PP_{\max\mid\Pi}(\Reach(s_i,S_\I)) > \delta_B) \land
    (\PP_{\min\mid\Pi}(\Reach(s_i,S_\I)) < 1 - \delta_B) \Big) \to  \rho_\Pol(s_i,\I) \geq \delta_I. 
    \end{equation*}
\end{definition}

The intuition behind this definition is that an agent shows evidence of being committed to satisfying $\I$ if its intention quotient exceeds a certain threshold ($\delta_I$) whenever the agent believes that satisfying $\I$ is still feasible $(\PP_{\max\mid\Pi}(\Reach(s_i,S_\I)) > \delta_B)$, and that $\I$ has not yet been achieved $(\PP_{\min\mid\Pi}(\Reach(s_i,S_\I)) < 1 - \delta_B)$. 

For this definition, $\delta_I$ is assumed to be relatively high, while $\delta_B$ is close to zero. By setting $\delta_B > 0$, we allow the agent to ``give up'' if fulfilling $\I$ becomes too unlikely, or to ``focus on something else'' if reaching $S_\I$ is almost certainly guaranteed. 
The definition can be made stricter by setting $\delta_B = 0$.



\begin{figure}
    \centering
    \includegraphics[width=0.7\linewidth]{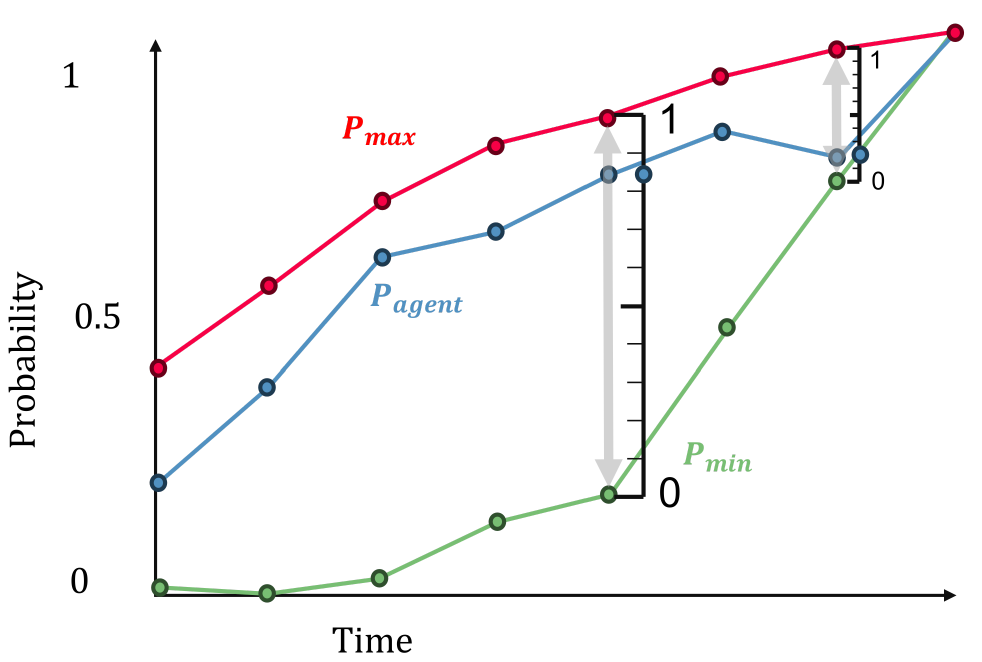}
    \caption[Example of the computation of agency and intention quotient.]
    {Example of the computation of agency and intention quotient. The grey arrows represent agency, while the blue dots inside the 0 to 1 ruler indicates the value of the intention quotient.}
    \label{fig:intention quotient}
\end{figure}

\section[Retrospective Analysis of Intention]
{Methodology for the Retrospective Analysis of Intention}
\label{sec:intention-retrospective}

\subsection{Setting and Problem Statement}\label{sec:setting}

\paragraph*{Setting.}
We have a model of the environment in the form of an 
MDP $\mathcal M = (\mathcal S,\mathcal A,\mathcal P)$ 
that captures all relevant dynamics and possible 
interactions for an agent. 
We also have a concrete scenario to analyze in the form of a trace $\tauref = (s_1,\dots, s_n)$. 
The trace $\tauref$ is a sequence of visited states in $\mathcal{M}$ that leads to a state in $S_\mathcal I$,
i.e.,  $s_n\in S_\mathcal I$.
The agents considered comparable are defined by a set of allowed policies $\Pi$,
and the implementation of the agent under study is given in the form of a policy $\Pol\in\Pi$.
The underlying intentions of the agent are unknown.

\paragraph*{Problem statement.}

Given this setting, 
we want to analyze whether there is 
\emph{evidence of intentional behaviour} 
of the agent $\Pol$
towards satisfying $\I$ in the scenario represented by $\tauref$,
considering policies in $\Pi$.

\begin{example}\label{ex:intention-traces-example}
    Let us consider a scenario in which
    an autonomous car collides with a pedestrian
    crossing the road, as illustrated in Figure~\ref{fig:casestudysetting}.
    To analyze to which degree the car is accountable for the accident, we are interested in whether causing harm was the intention of the car.
    In such an example, $\mathcal{M}$ captures all relevant information necessary to analyze the accident, like positions and velocities of car and pedestrian, car dynamics, road conditions, etc.  
    The scenario $\tauref = (s_1,\dots, s_n)$ is defined via the sequence of states prior to the collision.
    The set of states $S_\I$ represents collisions.
    We want to analyze whether the policy $\Pol$ shows evidence of intentional behaviour towards satisfying $\I$.
    To avoid unfair comparison with unrealistic policies, 
    we define a set of policies $\Pi$ that excludes 
    unreasonably slow-moving cars (e.g., cars that stop even though there is no other road user close by). 
\end{example}

 \begin{figure}
     \centering
     \includegraphics[height=7.5em]{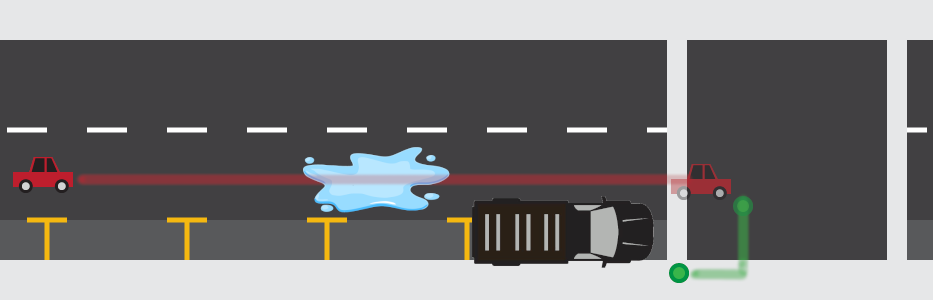}
     \caption[Illustration of the scenario in Example~\ref{ex:intention-traces-example}]
     {Illustration of the scenario in Example~\ref{ex:intention-traces-example}.
     The red line represents the trajectory of the car, the green dot represents the pedestrian and the green line the trajectory of the pedestrian. There is a water puddle in the road that makes the floor slippery and a parked truck that blocks visibility of the pedestrian.}
     \label{fig:casestudysetting}
 \end{figure}

Since both agency and intention quotient are quantitative tools, to determine whether there is or there is not evidence of intentional behaviour, we define the following thresholds that indicate how much evidence we need to give a positive or a negative assessment.

\begin{definition}[Evidence of intentional and non-intentional 
behaviour in traces]
\label{def:intention-thresholds}
    Given lower and upper thresholds
    $0\leq\delta_\rho^L < \delta_\rho^U \leq 1$ for intention quotient and 
    an agency threshold $0<\delta_\sigma < 1$,
    we say that there is \emph{evidence of
    intentional behaviour} towards satisfying $\I$ along a trace $\tau$ if
    \begin{equation*}
     \sigma(\tau) \geq \delta_\sigma \quad \mbox{and} \quad    
     \rho_{\Pol}(\tau) \geq \delta_\rho^U.
    \end{equation*}
    We say that there is \emph{evidence of
    non-intentional behaviour} towards satisfying $\I$ along a trace $\tau$ if 
    \begin{equation*}
     \sigma(\tau) \geq \delta_\sigma \quad \mbox{and} \quad    
     \rho_{\Pol}(\tau) \leq \delta_\rho^L.
    \end{equation*}
    
    Otherwise, i.e., in the cases that 
    \begin{equation}
        \sigma(\tau) < \delta_\sigma \quad 
        \mbox{or} \quad  \delta_\rho^L < \rho_{\Pol}(\tau) < \delta_\rho^U,
    \end{equation}
    we say that we have \emph{not enough evidence} for intentional behaviour.
\end{definition}
The thresholds $\delta_\rho^L$, $\delta_\rho^U$, and $\delta_\sigma$ have to be defined using domain knowledge for each concrete application, and make our method adaptable to different evidence standards required for different accountability processes. 
For example, to convict a person of a criminal offense, it is typically required to prove the person committed the crime ``beyond a reasonable doubt'', while in many systems, civil litigations are resolved with the ``preponderance of the evidence'' standard, which is much less stringent~\cite{clermont2002comparative}.
The evidence thresholds can be adapted to suit different standards.

\subsection[Evidence Augmentation Loop]
{Evidence Augmentation through Counterfactual Generation}

In this section, we propose a concrete methodology 
to analyze retrospectively whether there is evidence an agent acted intentionally towards satisfying $\I$.
Our method is illustrated in Figure~\ref{fig:intention-overview}. 

As depicted in the figure,
we start the \emph{analysis of the reference trace} $\tauref$
by computing the intention quotient $\rho_\Pol(\tauref)$
and the agency $\sigma(\tauref)$.
If $\sigma(\tauref) \geq \delta_{\sigma}$, we may be able to draw conclusions about intentional behaviour:
\begin{itemize}
    \item If $\rho_\Pol(\tauref) \geq \delta_{\rho}^U$, then we conclude that there is evidence of \emph{intentional behaviour} towards satisfying $\I$.
\item If $\rho_\Pol(\tauref) \leq \delta_{\rho}^L$, then we conclude that there is evidence of \emph{non-intentional behaviour} towards satisfying $\I$.
\end{itemize}

In cases without enough agency, i.e., where $\sigma(\tauref) < \delta_{\sigma}$, 
or where the intention quotient falls between the lower and upper thresholds, i.e.,
$\delta_{\rho}^L < \rho_\Pol(\tauref) < \delta_{\rho}^U$,
we say that we do not have enough evidence to reach a conclusion.
In such cases, we propose to \emph{generate} more evidence by analyzing \emph{counterfactual scenarios}.

A counterfactual scenario $\tau$ is a scenario close to $\tauref$ according to some distance notion. 
Our method generates a set of counterfactual scenarios $T_{\textit{cf}}$ and 
computes whether there is evidence for intentional  
or non-intentional behaviour
for each trace $\tau \in T = T_{\textit{cf}}\cup\{\tauref\}$.
We fix beforehand the number of counterfactual scenarios to generate to some parameter $N$.

As before, we draw conclusions about intentional behaviour based on the \emph{aggregated results} of agency $\sigma(T)$ and intention quotient $\rho_\Pol(T)$. 
If 
$\sigma(T) < \delta_\sigma$ or $\delta^L_{\rho} < \rho_{\Pol}(T) < \delta^U_{\rho}$,
there is still not enough evidence for intentional or non-intentional behaviour,
with $\sigma(T)$ being the agency 
averaged over all traces in $T$, 
and $\rho_{\Pol}(T)$ being the average intention quotient for the set of traces in $T$.

In such cases, 
our algorithm iterates back and extends the set $T_{\textit{cf}}$ by generating $N$ more counterfactual scenarios to be analyzed. 
The algorithm stops when enough evidence has been generated to draw a conclusion
or
when the number of generated counterfactual scenarios exceeds some user-defined limit.
In the following, we discuss the generation of counterfactual scenarios in detail.

In Figure~\ref{fig:intention-overview} we show this augmentation loop, where at each iteration we may stop if there is enough evidence to draw a conclusion.

\begin{figure}
    \centering
    \includegraphics[width=\linewidth]{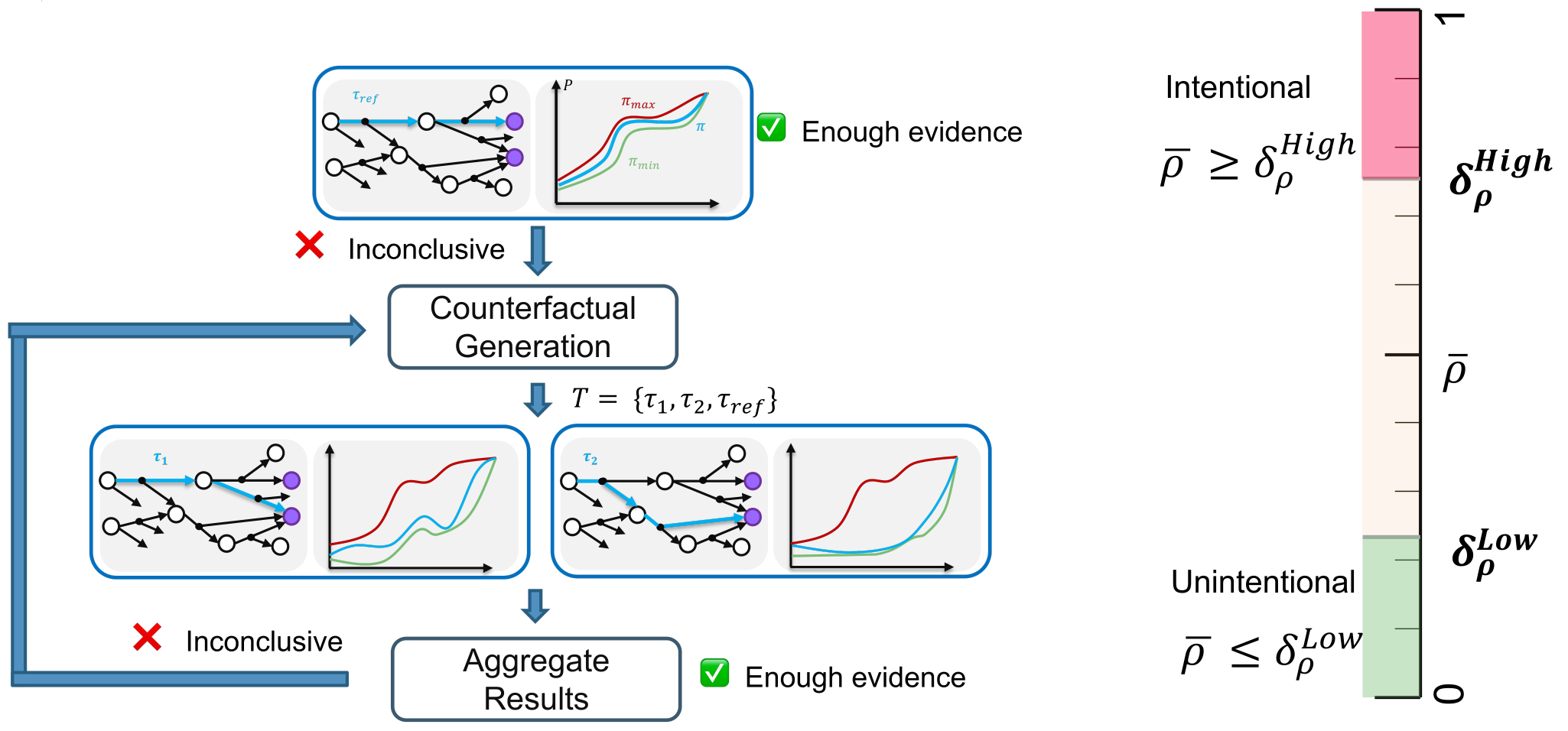}
    \caption[Retrospective analysis of intentional behaviour]
    {Overview of our approach for retrospective analysis of intentional behaviour.}
    \label{fig:intention-overview}
\end{figure}

\subsection{Counterfactual Generation}

To gather sufficient evidence for our assessment of intentional behaviour, we generate scenarios that serve as counterfactuals for $\tauref$.
There are various approaches to generating counterfactual traces, each requiring different levels of domain knowledge. Here, we present three alternative methods, ordered by decreasing reliance on expert knowledge and involvement. The first approach is highly dependent on human expertise, while the last operates with minimal human intervention.

\subsubsection{Counterfactual Generation via a Human Expert}
Asking and analyzing counterfactual questions is a standard procedure in accountability processes~\cite{beebee2019counterfactual}. 
Usually, such counterfactual questions are proposed by a domain expert.
We transfer this concept
to analyzing intentional behaviour on MDPs. 
The counterfactual questions posed by the expert are translated to counterfactual traces $T_{\textit{cf}}$ in the model  $\mathcal{M}$. 

\begin{example}
Recall Example~\ref{ex:intention-traces-example}.
Some counterfactual questions posed by an expert in the traffic scenario
could be: 
(Q1) What if the car had driven slower? 
(Q2) What if the pedestrian had been visible earlier? 
(Q3) What if the road conditions were different?
Each of Q1-Q3 translates to a counterfactual trace, 
which we can analyze in our framework. 
\end{example}

The method of generating counterfactuals using a human expert imposes a heavy burden of work on the expert. 
Next, we propose two methods to automatically generate counterfactuals to mitigate the need for human effort.

\subsubsection{Counterfactual Generation on a Factored MDP}
\label{sec:intention-factored-mdp}
Since $\mathcal{M}$ models the interactions of the agent with its environment,
$\mathcal{M}$ is typically given in form of a \emph{factored} MDP. 
In factored MDPs, the state space of $\mathcal{M}$ is defined in terms of \emph{state variables} $\mathcal S = \mathcal{X}_1\times\cdots\times\mathcal X_m$.

In this approach for counterfactual generation, we assume domain knowledge about which variations of state variables generate interesting counterfactual scenarios. 
In particular, we assume that the state variables can be partitioned into \emph{integral}, which contain the most characteristic information about the sequence of events, and \emph{environmental} or \emph{peripheral} state variables, which define environmental characteristics and are fixed during the sequence of events under study\footnote{Note that the convention on which variables are \emph{integral} and \emph{peripheral} differs from that it in~\cite{cano2023analyzing}.}.


\begin{example}
    In Example~\ref{ex:intention-traces-example}, integral state variables represent the position and velocity of the car and the position, velocity, and visibility status of the pedestrian. 
    State variables that represent the position of the parked truck, the location of the water puddle and the amount of water in it are properties of the environment that stay fixed during the sequence of events, since they would change at a much slower rate, so they would be tagged as peripheral state variables.
    Traces with the same sequence of values for the integral state variables and different values for the peripheral state variables can effectively represent the same sequence of events in a slighlty different world. 
    %
\end{example}

Studying agency and intention quotient in traces with modified values of the peripheral state variables is our way of asking counterfactual questions such as \emph{``What would you have done if you could see the pedestrian?''}, or \emph{``What would have happened if the road was not so slippery?''}.
To generate informative counterfactuals, we are interested in traces that maintain the values of the integral variables (i.e., maintain the characteristic sequence of events), while changing some values of the peripheral variables (i.e., changing some of the environmental factors).
We do not provide a more formal or concrete definition of integral and peripheral variables, because they have to be defined in each scenario from domain knowledge.

We automatically generate counterfactual traces by exploring variations of the peripheral variables. 
Let the state space be factored as $\mathcal{S} = \mathcal X_1\times\dots\times\mathcal X_m$, 
where variables $\mathcal X_1,\dots,\mathcal X_k$ are integral and $\mathcal X_{k+1},\dots,\mathcal \mathcal X_m$ are peripheral.
For any state $s=(x_1,\dots,x_m)$, 
we write its factorization 
into integral and peripheral variables 
as 
$s = (s^{\textit{int}}||s^{\textit{per}})$.
Let $s^{\textit{per}}_{\textit{ref}} = (x_{k+1},\dots,x_m)$ be the value of the peripheral variables at any state of $\tauref$.
This is well-defined since the values of peripheral variables do not evolve along the trace.
We define the set of counterfactual values as:
\begin{align}
    \mathrm{Cf_\eps}(s^{\textit{per}}_{\textit{ref}}) =
    \{(y_{k+1}, \dots, y_{m}) \in \mathcal X_{k+1}\times\dots\times\mathcal X_{m}\::\:
    \: \forall i,\:
|x_i - y_i| < \eps_i 
\},\nonumber
\end{align}
where $\eps=(\eps_{k+1},\dots,\eps_{m})$
contains, for each peripheral variable, the range of variation that is still considered valid.
For a given trace $\tauref = (s_1,\dots,s_n)$,
the counterfactual traces that we consider are
\begin{multline*}
     T_C(\tauref) = 
    \{(s'_1,\dots, s'_n)\::\:
    \exists s_{\textit{cf}}^{\textit{per}}\in \mathrm{Cf_\eps}(s^{\textit{per}}_{\textit{ref}}),\:
    \: 
    \forall i=1\dots n : \\
    s'_i = (s_i^{\textit{int}}||s_{\textit{cf}}^{\textit{per}}),\:
    (s'_1, \dots, s_n')\mbox{ is valid, and }\
   s'_n \in S_\mathcal I
    \}.
\end{multline*}

To unpack this definition, a trace $\tau$ is in $T_C(\tauref)$ if the following conditions are satisfied.
\begin{itemize}
    \item At each stage of the trace $\tau$, the value of the integral variables corresponds to the value in $\tauref$.
    \item The value of the peripheral variables in $\tau$ is constant along the trace and close to that of $\tauref$, as defined by a distance vector $\eps$.
    \item The trace $\tau$ is still a valid trace of the MDP, meaning that for every pair of consecutive states in $s_i', s_{i+1}'$ in $\tau$, there exists an action $a$ such that the transition $s_i'\xrightarrow{a} s_{i+1}'$ has a non-zero probability.
\end{itemize}

Note that the search for counterfactual traces is limited to those peripheral variables
$\mathcal X_i$
for which $\eps_i >0$,
thus by setting some of the $\eps_i$ to zero, 
we can fix their value in the counterfactual generation process.

From $T_C$, we sample $N$ traces to be used for the counterfactual analysis. For the trace selection,
emphasis can be put on traces with higher scopes of agency.

\subsubsection{Counterfactual Generation Using Distances on MDPs}
\label{sec:intention-counterfactual-distance}

This method for generating counterfactual scenarios requires to have given a distance $d\colon \mathcal S\times \mathcal S \to \mathbb R_{\geq 0}$ defined over states in the MDP.  Given such a distance metric $d$ over the states, the set of counterfactual traces is given as
\begin{multline*}
    T_C(\tauref) = \{(s'_1,\dots, s'_n)\::\:
\forall i=1\dots n,\\  d(s_i,s'_i) < \eta, 
   \: (s'_1, \dots, s_n')\mbox{ is valid, and } 
   s_n \in S_\mathcal I\},
\end{multline*}

where $\eta>0$ is a distance that represents states being `close enough'
to be compared as counterfactuals.

In case there is no distance defined in the MDP, 
there are bisimulation distances that are well defined intrinsically in any MDP~\cite{song2016measuring,FernsCPP06,ferns2003metrics,wang2019measuring}.
They depend on the intrinsic structure of the MDP, 
defined mainly by similarities in terms of the transition function.
The main caveat of this approach is that distances are expensive to compute, and 
the explanation of why two states are assigned a given distance 
becomes more obscure to the user.

\section{Experimental Validation}
\label{sec:intention-experiments}

In this section, 
we showcase our retrospective method on a traffic-related scenario related to Examples 1-2, and illustrated in Figure~\ref{fig:casestudysetting}.
In this scenario, a car was driving on a road with a crosswalk. 
A pedestrian at the crosswalk decided to cross.
Close to the crosswalk, there was a parked truck that blocked the visibility
of the car. 
Furthermore, the previous rainy conditions generated a water puddle that made the road slippery in the region covered by the puddle.
In this region, both braking and accelerating are less effective than normal, as the friction between the tires and the road is weak.
While crossing, the pedestrian was hit by the car.
We want to study the behaviour of the car for signs of the hit being intentional.

 \begin{figure}
     \centering
     \includegraphics[height=8.5em]{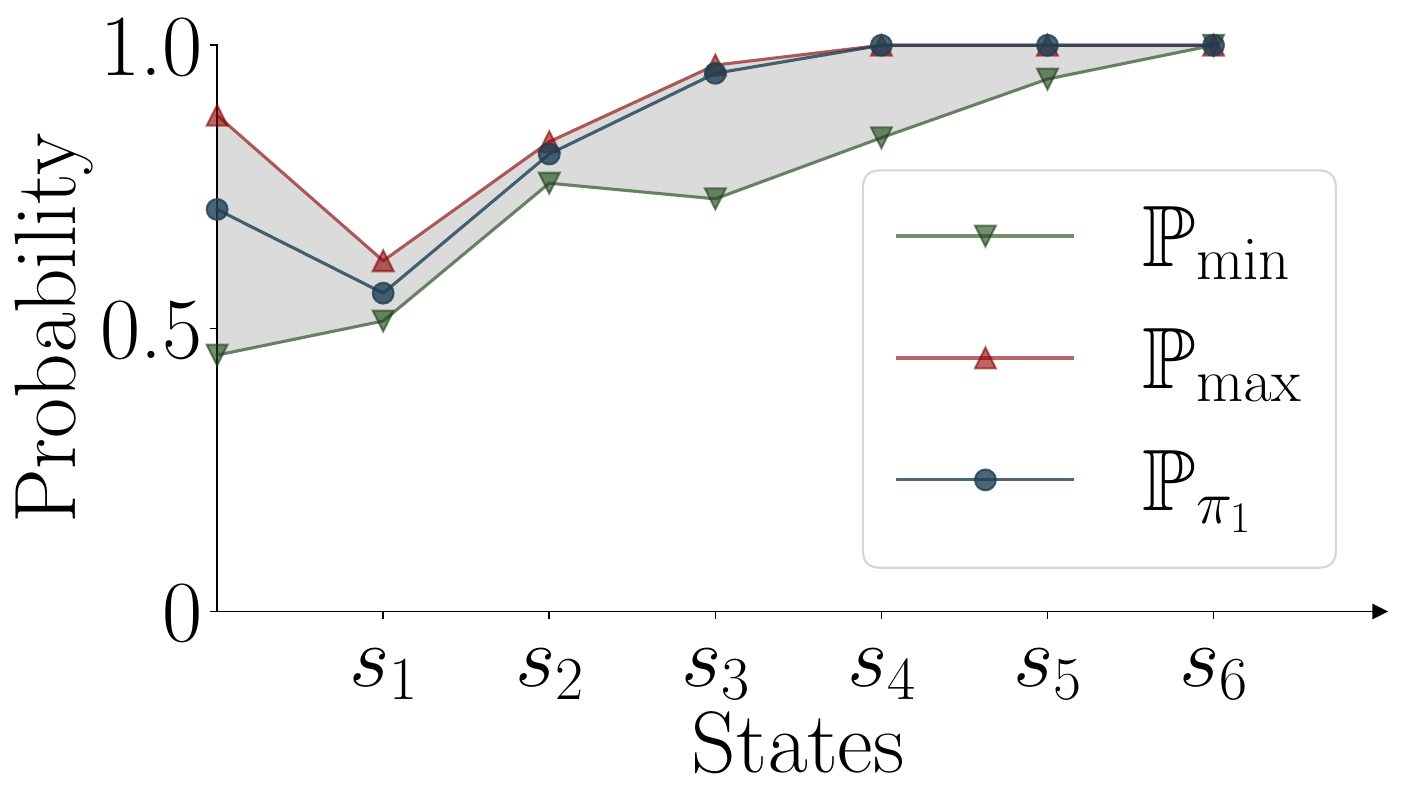}
     \caption[Intention quotient along the reference trace]
     {Probabilities associated with agency and intention quotient along the reference trace $\tauref$ in the experiments.}
     \label{fig:tauref}
 \end{figure}

All experiments were executed  
on an Intel Core i5 CPU with 16GB of RAM running Ubuntu 20.04.
We use a modified version of \textsc{Tempest}~\cite{tempest}
as our model-checking engine.

\subsection{Model of Environment}
The environment is modeled as an MDP 
$\M = (\S, \A, \P)$.
The set of states is a triple
$\S = \S^{\textit{car}} \times \S^{\textit{ped}} \times \S^{\textit{env}}$,
where $\S^{\textit{car}}$ models the position and velocity of the car, 
$\S^{\textit{ped}}$ models the position of the pedestrian,
and $\S^{\textit{env}}$ models other properties 
that do not change during a scenario.
These properties include the slipperiness factor of the road and the existence of the truck blocking the car's view of the pedestrian.

The car's position is defined via the integers $x_c$ and $y_c$ with $0 \leq x_c \leq 60~\unit{\metre}$
and $3 \leq y_c \leq 13~\unit{\metre}$. The velocity of the car is in 
$\{0, 1, \dots, 5\}~\unit[per-mode=symbol]{\metre\per\second}$. 
 The position of the car is updated at each step,
assuming a uniform motion at the current velocity.
The car has the following set of actions $\mathcal{A}$:
hitting the brakes, pressing down on the accelerator, and coasting.
If the car is on a non-slippery part of the road, the action of accelerating increases the velocity stochastically 
(by $1$ or $2~\unit[per-mode=symbol]{\metre\per\second}$), braking decreases the velocity stochastically 
(by $1$ or $2~\unit[per-mode=symbol]{\metre\per\second}$) and coasting maintains or decreases the velocity 
(by $1~\unit[per-mode=symbol]{\metre\per\second}$).
If the car is on a slippery part of the road, the probabilistic consequences of the selected action on the velocity are different and include the possibility of no modification to the current velocity for both the actions of braking and accelerating.

The pedestrian's position is given via the integers $x_p$ and $y_p$ with
 $0 \leq x_p \leq 60~\unit{\metre}$ and $ 0 \leq y_p \leq 15~\unit{\metre}$. 
The pedestrian can move $1~\unit{\metre}$ in any direction, or not move at all. 
The probabilities of moving in each direction are given by 
a stochastic model of the pedestrian, 
designed in such a way that the pedestrian favours crossing the street
through the crosswalk while avoiding being hit by the car.
The probabilities in the pedestrian's position update can be influenced by a hesitance factor, which captures how likely it is that the pedestrian puts themselves at a hitting distance from the car. 
The resulting MDP consists of about $120\mathrm{k}$ states and $400\mathrm{k}$ transitions, so the model-checking calls have a trivial computational cost, generally under one second.

\subsection{Analysis of a Trace}
\label{sec:analysis_trace}
In the described environment, we are given a scenario $\tauref$
as illustrated in Figure~\ref{fig:casestudysetting},
and an agent $\Pol\colon \mathcal S\to\mathcal A$\footnote{Both the trace and the agent are handcrafted for this experiment to illustrate our method. 
The agent is programmed to get to the end of the street and opportunistically hit the pedestrian if possible. The trace has the car collisiding with the pedestrian.
The same analysis method would apply to different agents and any trace ending in a collision.}.
As thresholds to evaluate evidence of intention, 
we use $\delta_\rho^L = 0.25$, $\delta_\rho^U = 0.75$ and 
$\delta_\sigma = 0.5$ as reasonable arbitrary choices.
In real-world scenarios, these thresholds should be adapted to concrete problem and evidence standard, and ideally agreed upon beforehand by all stakeholders.

We restrict the set of policies $\Pi$ to policies that do not stop the car if no pedestrian is within a range of $15$m of the car.
The collision states are described by the formula
\[
\I = \left(|x_p-x_c| \leq 5 \right)\vee \left(|y_p-y_c| \leq 5\right).
\]
Given this setting, we analyze $\tauref$ for evidence 
of intentional behaviour towards reaching the set of states $S_\mathcal{I}$.
Therefore, we first compute the agency and intention quotient along $\tauref$.

\paragraph*{Results of analysing $\bm{\tauref}$.}
In Figure~\ref{fig:tauref}, we show the results of the model checking calls for reaching $S_\mathcal{I}$  for states in $\tauref$.
The lower line (\inlinegraphics{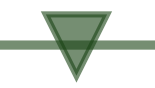})
represents $\PP_{\min}$, 
the upper (\inlinegraphics{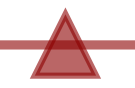}) represents $\PP_{\max}$
and the line in the middle (\inlinegraphics{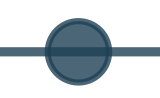}) represents $\PP_\Pol$ for every state in $\tauref$.
The shaded area, 
between $\PP_{\min}$ and $\PP_{\max}$,
represents the agency at each state. 
The figure shows the agent is close to the line of $\PP_{\max}$,
but the agency is very small,
with
$\rho_\Pol(\tau_\textit{ref}) = 0.73$ and 
$\sigma(\tau_\textit{ref}) = 0.18$.
Since $\sigma(\tau_\textit{ref}) < \delta_{\sigma}$, our method concludes that there is not enough evidence for intentional behaviour yet and moves on to the step of generating counterfactual scenarios.

\paragraph*{Counterfactual analysis.}
We generate counterfactual scenarios by exploiting domain knowledge about integral and peripheral variables of the MDP.
We change the values of the following peripheral variables:
\begin{itemize}
    \item \emph{Slipperiness range}. The street is considered to be slippery between 
 the positions $sl_{\textit{init}}$ and $sl_{\textit{end}}$.
 \item \emph{Slipperiness factor}. The strength of the slippery effect is measured by the
 slippery factor $sl_{\textit{fact}}$,
 which is analogous to the inverse of the friction coefficient in classical dynamics.
 The effect of slipperiness is to 
 make the acceleration and brake less effective, 
 increasing the probability that both acceleration and brake 
 have no effect on the speed of the car.
 The larger the value of $sl_{\textit{fact}}$, 
 the more effect,
 with $sl_{\textit{fact}}=1$ being the minimum value, 
 where the road is considered to be `not slippery at all'.
 \item \emph{Hesitancy factor}. The pedestrian, in general, tends to cross the street 
 through the crosswalk. The hesitancy factor modifies the probabilistic model of the pedestrian, to make them more or less prone to put themselves at a hitting distance from the car. 
 In the limit, a pedestrian with hesitancy factor $h_{\textit{fact}}=0$ is a completely cautious pedestrian, that under no circumstance would put themself at a position where they could be hit by the car.
On the contrary, a pedestrian with hesitancy factor
 $h_{\textit{fact}}=1$ completely disregards the position and velocity of the car, and would not hesitate to cross even with a fast car approaching.
 \item\emph{Visibility}. In the given scenario, there is a truck blocking the visibility of the car, corresponding to $vis=1$. 
 In case $vis=0$, the visibility block is eliminated.
\end{itemize}
The variables and the ranges considered for generating counterfactuals are summarized in Table~\ref{tab:ranges}.

\paragraph*{Results of analyzing counterfactual scenarios.}
We build the counterfactuals in batches of $N=5$,
by sampling uniformly on the ranges described in Table~\ref{tab:ranges}.
We show the results in terms of intention quotient and agency in Table~\ref{tab:counterfactuals}. 
We report the averaged values and standard deviations over 5 runs. 
As we can see from the table, 
with 21 traces in $T$ we have $\rho_{\Pol}(T) > \delta_{\rho}^U = 0.75$
and $\sigma_{\Pol}(T) > \delta_{\sigma} = 0.5$. 
Thus, our method concludes that the agent under study does present evidence of intentional behaviour to hit the pedestrian.

\begin{table}
\centering
\begin{tabular}{lccccc}
& $sl_{\textit{init}}$ & $sl_{\textit{end}}$ & $sl_{\textit{fact}}$ & $h_{\textit{fact}}$ & $vis$      \\
\midrule
Value $\tauref$ & 20                   & 45                  & 2.5                  & 0.5                 & 1          \\ 
Range      & $[10, 30]$           & $[35,55]$           & $[1, 4]$             & $[0.1, 0.9]$        & $\{0, 1\}$\\
\end{tabular}
\caption{Ranges to use in counterfactual generation.}
\label{tab:ranges}
\end{table}

\begin{table}
\centering
\begin{tabular}{lrrrr}
$|T|$ & 6 & 11 & 16 & 21   \\
\midrule
$\rho_\Pol(T)$    & $0.78 \pm 0.03$  &  $0.81 \pm 0.02$ &  $0.83 \pm 0.02$  &  $0.84 \pm 0.01$ \\
$\sigma_\Pol(T)$  &  $0.33 \pm 0.02$  &  $0.44 \pm 0.03$  &  $0.48 \pm 0.01$ &  $0.50 \pm 0.01$ \\
time (s)      &  $53 \pm 16$    &  $147 \pm 42$  &  $227 \pm 32$  & $318 \pm 64$ \\ 
\end{tabular}
\caption{Results of the counterfactual evaluation.}
\label{tab:counterfactuals}
\end{table}

\subsection{Comparative Analysis of Several Agents}
In this section, 
we illustrate how our method can be used to compare different agents 
in terms of intentional behaviour.
We compare three different agents $\Pol_1, \Pol_2, \Pol_3$ in the same scenario $\tauref$. 
The agent $\Pol_1$ corresponds to the policy $\Pol$ in Section~\ref{sec:analysis_trace}.
The agent $\Pol_2$ is designed as a reckless driver that completely disregards the position of the pedestrian, while $\Pol_3$ is designed as a cautious driver.

\begin{figure}
     \centering
     \begin{subfigure}[b]{0.48\textwidth}
         \centering
         \includegraphics[width=\textwidth]{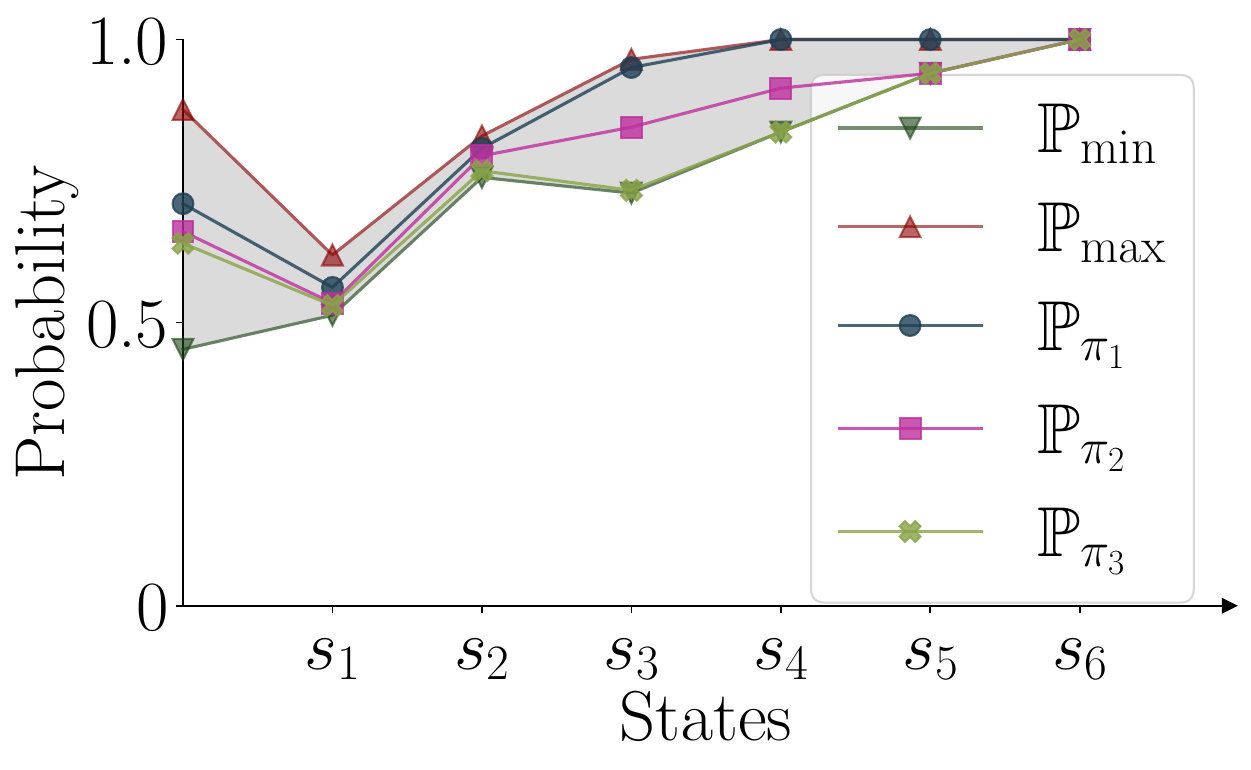}
     \end{subfigure}
     \hfill
     \begin{subfigure}[b]{0.48\textwidth}
         \centering
         \includegraphics[width=\textwidth]{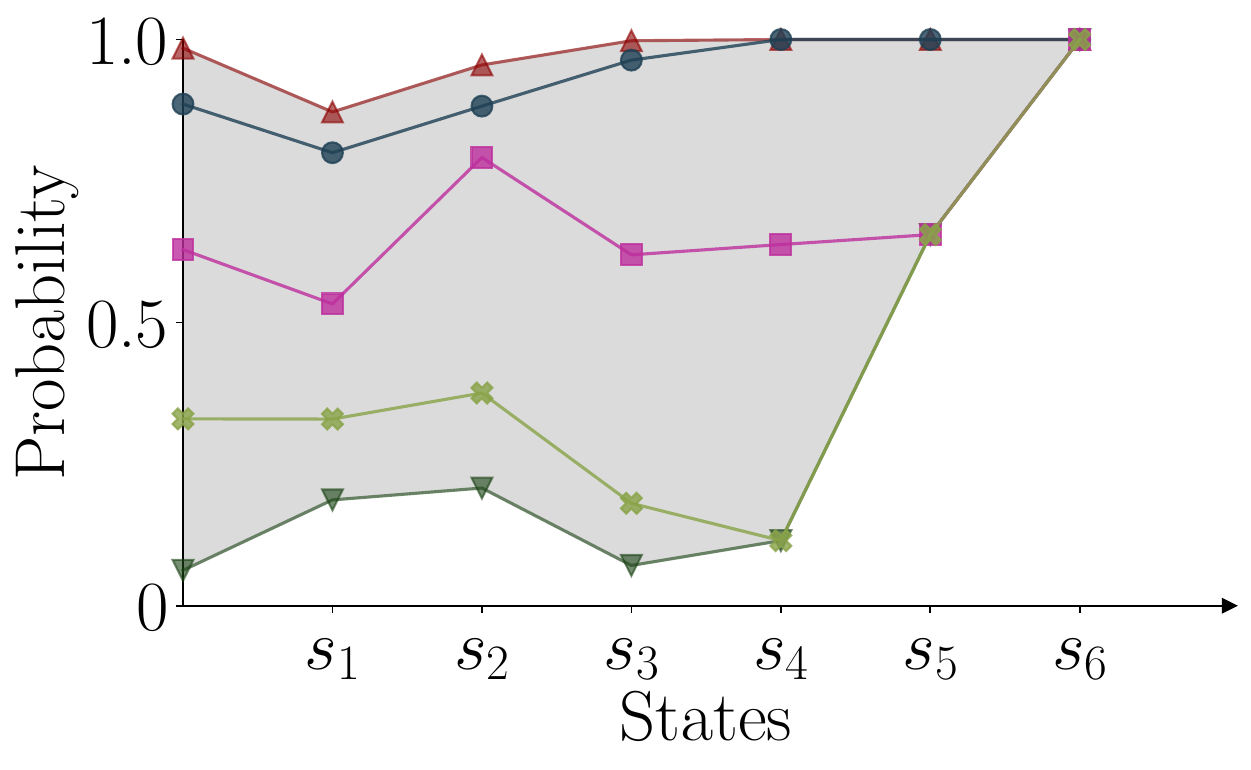}
     \end{subfigure}
    \caption[Comparison of the reference trace with a high-agency trace]
    {Comparison of $\tauref$ (left) with a high-agency counterfactual scenario (right).\vspace{1.4em}}
    \label{fig:counterfactualtraces}
\end{figure}

In Figure~\ref{fig:counterfactualtraces} we show the probabilities 
for reaching $S_\mathcal{I}$ for the policies $\Pol_1, \Pol_2, \Pol_3$
for two different traces: left for $\tauref$, right for a counterfactual trace $\tau \in T$ with a high agency.
The figure illustrates how even a single counterfactual trace can be a powerful tool for distinguishing between policies that seem impossible to differentiate with any confidence in the original trace $\tauref$.
In general, high agency values are achieved by minimizing the slippery range and factor, increasing the hesitancy of the pedestrian and eliminating the visibility block.

\begin{table}
\centering
\begin{tabular}{lrrr}
 & $\Pol_1$ & $\Pol_2$ & $\Pol_3 $  \\
\midrule
$|T|$               & 21    &  100    &  26  \\
$\rho_\Pol(T)$       & 0.86  &  0.53    &  0.14  \\
$\sigma_\Pol(T)$     & 0.52  &   0.64   &  0.50  \\
\end{tabular}
\caption{Final values of $\rho_\Pol(T)$ and $\sigma_\Pol(T)$ for different strategies.}
\label{tab:convergence}
\end{table}

A second insight is illustrated in Table~\ref{tab:convergence}.
In this table, for each agent $\Pol_1,\Pol_2,\Pol_3$,
we show the number of counterfactuals needed to generate enough evidence of intentional behaviour, 
together with the final values of the intention quotient and agency.
Both $\Pol_1$ and $\Pol_3$ are clear-cut,
but for $\Pol_2$ our algorithm reaches the limit of $|T| = 100$ 
without finding enough evidence.
In this case, the intention quotient of the agent seems to converge to a value of about $0.53$,
sitting in the middle of the lower and upper threshold.

Finally, in Figure~\ref{fig:scatterplot}, 
we show the values of 
intention quotient against the scope of agency 
for 100 counterfactual traces
sampled from the ranges in Table~\ref{tab:ranges}.
This serves as a visual representation of the same facts presented in Table~\ref{tab:convergence}, concluding that 
$\Pol_1$ (\inlinegraphics{images/p1.png}) is clearly showing evidence of intentionally hitting the pedestrian, 
$\Pol_2$ (\inlinegraphics{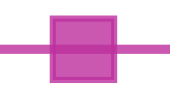}) is showing evidence of intentionally hitting the pedestrian in a lower magnitude, which would be considered enough or not depending on the thresholds,  
and $\Pol_3$ (\inlinegraphics{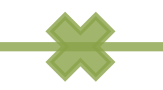}) is showing clear evidence of acting without the intention of hitting the pedestrian.

The results of these experiments are in agreement with the way we designed the agents.
This just serves to showcase our method. A more thorough validation would require a human-subject experiment, where real users give their subjective perception of the intention of different agents, and we measure how close their perception is to our notion. 
This is, however, out of the scope of this thesis.

\begin{figure}
    \centering
    \includegraphics[width=0.7\linewidth]{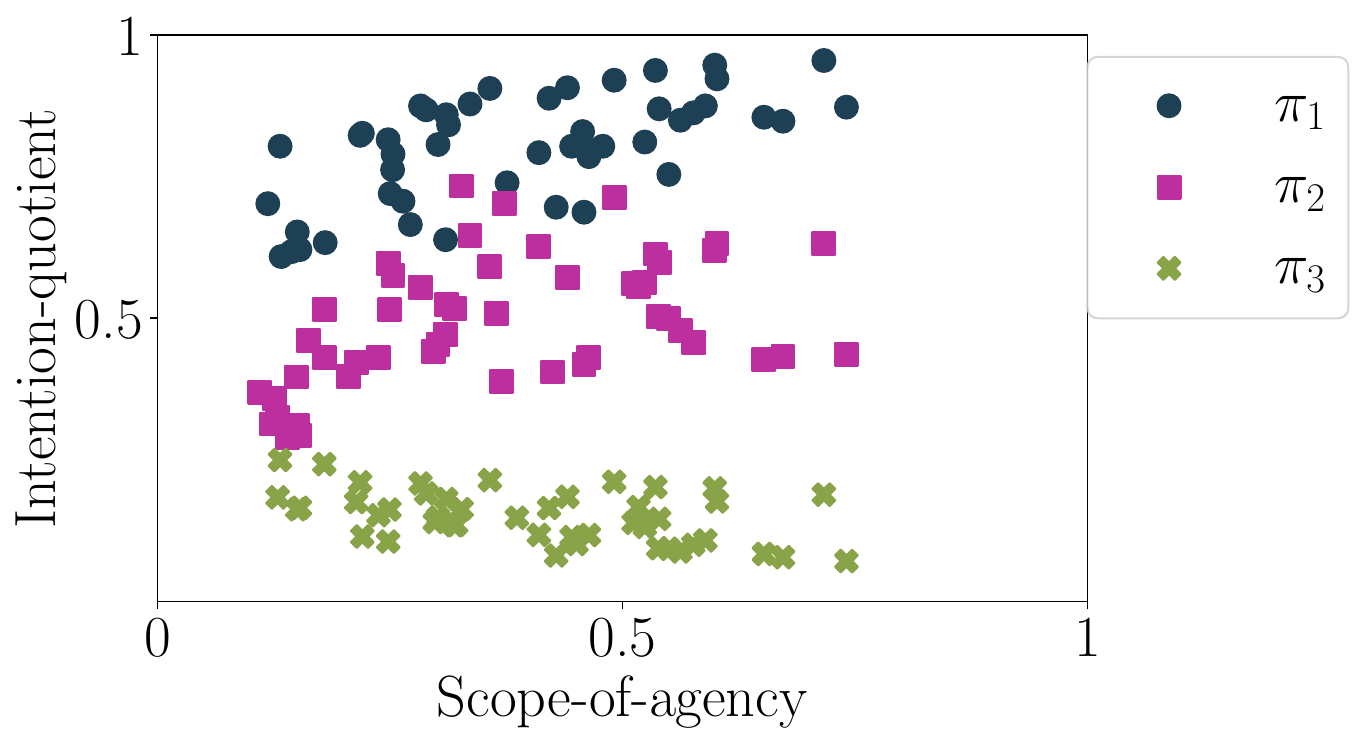}
    \caption[Scatter plot of intention quotient vs. agency]
    {Scatter plot of intention quotient vs agency for different agents.}
    \label{fig:scatterplot}
\end{figure}

\section{Discussion}
\label{sec:intention-discussion}

\subsection{Limitations}

We believe that our approach has great potential. However, there are aspects that need to be addressed to make the method applicable in challenging scenarios.

\paragraph*{Modelling the agent and the environment.}
Our method requires having a \emph{correct model of the environment} that captures everything relevant to analyze a scenario. In many cases, such models are not available. %
Recent work on digital twin technologies~\cite{jones2020characterising} and the existence of realistic simulators~\cite{dosovitskiy_carla_2017}
provides optimism for more and more accurate models of agents and their environment.
Our method also requires the \emph{agent be given as a policy in an MDP}. 
In case we are given a different implementation, e.g., as a neural network, we would need a sample-efficient method to translate the implementation into a policy in the MDP, at least for the relevant parts of the state space.

\paragraph*{Computational complexity.}
While current probabilistic model-checking engines achieve impressive
performance~\cite{budde2021correctness},
\emph{computing exact probabilities is costly} (polynomial complexity).
An alternative would be to use statistical model checking~\cite{Agha2018}, which is less demanding, albeit also less precise.
Statistical model checking has been successfully used to validate autonomous driving modules~\cite{barbier2019validation}.

\paragraph*{Knowledge of the agent's beliefs.}
An intrinsic limitation of studying policies in MDPs 
is the lack of knowledge of the agent's beliefs about the world.
Belief plays a fundamental role in the study of intentions:
an agent that intends $S_{\mathcal I}$ must act believing that their acts 
are a good strategy to reach $S_{\mathcal I}$~\cite{bratman1987intention}.
Belief is also central to the definitions of responsibility and blameworthiness in structural causal models~\cite{chockler2004responsibility,halpern2018towards}.
Partially for this reason, together with the uncertainties derived from a probabilistic setting, we can make claims about evidence of intentional behaviour instead of supporting stronger claims on the actual intention of the agent.

For example, an autonomous driving agent may have a faulty perception element that confuses the numbers $30$ and $80$ in speed limit signals. 
Thus, when the agent is in a low-speed area with a speed limit of $30~\unit[per-mode=symbol]{\kilo\metre\per\hour}$, it accelerates to $80~\unit[per-mode=symbol]{\kilo\metre\per\hour}$.
This agent may show evidence of intentionally overspeeding only in low-speed areas with our definition, while an inspection of the internal belief system may show that it is actually just trying to go as fast as the speed limit.
While this is an inherent limitation of our method, we still think our method is valuable for an accountability process.
From the perspective of other road users, overspeeding does not happen by accident or in some failure cases, but it is rather a systematic flaw that is functionally equivalent to intentional overspeeding. 
By functionally equivalent we mean that the behaviour of our faulty agent and the behaviour of an intentionally harmful agent are the same.

\paragraph*{Distinction between negligence and intentional harm.}
As we have described in the previous overspeeding example, our method may characterize negligent or faulty systems as intentional when they are functionally equivalent to intentionally harmful systems. 

In our framework, negligence and recklessness can be expressed as mid-range intention quotients towards a harmful set of states, especially in counterfactual traces. 
In our running example studied in Section~\ref{sec:intention-experiments}, an intentionally harmful driver would behave very differently when the pedestrian is far from crossing the street, waiting for them to be vulnerable, while a reckless or negligent driver would not care about the state of the pedestrian. This difference is then reflected in the values of the intention quotients.

\subsection{Avoidance Properties}

In our analysis, we have focused on reachability properties, answering questions of the type ``Does the agent show evidence of intending to reach a state that satisfies $\I$?''. 
A symmetric approach would be to consider avoidance properties, as defined in Equation~\ref{eq:avoidance_property}.

With this spirit, we could rephrase the definitions of agency and intention quotients from Definitions~\ref{def:agency}~and~\ref{def:intention quotient} as:
\begin{align*}
    \sigma'(s,\I) &=   \PP_{\max\mid\Pi} (\Avoid(s,S_\I)) - \PP_{\min\mid\Pi}(\Avoid(s,S_\I)) \quad \mbox{and} \\
    \rho'_{\Pol}(s,\I) & = 
\dfrac{\PP_{\Pol} (\Avoid(s,S_\I)) - \PP_{\min\mid\Pi} (\Avoid(s,S_\I))}
{\PP_{\max\mid\Pi} (\Avoid(s,S_\I)) - \PP_{\min\mid\Pi}(\Avoid(s,S_\I))}.
\end{align*}

With the next result, we will show that the reachability and avoidance versions are very much related to one another.

\begin{proposition}
    Let $\M = (\S, \A, \P)$ be an MDP, $\Pol\colon\S\to\A$ be a policy, $s\in\S$, and $\I$ a formula over $\AP$. 
    The following holds:
    \begin{itemize}
        \item $\sigma'(s, \I) = \sigma(s, \I)$, and
        \item $\rho'_\Pol(s, \I) = 1 - \rho_\Pol(s, \I)$.
    \end{itemize}
\end{proposition}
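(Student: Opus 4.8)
The plan is to exploit the defining relation between avoidance and reachability properties established in Equation~\eqref{eq:avoidance_property}, namely $\PP^\M_\pi(\Avoid_{\leq k}(s,T)) = 1 - \PP^\M_\pi(\Reach_{\leq k}(s,T))$, together with the duality of the min/max operators over the policy set recorded just after it. The key observation is that complementation by $1$ is an orientation-reversing bijection on $[0,1]$: it preserves differences up to sign and it swaps minimization and maximization.

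First I would prove the agency equality. Writing $S_\I = \Val(\I)$, I apply the relation $\PP_{\max\mid\Pi}(\Avoid(s,S_\I)) = 1 - \PP_{\min\mid\Pi}(\Reach(s,S_\I))$ and $\PP_{\min\mid\Pi}(\Avoid(s,S_\I)) = 1 - \PP_{\max\mid\Pi}(\Reach(s,S_\I))$, which are exactly the unbounded ($k=\infty$) instances of the identities stated before the "Bounded and unbounded properties" paragraph. Substituting into the definition of $\sigma'$,
\begin{align*}
    \sigma'(s,\I) &= \PP_{\max\mid\Pi}(\Avoid(s,S_\I)) - \PP_{\min\mid\Pi}(\Avoid(s,S_\I)) \\
    &= \bigl(1 - \PP_{\min\mid\Pi}(\Reach(s,S_\I))\bigr) - \bigl(1 - \PP_{\max\mid\Pi}(\Reach(s,S_\I))\bigr) \\
    &= \PP_{\max\mid\Pi}(\Reach(s,S_\I)) - \PP_{\min\mid\Pi}(\Reach(s,S_\I)) = \sigma(s,\I),
\end{align*}
where the $1$'s cancel. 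This is a routine one-line substitution.

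Next I would handle the intention quotient. The denominator of $\rho'_\Pol$ is precisely $\sigma'(s,\I)$, which by the first part equals $\sigma(s,\I)$, the denominator of $\rho_\Pol$. For the numerator I use $\PP_\Pol(\Avoid(s,S_\I)) = 1 - \PP_\Pol(\Reach(s,S_\I))$ (the single-policy case) and again $\PP_{\min\mid\Pi}(\Avoid(s,S_\I)) = 1 - \PP_{\max\mid\Pi}(\Reach(s,S_\I))$, so the numerator becomes $\PP_{\max\mid\Pi}(\Reach(s,S_\I)) - \PP_\Pol(\Reach(s,S_\I))$. Dividing by $\sigma(s,\I)$ and comparing with the definition of $\rho_\Pol$, this equals $1 - \rho_\Pol(s,\I)$ after factoring the shared denominator. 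Concretely, $\rho'_\Pol + \rho_\Pol = \frac{(\PP_{\max\mid\Pi} - \PP_\Pol) + (\PP_\Pol - \PP_{\min\mid\Pi})}{\PP_{\max\mid\Pi} - \PP_{\min\mid\Pi}} = 1$, all probabilities being reachability of $S_\I$ from $s$.

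The only genuine obstacle is the degenerate case where $\sigma(s,\I) = 0$, i.e.\ $\PP_{\max\mid\Pi}(\Reach(s,S_\I)) = \PP_{\min\mid\Pi}(\Reach(s,S_\I))$, for which both $\rho_\Pol$ and $\rho'_\Pol$ involve division by zero and are undefined; I would state the second identity under the standing assumption $\sigma(s,\I) \neq 0$, as is implicit wherever intention quotient is used. I expect no further difficulty, since everything reduces to the complementation identities already proved in the preliminaries; the result is essentially a symmetry statement and its proof is a short algebraic manipulation rather than a substantive argument.
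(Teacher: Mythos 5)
Your proof is correct and follows essentially the same route as the paper's: both substitute the complementation identities $\PP_{\max\mid\Pi}(\Avoid) = 1 - \PP_{\min\mid\Pi}(\Reach)$ and $\PP_{\min\mid\Pi}(\Avoid) = 1 - \PP_{\max\mid\Pi}(\Reach)$ into the definitions and reduce both sides of the second identity to $\bigl(\PP_{\max\mid\Pi}(\Reach(s,S_\I)) - \PP_{\Pol}(\Reach(s,S_\I))\bigr)/\sigma(s,\I)$. Your explicit caveat about the degenerate case $\sigma(s,\I)=0$ is a small point the paper leaves implicit, but otherwise the arguments coincide.
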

\begin{proof}
    By the definition of avoidance properties, we know that 
    \begin{align*}
        \PP_{\max\mid\Pi}(\Avoid(s,S_\I)) & = 1 - \PP_{\min\mid\Pi}(\Reach(s, S_\I)) \quad \mbox{and}\\
        \PP_{\min\mid\Pi}(\Avoid(s,S_\I)) & = 1 - \PP_{\max\mid\Pi}(\Reach(s, S_\I)).
    \end{align*}
    The agency result follows directly.

    Similarly, for the intention quotient, we have
    \begin{align}
        \rho'_\Pol(s,\I) & = 
        \dfrac{\PP_{\Pol} (\Avoid(s,S_\I)) - \PP_{\min\mid\Pi} (\Avoid(s,S_\I))}
        {\sigma(s,\I)} \nonumber \\
       & = \dfrac{1-\PP_{\Pol} (\Reach(s,S_\I)) - (1- \PP_{\max\mid\Pi} (\Reach(s,S_\I)))}
        {\sigma(s,\I)} \nonumber \\
        & = \dfrac{\PP_{\max\mid\Pi} (\Reach(s,S_\I)) - \PP_{\Pol} (\Reach(s,S_\I))}
        {\sigma(s,\I)}. \label{eq:notspec1}
    \end{align}
    On the other hand
    \begin{align}
        1-\rho_\Pol(s,\I) = & 
        \Big[\PP_{\max} (\Reach(s,S_\I)) - \PP_{\min\mid\Pi} (\Reach(s,S_\I)) - \nonumber \\ & \big( \PP_{\Pol} (\Reach(s,S_\I)) - \PP_{\min\mid\Pi} (\Reach(s,S_\I))  \big)\Big]/\sigma(s,\I) \nonumber \\
        = & \dfrac{\PP_{\max\mid\Pi} (\Reach(s,S_\I)) - \PP_{\Pol} (\Reach(s,S_\I))}
        {\sigma(s,\I)}. \label{eq:notspec2}
    \end{align}
    The proof is concluded by observing that the expressions in Equations~\ref{eq:notspec1} and~\ref{eq:notspec2} are the same.
\end{proof}

\subsection{Generalized Policies}
\label{sec:generalized_policies}
We briefly discuss how to treat policies with memory and non-determinism.
Our definitions naturally extend to non-deterministic policies with memory, 
although it is not evident whether the probabilities required to measure intention quotients (Definition~\ref{def:intention quotient}) are easy to compute.

Computing extreme probabilities, i.e., $\PP_{\max}$ and $\PP_{\min}$, is equally hard for general policies, since the maximum and the minimum can be achieved with memoryless deterministic policies.
If the policy has a finite amount $\mu$ of memory, $\PP_{\Pol}(\Reach(s, S_\I))$ can be computed using probabilistic model checking,
with a cost of $\mu$ times that of the memoryless case~\cite{baier2008principles}.
In case the non-determinism is unknown to us, 
to compute $\mathcal \PP_{\Pol}(\Reach(s,S_\I))$
we need to sample the decisions of the agent often enough to get an accurate approximation of its
decision-making probabilities,
making it more costly,
although recent heuristics for determinization may help~\cite{dtcontrol2020}.

\subsection{Single-Agent Setting}
In our framework, all relevant parts of the environment are modeled by an MDP, 
and all the agency in the model is attributed to the agent, 
i.e., the only actor choosing actions in the MDP is the agent.
We argue that this decision is reasonable to study the behaviour of an individual agent:
from the perspective of an agent, 
it makes no difference whether the decisions of other actors are governed by a 
sophisticated policy or by random events in the environment, 
as long as the MDP model contains accurate transition probabilities.
The emergence of intrinsically multi-agent phenomena, 
like shared intentions in cooperative settings, would require a multi-agent extension of our framework and is out of the scope of this thesis.
In particular, we do not explore how to assign moral responsibility to large groups of agents (the so-called ``problem of many hands''~\cite{thompson1980moral,van2015problem}).
Another problem we do not explore is the existence of 
responsibility voids~\cite{braham2011responsibility}, i.e.,
situations in which a group of agents should be held accountable for an outcome,
while at the same time, no individual agent intended that outcome.

\subsection{Related Work}

\paragraph*{Intention in artificial intelligence.}
The concept of intention is a contested term in artificial intelligence.
Since the early work from Bratman~\cite{bratman1987intention},
it has been used in the design of rational agents~\cite{wooldridge2003reasoning}. 
A consensus on a formal definition remains, however, an open problem.
In their seminal book on multiagent systems~\cite{shoham2008multiagent},
Shoham and Leyton-Brown call the attempt on a formal definition of intention \emph{the road to hell}.
We comment on some relevant concepts of intention in artificial intelligence and how they relate to our quantitative measure.

The work in~\cite{rao1995bdi,rao1991modeling} is the main conceptualization of agents with the belief-desire-intention models, and BDI models have also been used to model agency~\cite{georgeff1998belief}.
A good survey of the BDI literature can be found in~\cite{wooldridge2003reasoning},
and a more recent one in~\cite{de2020bdi}.
On a more specific note,~\cite{simari2011markov} builds an analogy between optimal BDI-based and MDP-based agents, that serves us as our basis for the definition of intention in MPD agents under perfect information.

Cohen and Levesque's work~\cite{cohen1990intention} is foundational to the concept of commitment as part of intention. In simple terms, they define intention through the notion of persistent goals. A persistent goal is a goal to which an agent remains committed until it believes either the goal is unattainable or it has been achieved. While we do not adopt their formalism, our concept of commitment aligns with this high-level idea: an agent demonstrates commitment to a goal if its intention quotient remains persistently high from a certain point onward, and only decreases significantly when the agent believes the goal has been accomplished or is no longer feasible. Since our approach is quantitative, we translate the agent's beliefs about goal achievement or feasibility into quantitative measures, using maximum and minimum probabilities to capture these judgments.
The formalism of~\cite{cohen1990intention} has been criticized 
for being too convoluted, and more modern approaches include~\cite{singh1992critical,wobcke1995plans,van2003towards,van2007towards,herzig2004c,herzig2017bdi,vanZee2020}.
However, these modern approaches focus on providing a more usable formalism and do not challenge the core idea of persistent goals.

More recent contributions include~\cite{MotamedADD023} on a formalization of a logic for intention in probabilistic models,~\cite{ZhangZJPT23} on recognizing intentions when studying multiple agents, and~\cite{Ward2024reasons} on modelling intentions as instrumental goals.

\paragraph*{Intention in philosophy.}
Characterizing and understanding the concept of intention in rational agents, both humans and non-humans, is one of the fundamental problems in the philosophy of action~\cite{Paul2020}.
Most influential is the work of G.E.M. Anscombe~\cite{anscombe2000intention}, who poses the problem of intention presenting itself in three forms: (i) intention for the future, as I intend to finish this thesis by the end of the year; (ii) intention with which someone acts, as in I am typing these words in order to have my thesis finished; and (iii) intentional action, as I am working on my thesis intentionally. Since these three forms are distinct, but we use the same concept for them, a theory of intention has to be such that it reconciles them.
Much effort has been dedicated to the building of theories that explain the unity of these facets, see~\cite{sep-intention} for a summary of the main theories.
While we get most of our inspiration in the planning theory of intention~\cite{bratman1987intention,bratman1999}, it is important to note that much of the debate circles around beliefs and states of mind, which we do not model. Therefore, from a functional perspective, our concept of intention quotient is consistent with other theories~\cite{velleman2007good,mele1992springs,davidson1963actions}, since typical objections such as intending something believed to be impossible, intending something while not doing it, or intending A while believing that B is better; do not affect the functional effect of intending something.
The concept of agency has also been extensively studied in this context~\cite{list2021group,shapiro2014massively,braham2011responsibility,georgeff1998belief}, although most of the problematization concentrates on the relation between individual and collective agency.

There is an ongoing debate in the philosophy of mind, between those
that consider that an agent’s reasoning is sufficient to explain
their actions~\cite{Quine1969}, and those who maintain that extrinsic information must be imported through a “Principle of Charity”~\cite{davidson1963actions}. 
By building a model of the agent’s knowledge (the MDP) to inquire about their behaviour,
we are assuming the latter position. Recent work attempts to
answer similar questions from the former~\cite{cav24,soidcslaw}.

\paragraph*{Responsibility and accountability.}
The concepts of intention and agency are also fundamental as they relate to concepts in moral responsibility~\cite {Braham2012,Scanlon2010}.
The concept of agency is a necessary element in assigning responsibility, leading to issues when the agency is diluted among many individuals~\cite{shapiro2014massively,braham2011responsibility}.
Intention and agency are also very important in the context of accountability processes; both in criminal~\cite{moore2003must,moore2010actandcrime,Paul2014} and civil cases~\cite{grossman2006uncertainty,galligan1991strict}.

\paragraph*{Causality and blame attribution.}
A basic element for a complete accountability process is the study of \emph{causality}~\cite{halpern2005causes1,halpern2005causes2}, which is also a necessary condition for legal responsibility~\cite{sep-causation-law,moore2019}.
The foundational work of~\cite{chockler2004responsibility}
introduced a quantitative notion of causality, 
by studying degrees of responsibility and blame. 
Responsibility and blame allocation have been extensively developed in the context of non-probabilistic structures
(see,  e.g.,~\cite{aleksandrowicz2017computational} for the characterization of complexity or \cite{yazdanpanah2016distant} for a multi-agent framework).
More recent and more closely related to our approach is the work of~\cite{baier2021responsibility,baier2021game}, 
studying responsibility allocation and blame attribution in Markovian models.
The study of harm from a causality perspective is also 
gaining attention recently, 
with
\cite{beckers2022causal,BeckersCH23} studying harm from an actual causality perspective, 
and~\cite{richens2022counterfactual} studying harm from a probabilistic perspective, heavily relying on counterfactuals. 
Counterfactual analysis~\cite{lewis2013counterfactuals} is a key concept in causality~\cite{pearl2009causality,lagnado2013causal},
used in an analogous way to our generation of counterfactual scenarios.
We go one step further by relating the implementation of the agent to the best and worst implementation for reaching an intended event. 

Another recent approach to blame attribution is~\cite{triantafyllou2021blame}, 
which studies multi-agent Markov decision processes from a game-theoretic perspective, and~\cite{datta2015program}, which builds on actual causes as a theory for accountability.

\paragraph*{Policy-discovery methods.}
Since the popularization of reinforcement learning,
there exist several methods for obtaining representations of a black-box agent,
by studying traces of such agents. 
In inverse reinforcement learning (IRL)~\cite{NgR00,ARORA2021103500},
the agent is assumed to be maximizing an unknown reward function, and the objective is to find the reward function that best explains the agent's performance over a set of traces~\cite{BighashdelJD23a,bighashdel2021deep}.
A similar approach is imitation learning, where an agent has to learn to perform a task from successful demonstrations. The demonstrations can either be provided by a human, by an expert autonomous agent, or be the result of filtering the best traces from random execution~\cite{hussein2017imitation}.
These methods could potentially be used as a 
pre-processing step to apply our framework to black box agents.
In any case, the obtained representations must be accurate enough before using them for any accountability process.

There is also literature on RL methods that hide their true goals or intentions, generally known as deceptive RL methods~\cite{masters2017deceptive,LewisM23},
so IRL methods could be vulnerable to deceptive RL agents.

\paragraph*{Explainability.}
Explainability in machine learning has gained much traction in recent years~\cite{doshi2017towards,lundberg2017unified,du2019techniques,molnar2020interpretable,baier2021verification} as a useful tool for both development and accountability.
One of the most influential works in \emph{explainability} of AI is~\cite{miller2019explanation},
which studies how explainability should rely on 
concepts from social sciences.
More recently
\cite{winikoff2021bad} 
uses the built-in notions of desire, beliefs, and intentions to 
study
explainability of BDI models, relying on concepts from the sociology literature.
While the main paradigm in explainable reinforcement learning is applying techniques from explainable machine learning~\cite{puiutta2020explainable},
our analysis of intentional behaviour
can be used as a method to 
aid the interpretability of agents operating in MDPs, using concepts from the philosophy of action~\cite{bratman1987intention}.


\chapter{Conclusion}
\label{chap:conclusion}
\ifthenelse{\boolean{includequotes}}{
\begin{quotation}
    \textit{S'ha acabat el bròquil.}
    \footnote{The broccoli is over.}
    \hfill
    --- Catalan popular saying.
\end{quotation}
}{}

\section{Future Work}

There are many avenues for future endeavours that are ripe for exploring. 

\subsection{Shields for Safety.}

Shielding in the deterministic setting has been recently extended for specifications in the safety fragment of LTL modulo theories~\cite{rodriguez2025shieldsynthesisltlmodulo} and with abstracted MDPs~\cite{hamel2025prob}, which offer the potential to study the delayed setting in new shielding use-cases.

Another natural extension is to develop shields for models with continuous time and states, using tools like control barrier functions~\cite{DBLP:conf/eucc/AmesCENST19}.

Shielding is mainly thought of as a method that is agnostic to the controller. However, learning performance and safety cooperatively is an approach that has had some recent success~\cite{chatterjee2023learner}, and it would be enlightening to explore how the shield can improve the training process of the agent, or how the agent can inform the shield on more efficient interventions.

From the user perspective, the shield is a sort of black box that decides on the safety of a given action to follow a certain specification. However, there is no more information to the user on why a given action may be unsafe. 
Self-explainable shields could include a language model layer that would explain a concrete decision in terms of potential transitions by the environment, or could abstract shields into more succinct representations like decision trees, maybe trading minimality of intervention for a more understandable shield.

\subsection{Fairness in Bounded Horizons.}

Similarly to shields for safety, shields for fairness are considered agnostic to the agent, and we want to explore how such shields can be used to improve the learning process, effectively turning our post-processing fairness intervention into an in-processing one.

In this thesis we leave open the question of whether optimal $T$-periodic shields exist and can be described with finite resources. 
We believe they do exist, but cannot be described with finite resources.
However, it may be possible to still obtain $T$-periodic shields sacrificing some of the cost-optimality with hard fairness guarantees. Our closest solution is that of dynamic shields, but they are still limited in the sense that they cannot guarantee fairness for some traces.

As we have described them now, fairness shields operate in windows of $T$ decisions. While this is natural in some use cases, it is unnatural in others, and we want to explore ways to eliminate this window-like constraint in future work.

Finally, our $T$-periodic shields guarantee fairness in the sense that the bias is smaller than a certain threshold. 
Typically, fairness properties are defined as the bias tending to zero as the sequence gets longer.
Therefore, there are traces that satisfy fairness in the periodic sense but not in the more classical long-run average sense. Understanding these traces and modifying our shielding methods to prevent them would go a long way toward unifying the concepts of fairness for bounded, periodic, and unbounded horizons.

\subsection{Intention Analysis}

In future work, we want to extend our current analysis by considering a multitude of possibly conflicting intentions of the agent, as has been done with other intention approaches~\cite{ZhangZJPT23}.

Another interesting line of work is to extend the study of intentional behaviour to multi-agent systems, in which cooperative or competitive intentions may arise, and study the emergence of responsibility voids~\cite{braham2011responsibility}.

We also want to study long executions, where the agent has time for reconsideration, and where it would be very helpful to use the notion of commitment that is so central in many theories of intention~\cite{cohen1990intention, vanZee2020}.

Furthermore, we want to transcend the simple toy example shown in this thesis and implement our framework to study reinforcement learning agents in challenging application areas.

\section{Concluding Remarks}

The rapid advancement of AI technologies has brought both immense opportunities and significant challenges. This thesis has explored key issues in ensuring AI systems operate safely, fairly, and transparently. By focusing on formal methods, verification techniques, and reinforcement learning safety mechanisms, we have contributed to the development of more robust AI systems that align with ethical and legal standards.

One of the central themes of this thesis has been shielding mechanisms, which provide runtime guarantees to AI systems by enforcing constraints on their behaviour. Our work on deterministic shielding in the presence of delayed observations demonstrates how real-world uncertainties can be systematically addressed to ensure safety. Similarly, our contributions to probabilistic shielding illustrate the potential of balancing safety guarantees with the need for flexible and efficient AI decision-making, particularly in applications such as autonomous valet parking.

Beyond safety, this thesis has examined fairness in AI decision-making, particularly in sequential settings. We introduced fairness shields as a mechanism for enforcing group fairness constraints over finite and periodic horizons. By formulating fairness as an optimization problem with hard fairness constraints and soft intervention costs, we developed shields that can correct biased decision-making processes while minimizing unnecessary alterations. 

Transparency and accountability remain crucial for AI systems, particularly those operating in high-stakes environments. Our proposed framework for measuring intentional behaviour in reinforcement learning agents provides a novel approach to evaluating AI decision-making processes. By quantifying agency and intention quotient, we offer a methodology that aids in both explainability and accountability, enabling better assessments of AI responsibility in cases of failure or harm.

Looking ahead, the intersection of neurosymbolic AI, reinforcement learning safety, and algorithmic fairness presents exciting opportunities to further advance the field. Additionally, as regulatory landscapes evolve, the need for robust and interpretable AI systems will only grow, reinforcing the importance of the work presented in this thesis.

In conclusion, responsible deployment of AI systems is a multifaceted challenge that requires a combination of theoretical insights and practical implementations. 
By leveraging formal methods, we take a step towards AI systems that not only perform effectively but also uphold critical societal values. 
This thesis contributes to this broader goal, laying the groundwork for future advancements in trustworthy AI.


\appendix
\chapter*{List of Publications}
\markboth{LIST OF PUBLICATIONS}{LIST OF PUBLICATIONS} 
\addcontentsline{toc}{chapter}{List of Publications} 
\section*{Publications the thesis is based on}

\textbf{\cite{cano2023analyzing}} \fullcite{cano2023analyzing}.

\textbf{\cite{Cano2023shielding}} \fullcite{Cano2023shielding}.

\textbf{\cite{aaai25}} \fullcite{aaai25}.

\section*{Other peer-reviewed publications}

\cite{IJCAI22} \fullcite{IJCAI22}.

\cite{aisola23formal} \fullcite{aisola23formal}.

\cite{aisola23foceta} \fullcite{aisola23foceta}.

\cite{soidcslaw} \fullcite{soidcslaw}.

\cite{cav24} \fullcite{cav24}.

\cite{fscd24} \fullcite{fscd24}.

\section*{Preprints and technical reports}

\cite{nanocars} \fullcite{nanocars}.

\cite{dagstuhl_accountable} \fullcite{dagstuhl_accountable}.

\cite{dagstuhl_safety} \fullcite{dagstuhl_safety}.


\defbibnote{myprenote}{
\ifthenelse{\boolean{includequotes}}{
\begin{quotation}
    \textit{If I have seen further it is by standing on the shoulders of Giants.}
    \textcolor{white}{a}\hfill
    --- Isaac Newton
\end{quotation}
\vspace{2em}
}{}
}
\printbibliography[heading=bibintoc,prenote=myprenote]

\cleardoublepage 

\chapter*{Nomenclature}
\markboth{NOMENCLATURE}{NOMENCLATURE} 
\addcontentsline{toc}{chapter}{Nomenclature} 
\ifthenelse{\boolean{includequotes}}{
\begin{quotation}
    \textit{But Taborlin knew the name of all things, and so all things were his to command.}
    \hfill
    --- Patrick Rothfuss, The Name of the Wind.
\end{quotation}
}{}

\vspace{2em}

\begin{table}[h!]
		\begin{tabular}{l p{10cm}}
			Symbol & Usage \\ \hline
			\multirow{2}{*}{$a$} & generic action, $a\in \Act$ \\ \cline{2-2}
            & generic number $a\in\RR$ \\ \hline 
			\multirow{3}{*}{$b$} & generic action when $a$ is already in use, $b\in \Act$, for example in Fig.~\ref{fig:delayedGame}\\ \cline{2-2}
            & generic number $b\in\RR$ \\ \cline{2-2}
            & generic element of $\BB$, $b\in \BB$ \\ \hline            
			$c$ & generic cost, $c\in\costset$, Chap.~\ref{chap:fairness}  \\ \dashline \cline{2-2}
            \multirow{2}{*}{$\cost(\tau;s)$} & cost incurred by a fairness shield on a trace $\tau$ up to a certain time $s\leq |\tau|$, Eq.~\ref{eq:cost-defining-eq}, Chap.~\ref{chap:fairness} \\ \hline 
            \multirow{3}{*}{$d$} & used to denote a generic probability distribution, Sec~\ref{sec:prelim-probability-theory} \\ \cline{2-2}
            & when computing maximally permissive strategies under delay, the value of the intermediate delay, Sec.~\ref{sec:prelim-games-under-delay}, Alg.~\ref{alg:delayed_strategies_extended} \\ \cline{2-2}
            & used to denote a generic distance function in MDPs, Sec.~\ref{sec:prelim-MDPs}, Chap.~\ref{chap:intention} \\
            \hline 
            \multirow{2}{*}{$f$} & used to denote a generic function $f\colon X\to Y$ \\ \cline{2-2} 
            & in classification problems, used to denote an ML-based classifier, Chap.~\ref{chap:fairness} \\ \hline 
            \multirow{2}{*}{$g$} & used to denote a generic group in group fairness, typically $g\in\{a,b\}$, Chap.~\ref{chap:fairness} \\ \hline 
            $i, j$ & used as generic counters \\ \hline
            \multirow{2}{*}{$k$} & used to indicate the length of a trace in reachability and avoidance properties in MDPs, Eq.~\ref{eq:def-reachability-property} \\ \cline{2-2}
              & used as a generic counter \\ \hline 
              \multirow{2}{*}{$l$} & lower bound on welfare for bounded welfare shields, Sec.~\ref{sec:approach-staticBW}, Chap.~\ref{chap:fairness}\\ \hline
              \multirow{2}{*}{$m$} &  when computing maximally permissive strategies under delay, the value of the intermediate memory, $m=\min(d,\mu)$, Alg.~\ref{alg:delayed_strategies_extended}
            \end{tabular}
	\caption*{Notation index, lowercase latin alphabet, part 1.}
\end{table}

\begin{table}
		\begin{tabular}{l p{10cm}}
			Symbol & Usage \\ \midrule       
            $n$ & used in general to denote lengths of traces or sequences  \\ \dashline\cline{2-2}
            \multirow{2}{*}{$n_A$} & in group fairness measures, number of candidates in a tracce of group $A$, Chap.~\ref{chap:fairness} \\ \dashline\cline{2-2}
            \multirow{2}{*}{$n_B$} & in group fairness measures, number of candidates in a tracce of group $B$, Chap.~\ref{chap:fairness} \\ \dashline\cline{2-2}
            \multirow{2}{*}{$n^1_A$} & in group fairness measures, number of accepted candidates in a tracce of group $A$, Chap.~\ref{chap:fairness} \\ \dashline\cline{2-2}
            \multirow{2}{*}{$n^1_B$} & in group fairness measures, number of accepted candidates in a tracce of group $B$, Chap.~\ref{chap:fairness} \\
            \hline 
            \multirow{2}{*}{$o$} & used to denote a generic observation in the reactive decision making framework, $o\in\Obs$, Chap.~\ref{chap:reactive_decision_making} \\ \hline 
            \multirow{2}{*}{$p$} & in fairness shield, the number of counters required by the statistic to compute the fairness property, Sec.~\ref{sec:fairness-enf-minimal-cost}, Thm.~\ref{thm:bounded-horizon shield synthesis-bis}, Chap.~\ref{chap:fairness} \\ \hline
            \multirow{2}{*}{$r$} & a generic radius of a ball, Sec.~\ref{sec:prelim-MDPs} \\ \cline{2-2}
            & recommendation of the ML-based classifier, Chap.~\ref{chap:fairness} \\ \hline 
            \multirow{2}{*}{$s$} & generic state of a state set, $s\in\S$ \\ \cline{2-2}
            & used to denote a point or time in the trace in Eq.~\ref{eq:cost-defining-eq} \\
            \dashline \cline{2-2}
            \multirow{2}{*}{$s_0$} & initial state of a safety game, Sec.~\ref{sec:prelim-TwoPlayerGames},~\ref{sec:rdm-two-player-games}, Chap.~\ref{chap:delayed_shields} \\ \cline{2-2}
            & when it is unique, initial state of an MDP, Sec.~\ref{sec:prelim-MDPs} \\ \hline
            $t$ & used as a generic time or length of a trace \\ \hline 
            \multirow{2}{*}{$u$} & upper bound on welfare for bounded welfare shields, Sec.~\ref{sec:approach-staticBW}, Chap.~\ref{chap:fairness}\\ \dashline\cline{2-2}
            \multirow{2}{*}{$u_{i,j}^k$} & $k$-th velocity datapoint for reference action $\alpha_i$ and reference velocity $v_j$, Eq.~\eqref{eq:thegammaeq}, Chap.~\ref{chap:foceta} \\ \hline 
            $v$ & used to indicate a generic velocity, Chaps.~\ref{chap:delayed_shields}, \ref{chap:foceta}, \ref{chap:intention} \\ \dashline\cline{2-2} 
            $v(\tau)$ & value function associated with trace $\tau$, Sec.~\ref{sec:synthesis}, Eq.~\ref{eq:v-def-eq} \\ \hline   
            $w$ & sometimes used to refer to a generic word of an alphabet, Sec.~\ref{sec:prelim-basic-notation} \\ \hline 
            \multirow{2}{*}{$x$} & generic element of a set $x\in X$ \\ \cline{2-2}
            & used to denote a generic input in fairness shields, Chap.~\ref{chap:fairness} \\ \hline 
            \multirow{3}{*}{$y$} & used to denote a generic action in the action register for safety games under delay, Sec.~\ref{sec:prelim-games-under-delay}, Chap.~\ref{chap:delayed_shields} \\ \cline{2-2}
            & in fairness shields indicates the final \texttt{accept/reject} decision of the shield, $y\in\Y$, Chap.~\ref{chap:fairness}
		\end{tabular}
	\caption*{Notation index, lowercase latin alphabet, part 2.}
\end{table}

\begin{table}
		\begin{tabular}{l p{10cm}}
			Symbol & Usage \\ \midrule
			\multirow{2}{*}{$A$} & in group fairness, abstract groups are typically names groups $A$ and $B$, used mostly in Chap.~\ref{chap:fairness} \\ \dashline\cline{2-2} 
            \multirow{2}{*}{$\Acc$} & winning condition of a two-player game, given by a set of accepting traces, Sec.~\ref{sec:prelim-TwoPlayerGames} \\ \dashline \cline{2-2}
            $\Ag$ & agent in the reactive decision making framework, Chap.~\ref{chap:reactive_decision_making} \\ \dashline\cline{2-2}
            $\AP$ & set of atomic propositions to label an MDP, Chap.~\ref{chap:intention} \\ \hline 
			\multirow{2}{*}{$B$} & in group fairness, abstract groups are typically names groups $A$ and $B$, used mostly in Chap.~\ref{chap:fairness} \\
            \dashline \cline{2-2}
            $B_r(x)$ & ball of radius $r$ centered at point $x$, Sec.~\ref{sec:prelim-MDPs} \\ \dashline\cline{2-2} 
			$\Beh$ & Set of behaviours of a strategy in a safety game, Eq.~\eqref{eq:safety-game-behaviour} \\ \hline
            $\Cyl$ & cylinder set construction, Sec.~\ref{sec:prelim-cylinder-set-construction} \\ \hline 
            $\mathtt{D}$ & ``down'' action in gridworlds, Chap.~\ref{chap:delayed_shields} \\ \hline 
            \multirow{2}{*}{$F$} & generic cummulative distribution function of a random variable, Sec.~\ref{sec:prelim-probability-theory} \\ \dashline\cline{2-2} 
            $F_k(s,\ol{\sigma})$ & $k$-forward multiset of states, Def.~\ref{def:forward-multiset-states}, Chap.~\ref{chap:delayed_shields} \\ \dashline\cline{2-2} 
            \multirow{2}{*}{$\Trfeas^t_{\theta, \Pol}$} & set of feasible traces of length $t$ sampling inputs from distribution $\theta$ and using shield $\Pol$, Sec.~\ref{sec:fairness-env-and-shielding}, Chap.~\ref{chap:fairness} \\ \hline 
            \multirow{3}{*}{$G$} & discounted return in RL, Sec.~\ref{sec:prelim-RL} \\ \cline{2-2}
            & random variable representing group membership of a candidate in fairness classification problems, Sec.~\ref{sec:prelim-classification-problems}, Chap.~\ref{chap:fairness} \\
            \hline
            $\mathtt{L}$ & ``left'' action in gridworlds, Chap.~\ref{chap:delayed_shields} \\ \hline
            \multirow{2}{*}{$N$} & number of samples used to build the transition probability function for the car model in Chap.~\ref{chap:foceta} \\ \dashline\cline{2-2}
            $\mathtt{N}$ & ``neutral'' or ``no operation'' action in gridworlds, Chap.~\ref{chap:delayed_shields} \\ \hline
            $P_{\mathrm{ag}}$ & set of positions of the ego car, Sec.~\ref{sec:shields-experiments} \\ \dashline\cline{2-2}
            \multirow{2}{*}{$P_{\mathrm{env}}$} & set of positions of the environment, either other car or pedestrian, Sec.~\ref{sec:shields-experiments} \\ \hline
            $\mathtt{R}$ & ``right'' action in gridworlds, Chap.~\ref{chap:delayed_shields} \\ \cline{2-2} \dashline
            \multirow{2}{*}{$R_{\mu,T}$} & when $\mu$ is a fairness statistic and $T$ is a time horizon, $R_{\mu,T}$ is the range of values that $\mu$ can take on traces of length up to $T$, Theorem~\ref{thm:bounded-horizon shield synthesis-bis}, Chap.~\ref{chap:fairness} \\ \hline
            \multirow{2}{*}{$S$} & set of states of a deterministic two-player game, Sec.~\ref{sec:prelim-TwoPlayerGames},~\ref{sec:rdm-two-player-games}, Chap.~\ref{chap:delayed_shields} \\ \dashline\cline{2-2}
            \multirow{2}{*}{$S_{ag}$} & set of states controlled by the agent of a deterministic two-player game, Sec.~\ref{sec:prelim-TwoPlayerGames},~\ref{sec:rdm-two-player-games}, Chap.~\ref{chap:delayed_shields} \\ \dashline\cline{2-2}
            \multirow{2}{*}{$S_{ag*}$} & set of states controlled by the agent of a deterministic two-player game, plus an extra void state $\eps$, used for defining strategies in delayed games, Sec.~\ref{sec:prelim-games-under-delay}, Chap.~\ref{chap:delayed_shields} \\ \dashline\cline{2-2}
            \multirow{2}{*}{$S_{env}$} & set of states controlled by the environment of a deterministic two-player game, Sec.~\ref{sec:prelim-TwoPlayerGames},~\ref{sec:rdm-two-player-games}, Chap.~\ref{chap:delayed_shields} \\ \dashline\cline{2-2}
            $\Supp(f)$ & support of a function $f$, Sec.~\ref{sec:prelim-basic-notation}
            		\end{tabular}
	\caption*{Notation index, uppercase latin alphabet, part 1.}
\end{table}

\begin{table}[h!]
		\begin{tabular}{l p{10cm}}
			Symbol & Usage \\ \midrule
            \multirow{2}{*}{$T$} & set of target states in reachability properties, Sec.~\ref{sec:prelim-reachability-properties} \\ \cline{2-2}
            & time horizon when computing fairness shields, Chap.~\ref{chap:fairness} \\ \hline
            $\mathtt{U}$ & ``up'' action in gridworlds, Chap.~\ref{chap:delayed_shields} \\ \hline 
            $V$ & \\ \hline 
            $V_{\mathrm{ag}}$ & set of velocities of the ego car, Sec.~\ref{sec:shields-experiments} \\ \dashline\cline{2-2}
            $V_{\mathrm{env}}$ & set of velocities of the environment's car, Sec.~\ref{sec:shields-experiments} \\ \dashline\cline{2-2}
            $\Val$ & valuation function in a labelled MDP, $\Val\colon \AP\to 2^\S$, Chap.~\ref{chap:intention} \\ \hline 
            $W$ & winning region of a safety game, Eq.~\eqref{eq:prelim-safety-games-winning-strategy}, Chaps.~\ref{chap:preliminaries}-\ref{chap:delayed_shields} \\ \dashline\cline{2-2}
            $\mathtt{WF}^{g}$ & welfare function of group $g\in\G$, Chap.~\ref{chap:fairness} \\ \hline
            \multirow{2}{*}{$X$} & used to denote a generic set \\ \cline{2-2}
             & used to denote a generic random variable
		\end{tabular}
	\caption*{Notation index, uppercase latin alphabet, part 2.}
\end{table}

\begin{table}[h!]
		\begin{tabular}{l p{10cm}}
			Symbol & Usage \\ \midrule
			$\A$ & Set of actions available to the agent in all formalisms \\ \dashline\cline{2-2}
			\multirow{2}{*}{$\A_{env}$} & Set of actions of the environment in safety games,  Sec.~\ref{sec:prelim-TwoPlayerGames}, Eq.~\eqref{eq:safety-games-env-out-degree}\\ \hline
			$\B$ & Borel $\sigma$-algebra, Sec.~\ref{sec:prelim-probability-theory} \\ \hline
			\multirow{2}{*}{$\D$} & set of probability distributions, given a set $X$, $\D(X)$ is the set of distributions over $X$, defined in Sec.~\ref{sec:prelim-probability-theory}, and used through all the thesis \\ \hline
			\multirow{3}{*}{$\F$} & generic $\sigma$-algebra, Sec.~\ref{sec:prelim-probability-theory}  \\ \cline{2-2}
            & set of safe states of a safety game, Sec.~\ref{sec:prelim-TwoPlayerGames}, Sec.~\ref{sec:shielding-in-safety-games}, Sec.~\ref{sec:shielding-in-safety-games-delayed}, Chap.~\ref{chap:delayed_shields} \\ \cline{2-2}
            & in classification problems, the input is factored as $\G\times \F$, where $\G$ is the space of the protected features, and $\F$ is that of the other features, Sec.~\ref{sec:prelim-classification-problems} \\ \hline
			\multirow{2}{*}{$\G$} & game graph of a two-player determinisitc game, Sec.~\ref{sec:prelim-TwoPlayerGames}, Chap.~\ref{chap:delayed_shields}  \\ \cline{2-2}
            & protected feature (a.k.a. group membership) in fairness for classification problems, Sec.~\ref{sec:prelim-classification-problems}, Chap.~\ref{chap:fairness} \\ \cline{2-2} \dashline
            \multirow{2}{*}{$\G_{\delta,\mu}$} & game graph, emphasizing that the plays are with delay $\delta$ and the strategies are allowed a memory $\mu$, Sec.~\ref{sec:prelim-TwoPlayerGames}, Chap.~\ref{chap:delayed_shields} 
		\end{tabular}
	\caption*{Notation index, \texttt{mathcal} latin alphabet, part 1.}
\end{table}

\begin{table}
		\begin{tabular}{l p{10cm}}
			Symbol & Usage \\ \midrule
			\multirow{2}{*}{$\I$} & generic Boolean formula over the set of atomic propositions associated with the MDP, that defines a potential \emph{intention} in Chap.~\ref{chap:intention}  \\ \dashline\cline{2-2} 
            $\I_{s}$ & auxiliary set of states used in Alg.~\ref{alg:delayed_strategies_extended} \\ \dashline\cline{2-2} 
            $\I_{s, y}$ & auxiliary set of states used in Alg.~\ref{alg:delayed_strategies_extended} \\           
            \hline 
			\multirow{2}{*}{$\J$} & set of reachable states from $s_0$, Algorithm~\ref{alg:initialmoves}  \\ \cline{2-2}
            & generic Boolean formula over the set of atomic propositions associated with the MDP, that defines a potential \emph{intention} in Chap.~\ref{chap:intention}
            \\ \hline
            
			\multirow{2}{*}{$\L$} & Loss function of a classification problem, Sec.~\ref{sec:prelim-classification-problems} \\ \cline{2-2}
            & set of correct traces, Chap.~\ref{chap:reactive_decision_making}, Def.~\ref{def:correctness-of-a-shield} \\ \cline{2-2} \dashline
            \multirow{2}{*}{$\L_{T,\lambda,k}$} & set of correct traces for probabilistic shields in MDPs, where $T$ is a subset of states to reach, $\lambda\in (0,1)$ is the safety threshold and $k$ is the step horizon, Sec.~\ref{sec:rdm-probabilistic-shielding-mdps} \\ \hline            
        \multirow{2}{*}{$\M$} & used throughout the thesis to indicate a Markovian model, either a Markov chain or a Markov decision process, depending on the context  \\ \cline{2-2} \dashline
        $\Mcar$ & MDP model of the car, Chap.~\ref{chap:foceta} \\ \cline{2-2} \dashline
        $\Mped$ & Markov chain model of the pedestrian, Chap.~\ref{chap:foceta} \\ \hline
        \multirow{2}{*}{$\mathcal{O}$} & ``Big O'' notation for stating complexity results \\ \cline{2-2}
        & observation space in reactive decision making framework, Chap.~\ref{chap:reactive_decision_making} \\ \hline         
        \multirow{2}{*}{$\P$} & transition probability function of a Markovian model, either an MDP or a Markov chain, depending on the context, defined in Sec.~\ref{sec:prelim-MDPs}, used throughout  \\ \hline
        $\R$ & reward function in an RL problem, Sec.~\ref{sec:prelim-RL} \\ \hline    
        \multirow{2}{*}{$\S$} &  set of states of a Markovian model, either an MDP or a Markov chain, depending on the context, defined in Sec.~\ref{sec:prelim-MDPs}, used throughout\\ \hline
        $\T$ & transition relation of a safety game, Sec.~\ref{sec:prelim-TwoPlayerGames}, Sec.~\ref{sec:rdm-two-player-games} \\ \dashline\cline{2-2} 
        \multirow{2}{*}{$\Trans$} & environment transition function in the reactive decision making framework, Chap.~\ref{chap:reactive_decision_making} \\ \hline 
        $\X$ & input space in fairness shields, Chap.~\ref{chap:fairness}  \\ \hline 
        $\Y$ & output (or decision) space in fairness shields, $|\Y|=2$, Chap.~\ref{chap:fairness} \\ \hline
        \multirow{2}{*}{$\Z$} & in classification problems with a protected feature, $\Z$ represents the set of non-protected features, Chap.~\ref{chap:fairness}
		\end{tabular}
	\caption*{Notation index, \texttt{mathcal} latin alphabet, part 2.}
\end{table}

\begin{table}
		\begin{tabular}{l p{10cm}}
			Symbol & Usage \\ \midrule
			\multirow{2}{*}{$\BB$} & Boolean domain, $\BB = \{\bot,\top\}$, sometimes equivalently $\BB = \{0,1\}$, Sec.~\ref{sec:prelim-basic-notation}, Chap.~\ref{chap:fairness} \\ \hline
			$\costset$ & Set of costs in Chapter~\ref{chap:fairness}. It is understood that $\costset$ is finite. \\ \hline
			$\EE$ & Expected value, defined in Sec.~\ref{sec:prelim-probability-theory}, used throughout Chap.~\ref{chap:fairness}  \\ \cline{2-2} \dashline
            \multirow{2}{*}{$\expe[\cost;\sgen,\sshield,t]$} &
            expected cost of a trace of length $t$ produced sampling inputs from $\sgen\in\D(\X)$ and using the shield $\pi$, Chap.~\ref{chap:fairness}, Eq.~\eqref{eq:expected-cost-time-t} \\ \cline{2-2} \dashline
            \multirow{2}{*}{$\expe[\cost\mid \tau;\sgen,\sshield,t]$} & 
            expected cost of a trace of length $t$ produced sampling inputs from $\sgen\in\D(\X)$, and using the shield $\pi$ with $\tau$ as a prefix, Chap.~\ref{chap:fairness}, Eq.~\eqref{eq:expected-cost-after-tau-time-t} \\
            \hline
            $\mathbb{J}$ & interference set of a shield, Def.~\ref{def:interferenceSet}, Chap.~\ref{chap:reactive_decision_making} \\ \hline
			$\NN$ & Set of natural numbers, $\NN = \{0,1,2,\dots\}$ \\ \hline
			\multirow{2}{*}{$\PP$} &  probability measure in a probability space, Sec.~\ref{sec:prelim-probability-theory} \\ \cline{2-2}
            & probability measure over sets of finite traces, Chap.~\ref{chap:fairness} \\ 
            \cdashline{1-1}[0.5pt/2pt] \cline{2-2}
            $\PP^{\M}$ & probability measure associated with the Markov chain $\M$, Sec.~\ref{sec:prelim-probability-theory} \\ \cdashline{1-1}[0.5pt/2pt] \cline{2-2}
            \multirow{2}{*}{$\PP^{\M}_\pi$/$\PP_\pi$} & probability measure associated with the MDP $\M$ and the policy $\pi$. Whenever $\M$ is clear from context, we may drop it from the notation, Sec.~\ref{sec:prelim-probability-theory}, Chap.~\ref{chap:foceta}, Chap.~\ref{chap:intention} \\ \cdashline{1-1}[0.5pt/2pt] \cline{2-2}
            \multirow{2}{*}{$\PP_{\max}^{\M}$} &  probability measure associated with the MDP $\M$ and the policy that maximizes a certain property, Chap.~\ref{chap:foceta}, Chap.~\ref{chap:intention} \\ \cdashline{1-1}[0.5pt/2pt] \cline{2-2}
            \multirow{2}{*}{$\PP_{\min}^{\M}$} &  probability measure associated with the MDP $\M$ and the policy that minimizes a certain property, Chap.~\ref{chap:foceta}, Chap.~\ref{chap:intention} \\ \cdashline{1-1}[0.5pt/2pt] \cline{2-2}
            \multirow{2}{*}{$\PP^\M_{\max\mid\Pi}$/$\PP_{\max\mid\Pi}$} & probability measure associated with the MDP $\M$ and the policy that maximizes a certain property among policies in the set $\Pi$, Chap.~\ref{chap:foceta}, Chap.~\ref{chap:intention} \\ \cdashline{1-1}[0.5pt/2pt] \cline{2-2} 
            \multirow{2}{*}{$\PP^\M_{\min\mid\Pi}$/$\PP_{\min\mid\Pi}$} & probability measure associated with the MDP $\M$ and the policy that minimizes a certain property among policies in the set $\Pi$, Chap.~\ref{chap:foceta}, Chap.~\ref{chap:intention} \\ \hline 
			$\RR$ & Set of real numbers.  \\ \cdashline{1-1}[0.5pt/2pt] \cline{2-2}
			$\RR_{\geq 0}$ & Set of non-negative real numbers.\\ \midrule
			$\ZZ$ & Set of integer numbers.
		\end{tabular}
	\caption{Notation index, \texttt{mathbb} latin alphabet.}
	\label{tab:notation-latin-mathbb}
\end{table}

\begin{table}
		\begin{tabular}{l p{10cm}}
			Symbol & Usage \\ \midrule
			\multirow{1}{*}{$\alpha$} & each of the individual actions in the available set of actions, $\Act = \{\alpha_1,\dots,\alpha_n\}$, Chap.~\ref{chap:foceta} \\ \hline
			\multirow{2}{*}{$\gamma$} & discount factor in RL, Sec.~\ref{sec:prelim-RL} \\ \cline{2-2}
            & experimental proportionality factor between $\Delta x$ and $u$, Equation~\ref{eq:thegammaeq}
            \\ \hline
                \multirow{2}{*}{$\delta$} & delay in safety games, Sec.~\ref{sec:prelim-games-under-delay}, Chap.~\ref{chap:delayed_shields} \\ \cline{2-2}
                & delay in reactive decision making, Sec.~\ref{sec:rdm-delayed-obs}, Sec.~\ref{sec:shielding-in-safety-games-delayed}\\ \cline{2-2} \dashline
                \multirow{2}{*}{$\delta_{\max}$} & cutoff value of the delay when computing controllability values, Chap.~\ref{chap:delayed_shields}
                \\ \dashline \cline{2-2}
                \multirow{2}{*}{$\delta_\rho^L$, $\delta_\rho^U$} & thresholds for intention quotient, used in the retrospective method for assessing intentional behaviour Sec.~\ref{sec:intention-retrospective}, Def.~\ref{def:intention-thresholds} \\ \cline{2-2} \dashline
                $\delta_\sigma$ & threshold for agency, 
                Sec.~\ref{sec:intention-retrospective}, Def.~\ref{def:intention-thresholds} \\ \cline{2-2}\dashline 
                \multirow{2}{*}{$\delta_B$, $\delta_I$} & thresholds on ``belief'' and intention quotient to define a notion of commitment, Def.~\ref{def:commitment} \\
                \hline
                \multirow{3}{*}{$\eps$} & representation of a general unobserved state in safety games under delay, Sec.~\ref{sec:prelim-games-under-delay} \\ 
                \cline{2-2}
                & used to indicate a generic small number, Ex.~\ref{ex:delayeds-shields-example} \\ \cline{2-2}
                & $\eps = (\eps_{k+1},\dots, \eps_m)$ indicates, for each integral variable, the range of variation to consider counterfactuals valid when generating counterfactuals on a factored MDP, Sec.~\ref{sec:intention-factored-mdp} \\
                \hline
                \multirow{2}{*}{$\eta$} & distance between two traces to consider a counterfactual valid, Sec.~\ref{sec:intention-counterfactual-distance} \\ \hline
                
                \multirow{2}{*}{$\theta$} & probability distribution of the input in classification problems,
                $\theta \in \D(\X)$,
                Sec.~\ref{sec:prelim-classification-problems}, Chap.~\ref{chap:fairness} \\ \hline
                $\iota$ & distribution of initial states of an MDP, Sec.~\ref{sec:prelim-MDPs} \\ \hline
                \multirow{2}{*}{$\kappa$} & threshold on the fairness metric, as part of the specification of fairness shields, Chap.~\ref{chap:fairness}, Eq.~\eqref{eq:bounded-horizon-def} \\ \hline
                \multirow{2}{*}{$\lambda$} & safety threshold in probabilistic shields, Sec.~\ref{sec:rdm-probabilistic-shielding-mdps}, Chap.~\ref{chap:foceta} \\ \cline{2-2}
                 & parameter that regulates fairness interventions in different in-processing fairness algorithms, Sec.~\ref{sec:experimental-setup} \\ \hline
                \multirow{3}{*}{$\mu$} & generic probability measure, Sec.~\ref{sec:prelim-probability-theory} \\ \cline{2-2}
                & memory in a strategy for a safety game with delay, Sec.~\ref{sec:prelim-games-under-delay}, Chap.~\ref{chap:delayed_shields}
                \\ \cline{2-2}
                & statistic that maps a trace to the relevant counters used to compute a fairness property, defined in Sec.~\ref{sec:fairness-enf-minimal-cost}, used throughout Chap.~\ref{chap:fairness} \\ \dashline \cline{2-2} 
                \multirow{2}{*}{$\mu_{\mathrm{pos}}$} & multiplier to convert positions between local-continuous and local-discrete coordinates, Chap.~\ref{chap:foceta} \\ \dashline\cline{2-2}
                \multirow{2}{*}{$\mu_{\mathrm{vel}}$} & multiplier to convert velocities between local-continuous and local-discrete coordinates, Chap.~\ref{chap:foceta} \\ \hline 
                $\nu$ & used as a counter in the proof of Thm.~\ref{thm:shield-delayed-safety-game} \\ \hline 
                \multirow{2}{*}{$\xi$} & a generic strategy in a safety game, Sec.~\ref{sec:prelim-TwoPlayerGames}, Chap.~\ref{chap:reactive_decision_making} \\
                \cline{2-2}
                & sometimes, when clear from context, especially in Chap.~\ref{chap:delayed_shields}, $\xi$ denotes the maximally permissive winning strategy \\ \cline{2-2} \dashline
                $\xi_{\mathtt{max.perm.}}$ & maximally permissive winning strategy of a safety game, Eq.~\eqref{eq:max-perm-strat-safety-game} \\ \cline{2-2} \dashline 
                \multirow{2}{*}{$\xi_{\delta,\mu}$} & in Chap.~\ref{chap:delayed_shields}, to specify that the strategy works with delay $\delta$ and memory $\mu$ \\ \hline
                
			\multirow{2}{*}{$\pi$}
            & policy in an MDP, Sec.~\ref{sec:prelim-MDPs}, Sec.~\ref{sec:prelim-RL}, Chap.~\ref{chap:intention} 
            \\ \cline{2-2}
            & agent policy function, Chap.~\ref{chap:reactive_decision_making}. 
            Sometimes $\Pol$ is used to refer to the agent $\Ag = (\Obs,\Act,\Pol)$ following $\Pol$ 
		\end{tabular}
	\caption*{Notation index, lowercase greek alphabet, part 1.}
\end{table}

\begin{table}
		\begin{tabular}{l p{10cm}}
			Symbol & Usage \\ \midrule
            \multirow{2}{*}{$\sigma$} & elements of a word, Sec.~\ref{sec:prelim-basic-notation} \\ \cline{2-2}
            & generic action in a safety game, usually to denote actions of the action memory, Sec.~\ref{sec:prelim-TwoPlayerGames}, Sec.~\ref{sec:rdm-two-player-games}, Sec.~\ref{sec:shielding-in-safety-games-delayed}, Chap.~\ref{chap:delayed_shields} \\ \cline{2-2}
            & agency of a state or a set of states, Chap~\ref{chap:intention}, Def.~\ref{def:agency} \\ \cline{2-2} \dashline
            $\ol{\sigma}$ & action memory or register, $\ol{\sigma} = (\sigma_1,\dots,\sigma_\mu)$, Chap.~\ref{chap:delayed_shields} \\ \cline{2-2} \dashline
            \multirow{2}{*}{$\sigma_{\mathrm{ped}}$} & parameter of the model of the pedestrian in Chap.~\ref{chap:foceta}, indicating how volatile is their behaviour \\ \hline
            \multirow{2}{*}{$\tau$} & trace (a.k.a. path) in a safety game, Sec.~\ref{sec:prelim-TwoPlayerGames}, Sec.~\ref{sec:rdm-two-player-games}, Chap.~\ref{chap:delayed_shields}
            \\ \cline{2-2}
            & trace in reactive decision making, $\tau\in(\Obs\times\Act)^*$, Chap.~\ref{chap:reactive_decision_making} \\ \cline{2-2}
            & trace of a fairness shield, $\tau\in (\X\times\Y)^*$, Chap.~\ref{chap:fairness} \\ \cline{2-2}
            & trace of states of the MDP, Chap.~\ref{chap:intention} \\ \cline{2-2}\dashline 
            $\tau_A$ & action trace, $\tau_A\in\Act^*$, Chap.~\ref{chap:reactive_decision_making} \\ \cline{2-2}\dashline 
            $\tau_O$ & observation trace, $\tau_O\in\Obs^*$, Chap.~\ref{chap:reactive_decision_making} \\ \cline{2-2} \dashline 
            \multirow{2}{*}{$\tauref$} & reference trace in the retrospective method to analyze intention, Sec.~\ref{sec:intention-retrospective}, Sec.~\ref{sec:intention-experiments} \\ \hline    
            \multirow{3}{*}{$\varphi$} & fitness function, Chap.~\ref{chap:delayed_shields} \\ \cline{2-2}
            & fairness metric, Chap.~\ref{chap:fairness} 
            \\ \cline{2-2}
            & a generic reachability property, Chap.~\ref{chap:intention} \\
            \dashline\cline{2-2}
            $\varphi_c$ & controllability fitness function, Sec.~\ref{sec:controllability-and-robustness}, Chap.~\ref{chap:delayed_shields} \\ \dashline\cline{2-2}
            $\varphi_r$ & robustness fitness function, Sec.~\ref{sec:controllability-and-robustness}, Chap.~\ref{chap:delayed_shields} \\ \dashline\cline{2-2}
            \hline
            \multirow{1}{*}{$\chi$} & a \emph{deterministic} winning strategy, usually to build a post-shield, Sec.~\ref{sec:shielding-in-safety-games}, Sec.~\ref{sec:shielding-in-safety-games-delayed}, Chap~\ref{chap:delayed_shields} \\ \hline
			\multirow{2}{*}{$\omega$} & sample of a probability space $\omega\in\Omega$, Sec.~\ref{sec:prelim-probability-theory}, Sec.~\ref{sec:reactive-decision-making} \\ \cline{2-2}
            & finite trace prefix in the cylinder set construction, Sec.~\ref{sec:prelim-cylinder-set-construction} \\ \cline{2-2}
            & used to denote infinite repetitions, e.g., $X^\omega$ is the set of infinite sequences of elements in $X$
		\end{tabular}
	\caption*{Notation index, lowercase greek alphabet, part 2.}
\end{table}

\begin{table}
		\begin{tabular}{l p{10cm}}
			Symbol & Usage \\ \midrule
			$\Delta$ & increment of a variable, e.g., $\Delta t$, Sec.~\ref{sec:shields-experiments}, Chap.~\ref{chap:foceta} \\ \hline
            $\Theta$ & distribution of input $\Theta_{\X}\in \D(\Obs)$, Sec.~\ref{sec:rdm-classification-problems} \\ \hline
            \multirow{3}{*}{$\Pi$} & Subset of available policies when computing maximum and minimum reachability properties in MDPs, Sec.~\ref{sec:prelim-reachability-properties}, Chap.~\ref{chap:intention} \\  \cline{2-2}
             & Set of agents to which a shield is restricted to work with, Chap.~\ref{chap:reactive_decision_making} \\ \cline{2-2}
            & set of all fairness shields, Chap.~\ref{chap:fairness} \\ \cdashline{1-1}[0.5pt/2pt] \cline{2-2}
            $\sShieldFeas^t$ & set of all fairness shields with bounded horizon $t$ \\ \cdashline{1-1}[0.5pt/2pt] \cline{2-2}
            $\sShieldFeas$ & set of fair shields for a given specification, Chap.~\ref{chap:fairness} \\ \cdashline{1-1}[0.5pt/2pt] \cline{2-2}
            \multirow{2}{*}{$\sShieldFeasPeriodic$} & set of all periodic fair shields for a given specification, Chap~\ref{chap:fairness}, Eq.~\eqref{eq:sShieldFeasPeriodic_definition} \\ \cdashline{1-1}[0.5pt/2pt] \cline{2-2}
            \multirow{2}{*}{$\sShieldFeasBounded$} & set of all fair shields with respect to a bounded welfare specification, Chap.~\ref{chap:fairness} \\ \cdashline{1-1}[0.5pt/2pt] \cline{2-2}
            $\sShieldFeasDyn$ & set of all dynamic fair shields for a given specification, Chap.~\ref{chap:fairness} \\ \cdashline{1-1}[0.5pt/2pt] \cline{2-2}
            $\Pi(\G)$ & set of plays in a deterministic two player game $\G$, Sec.~\ref{sec:prelim-TwoPlayerGames}\\ \cdashline{1-1}[0.5pt/2pt] \cline{2-2}
            \multirow{2}{*}{$\Pi_U(s)$} & set of paths or traces in a safety game starting from $s$
    that end outside of the winning region in exactly 
    $\delta + k$ transitions, in the proof of Thm.~\ref{thm2} \\ \hline
    \multirow{2}{*}{$\Sigma$} & generic alphabet, Sec.~\ref{sec:prelim-basic-notation} \\ \cline{2-2}
    & subset of actions in a safety game, $\Sigma\in 2^\Act$, Sec.~\ref{sec:prelim-TwoPlayerGames} \\ \cdashline{1-1}[0.5pt/2pt] \cline{2-2} 
    $\Sigma_\Pi$ & set of shields associated with a set of agents $\Pi$, Chap.~\ref{chap:reactive_decision_making}, Eq.~\eqref{eq:SigmaPi} \\ \hline
    $\Omega$ & sample set of a measurable space or a probability space, Sec.~\ref{sec:prelim-probability-theory} \\ \cdashline{1-1}[0.5pt/2pt]\cline{2-2}
    \multirow{2}{*}{$\Omega^\M_\pi$} & sample space of the probability measure associated with an MDP $\M$ and a policy $\pi$, Sec.~\ref{sec:prelim-MDPs} \\ \cdashline{1-1}[0.5pt/2pt] \cline{2-2}
    \multirow{2}{*}{$\Omega^{\Env,\Ag}$} & set of all observation-action traces associated with an environment $\Env$ and an agent $\Ag$, Sec.~\ref{sec:reactive-decision-making} \\ \cdashline{1-1}[0.5pt/2pt] \cline{2-2}
    \multirow{2}{*}{$\Omega^{\Env,\Ag}_k$} & set of all observation-action traces of length $k$ associated with an environment $\Env$ and an agent $\Ag$, Sec.~\ref{sec:reactive-decision-making}
		\end{tabular}
	\caption*{Notation index, uppercase greek alphabet.}
\end{table}

\begin{table}
		\begin{tabular}{l p{10cm}}
			Symbol & Usage \\ \hline 
			$\Sh$ & Shield, Chap.~\ref{chap:reactive_decision_making} \\ \dashline\cline{2-2} 
            \multirow{2}{*}{$\Sh^{pre}_{\Ag}$} & pre-shield induced by an agent $\Ag$, defined to follow the actions of the agent, Def.~\ref{def:induced-pre-shield}, Chap.~\ref{chap:reactive_decision_making} \\ \dashline \cline{2-2}
            \multirow{2}{*}{$\Sh^{pos}_{\Ag, \Ag_{det}}$} & post-shield induced by an agent $\Ag$ and a determinization of the agent, $\Ag_{det}$, Def.~\ref{def:induced-post-shield}, Chap.~\ref{chap:reactive_decision_making} \\ 
            \hline 
			\multirow{2}{*}{$\1$} & Indicator function, for subset set $X\subseteq \X$, 
			$\1_X\colon \X\to\{0,1\}$ 
			is defined as 
			$\1_X(x) = 1$ if 
			$x\in X$, and 
			$\1_X(x) = 0$ if 
			$x\notin X$, Sec.~\ref{sec:prelim-basic-notation}\\ \hline 
            \multirow{2}{*}{$2^X$} & when $X$ is a set, $2^X$ denotes the power set of $X$, that is, the set of subsets of $X$, Sec.~\ref{sec:prelim-basic-notation} \\ \hline
            \multirow{2}{*}{$f(X), f^{-1}(Y)$} & when $f\colon \X\to\Y$ is a function and $X\subseteq \X$, $f(X)$ is the image set of $X$; for $Y\subseteq\Y$, $f^-1(Y)$ is the antiimage set, Sec.~\ref{sec:prelim-basic-notation} \\ \hline 
             $\gg$, $\ll$ & much greater / much smaller than, Sec.~\ref{sec:prelim-basic-notation}, Sec.~\ref{sec:fairness-env-and-shielding} \\ \hline 
            \multirow{2}{*}{$\lfloor a \rfloor$} & floor of $a$, i.e., the greatest integer that is smaller or equal to $a$, Sec.~\ref{sec:prelim-basic-notation} \\ 
            \dashline\cline{2-2}
            \multirow{2}{*}{$\lceil a \rceil$} & ceiling of $a$, i.e., the smallest integer that is greater or equal to $a$, Sec.~\ref{sec:prelim-basic-notation} \\ 
            \dashline\cline{2-2}
            $\lfloor a \rceil$ & rounded of $a$, i.e., the closest integer to $a$, and the ceiling of $a$ if $a$ is equidistant to $\lfloor a\rfloor$ and $\lceil a \rceil$, Sec.~\ref{sec:prelim-basic-notation} \\ \hline 
            $|a|$ & when $a$ is a number, $|a|$ denotes the absolute value of $a$ \\ \dashline \cline{2-2}
            $|v|$ & when $v$ is a vector, $|v|$ denotes the magnitude of the vector \\ \dashline\cline{2-2}
            \multirow{2}{*}{$|\tau|$} & when $\tau$ is a word, a trace, or a sequence of some kind, $|\tau|$ denotes its length \\ \dashline \cline{2-2}
            \multirow{2}{*}{$|X|$} & when $X$ is a set, $|X|$ denotes the cardinality of the set, i.e., the number of elements \\ \hline
            \multirow{3}{*}{$\xrightarrow{u}$} & in safety games, when $s\in S_{env}$ and $s'\in S_{ag}$, we use $s\xrightarrow{u} s'$ to denote that there is an environment transition from $s$ to $s'$, without specifying an action of the environment; where $u$ stands for ``undefined'', Sec.~\ref{sec:prelim-TwoPlayerGames},~\ref{sec:rdm-two-player-games}, Chap.~\ref{chap:delayed_shields} \\ \hline
            $\emptyset$ & the empty set 
		\end{tabular} 
	\caption*{Notation index, special symbols.}
\end{table}



\end{document}